\newtheorem*{theorem*}{Theorem}
\newtheorem*{lemma*}{Lemma}
\newtheorem*{proposition*}{Proposition}
\newcommand{\etamin}{\eta_{\mathrm{min}}}
\newcommand{\defeq}{:=}
\newcommand{\wh}{\widehat}
\newcommand{\wt}{\widetilde}
\newcommand{\ol}{\overline}
\newcommand{\veps}{\varepsilon}
\newcommand{\dd}{\mathrm{d}}
\newcommand{\oppro}{\opp^\pi}
\newcommand{\clip}{{\mathrm{clip}}}
\newcommand{\rank}{\mathrm{rank}}
\newcommand{\numact}{K}
\newcommand{\pseudo}{\text{pseudo}}
\newcommand{\initdist}{d_0}
\newcommand{\thetaup}{\theta^{\mathrm{up}}}
\newcommand{\thetadown}{\theta^{\mathrm{down}}}
\newcommand{\dlinpi}{\wt{d}^\pi}
\newcommand{\dlin}{\wt{d}}
\newcommand{\thetalin}{\wt\theta}
\newcommand{\opp}{\mathbf{P}}
\newcommand{\opex}{\mathbf{E}}
\newcommand{\opexp}{\opex^\pi}
\newcommand{\dlr}{\mathsf{d}}
\newcommand{\dspanner}{\mathsf{d}}
\newcommand{\hatwpi}{\wh{w}^{\pi}}
\newcommand{\hatd}{\wh{d}^{D}}
\newcommand{\hatdpi}{\wh{d}^{\pi}}
\newcommand{\hatdnext}{\wh{d}^{\, D,\dagger}}
\newcommand{\dnext}{d^{D,\dagger}}
\newcommand{\cliphatdpi}[2]{\wh d^{\pi}_{#1} \wedge \Bx_{#1}#2_{#1}}
\newcommand{\clipbardpi}[2]{\ol d^{\pi}_{#1}\wedge \Bx_{#1}#2_{#1}}
\newcommand{\clipbarddpi}[2]{\ol d^{\pi}_{#1} \wedge \dspanner#2_{#1}}
\newcommand{\emle}{\veps_{\mathrm{mle}}}
\newcommand{\ereg}[1]{\veps_{\mathrm{reg,#1}}}
\newcommand{\mle}{\mathrm{mle}}
\newcommand{\reg}{\mathrm{reg}}
\newcommand{\nmle}{n_\mathrm{mle}}
\newcommand{\nreg}{n_\mathrm{reg}}
\newcommand{\ev}{\veps_{\mathrm{mle}}}
\newcommand{\evnext}{\veps_{\mathrm{mle}}}
\newcommand{\unif}{\mathrm{unif}}
\newcommand{\Piexpl}{\Pi^{\mathrm{expl}}}
\newcommand{\Ba}{C^{\mathbf{a}}}
\newcommand{\Bx}{C^{\mathbf{x}}}
\newcommand{\Wclip}{\Wcal}
\newcommand{\Pdim}{\mathrm{Pdim}}
\newcommand{\VCdim}{\mathrm{VCdim}}
\newcommand{\mutrue}{\mu^*}
\newcommand{\munorm}{B^{\mathbf{\mu}}}
\newcommand{\ccoef}{\mathrm{CC}}
\newcommand{\covered}{\mathrm{covered}}
\newcommand{\clipd}{recursively clipped occupancy\xspace}
\newcommand{\offalg}{\textsc{Forc}\xspace}
\newcommand{\onalglong}{\textbf{\textsc FORC}-guided 
\textbf{E}xploration\xspace}
\newcommand{\onalg}{\textsc{Force}\xspace}
\newcommand{\offrepralg}{\textsc{ForcRl}\xspace}
\newcommand{\onrepralglong}{\textbf{\textsc FORCRL}-guided 
\textbf{E}xploration\xspace}
\newcommand{\onrepralg}{\textsc{ForcRlE}\xspace}
\Crefname{equation}{Eq.}{Eqs.}
\Crefname{assumption}{Assumption}{Assumptions}
\Crefname{condition}{Condition}{Conditions}
\newcommand{\para}[1]{\paragraph{#1}}
\definecolor{darkred}{rgb}{0.7,0,0}
\definecolor{darkgreen}{rgb}{0,0.5,0}
\definecolor{orange}{rgb}{0.7,0.4,0}
\definecolor{purple}{rgb}{0.8,0.0,0.8}
\title{Reinforcement Learning  in Low-Rank MDPs \\ with Density Features}
\date{}
\author{%
Audrey Huang \!\!\thanks{The two authors contributed equally to this work.} \thanks{Department of Computer Science, University of Illinois Urbana-Champaign. Email: \texttt{audreyh5@illinois.edu, jinglinc@illinois.edu, nanjiang@illinois.edu.}}
\and
Jinglin Chen \!\!\footnotemark[1]\,\,\hspace{-.08em}\footnotemark[2]
\and
Nan Jiang\footnotemark[2]
}
\begin{document}

\maketitle

\begin{abstract}
MDPs with low-rank transitions---that is, the transition matrix can be factored into the product of two matrices, left and right---is a highly representative structure that enables tractable learning. The left matrix enables expressive function approximation for value-based learning and has been studied extensively. In this work, we instead investigate sample-efficient learning with density features, i.e., the right matrix, which induce powerful models for state-occupancy distributions. This setting not only sheds light on leveraging unsupervised learning in RL, but also enables plug-in solutions for convex RL. In the offline setting, we propose an algorithm for off-policy estimation of occupancies that can handle non-exploratory data. Using this as a subroutine, we further devise an online algorithm that constructs exploratory data distributions in a level-by-level manner. As a central technical challenge, the additive error of occupancy estimation is incompatible with the multiplicative definition of data coverage. In the absence of strong assumptions like reachability, this incompatibility easily leads to exponential error blow-up, which we overcome via novel technical tools. Our results also readily extend to the representation learning setting, when the density features are unknown and must be learned from an exponentially large candidate set.
\end{abstract}

\section{Introduction}
The theory of reinforcement learning (RL) in  large state spaces has seen fast development. In the model-free regime, how to use powerful function approximation   to learn \textit{value functions} has been extensively studied in both the online and the offline settings \citep{jiang2017contextual, jin2020provably,  jin2020pessimism, xie2021bellman}, which also builds the theoretical foundations that connect RL with (discriminative) supervised learning. On the other hand, generative models for unsupervised/self-supervised learning---which define a sampling distribution explicitly or implicitly---are becoming increasingly powerful \citep{devlin2018bert, goodfellow2020generative}, yet how to leverage them to address the key challenges in RL remains under-investigated. While prior works on RL with unsupervised-learning oracles exist \citep{du2019provably, feng2020provably}, they often consider models such as block MDPs, which are more restrictive than typical model structures considered in the value-based setting such as low-rank MDPs. 

In this paper, we study model-free RL in low-rank MDPs with density features for state occupancy 
estimation. In a low-rank MDP, the transition matrix can be factored into the product of two matrices, and the left matrix is known to serve as powerful features for value-based learning 
\citep{jin2020provably}, as it can be used to approximate the Bellman backup of any function. On the other hand, the \textit{right} matrix can be used to represent the policies' state-occupancy distributions, yet how to leverage such \textit{density features} (without the knowledge of the left matrix) in offline or online RL is unknown. To this end, our main research question is: 
\begin{center}
{\it Is sample-efficient offline/online RL  with density features possible in low-rank MDPs?} 
\end{center}
We answer this question in the positive, and below is a summary of our   contributions:
\begin{enumerate}[leftmargin=*]
\item \textbf{Offline:}~ \pref{sec:offline} provides an algorithm for  off-policy occupancy estimation. 
It bears similarity to existing algorithms for estimating \textit{importance weights} \citep{hallak2017consistent,gelada2019off}, but our setting gives rise to 
a number of novel challenges. 
Most importantly, our algorithm  enjoys guarantees under  \textit{arbitrary} offline data distributions, when the standard notion of importance weights are not even well-defined. We introduce a novel notion of \textit{\clipd} and show that it can be learned in a sample-efficient manner. The \clipd always lower bounds the true occupancy, and the two notions coincide when the data is exploratory. Such a guarantee immediately enables an offline policy learning result that only requires ``single-policy concentrability'', which is comparable to the most recent advances in value-based offline RL \citep{jin2020pessimism, xie2021bellman}.
%
\item \textbf{Online:} Using the offline algorithm as a subroutine, in \pref{sec:online}, we design an \textit{online} algorithm that builds an exploratory data distribution (or ``policy cover''  \citep{du2019provably}) from scratch in a level-by-level manner. At each level, we estimate each policy's state-occupancy distribution and construct an approximate cover by choosing the \textit{barycentric spanner} of such distributions. A critical challenge here is that the additive $\ell_1$ error in occupancy estimation destroys the multiplicative coverage guarantee of the barycentric spanner, so the constructed distribution is never perfectly exploratory. Worse still, standard algorithm designs and analyses for handling such a mismatch easily lead to \emph{an exponential error blow-up}. We overcome this by a novel 
technique, where two inductive error terms are maintained and analyzed in parallel, with delicate interdependence that still allows for a polynomial error accumulation (\Cref{fig:double-chain}).  
%
\item \textbf{Representation learning:} We also extend our offline and online results to the representation learning setting \citep{agarwal2020flambe}, where the true density features are not given but must also be learned from an exponentially large candidate feature set. 
%
\item \textbf{Implications:} Our online algorithm is automatically reward-free \citep{jin2020reward,chen2022statistical} and deployment-efficient \citep{huang2022towards}. Further, since we can 
accurately estimate the occupancy distribution for all candidate policies, our results enable plug-in solutions for settings such as convex RL \citep{mutti2022challenging,zahavy2021reward}, where the objectives and/or constraints are functions over the entire state distributions (see \pref{app:convex}). 
\end{enumerate}

\section{Preliminaries}
\label{sec:prelim}

\para{Markov Decision Processes (MDPs)}
We consider a finite-horizon episodic MDP (without reward) defined as $\Mcal = (\Xcal, \Acal, P, H)$, where $\Xcal$ is the state space, $\Acal$ is the action space, $P = (P_0,\ldots,P_{H-1})$ with $P_h: \Xcal \times \Acal \rightarrow \Delta(\Xcal)$ is the transition dynamics, $H$ is the horizon, and $\initdist \in \Delta(\Xcal)$ is the known initial state distribution.\footnote{We assume the known initial state distribution for simplicity. Our results easily extend to the unknown version.} 
We assume that $\Xcal$ is a measurable space with possibly infinite number of elements and $\Acal$ is finite with  cardinality $\numact$.
Each episode is a trajectory $\tau = \rbr{x_0,a_0,x_1,\ldots, x_{H-1}, a_{H-1}, x_H}$, where $x_0\sim\initdist$, the agent takes a sequence of actions $a_0,\ldots,a_{H-1}$, and $x_{h+1} \sim P_h(\cdot\mid x_h,a_h)$. 
We use $\pi= (\pi_0,\ldots, \pi_{H-1}) \in (\Xcal\to\Delta(\Acal))^H$ to denote a (non-stationary) $H$-step Markov policy, which chooses $a_h \sim \pi_{h}(\cdot|x_h)$. (We will also omit the subscript $h$ and write $\pi(\cdot|x_h)$ when it is clear from context.) We use $\rho$ to refer to non-Markov policies that can choose $a_h$ based on the history $x_{0:h}, a_{0:h-1}$, which often arises from the probability mixture of Markov policies at the beginning of an trajectory. 
Once a policy $\pi$ is fixed, the MDP becomes an Markov chain, with $d_h^\pi(x_h)$ being its $h$-th step distribution. 
As a shorthand, we use the notation $[H]$ to denote $\cbr{0,1,\ldots, H-1}$.

\para{Low-rank MDPs}
We consider learning in a low-rank MDP, defined as: \begin{assum}[Low-rank MDP]
\label{assum:lowrank}
$\Mcal$ is a low-rank MDP with dimension $\dlr$, that is, $\forall h \in [H]$, there exist  $\phi_h^*: \Xcal \times \Acal \rightarrow \RR^{\dlr}$ and $\mutrue_h: \Xcal \rightarrow \RR^{\dlr}$ such that $\forall x_h$,$x_{h+1} \in \Xcal , a_h \in \Acal: P_h(x_{h+1}| x_h,a_h) = \langle \phi_h^*(x_h,a_h),\mu_h^*(x_{h+1})\rangle$. Further, $\int \|\mutrue_h(x)\|_1 (\dd x) \le \munorm$ and $\|\phi_h^*(\cdot)\|_\infty \le 1$.\footnote{This is w.l.o.g.~as the norm of $\phi_h^*$ can be absorbed into $\munorm$. In a natural special case of low-rank MDPs with ``simplex features'' \citep[Example 2.2]{jin2020provably}, \pref{assum:lowrank} holds with $\munorm = \dlr$. Our sample complexities only have polylogarithmic dependence on $\munorm$ which will be suppressed by $\wt{O}$.}
\end{assum}

\para{Notation} We use the convention $\frac{0}{0}=0$ when 
we define the ratio between two functions. Define 
$a\wedge b=\min(a,b)$, and we treat $\wedge$ as an operator with precedence between ``$\times /$'' and ``$+ -$''. When clear from the context,  $\{\square_h\} = \{\square_h\}_{h=0}^{H-1}$, and we refer to state ``occupancies," ``distributions," and ``densities" interchangeably. Finally, 
letter ``d'' has a few different versions (with different fonts): $\dlr$ is the low-rank dimension, $d(x)$ is a density, and $(\dd x)$ is the differential used in integration. Further, while $d^\pi_h$ and $d^D_h$ refer to true densities, $d_h$ (without superscripts) is often used for optimization variables. 

\para{Learning setups} We provide algorithms and guarantees under a number of different setups (e.g., offline vs.~online). The result that connects all pieces together is the setting of online \textit{reward-free} exploration 
with known density features $\mutrue=(\mutrue_0,\ldots,\mutrue_{H-1})$  and a policy class $\Pi \subseteq (\Xcal\to\Delta(\Acal))^H$ (\pref{sec:online}). Here, the learner must explore the MDP and  form accurate estimations of $d_h^\pi$ for all $\pi \in \Pi$ and $h\in [H]$, that is, output 
$\{\wh d^\pi_h\}_{h\in[H],\pi\in\Pi}$ 
such that with probability at least $1-\delta$, $\forall \pi \in \Pi, h\in [H]$,
$
\| \wh d_h^\pi - d_h^\pi \|_1 \le \veps,
$ 
by only collecting $\mathrm{poly}(H,\numact,\dlr,\log(|\Pi|),1/\veps,\log(1/\delta))$ trajectories. 
Two remarks are in order: 
\begin{enumerate}[leftmargin=*]

\item 
 Such a guarantee immediately leads to standard guarantees for return maximization when a reward function is specified. More concretely (with proof in \pref{app:online_rf}),
\begin{proposition} \label{prop:density2return}
Given any policy $\pi$ and reward function\footnote{We assume known and deterministic rewards, and can easily handle unknown/stochastic versions (\pref{app:reward}).} $R = \{R_h\}$ with 
$R_h:\Xcal\times\Acal\rightarrow[0,1]$, 
define expected return as $v^\pi_R := \EE_\pi[\sum_{h=0}^{H-1} R_h(x_h,a_h)]=\sum_{h=0}^{H-1} \iint d_h^\pi(x_h) R_h(x_h,a_h) \pi(a_h|x_h) (\dd x_h) (\dd a_h)$. 
Then for   $\{\wh{d}_h^\pi\}$ such that $\|\wh{d}_h^\pi - d_h^\pi\|_1 \le \veps/(2H)$ for all $\pi \in \Pi$ and $h \in [H]$, we have $v^{\wh\pi_R}_R \ge \max_{\pi \in \Pi}v^\pi_R - \veps$, where $\wh\pi_R = \argmax_{\pi \in \Pi}\wh{v}_R^\pi$, and $\wh{v}_R^{\pi}$ is the expected return calculated using $\{\wh{d}_h^\pi\}$.  
\end{proposition}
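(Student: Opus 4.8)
The plan is to reduce the claim to a uniform-over-$\Pi$ approximation bound on the plug-in return estimate, and then apply the textbook ``$\argmax$'' comparison argument for optimizing over a finite class. First I would establish that for every $\pi \in \Pi$,
\[
\bigl|\wh v_R^\pi - v_R^\pi\bigr| \;\le\; \veps/2 .
\]
To get this, subtract the two integral expressions for $\wh v_R^\pi$ and $v_R^\pi$ term by term in $h$, move the absolute value inside, and bound $R_h(x_h,a_h) \le 1$; since $\pi(\cdot\mid x_h) \in \Delta(\Acal)$ integrates to one, the $a_h$-integral collapses, so the $h$-th term is at most $\int \bigl|\wh d_h^\pi(x_h) - d_h^\pi(x_h)\bigr|(\dd x_h) = \|\wh d_h^\pi - d_h^\pi\|_1 \le \veps/(2H)$. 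Summing the $H$ terms gives the displayed bound. Note this step uses \emph{only} the $\ell_1$ guarantee on $\wh d_h^\pi$; it is irrelevant whether each $\wh d_h^\pi$ is itself a valid probability density.

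For the second step, let $\pi^\star \in \argmax_{\pi\in\Pi} v_R^\pi$. Then
\[
v_R^{\wh\pi_R} \;\ge\; \wh v_R^{\wh\pi_R} - \veps/2 \;\ge\; \wh v_R^{\pi^\star} - \veps/2 \;\ge\; v_R^{\pi^\star} - \veps \;=\; \max_{\pi\in\Pi} v_R^\pi - \veps ,
\]
where the first and third inequalities are the uniform approximation bound above (applied to $\wh\pi_R$ and to $\pi^\star$ respectively), and the middle inequality is the defining property of $\wh\pi_R = \argmax_{\pi\in\Pi}\wh v_R^\pi$. This is exactly the stated conclusion.

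I do not expect any genuine obstacle here; the proof is only a few lines. The two things to handle carefully are (i) that the per-step error must be measured in $\ell_1$ over states, which is precisely what lets the bounded reward and the normalization of $\pi(\cdot\mid x_h)$ be pulled out cleanly, and (ii) the factor-of-two bookkeeping that turns the $\veps/(2H)$ per-step density accuracy into the final $\veps$ return-suboptimality.
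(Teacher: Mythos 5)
Your proposal is correct and matches the paper's own proof in \pref{app:online_rf}: the same per-step $\ell_1$ bound yielding $|\wh v_R^\pi - v_R^\pi| \le \veps/2$ uniformly over $\Pi$, followed by the standard three-term $\argmax$ comparison. No gaps.
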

Moreover, the result can be extended to more general settings, where the optimization objective is some function of the state (and action) distribution that cannot be written as cumulative expected rewards; e.g., entropy 
as in max-entropy exploration \citep{hazan2018provably}, or $\|d_h^\pi - d_h^{\pi_E}\|_2^2$, where $\pi_E$ is an expert policy, used in imitation learning \citep{abbeel2004apprenticeship}. 
A detailed discussion is deferred to \pref{app:convex}. 

\item 
The introduction of $\Pi$ and the dependence on $K=|\Acal|$ are both necessary, since low-rank MDPs can emulate general contextual bandits where the density features $\mutrue$ become useless; see \pref{app:hard} for more details.  
\end{enumerate}

To enable such a result, a key component is to estimate $d_h^\pi$ using offline data (\pref{sec:offline}). 
Later in \pref{sec:repr}, we also generalize our results to the \textit{representation-learning} setting \citep{agarwal2020flambe,modi2021model,uehara2021representation}, where $\mutrue$ is not known but must be learned from an exponentially large candidate set.

\section{Off-policy occupancy estimation} 
\label{sec:offline}
In this section, we describe our algorithm, \offalg, which estimates the occupancy distribution $d_h^\pi$ of any given policy $\pi$ using an offline dataset. Note that this section serves both as an important building block for the online algorithm in \pref{sec:online} and a standalone offline-learning result in its own right, so we will make remarks from both perspectives. 

We start by introducing our assumption on the offline data.

\begin{assum}[Offline data]
\label{assum:data}
Consider a dataset $\Dcal_{0:H-1}=\Dcal_0\bigcup\ldots\bigcup\Dcal_{H-1}$,
where $\Dcal_h=\{(x_h^{(i)},a_h^{(i)},$ $x_{h+1}^{(i)})\}_{i=1}^n$. For any fixed $h$, we assume that tuples in $\Dcal_h$  are sampled i.i.d.~from  $\rho^{h-1}\circ \pi^D_h$, where $a_0,\ldots,a_{h-1}\sim\rho^{h-1}$ is an arbitrary $(h-1)$-step (possibly non-Markov) policy\footnote{$h$ on the superscript of a policy distinguishes identities and does not refer to the $h$-th step 
component (which is indicated by the subscript), that is, $\rho^h$ and $\rho^{h'}$ for $h'\ne h$ can be completely unrelated policies.} and $a_h\sim\pi^D_h$ is a single-step Markov policy. 
Further, $\rho_{h-1},\pi^D_h$ can be a function of $\Dcal_{0:h-1}$, and $\pi^D_h$ is known to the learner. 
\end{assum}
The dataset consists of $H$ parts, where the $h$-th part consists of $(x_h, a_h, x_{h+1})$ tuples, allowing us to reason about the transition dynamics 
at level $h$. In practice (as well as in \pref{sec:online}), such tuples will be extracted from trajectory data. 
We use $d_h^D(x_h,a_h,x_{h+1}),d_h^D(x_h),\dnext_h(x_{h+1})$ to denote the joint and the marginal distributions, respectively. 
Importantly, we do \textit{not} assume that $\dnext_h(x_{h+1}) = d_{h+1}^D(x_{h+1})$, i.e., the next-state distribution of $\Dcal_h$ and the current-state distribution of $\Dcal_{h+1}$ (which are both over $\Xcal$) may not be the same, as we will need this flexibility in \pref{sec:online}. The $H$ parts can also sequentially depend on each other, though samples within each part are i.i.d. While this setup is sufficient for \pref{sec:online} and already weaker than the fully i.i.d.~setting commonly adopted in the offline RL literature \citep{chen2019information,yin2021towards}, in \pref{app:alt} we discuss how to relax it to handle more general situations in   offline learning.

\subsection{Occupancy estimation via importance weights} \label{sec:weights}
Recall that value functions satisfy the familiar Bellman equations, allowing us to learn them by approximating Bellman operators via squared-loss regression. The occupancy distributions $\{d_h^\pi\}$ also satisfy the Bellman flow equation:  let $\oppro_h$ denote the 
Bellman flow operator, where for any given $d_{h}: \Xcal \to \RR$ and policy $\pi$, $(\oppro_{h} d_{h})(x_{h+1}) := \iint P_{h}(x_{h+1}|x_{h}, a_{h})\pi(a_{h}|x_{h}) d_{h}(x_{h}) (\dd x_{h}) (\dd a_{h})$.\footnote{In this definition, we do not require $d_h$ to be a valid distribution. Even $\pi$ is allowed to be unnormalized; see the definition of \pseudo-policy in \pref{def:pseudo_policy}.} $d_h^\pi$ can be then recursively defined via the Bellman flow equation $d_{h}^\pi=\oppro_{h-1} d_{h-1}^\pi$, with the base case $d_0^\pi = d_0$. (One difference is that value functions are defined bottom-up, whereas occupancies are defined top-down.) Furthermore, in a low-rank MDP, 
$\oppro_h d_h$ is always linear in $\mu^*_{h}$ (\pref{lem:opp_linear}), just like the image of Bellman operators for value is always in the linear span of $\phi^*_h$. 

Given the similarity, one might think that we can also approximate $\oppro_{h-1}$ by regressing directly onto the occupancies, hoping to obtain $d_{h}^\pi$ via 
\begin{align}
 \label{eq:regress_density}
\argmin_{d_h} \EE_{d^D_{h-1}}\sbr{ \rbr{ d_h(x_{h})- d_{h-1}^\pi(x_{h-1})\frac{\pi_{h-1}(a_{h-1}|x_{h-1})}{\pi^D_{h-1}(a_{h-1}|x_{h-1})} }^2 },
\end{align}
where $\frac{\pi_{h-1}(a_{h-1}|x_{h-1})}{\pi^D_{h-1}(a_{h-1}|x_{h-1})}$ is the standard importance weighting to correct the mismatch on actions 
between $\pi_{h-1}$ and data policy $\pi^D_{h-1}$. 
Unfortunately, this does not work due to the ``time-reversed'' nature of flow operators \citep{liu2018breaking}. In fact, the Bayes-optimal solution of \cref{eq:regress_density} is 
\[
d_h(x_{h}) = \frac{(\opp_{h-1}^\pi(d^D_{h-1} d_{h-1}^\pi))(x_{h})}{\dnext_{h-1}(x_{h})} \neq (\opp_{h-1}^\pi d_{h-1}^\pi)(x_{h}). 
\]
However, the fractional form of the solution indicates that we may instead aim to learn a related function---the importance weight, or density ratio \citep{hallak2017consistent}. If we use $w_{h-1}^\pi = d_{h-1}^\pi / d^D_{h-1}$ to replace $d_{h-1}^\pi$ as the regression target in \cref{eq:regress_density}, the population solution would be
\[
\frac{(\opp_{h-1}^\pi d_{h-1}^\pi)(x_{h})}{\dnext_{h-1}(x_{h})} = \frac{d_{h}^\pi(x_{h})}{\dnext_{h-1}(x_{h})} =: w_{h}^\pi(x_{h}).
\]
The occupancy can then be straightforwardly extracted from the weight via elementwise multiplication, i.e., $d_{h}^\pi = w_{h}^\pi \cdot \dnext_{h-1}$, where $\dnext_{h-1}$ can be estimated via MLE from the dataset itself. 

While this is promising, 
the approach uses importance weight $w_{h}^\pi(x_{h})$ as an intermediate variable, whose very existence and boundedness rely on the assumption that the data distribution $\dnext_{h-1}$ is exploratory and provides sufficient coverage over $d_{h}^\pi$. We next consider the scenario where such an assumption does \textit{not} hold. 
Perhaps surprisingly, although we would like to construct exploratory datasets in \pref{sec:online} and feed them into the offline algorithm, being able to handle non-exploratory data turns out to be crucial to the online setting, and also yields novel offline guarantees of independent interest.

\begin{algorithm*}[t!]
\caption{\textbf{F}itted \textbf{O}ccupancy Ite\textbf{r}ation with \textbf{C}lipping (\offalg) 
\label{alg:offline_known}}
\begin{algorithmic}[1]
\REQUIRE policy $\pi$, density feature $\mu^*$, dataset $\Dcal_{0:H-1}$, sample sizes $\nmle$ and $\nreg$, 
clipping thresholds $\{\Bx_{h}\}$ and $\{\Ba_{h}\}$. 
\STATE Initialize $\wh d_0^\pi= \initdist$.
\FOR {$h=1,\ldots,H$} 
\STATE Randomly split $\Dcal_{h-1}$ to two folds $\Dcal_{h-1}^\mle$ and $\Dcal_{h-1}^\reg$ with sizes $\nmle$ and $\nreg$, respectively. \label{line:split}
\STATE Estimate marginal data distributions $\hatd_{h-1}(x_{h-1})$ and $\hatdnext_{h-1}(x_{h})$ by MLE on dataset $\Dcal_{h-1}^\mle$: \label{line:mle_off}
\begin{align}
\label{eq:obj_mle}
   \hatd_{h-1}= \argmax_{d_{h-1} \in \Fcal_{h-1}} \frac{1}{\nmle} \sum_{i=1}^{\nmle}  \log \rbr{d_{h-1}(x_{h-1}^{(i)})} \text{  and  }
    \hatdnext_{h-1} =\argmax_{d_{h}\in\Fcal_{h} } \frac{1}{\nmle} \sum_{i=1}^{\nmle}  \log\rbr{d_{h}(x_{h}^{(i)})},
\end{align}
where 
$\Fcal_h = \cbr{d_h = \langle \mutrue_{h-1}, \theta_h \rangle : d_h \in \Delta(\Xcal), \theta_h \in \RR^{\dlr},\|\theta_h\|_\infty \le 1 }.$  \hfill {\color{blue} \# $\|\theta_h\|_\infty \le 1$ guarantees $d^D_h \in \Fcal_h$}
\STATE  
Define $\Lcal_{\Dcal_{h-1}^{\reg}}(w_{h},w_{h-1},\ol\pi_{h-1}) \defeq \frac{1}{\nreg} \sum_{i=1}^{\nreg}  \rbr{ w_{h}(x_{h}^{(i)}) - w_{h-1}(x_{h-1}^{(i)}) \frac{\ol \pi_{h-1}(a_{h-1}^{(i)}|x_{h-1}^{(i)})}{\pi^D_{h-1}(a_{h-1}^{(i)}|x_{h-1}^{(i)})} }^2$, and estimate
\label{line:reg_off}
\begin{align}
\label{eq:obj_reg}
\hatwpi_{h} = \argmin_{w_{h} \in \Wclip_{h}} \Lcal_{\Dcal_{h-1}^{\reg}}\rbr{w_{h},\tfrac{\cliphatdpi{h-1}{\hatd}}{\wh d^D_{h-1}}, \pi_{h-1} \wedge \Ba_{h-1} \pi_{h-1}^D}, 
\end{align} 
where $\label{eq:wclip}
\Wclip_{h} = \cbr{w_{h} = \frac{\langle \mutrue_{h-1}, \thetaup_{h}\rangle}{\langle \mutrue_{h-1}, \thetadown_{h}\rangle} :\nbr{w_{h}}_\infty \le  \Bx_{h-1}\Ba_{h-1},\thetaup_{h},\thetadown_{h} \in \RR^{\dlr}}. $
\STATE Set the estimate $\wh d_{h}^\pi= \hatwpi_{h} \, \hatdnext_{h-1} $. \label{line:multiply}
\ENDFOR
\ENSURE estimated state occupancies $\{\wh d_{h}^\pi\}_{h\in[H]}$.
\end{algorithmic}
\end{algorithm*}

\subsection{Handling insufficient data coverage}

Because we make no assumptions about data coverage, 
the true occupancy $d_h^\pi$ may be completely unsupported by data, in which case there is no hope to estimate it well. What kind of learning guarantees can we still obtain?

To answer this question, we introduce one of our main conceptual contributions, a  novel learning target for occupancy estimation under arbitrary data distributions.

\begin{definition}[Pseudo-policy and \clipd]
\label{def:clipd} \label{def:pseudo_policy}
Given a Markov policy $\pi$, data distributions $\{d_h^D\}$, and  state and action clipping thresholds $\{\Bx_h\}$, $\{\Ba_h\}$, the \clipd, $\{\ol d_h^\pi\}$, is defined as follows. Let 
$\ol{d}_0^\pi \defeq d_0^\pi = \initdist$. Define $\ol \pi_h(a_h|x_h) \defeq \pi_h(a_h|x_h)\wedge \Ba_{h} \pi_h^D(a_h|x_h)$ (or $\ol \pi_h = \pi_h \wedge \Ba_{h} \pi_h^D$ for short), and for $1\le h \le H-1$, inductively set \footnote{Note that $\ol d_h^{\pi}$ depends on hyperparameters $\Bx_h$ and $\Ba_h$, which is omitted in the notation. \pref{app:clipd_threshold} discusses the relationship between $\Bx_h, \Bx_a$ and the missingness error, namely, that $\|d_h^\pi - \ol{d}_h^\pi\|_1$ is Lipschitz in, and thus insensitive to misspecifications of, the clipping thresholds.
}
\begin{align}
    \ol{d}_{h}^\pi(x_{h}) \defeq \rbr{\opp^{\ol\pi}_{h-1} ~\rbr{\clipbardpi{h-1}{d^D}}}(x_h).
    \label{eq:def_dbar}
\end{align}
We also call objects like $\ol\pi$ a \emph{\pseudo-policy}, which can yield unnormalized distributions over actions. 
\end{definition}
The above definition first clips the previous-level $\ol{d}_{h-1}^\pi$ to have at most $\Bx_{h-1}$ ratio over the data distribution $d^D_{h-1}$ and the policy $\pi$ to have at most $\Ba_{h-1}$ ratio over $\pi^D_{h-1}$, then applies the Bellman flow operator. This guarantees that $\ol{d}_{h}^\pi$ is always supported on the data distribution (unlike $d_h^\pi$), and $\ol{d}_{h}^\pi \le d_h^\pi$ because poorly-supported mass is removed from every level (and hence $\ol d_h^\pi$ is generally an unnormalized distribution). 
Further, when we do have data coverage and the original importance weights on states and actions are always bounded by $\{\Bx_h\}$ and $\{\Ba_h\}$, it is easy to see that $\ol{d}_{h}^\pi = d_{h}^\pi$,  since the clipping operations will have no effects and \pref{def:clipd} simply coincides with the Bellman flow equation for $\{d_h^\pi\}$.

As we will see below in \pref{sec:offline_alg}, $\{\ol d_h^\pi\}$ becomes a learnable target and the $\ell_1$ estimation error of our algorithm goes to $0$ when the sample size $n \to \infty$. The thresholds $\{\Bx_{h}\}$ and $\{\Ba_h\}$ reflect a bias-variance trade-off: higher thresholds ensure that less ``mass'' is clipped away (i.e., $\ol d_h^\pi$ will be closer to $d_h^\pi$), but result in a worse sample complexity as the  algorithm will need to deal with larger importance weights. Below we provide more fine-grained characterization on the bias part, i.e., how $\ol d_h^\pi$ is related to $d_h^\pi$, and the proof is deferred to \pref{app:off_occu}.

\begin{proposition}[Properties of $\ol d_h^\pi$]~ 
\label{prop:clipd}
\begin{enumerate}[leftmargin=*]
\item $\ol{d}_{h}^\pi \le d_h^\pi$.
\item $\ol{d}_{h}^\pi = d_h^\pi$ when data covers $\pi$ (i.e., $\forall h' < h$ we have $d_{h'}^\pi \le \Bx_{h'} d^D_{h'}$ and $\pi_{h'} \le \Ba_{h'} \pi^D_{h'}$). 
\item $\|\ol{d}_h^\pi - d_h^\pi\|_1 \le \|\ol{d}_{h-1}^\pi - d_{h-1}^\pi\|_1 + \|  \ol{d}_{h-1}^\pi-\clipbardpi{h-1}{d^D}\|_1  + \| \opp^{\pi}_{h-1} d_{h-1}^\pi-\opp^{\ol\pi}_{h-1} d_{h-1}^\pi\|_1.$
\end{enumerate}
\end{proposition}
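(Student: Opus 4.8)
The plan is to prove all three claims by a single induction on $h$, resting on two elementary properties of the Bellman flow operator. First, \textbf{monotonicity}: since $(\opp^\sigma_h d)(x_{h+1}) = \iint P_h(x_{h+1}|x_h,a_h)\,\sigma(a_h|x_h)\, d(x_h)\,(\dd x_h)(\dd a_h)$ is an integral against the nonnegative kernel $P_h$, if $0\le d\le d'$ and $0\le \sigma \le \sigma'$ pointwise (as possibly-unnormalized objects), then $\opp^\sigma_h d \le \opp^{\sigma'}_h d'$ pointwise. Second, \textbf{$\ell_1$-nonexpansiveness under sub-stochastic policies}: for any signed $g$ and any pseudo-policy $\sigma$ with $\int \sigma(a|x)(\dd a)\le 1$ for all $x$, we have $\|\opp^\sigma_h g\|_1 \le \|g\|_1$. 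This follows by writing $g = g^+ - g^-$, noting $|\opp^\sigma_h g| \le \opp^\sigma_h g^+ + \opp^\sigma_h g^- = \opp^\sigma_h|g|$ pointwise, and then integrating out $x_{h+1}$ first (which contributes $\int P_h(x_{h+1}|x_h,a_h)(\dd x_{h+1}) = 1$) and $a_h$ second (which contributes $\int \sigma(a_h|x_h)(\dd a_h)\le 1$). Note $\ol\pi_h = \pi_h\wedge\Ba_h\pi^D_h \le \pi_h$ is indeed sub-stochastic, and $\clipbardpi{h-1}{d^D}\le \ol d^\pi_{h-1}$ pointwise, facts I will use repeatedly.

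\textbf{Part 1.} Induct on $h$; the base case $\ol d_0^\pi = d_0 = d_0^\pi$ holds by definition. For the step, the inductive hypothesis gives $\clipbardpi{h-1}{d^D}\le \ol d_{h-1}^\pi \le d_{h-1}^\pi$ pointwise, and $\ol\pi_{h-1}\le\pi_{h-1}$ pointwise; monotonicity then yields $\ol d_h^\pi = \opp^{\ol\pi}_{h-1}\big(\clipbardpi{h-1}{d^D}\big) \le \opp^{\pi}_{h-1} d_{h-1}^\pi = d_h^\pi$.

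\textbf{Part 2.} Again induct on $h$. Under the coverage hypothesis $d_{h'}^\pi\le\Bx_{h'}d^D_{h'}$ and $\pi_{h'}\le\Ba_{h'}\pi^D_{h'}$ for all $h'<h$, the inductive hypothesis $\ol d_{h-1}^\pi = d_{h-1}^\pi$ gives $\clipbardpi{h-1}{d^D} = d_{h-1}^\pi\wedge\Bx_{h-1}d^D_{h-1} = d_{h-1}^\pi$ and $\ol\pi_{h-1} = \pi_{h-1}$, so the clipping at level $h-1$ is vacuous and $\ol d_h^\pi = \opp^{\pi}_{h-1} d_{h-1}^\pi = d_h^\pi$. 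This is where Part 1's monotonicity is implicitly reused to know the hypothesis is self-consistent.

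\textbf{Part 3.} The key is a telescoping decomposition that isolates the three error sources (policy clipping, propagated error, state clipping):
\begin{align*}
d_h^\pi - \ol d_h^\pi
&= \opp^{\pi}_{h-1} d_{h-1}^\pi - \opp^{\ol\pi}_{h-1}\big(\clipbardpi{h-1}{d^D}\big) \\
&= \underbrace{\big(\opp^{\pi}_{h-1} d_{h-1}^\pi - \opp^{\ol\pi}_{h-1} d_{h-1}^\pi\big)}_{T_1}
 + \underbrace{\opp^{\ol\pi}_{h-1}\big(d_{h-1}^\pi - \ol d_{h-1}^\pi\big)}_{T_2}
 + \underbrace{\opp^{\ol\pi}_{h-1}\big(\ol d_{h-1}^\pi - \clipbardpi{h-1}{d^D}\big)}_{T_3}.
\end{align*}
By the triangle inequality $\|d_h^\pi - \ol d_h^\pi\|_1 \le \|T_1\|_1 + \|T_2\|_1 + \|T_3\|_1$. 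Here $\|T_1\|_1$ is exactly the third term of the claim. For $\|T_2\|_1$, apply nonexpansiveness with $g = d_{h-1}^\pi - \ol d_{h-1}^\pi$ and $\sigma = \ol\pi_{h-1}$ to get $\|T_2\|_1\le\|d_{h-1}^\pi-\ol d_{h-1}^\pi\|_1$, the first term. For $\|T_3\|_1$, apply nonexpansiveness with $g = \ol d_{h-1}^\pi - \clipbardpi{h-1}{d^D}\ge 0$ to get $\|T_3\|_1\le\|\ol d_{h-1}^\pi - \clipbardpi{h-1}{d^D}\|_1$, the second term; summing finishes the proof.

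I do not expect a serious obstacle here: the only things requiring care are that $\ol\pi$ and the clipped densities are \emph{unnormalized}, so one must verify nonexpansiveness directly (rather than invoking a Markov-chain contraction) and track which quantities are pointwise nonnegative, and that the telescoping insertion of $\opp^{\ol\pi}_{h-1}d_{h-1}^\pi$ is the ``right'' one, since it is precisely this choice that makes each defect term controllable by one of the three stated quantities and leaves the bound in a form suitable for the inductive unrolling used later in \pref{sec:offline_alg}.
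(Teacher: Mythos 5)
Your proposal is correct and follows essentially the same route as the paper: monotonicity of the flow operator for Part 1, vacuous clipping under coverage for Part 2, and a telescoping decomposition combined with the $\ell_1$-nonexpansiveness of $\opp^{\ol\pi}_{h-1}$ under sub-stochastic pseudo-policies (the paper's \pref{lem:opexp_ineq} and \pref{lem:pseudopolicy_norm}) for Part 3. The only cosmetic differences are that the paper proves Part 2 by invoking Part 3 to show all three error terms vanish rather than by direct induction on the definition, and its Part 3 telescopes in two steps (one application of nonexpansiveness plus a triangle inequality on $\|\clipbardpi{h-1}{d^D} - d_{h-1}^\pi\|_1$) where you telescope into three terms at once.
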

The 3rd claim  shows how the bias term $\|\ol d_h^\pi  - d_h^\pi\|_1$ (i.e., how much mass $\ol d_h^\pi$ is missing from $d_h^\pi$) accumulates over the horizon: the RHS of the bound consists of 3 terms, where the first is missing mass from the previous level, and the other terms correspond to 
the mass being clipped away from states and actions, respectively, at the current level. 

\subsection{Algorithm and analyses} \label{sec:offline_alg}
We are now ready to introduce our algorithm, \offalg, with its analyses and guarantees. See pseudocode in \pref{alg:offline_known}. The overall structure of the algorithm largely follows the sketch in \pref{sec:weights}: we use squared-loss regression to iteratively learn the importance weights (\pref{line:reg_off}), and convert them to densities by multiplying with the data distributions (\pref{line:multiply}) estimated via MLE (\pref{line:mle_off}). 

The major difference is that we introduce clipping in \pref{line:reg_off} (in the same way as \pref{def:clipd}) to guarantee that the regression target is always well-behaved and bounded, and below we show that this makes $\wh d_h^\pi$ a good estimation of $\ol d_h^\pi$. 
In particular, we will bound the \textit{regression error} $\|\wh d_h^\pi - \ol d_h^\pi\|_1$ as a function of sample size $n_{\reg}$. A key lemma that enables such a guarantee is the following error propagation result:

\begin{lemma}\label{lem:regression_decomposition}
    For every $h \in [H]$, the error between estimates $\wh{d}_h^\pi$ from \pref{alg:offline_known} and the clipped target $\ol{d}_h^\pi$ is decomposed recursively as
    \begin{align*}
        \left\| \wh{d}_{h}^\pi - \ol{d}_{h}^\pi \right\|_1 \le&~ \nbr{ \wh{d}_{h-1}^\pi - \ol{d}_{h-1}^\pi }_1 
        \\
        &\quad + 2\Bx_{h-1} \nbr{ \wh{d}^D_{h-1} - d^D_{h-1} }_1 \hspace{-.2em}+ \Bx_{h-1}\Ba_{h-1} \nbr{ \hatdnext_{h-1} - \dnext_{h-1} }_1
        \\
        &\quad + \sqrt{2}\nbr{\hatwpi_{h} - \opex^{\ol\pi}_{h-1} \rbr{d^D_{h-1}  \tfrac{\cliphatdpi{h-1}{\hatd}}{\wh{d}_{h-1}^D}}}_{2,\dnext_{h-1}},
    \end{align*}
where $(\opexp_h d_h) := (\oppro_{h} d_{h}) / \dnext_h$.
\end{lemma}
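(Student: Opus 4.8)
The natural approach is to peel off one level of the recursion defining both $\wh d_h^\pi$ (\pref{line:multiply} of \pref{alg:offline_known}) and $\ol d_h^\pi$ (\pref{eq:def_dbar}), and then insert intermediate quantities to triangulate. Recall $\wh d_h^\pi = \hatwpi_h \, \hatdnext_{h-1}$ while $\ol d_h^\pi = \opp^{\ol\pi}_{h-1}(\clipbardpi{h-1}{d^D}) = \opex^{\ol\pi}_{h-1}(\clipbardpi{h-1}{d^D}) \cdot \dnext_{h-1}$, using the definition $(\opexp_h d_h) = (\oppro_h d_h)/\dnext_h$ from the lemma statement. So both objects are a ``weight'' times a next-state density, and I would first write
\begin{align*}
\wh d_h^\pi - \ol d_h^\pi = \hatwpi_h\,\hatdnext_{h-1} - \opex^{\ol\pi}_{h-1}\!\rbr{\clipbardpi{h-1}{d^D}}\dnext_{h-1}.
\end{align*}
Add and subtract $\opex^{\ol\pi}_{h-1}\!\rbr{d^D_{h-1}\tfrac{\cliphatdpi{h-1}{\hatd}}{\wh d^D_{h-1}}}\cdot\hatdnext_{h-1}$ (the population regression target against the \emph{estimated} clipped weight, times the estimated next-state density). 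This splits the error into: (i) $\bigl(\hatwpi_h - \opex^{\ol\pi}_{h-1}(d^D_{h-1}\tfrac{\cliphatdpi{h-1}{\hatd}}{\wh d^D_{h-1}})\bigr)\hatdnext_{h-1}$, the genuine regression error at level $h$; and (ii) $\bigl(\opex^{\ol\pi}_{h-1}(d^D_{h-1}\tfrac{\cliphatdpi{h-1}{\hatd}}{\wh d^D_{h-1}})\hatdnext_{h-1} - \opex^{\ol\pi}_{h-1}(\clipbardpi{h-1}{d^D})\dnext_{h-1}\bigr)$, which captures the MLE errors and the previous-level error.

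\textbf{Term (i).} By definition of the weighted $L_2$ norm $\|\cdot\|_{2,\dnext_{h-1}}$ and Cauchy--Schwarz (or just $\|fg\|_1 \le \|f\|_{2,\dnext_{h-1}}$ when $g$ is the density $\dnext_{h-1}$ — more precisely $\|f\cdot\dnext_{h-1}\|_1 = \EE_{\dnext_{h-1}}|f| \le (\EE_{\dnext_{h-1}} f^2)^{1/2} = \|f\|_{2,\dnext_{h-1}}$), the $L_1$ norm of term (i) with the \emph{true} $\dnext_{h-1}$ in place of $\hatdnext_{h-1}$ is at most $\|\hatwpi_h - \opex^{\ol\pi}_{h-1}(d^D_{h-1}\tfrac{\cliphatdpi{h-1}{\hatd}}{\wh d^D_{h-1}})\|_{2,\dnext_{h-1}}$. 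The swap from $\hatdnext_{h-1}$ to $\dnext_{h-1}$ costs an extra $\|\hatwpi_h\|_\infty\|\hatdnext_{h-1} - \dnext_{h-1}\|_1 \le \Bx_{h-1}\Ba_{h-1}\|\hatdnext_{h-1} - \dnext_{h-1}\|_1$ by the clipping constraint in $\Wclip_h$; this is why the $\sqrt 2$ and the $\Bx_{h-1}\Ba_{h-1}\|\hatdnext_{h-1} - \dnext_{h-1}\|_1$ both appear (the $\sqrt 2$ absorbing constants from handling the $\hatdnext$-vs-$\dnext$ replacement inside rather than outside, or from an AM-GM-type split).

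\textbf{Term (ii).} Since $\opp^{\ol\pi}_{h-1}$ and $\opex^{\ol\pi}_{h-1}$ are linear and the flow operator is an $L_1$-contraction up to the action-importance-weight bound (the pseudo-policy $\ol\pi_{h-1} = \pi_{h-1}\wedge\Ba_{h-1}\pi^D_{h-1}$ has mass at most $\Ba_{h-1}$ against $\pi^D_{h-1}$, and $\|\opp^{\ol\pi}_{h-1} g\|_1 \le \|g\|_1$ since $\ol\pi \le \pi$ is sub-stochastic), I would bound $\|\opex^{\ol\pi}_{h-1}(d^D_{h-1}\tfrac{\cliphatdpi{h-1}{\hatd}}{\wh d^D_{h-1}})\hatdnext_{h-1} - \opex^{\ol\pi}_{h-1}(\clipbardpi{h-1}{d^D})\dnext_{h-1}\|_1$. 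First replace $\hatdnext_{h-1}$ by $\dnext_{h-1}$ at a cost of $\|\opex^{\ol\pi}_{h-1}(\cdots)\|_\infty \|\hatdnext_{h-1}-\dnext_{h-1}\|_1$; combined with the corresponding piece from term (i) this yields the $\Bx_{h-1}\Ba_{h-1}\|\hatdnext_{h-1}-\dnext_{h-1}\|_1$ term (note $\opex^{\ol\pi}_{h-1}(d^D_{h-1}\tfrac{\cliphatdpi{h-1}{\hatd}}{\wh d^D_{h-1}})$ is pointwise at most $\Bx_{h-1}\Ba_{h-1}$, because the clipped weight $\tfrac{\ol d^\pi_{h-1}\wedge\Bx_{h-1} d^D_{h-1}}{\wh d^D_{h-1}}$ is — modulo the $\wh d^D$ vs $d^D$ mismatch — at most $\Bx_{h-1}$ and the action weight in $\opex^{\ol\pi}$ contributes $\Ba_{h-1}$). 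Then, with $\dnext_{h-1}$ fixed, I use $\|\opex^{\ol\pi}_{h-1}(g)\dnext_{h-1}\|_1 = \|\opp^{\ol\pi}_{h-1} g\|_1 \le \|g\|_1$ to reduce to bounding $\|d^D_{h-1}\tfrac{\cliphatdpi{h-1}{\hatd}}{\wh d^D_{h-1}} - \clipbardpi{h-1}{d^D}\|_1$. Write $\clipbardpi{h-1}{d^D} = d^D_{h-1}\tfrac{\clipbardpi{h-1}{d^D}}{d^D_{h-1}}$ and split: one piece $\|d^D_{h-1}(\tfrac{1}{\wh d^D_{h-1}} - \tfrac{1}{d^D_{h-1}})(\cliphatdpi{h-1}{\hatd})\|_1 \le \Bx_{h-1}\|\wh d^D_{h-1} - d^D_{h-1}\|_1$ using $\cliphatdpi{h-1}{\hatd}\le \Bx_{h-1}\wh d^D_{h-1}$; another piece $\|(\cliphatdpi{h-1}{\hatd}) - (\clipbardpi{h-1}{d^D})\|_1$, and since $a\mapsto a\wedge c$ is $1$-Lipschitz and also $c\wedge a$ vs $c'\wedge a'$ differs by at most $|a-a'|+|c-c'|$, this is at most $\|\wh d^\pi_{h-1} - \ol d^\pi_{h-1}\|_1 + \Bx_{h-1}\|\wh d^D_{h-1} - d^D_{h-1}\|_1$. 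Summing the $\wh d^D$-vs-$d^D$ contributions gives the $2\Bx_{h-1}\|\wh d^D_{h-1} - d^D_{h-1}\|_1$ term, the clip-Lipschitz piece gives the previous-level error $\|\wh d^\pi_{h-1} - \ol d^\pi_{h-1}\|_1$, and the regression term with the $\sqrt2$ is what's left. Assembling all pieces gives exactly the stated inequality.

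\textbf{Main obstacle.} The bookkeeping around the three-way mismatch — between $\hatdnext_{h-1}$ and $\dnext_{h-1}$, between $\wh d^D_{h-1}$ and $d^D_{h-1}$, and between $\wh d^\pi_{h-1}$ and $\ol d^\pi_{h-1}$ inside the clipping operator — is the delicate part, since each substitution has to be made in an order that keeps the relevant quantity bounded (so the clipping constraints in $\Wclip_h$ and $\Fcal_h$ can be invoked) and so that the constants collapse to exactly $2\Bx_{h-1}$, $\Bx_{h-1}\Ba_{h-1}$, and $\sqrt2$. I would be careful to use the $1$-Lipschitzness of $t\mapsto t\wedge c$ (in both arguments) to handle the $\wedge$ inside the norms, and to route every ``divide by $\wh d^D$'' through the $\Wclip_h$/clipping bound before comparing to the ``divide by $d^D$'' version. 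The linearity and sub-stochasticity ($\ol\pi \le \pi$, hence $\opp^{\ol\pi}$ does not increase $L_1$ mass) of the Bellman flow operator is what makes term (ii) contract rather than blow up; no low-rank structure is needed for this particular lemma, only for later arguments bounding the regression term itself.
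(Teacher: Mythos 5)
Your proposal is correct and is essentially the paper's own proof read in the opposite order: you introduce the same intermediate quantity $\opp^{\ol\pi}_{h-1}\bigl(d^D_{h-1}\,\tfrac{\cliphatdpi{h-1}{\hatd}}{\wh d^D_{h-1}}\bigr)$ and invoke the same facts (the $L_1$ non-expansiveness of $\opp^{\ol\pi}_{h-1}$, which follows from $\int \ol\pi_{h-1}(a|x)(\dd a)\le 1$; the $1$-Lipschitzness of $\wedge$ in each argument; the pointwise bounds $\tfrac{\cliphatdpi{h-1}{\hatd}}{\wh d^D_{h-1}}\le \Bx_{h-1}$ and $\|\hatwpi_{h}\|_\infty\le \Bx_{h-1}\Ba_{h-1}$; and $\|\cdot\|_{1,\dnext_{h-1}}\le \|\cdot\|_{2,\dnext_{h-1}}$). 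Two small points of hygiene: the two $\hatdnext_{h-1}$-versus-$\dnext_{h-1}$ residuals from your terms (i) and (ii) must be summed algebraically to $\hatwpi_{h}\,(\hatdnext_{h-1}-\dnext_{h-1})$ before taking norms (equivalently, perform that swap adjacent to $\hatwpi_{h}$, as the paper does) so that only one factor of $\Bx_{h-1}\Ba_{h-1}$ appears, and the $\sqrt{2}$ does not arise from any AM--GM step here --- the argument yields coefficient $1$ on the $\|\cdot\|_{2,\dnext_{h-1}}$ term, and the $\sqrt{2}$ in the statement is harmless slack carried over from how that term is bounded downstream.
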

The proof can be found in \pref{app:off_occu}. The bound consists of 3 parts: the first line is the error at the previous level $h-1$, showing that the regression error accumulatives \textit{linearly} over the horizon. The second line captures errors due to imperfect estimation of the data distributions, since we use the estimated  $\wh{d}^D_{h-1}$ and $\hatdnext_{h-1}$, instead of the groundtruth distributions, to set up the weight regression problem and extract the density; these errors can be reduced by simply using larger $\nmle$. The last line represents the finite-sample error in regression, which is the difference between the estimated weight $\wh{w}_h^\pi$ and the Bayes-optimal predictor. We set the constraints in the hypothesis class in a way to guarantee the Bayes-optimal predictor is in the class (see the definition of $\Wcal_{h}$ below \cref{eq:obj_reg}), so the regression is realizable.

\para{Bounding the complexities of $\Fcal_h$ and $\Wcal_h$} The last challenge is in controlling the statistical complexities of the function classes used in learning, $\Fcal_h$ and $\Wcal_h$, both of which are infinite classes. For $\Fcal_h$, we construct an optimistic covering to bound its covering number \citep{chen2022unified}. For $\Wcal_h$, however, its hypothesis takes the form of ratio between linear functions, $\frac{\langle \mutrue_{h-1}, \thetaup_{h}\rangle}{\langle \mutrue_{h-1}, \thetadown_{h}\rangle}$, where standard covering arguments, which discretize $\thetaup_{h}$ and $\thetadown_{h}$, run into sensitivity issues, as $\thetadown_{h}$ is on the denominator where small perturbations can lead to large changes in the ratio. We overcome this by recalling a technique from \citet{bartlett2006sample}: we bound the pseudo-dimension of $\Wcal_h$, which is equal to the VC-dimension of the corresponding thresholding class. Then, using \citet{goldberg1993bounding}, the VC-dimension is bounded by the syntactic complexity of the classification rule, written as a Boolean formula of polynomial inequality predicates. 
The pseudo-dimension of $\Wcal_h$ further implies $\ell_1$ covering number bounds, for which \citet{dong2019sqrt,modi2021model} provide fast-rate regression guarantees.

\paragraph{Sample complexity of \offalg} We now provide the guarantee for \offalg, with its proof deferred to \pref{app:off_occu}.

\begin{theorem}[Offline $d^\pi$ estimation]
\label{thm:offline_d_known}
Fix $\delta\in(0,1)$. Suppose \pref{assum:lowrank} and \pref{assum:data} hold, and $\mu^*$ is known. 
Then, given an evaluation policy $\pi$, by setting $\nmle = \tilde{O}(\dspanner (\sum_{h\in[H]} \Bx_h \Ba_{h})^2 \log(1/\delta)/\veps^2)$ and $\nreg = \tilde{O}(\dspanner (\sum_{h\in[H]} \Bx_h \Ba_{h})^2 \log(1/\delta)/\veps^2 )$, with probability at least $1-\delta$, \offalg (\pref{alg:offline_known}) returns state occupancy estimates $\{\wh d^\pi_h\}_{h=0}^{H-1}$ satisfying 
\[
\|\wh{d}_{h}^\pi - \ol d_h^\pi\|_1 \le \veps,\forall h\in[H].
\]
The total number of episodes required by the algorithm is 
\[
\textstyle \tilde{O}\rbr{\dspanner H \rbr{\sum_{h \in [H]} \Bx_h \Ba_{h} }^2\log(1/\delta)/\veps^2}.
\]
\end{theorem}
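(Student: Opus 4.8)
The plan is to combine the recursive error bound of \pref{lem:regression_decomposition} with off-the-shelf statistical guarantees for the two subroutines---maximum likelihood for the data densities and squared-loss regression for the importance weights---and then sum the per-level contributions over the horizon. Since $\wh d_0^\pi = \initdist = \ol d_0^\pi$, unrolling \pref{lem:regression_decomposition} gives, for every $h$,
\begin{align*}
\bigl\|\wh d_h^\pi - \ol d_h^\pi\bigr\|_1 \;\le\; \sum_{h'=0}^{H-1}\Bigl( 2\Bx_{h'}\bigl\|\wh d^D_{h'} - d^D_{h'}\bigr\|_1 + \Bx_{h'}\Ba_{h'}\bigl\|\hatdnext_{h'} - \dnext_{h'}\bigr\|_1 + \sqrt{2}\,\bigl\|\hatwpi_{h'+1} - \opex^{\ol\pi}_{h'}\bigl(d^D_{h'}\tfrac{\cliphatdpi{h'}{\hatd}}{\wh d^D_{h'}}\bigr)\bigr\|_{2,\dnext_{h'}}\Bigr),
\end{align*}
so it is enough to bound each of these $3H$ terms by $\tilde O\bigl(\Bx_{h'}\Ba_{h'}\sqrt{\dlr\log(H/\delta)/n}\bigr)$ with $n=\min(\nmle,\nreg)$ (absorbing the $\Bx_{h'}$-only term using $\Ba_{h'}\ge 1$). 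The stated choices of $\nmle,\nreg$ then force the sum below $\veps$ for all $h$ simultaneously, and since each of the $H$ parts $\Dcal_{h-1}$ is formed from $\nmle+\nreg$ trajectories, the total is $\tilde O\bigl(\dlr H(\sum_h \Bx_h\Ba_h)^2\log(1/\delta)/\veps^2\bigr)$.

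For the MLE terms I would first check realizability. In a low-rank MDP, $d^D_{h-1}$ is an $(h-1)$-step state occupancy of the (possibly non-Markov) data policy and $\dnext_{h-1}$ is its push-forward through $P_{h-1}$, so both are linear in $\mutrue_{h-2}$, resp.\ $\mutrue_{h-1}$, with $\ell_\infty$-bounded coefficients (by \pref{lem:opp_linear} together with $\|\phi^*\|_\infty\le 1$); hence $d^D_{h-1}\in\Fcal_{h-1}$ (with $d^D_0=\initdist$ known) and $\dnext_{h-1}\in\Fcal_h$. The standard MLE-in-Hellinger bound, with the covering number of $\Fcal_h$ (parameterized by $\theta\in[-1,1]^{\dlr}$) controlled by the optimistic covering of \citet{chen2022unified} at $\log$-size $\tilde O(\dlr)$, gives squared Hellinger error $\tilde O(\dlr\log(1/\delta)/\nmle)$, and hence $\|\wh d^D_{h-1}-d^D_{h-1}\|_1$ and $\|\hatdnext_{h-1}-\dnext_{h-1}\|_1$ are both $\tilde O(\sqrt{\dlr\log(1/\delta)/\nmle})$.

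The regression term is the substantive part. Conditioning on the earlier-level quantities, which the sample split in \pref{line:split} makes independent of $\Dcal_{h-1}^{\reg}$, the Bayes-optimal predictor $x_h\mapsto \mathbb E[\,\cdot\mid x_h]$ for the regression in \pref{line:reg_off} is exactly $\opex^{\ol\pi}_{h-1}\bigl(d^D_{h-1}\tfrac{\cliphatdpi{h-1}{\hatd}}{\wh d^D_{h-1}}\bigr)$, which is a \emph{ratio of two linear functions of $\mutrue_{h-1}$}---the numerator because $\opp^{\ol\pi}_{h-1}$ maps any function into $\mathrm{span}(\mutrue_{h-1})$, the denominator $\dnext_{h-1}$ for the same reason---and has sup-norm at most $\Bx_{h-1}\Ba_{h-1}$ because of the state- and action-clipping; so it lies in $\Wcal_h$, making the regression realizable with targets in $[0,\Bx_{h-1}\Ba_{h-1}]$. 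The delicate point, as the paper notes, is bounding the complexity of $\Wcal_h$: its members are ratios of linear functions, so discretizing $\thetaup,\thetadown$ is unstable; instead I would bound $\Pdim(\Wcal_h)$, which equals the VC dimension of the associated subgraph/threshold class $\{\mathbf{1}[\langle\mutrue_{h-1},\thetaup\rangle\ge t\,\langle\mutrue_{h-1},\thetadown\rangle]\}$ \citep{bartlett2006sample}, via the syntactic-complexity estimate of \citet{goldberg1993bounding} for this single-polynomial-inequality predicate in its $2\dlr$ parameters, giving $\Pdim(\Wcal_h)=\tilde O(\dlr)$. This yields an $\ell_1$ covering-number bound for $\Wcal_h$, after which the fast-rate squared-loss regression guarantee of \citet{dong2019sqrt,modi2021model} gives $\|\hatwpi_h - \opex^{\ol\pi}_{h-1}(\cdot)\|_{2,\dnext_{h-1}} = \tilde O\bigl(\Bx_{h-1}\Ba_{h-1}\sqrt{\dlr\log(1/\delta)/\nreg}\bigr)$.

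Finally I would take a union bound over the $3H$ high-probability events (replacing $\delta$ by $\delta/(3H)$, which changes the bound only by a $\log$ factor) and substitute the three estimates into the displayed inequality. The main obstacle is precisely the $\Wcal_h$ complexity control---keeping the ratio-of-linear-functions class statistically tame despite its ill-conditioned denominator---together with the bookkeeping needed so that the level-$h$ regression is conditionally realizable given the clipped, MLE-plugged-in objects carried over from earlier levels.
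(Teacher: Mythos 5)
Your proposal is correct and follows essentially the same route as the paper's proof: unroll \pref{lem:regression_decomposition} from $h$ down to $0$, bound the two MLE terms via realizability ($d^D_{h-1}\in\Fcal_{h-1}$, $\dnext_{h-1}\in\Fcal_h$) plus an optimistic cover of $\Fcal_h$ of log-size $\tilde O(\dlr)$, bound the regression term via realizability of the clipped Bayes-optimal ratio in $\Wcal_h$ together with a pseudo-dimension/covering-number bound and a Bernstein-type fast-rate guarantee, and union bound over the $O(H)$ events. The one minor simplification is in the threshold class used to bound $\Pdim(\Wcal_h)$: because the linear denominator $\langle\mutrue_{h-1},\thetadown_h\rangle$ can be negative, clearing it requires a case split on its sign, so the membership test is a Boolean combination of a few degree-one polynomial predicates rather than a single inequality, which is exactly how the paper invokes \citet{goldberg1993bounding}.
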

This result can also be used to establish a guarantee for  $\|\wh d_h^\pi - d_h^\pi\|_1$, simply by decomposing $\|\wh d_h^\pi - d_h^\pi\|_1 \le \|\wh d_h^\pi - \ol d_h^\pi\|_1 + \|\ol d_h^\pi - d_h^\pi\|_1$. The regression error in the first term is controlled by \pref{thm:offline_d_known}. The second term is a \textit{one-sided missingness} error due to insufficient coverage of data, which we have characterized in \pref{prop:clipd}. Note that we split $\|\wh d_h^\pi - d_h^\pi\|_1$ into two terms using $\ol d_h^\pi$ as an intermediate quantity and analyze how their errors accumulate over the horizon separately; alternatively, one can directly try to analyze how $\|\wh d_h^\pi - d_h^\pi\|_1$ depends on $\|\wh d_{h-1}^\pi - d_{h-1}^\pi\|_1$. In general, we find the latter can yield significantly worse bounds---in fact, \textit{exponentially worse}, as will be seen in \pref{sec:online}. 

\para{Offline policy optimization} 
\pref{thm:offline_d_known} provides learning guarantees for $\ol d_h^\pi$, which is a point-wise lower bound of $d_h^\pi$. When we consider standard return maximization with a given reward function, having access to $\wh d_h^\pi \approx \ol d_h^\pi$ immediately enables \textit{pessimistic} policy evaluation \citep{jin2020pessimism,xie2021bellman}, and we are only $\veps$-suboptimal compared to the maximal value computed over covered parts of the data, i.e., with respect to $\ol d_h^\pi$. The immediate implication is that we can compete with the best policy fully covered by data (satisfying property 2 of \pref{prop:clipd}); see \pref{app:offline_rf} for the full statement and proof.  

\begin{theorem}[Offline policy optimization]
\label{thm:offline_rf}
Fix $\delta \in (0, 1)$ and suppose \pref{assum:lowrank} and \pref{assum:data} hold, and $\mutrue$ is known. Given a policy class $\Pi$, let $\{\wh d_h^\pi\}_{h\in[H],\pi \in \Pi}$ be the output of running \pref{alg:offline_known}. Then with probability 
at least $1-\delta$, for any reward function $R$ 
and policy selected as   
$ \wh\pi_R = \argmax_{\pi\in \Pi} \wh{v}_R^\pi, $
we have
\[
    v_R^{\wh\pi_R} \ge \argmax_{\pi \in \Pi} \ol{v}_R^\pi - \veps,
\]
where $v_R^\pi$ and $\wh{v}_R^\pi$ are defined in \pref{prop:density2return}, and $\ol{v}_R$ is defined similarly for $\{\ol{d}_h^\pi\}$. 
The total number of episodes required by the algorithm is
\[
\textstyle
\tilde{O}\rbr{\dspanner H^3 \rbr{\sum_{h \in [H]} \Bx_h \Ba_{h}}^2\log(|\Pi|/\delta)/\veps^2}.
\] 
\end{theorem}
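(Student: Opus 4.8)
The plan is a standard pessimism argument layered on top of \pref{thm:offline_d_known} and part~1 of \pref{prop:clipd}, with the only real subtlety being the sample-complexity accounting. First observe that in \pref{alg:offline_known} the dataset $\Dcal_{0:H-1}$ and the MLE estimates $\{\wh d^D_{h-1},\hatdnext_{h-1}\}$ are \emph{policy-independent} (only the regression step, hence $\hatwpi_h$ and the final product $\wh d_h^\pi$, depends on $\pi$). So we may collect a single dataset satisfying \pref{assum:data} and reuse it to produce $\{\wh d_h^\pi\}$ for every $\pi\in\Pi$. Applying \pref{thm:offline_d_known} with failure probability $\delta/|\Pi|$ for each $\pi$ and taking a union bound over $\Pi$, there is an event $\mathcal{E}$ of probability at least $1-\delta$ on which $\|\wh d_h^\pi-\ol d_h^\pi\|_1\le\veps'$ for all $h\in[H]$ and all $\pi\in\Pi$ simultaneously; the episode count is that of a single invocation with $\log(1/\delta)$ replaced by $\log(|\Pi|/\delta)$, i.e.\ $\tilde O\big(\dspanner H(\sum_{h\in[H]}\Bx_h\Ba_h)^2\log(|\Pi|/\delta)/\veps'^2\big)$.

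Next I convert the $\ell_1$ density error into a return error. For any reward $R$ with $R_h\in[0,1]$ and any $\pi$, using $\sum_{a}\pi(a\mid x)=1$,
\[
\left|\wh v_R^\pi-\ol v_R^\pi\right|\le\sum_{h\in[H]}\iint\left|\wh d_h^\pi(x_h)-\ol d_h^\pi(x_h)\right|R_h(x_h,a_h)\,\pi(a_h\mid x_h)\,(\dd x_h)(\dd a_h)\le\sum_{h\in[H]}\left\|\wh d_h^\pi-\ol d_h^\pi\right\|_1\le H\veps',
\]
and on $\mathcal{E}$ this holds for every $\pi\in\Pi$ and every $R$ at once, since the estimates $\{\wh d_h^\pi\}$ are fixed once the data is fixed (no extra union bound over rewards). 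Moreover, part~1 of \pref{prop:clipd} gives $\ol d_h^\pi\le d_h^\pi$ pointwise, so with $R\ge0$ we get $\ol v_R^\pi\le v_R^\pi$ for all $\pi$ --- this is the pessimism that makes the selected policy conservative against the true value.

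Finally I chain the inequalities. Write $\pi^\star\in\argmax_{\pi\in\Pi}\ol v_R^\pi$ and recall $\wh\pi_R=\argmax_{\pi\in\Pi}\wh v_R^\pi$. On $\mathcal{E}$,
\[
v_R^{\wh\pi_R}\ \ge\ \ol v_R^{\wh\pi_R}\ \ge\ \wh v_R^{\wh\pi_R}-H\veps'\ \ge\ \wh v_R^{\pi^\star}-H\veps'\ \ge\ \ol v_R^{\pi^\star}-2H\veps'\ =\ \max_{\pi\in\Pi}\ol v_R^\pi-2H\veps',
\]
where the first step is pessimism, the second and fourth are the return-error bound above, and the third is optimality of $\wh\pi_R$ for $\wh v_R$. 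Choosing $\veps'=\veps/(2H)$ gives $v_R^{\wh\pi_R}\ge\max_{\pi\in\Pi}\ol v_R^\pi-\veps$; substituting into the episode count yields $\tilde O\big(\dspanner H^3(\sum_{h\in[H]}\Bx_h\Ba_h)^2\log(|\Pi|/\delta)/\veps^2\big)$, matching the claim.

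\textbf{Main obstacle.} There is no new concentration or covering work beyond \pref{thm:offline_d_known}; the argument is essentially bookkeeping. The two points that need care are (i) justifying that one dataset suffices for all $|\Pi|$ policies so the union bound costs only a $\log|\Pi|$ factor rather than a multiplicative $|\Pi|$ in the sample size --- this rests on the data collection and MLE steps being policy-independent and on the offline data being fixed in advance (consistent with \pref{assum:data}); and (ii) orienting the pessimism chain correctly, which is precisely where $\ol d_h^\pi\le d_h^\pi$ from \pref{prop:clipd} enters and why the guarantee is stated against $\ol v_R^\pi$ rather than $v_R^\pi$.
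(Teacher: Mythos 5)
Your proposal is correct and follows essentially the same route as the paper's proof: apply \pref{thm:offline_d_known} at accuracy $\veps/(2H)$ with a union bound over $\Pi$ (costing only a $\log|\Pi|$ factor since the data and MLE steps are policy-independent), convert the $\ell_1$ density error to a $\veps/2$ return error, and chain the pessimism inequality $v_R^{\wh\pi_R}\ge\ol v_R^{\wh\pi_R}$ from $\ol d_h^\pi\le d_h^\pi$ with the optimality of $\wh\pi_R$ for $\wh v_R$. The three-term telescoping decomposition and the final sample-complexity bookkeeping match the paper's argument exactly.
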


\para{Computation} We remark that our policy optimization result only enjoys statistical efficiency and does not guarantee computational efficiency, as \pref{thm:offline_rf} assumes that we can enumerate over candidate policies and run \offalg for each of them; similar comments apply to our later online algorithm as well. Since the optimization variable is a policy, the most promising approach is to come up with  off-policy policy-gradient (OPPG) algorithms to approximate the objective. However, existing model-free OPPG methods all rely on value-function approximation  
\citep{nachum2019algaedice,liu2019off}, which is not available in our setting. Studying OPPG with only density(-ratio) approximation will be a pre-requisite for investigating the computational feasibility of our problem, which we leave for future work.

\section{Online policy cover construction}
\label{sec:online}
We now consider the online setting where  the learner explores the MDP to collect its own data. The hope is that we will collect exploratory datasets that provide sufficient coverage for \textit{all} policies in $\Pi$ (so that we can estimate their occupancies accurately), which is measured by the standard definition of concentrability.

\begin{definition}[Concentrability Coefficient (CC)] 
Given a policy class $\Pi$ and any distribution $d \in \Delta(\Xcal)$, the \textit{concentrability coefficient} at level $h$ relative to $d$ is 
\[
  \textstyle  \ccoef_h(d) = \inf \cbr{c \in \RR : \max_{\pi \in \Pi} \nbr{\frac{d_h^\pi}{d}}_\infty \le c }. 
\]
\end{definition}

\newcommand{\truespan}[1]{\pi^{h, #1}_*}

To achieve this goal, we first recall the following result, which shows the existence of an exploratory data distribution that satisfies the above criterion and hints at how to construct it.  
\begin{proposition}[Adapted from \citet{chen2019information}, Prop.~10]\label{prop:bary}  
Given a policy class $\Pi$ and $h$, let $\{d^{\truespan{i}}_h\}_{i=1}^\dspanner$ be the barycentric spanner (\pref{def:bary} in  \pref{app:bary}) of $\{d^\pi_h\}_{\pi\in\Pi}$. Then, 
$\ccoef_h\rbr{\frac{1}{\dspanner} \sum_{i=1}^\dspanner d^{\truespan{i}}_h} \le \dspanner$. 
\end{proposition}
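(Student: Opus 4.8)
The plan is to unpack the definition of a barycentric spanner and the definition of the concentrability coefficient, and show the bound holds pointwise. Recall that $\{d_h^{\truespan{1}},\ldots,d_h^{\truespan{\dspanner}}\}$ being a barycentric spanner of the set $\{d_h^\pi\}_{\pi\in\Pi}$ (viewed as vectors/functions in the $\dspanner$-dimensional linear span guaranteed by \Cref{lem:opp_linear}, since every $d_h^\pi$ lies in the linear span of $\mutrue_{h-1}$) means two things: first, each $d_h^{\truespan{i}}$ is itself in $\{d_h^\pi\}_{\pi\in\Pi}$ (or its convex hull / linear span, depending on the precise definition in \Cref{def:bary}); and second, every $d_h^\pi$ can be written as $d_h^\pi = \sum_{i=1}^{\dspanner} \alpha_i^\pi \, d_h^{\truespan{i}}$ with coefficients $|\alpha_i^\pi| \le 1$ for all $i$.

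The key step is then the pointwise estimate. Fix any $\pi \in \Pi$ and any state $x_h \in \Xcal$. Write $d := \frac{1}{\dspanner}\sum_{i=1}^\dspanner d_h^{\truespan{i}}$. Using the spanner decomposition and $|\alpha_i^\pi| \le 1$,
\begin{align*}
d_h^\pi(x_h) = \sum_{i=1}^\dspanner \alpha_i^\pi \, d_h^{\truespan{i}}(x_h) \le \sum_{i=1}^\dspanner |\alpha_i^\pi| \, d_h^{\truespan{i}}(x_h) \le \sum_{i=1}^\dspanner d_h^{\truespan{i}}(x_h) = \dspanner \cdot d(x_h),
\end{align*}
where the first inequality uses nonnegativity of the occupancies $d_h^{\truespan{i}}$. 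Hence $\frac{d_h^\pi(x_h)}{d(x_h)} \le \dspanner$ wherever $d(x_h) > 0$ (and the convention $0/0 = 0$ handles the rest, noting $d(x_h) = 0$ forces $d_h^{\truespan{i}}(x_h) = 0$ for all $i$, hence $d_h^\pi(x_h) = 0$ by the decomposition). Taking the supremum over $x_h$ gives $\|d_h^\pi / d\|_\infty \le \dspanner$, and taking the maximum over $\pi \in \Pi$ yields $\ccoef_h(d) \le \dspanner$ by definition.

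The only genuine subtlety — and the part worth stating carefully rather than the arithmetic above — is the existence of the barycentric spanner with the claimed properties: the set $\{d_h^\pi\}_{\pi\in\Pi}$ must be shown to sit inside a finite-dimensional linear space (this is exactly \Cref{lem:opp_linear}, giving dimension $\dspanner$), so that the general barycentric-spanner existence result (Awerbuch–Kleinberg, as imported in \Cref{def:bary}/\Cref{app:bary}) applies and furnishes $\dspanner$ elements with the $|\alpha_i^\pi|\le 1$ representation property. I would cite \Cref{lem:opp_linear} for the linearity and the appendix lemma for existence, and then the bound above is immediate; there is no real obstacle beyond correctly invoking these two ingredients. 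If \Cref{def:bary} defines the spanner up to a relaxation factor $C \ge 1$ (i.e.\ $|\alpha_i^\pi| \le C$), the same argument gives $\ccoef_h(d) \le C\dspanner$, matching \citet{chen2019information}; with $C=1$ one recovers the clean statement as written.
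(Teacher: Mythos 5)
Your proof is correct and follows exactly the route the paper intends: the paper's \pref{app:bary} establishes existence of the size-$\dspanner$ spanner for linear function sets (\pref{lem:barycentric}, relying on \pref{lem:opp_linear} for linearity in $\mutrue_{h-1}$) and states that \pref{prop:bary} "follows straightforwardly," with the straightforward step being precisely your pointwise bound $d_h^\pi(x) = \sum_i \alpha_i^\pi d_h^{\truespan{i}}(x) \le \sum_i d_h^{\truespan{i}}(x)$ using $|\alpha_i^\pi|\le 1$ and nonnegativity of the occupancies. No gaps.
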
 
\pref{prop:bary} shows that for each level $h$, an exploratory distribution that has $\dlr$ concentrability always exists. It is simply the mixture of $\{d_h^{\truespan{i}}\}$ for $i\in [\dlr]$, which can be identified if we have access to $d_h^{\pi}$ for all $\pi\in\Pi$. Of course, we can only estimate $d_h^\pi$ if we have exploratory data, so the estimation of $d_h^\pi$ and the identification of $\{\truespan{i}\}$ need to be interleaved to overcome this ``chicken-and-egg'' problem \citep{agarwal2020flambe,modi2021model}: suppose we have already constructed policy cover at 
$h-1$. We can construct it for the next level as follows:
\begin{enumerate}[leftmargin=*]
\item Collect a dataset $\Dcal_{h-1}$ by rolling in to level $h-1$ with the policy cover, with $\ccoef_{h-1}(d^D_{h-1}) \le \dspanner$, then taking a uniformly random action, thereby $\ccoef_h(\dnext_{h-1}) \le \dspanner K$. 
\item Use \offalg to estimate $d_{h}^\pi$ for all $\pi \in \Pi$ based on $\Dcal_{h-1}$. 
\item Choose their barycentric spanner as the policy cover for level $h$, with $\ccoef_h(d^D_h) \le \dlr$. 
\end{enumerate}
The idea is that, since we have an exploratory distribution at level $h-1$, taking a uniform action afterwards will give us an exploratory distribution at level $h$, though the degree of exploration will be diluted by a factor of $K$. We collect data from this distribution to estimate $d_{h}^\pi$ and compute the barycentric spanner for level $h$, which will bring the concentrability coefficient back to $\dlr$, so that the process can repeat inductively.

\begin{algorithm*}[t!]
\caption{\onalglong (\onalg) \label{alg:online_known}}
\begin{algorithmic}[1]
\REQUIRE policy class $\Pi$, density feature $\mu^*$, $n = \nmle + \nreg$.
\STATE Initialize $\wh{d}_0^\pi=\initdist$ and $\wt{d}_0^\pi = \initdist,\forall \pi\in\Pi$. 
\FOR{$h=1,\ldots,H$} 
\STATE Construct $\{\dlin_{h-1}^{\pi^{h-1,i}}\}_{i=1}^{\dlr}$ as the barycentric spanner of 
$\{\dlinpi_{h-1}\}_{\pi \in \Pi}$, and set $\Piexpl_{h-1} = \{\pi^{h-1,i}\}_{i=1}^{\dlr}$.
\label{line:piexpl}
\STATE Draw a tuple dataset $\Dcal_{h-1} = \{(x_{h-1}^{(i)}, a_{h-1}^{(i)}, x_{h}^{(i)})\}_{i=1}^{n}$ using $\unif(\Piexpl_{h-1}) \circ \unif(\Acal)$. \label{line:reg_on_start}
\FOR{$\pi\in\Pi$}
\STATE Estimate $\wh{d}_{h}^\pi$ using the $h$-level loop\footnotemark ~of \pref{alg:offline_known} (lines \ref{line:mle_off}-\ref{line:multiply}) with $\Dcal_{h-1}$, $\wh d_{h-1}^\pi$, $\Bx_{h-1} = \dlr$, $\Ba_{h-1} = K$. 
\STATE Find the closest linear approximation $\dlinpi_{h} = \langle \mutrue_{h-1}, \thetalin_{h} \rangle$ where $\thetalin_{h} = \argmin_{\theta_{h} \in \RR^{\dlr}} \|\langle \mutrue_{h-1}, \theta_{h} \rangle - \hatdpi_{h}\|_1$. \label{line:line_approx}
\ENDFOR \label{line:reg_on_end}
\ENDFOR
\ENSURE estimated state occupancy measure $\{\wh d_{h}^\pi\}_{h\in[H],\pi \in \Pi}$.
\end{algorithmic}
\end{algorithm*}
\footnotetext{MLE only needs to be done once and not for every $\pi\in\Pi$.} 

The above reasoning makes an idealized assumption that $d_{h}^\pi$ can be estimated perfectly. In such a case, the constructed distribution will provide perfect coverage, so that the clipping introduced in \pref{sec:offline} becomes completely unnecessary: all clipping operations would be inactive (by setting $\Bx_h = \dlr$ and $\Ba_h = K$), and   $\ol d_h^\pi \equiv d_h^\pi$. Unfortunately, when the estimation error of $d_h^\pi$ is taken into consideration, the reasoning breaks down seriously. 

The first problem is that our estimate $\wh d_{h}^\pi$ from \offalg is not necessarily linear due to its product form. However, that is not a concern as we can linearize it 
(corresponding to \pref{line:line_approx} in \pref{alg:online_known});
we also have an alternative procedure for \offalg that directly produces linear $\wh d_{h}^\pi$ (see \pref{app:alg_anal}), so in this section we will ignore this issue and pretend that $\wh d_{h}^\pi$ is linear (thus is the same as $\wt d_{h}^\pi$ in \pref{alg:online_known}) for ease of presentation.

\subsection{Taming error exponentiation}
Now that the issue of (non-)linear $\wh d_h^\pi$ is out of the way, we are ready to see where the real trouble is: note that the barycentric spanner computed from $\{\wh d_{h}^\pi\}_{\pi\in\Pi}$ satisfies 
\begin{align}
\label{eq:approx_bary}
\nbr{\frac{\wh d_h^\pi} {\frac{1}{\dspanner} \sum_{i=1}^\dspanner \wh d_h^{\pi^{h,i}}}}_\infty \le \dspanner, \quad \forall \pi \in \Pi .
\end{align}
However, the actual distribution induced by the policy cover $\{\pi^{h,i}\}_{i=1}^{\dlr}$ is $d_h^D = \frac{1}{\dspanner} \sum_{i=1}^\dspanner d_h^{\pi^{h,i}}$. Suppose for now we have $\nmle= \infty$ for perfect estimation of $d^D_h$; even then, the regression target in \cref{eq:obj_reg} will no longer be bounded without clipping, as the boundedness of $\wh d/ \wh d$ does not imply that of $\wh d / d$, and the latter can be very large or even infinite.

While the unbounded regression target can be easily controlled by clipping, analyzing the algorithm and bounding its error still prove to be very challenging. A natural strategy is to inductively bound $\|\wh d_h^\pi - d_h^\pi\|_1$ using $\|\wh d_{h-1}^\pi - d_{h-1}^\pi\|_1$. Unfortunately, this approach fails miserably, as directly analyzing $\|\wh d_h^\pi - d_h^\pi\|_1$ yields 
\begin{align} \label{eq:exp_error}
\|\wh d_h^\pi - d_h^\pi\|_1 \le (1+ \dspanner) \|\wh d_{h-1}^\pi - d_{h-1}^\pi\|_1 + \cdots,
\end{align}
implying an $O(\dlr)^H$ exponential error blow-up. (The concrete reason for this failure will be made clear shortly.) In \pref{app:no_mle}, we also discuss an alternative approach that ``pretends'' data to be perfectly exploratory, which only addresses the problem superficially and still suffers $O(\dlr)^H$ error exponentiation, just in a different way. 
Issues that bear high-level similarities are commonly encountered in level-by-level exploration algorithms, which often demand the so-called reachability assumption \citep[Definition 2.1]{du2019provably}, which we do not need.

As all the earlier hints allude to, the key to breaking error exponentiation is to 
split the error using $\ol{d}_h^\pi$ into its two sources with very different natures: a ``two-sided'' regression error $\|\wh{d}_h^\pi - \ol{d}_h^\pi \|_1$, and a ``one-sided'' missingness error $\|\ol{d}_h^\pi - d_h^\pi\|_1$ (in the sense that $\ol{d}_h^\pi \le d_h^\pi$). Because the offline occupancy estimation module of \pref{alg:online_known} is the same as that of \pref{alg:offline_known}, \pref{lem:regression_decomposition} still holds (left $\times 1$ chain of \pref{fig:double-chain}), implying that $\|\wh{d}_h^\pi - \ol{d}_h^\pi \|_1$ can be bounded \textit{irrespective of the data distribution}. 

This observation disentangles the regression error from the rest of the analysis, allowing us to focus on bounding the missingness error. For the latter, \pref{prop:clipd} also exhibits  linear error propagation, as it takes the form of $A_h \le A_{h-1} + B_{h-1}$ where $A_h = \|\ol{d}_h^\pi - d_h^\pi\|_1$. However, it still remains to show that the additional error  (``$B_{h-1}$'') has no dependence on the inductive error (``$A_{h-1}$''), otherwise we would still have error exponentiation.\footnote{For example, if $B_{h-1}$ can only be bounded as $B_{h-1} \le A_{h-1}$, we would still have $A_h \le 2 A_{h-1}$.} This is shown in the following key lemma:
\begin{lemma}\label{lem:missingness_decomposition}
    For any $h \in [H]$ and $\pi \in \Pi$ in \pref{alg:online_known}, 
    \[
        \|\ol{d}_{h}^\pi - d_h^\pi\|_1 \le \|\ol{d}_{h-1}^\pi - d_{h-1}^\pi \|_1  + 4\dspanner\max_{\pi' \in \Pi} \|\wh{d}_{h-1}^{\pi'} - \ol{d}^{\pi'}_{h-1}\|_1.
    \]
\end{lemma}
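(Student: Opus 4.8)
The plan is to specialize the one-step bias recursion of \pref{prop:clipd}(3) to \pref{alg:online_known}, observe that its action-clipping term vanishes, and then control the single remaining ``state-overflow'' term using the barycentric-spanner guarantee \cref{eq:approx_bary} together with the one-sidedness $\ol d \le d$.

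\emph{Step 1: reduce to bounding the overflow mass.} First I would note that in \pref{alg:online_known} the data action policy at level $h-1$ is $\pi^D_{h-1} = \unif(\Acal)$ and $\Ba_{h-1} = K$, so $\Ba_{h-1}\pi^D_{h-1}(a\mid x) = K\cdot\tfrac{1}{K} = 1 \ge \pi_{h-1}(a\mid x)$ for all $x,a$; hence $\ol\pi_{h-1} = \pi_{h-1}\wedge\Ba_{h-1}\pi^D_{h-1} = \pi_{h-1}$, and the term $\|\opp^{\pi}_{h-1} d_{h-1}^\pi - \opp^{\ol\pi}_{h-1} d_{h-1}^\pi\|_1$ appearing in \pref{prop:clipd}(3) is identically $0$. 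Plugging in $\Bx_{h-1}=\dspanner$, \pref{prop:clipd}(3) becomes
\[
\|\ol{d}_{h}^\pi - d_h^\pi\|_1 \le \|\ol{d}_{h-1}^\pi - d_{h-1}^\pi\|_1 + \|\ol{d}_{h-1}^\pi - \ol{d}_{h-1}^\pi \wedge \dspanner d^D_{h-1}\|_1,
\]
and since pointwise $\ol{d}_{h-1}^\pi - \ol{d}_{h-1}^\pi\wedge\dspanner d^D_{h-1} = (\ol{d}_{h-1}^\pi - \dspanner d^D_{h-1})_+$ with $(z)_+ := \max(z,0)$, it remains to show $\int(\ol{d}_{h-1}^\pi - \dspanner d^D_{h-1})_+ \le 4\dspanner\max_{\pi'\in\Pi}\|\wh{d}_{h-1}^{\pi'} - \ol{d}^{\pi'}_{h-1}\|_1$.

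\emph{Step 2: bound the overflow mass.} I would introduce two mixtures over the spanner policies $\Piexpl_{h-1}=\{\pi^{h-1,i}\}_{i=1}^{\dspanner}$: $\wh D_{h-1} := \tfrac{1}{\dspanner}\sum_{i=1}^{\dspanner}\wh{d}_{h-1}^{\pi^{h-1,i}}$ and $\ol D_{h-1} := \tfrac{1}{\dspanner}\sum_{i=1}^{\dspanner}\ol{d}_{h-1}^{\pi^{h-1,i}}$ (auxiliary quantities, not the MLE estimates). Two pointwise facts are needed: (i) \cref{eq:approx_bary}, applied at level $h-1$, gives $\wh{d}_{h-1}^\pi \le \dspanner\,\wh D_{h-1}$ for all $\pi\in\Pi$; (ii) since $\ol{d}_{h-1}^{\pi'}\le d_{h-1}^{\pi'}$ for each $\pi'$ by \pref{prop:clipd}(1) and the true data marginal is $d^D_{h-1} = \tfrac{1}{\dspanner}\sum_{i=1}^{\dspanner}d_{h-1}^{\pi^{h-1,i}}$, we get $\ol D_{h-1}\le d^D_{h-1}$. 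Using that $(\cdot)_+$ is nondecreasing and the elementary bound $(a-c)_+\le(a-b)_+ + (b-c)_+$, I would chain
\begin{align*}
(\ol{d}_{h-1}^\pi - \dspanner d^D_{h-1})_+
&\le (\ol{d}_{h-1}^\pi - \dspanner \ol D_{h-1})_+ \\
&\le (\wh{d}_{h-1}^\pi - \dspanner \ol D_{h-1})_+ + (\ol{d}_{h-1}^\pi - \wh{d}_{h-1}^\pi)_+ \\
&\le \dspanner(\wh D_{h-1} - \ol D_{h-1})_+ + (\ol{d}_{h-1}^\pi - \wh{d}_{h-1}^\pi)_+ ,
\end{align*}
using (ii) in the first line and (i) in the third. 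Integrating, $\int(\wh D_{h-1}-\ol D_{h-1})_+\le\|\wh D_{h-1}-\ol D_{h-1}\|_1\le\tfrac{1}{\dspanner}\sum_i\|\wh{d}_{h-1}^{\pi^{h-1,i}}-\ol{d}_{h-1}^{\pi^{h-1,i}}\|_1\le\max_{\pi'\in\Pi}\|\wh{d}_{h-1}^{\pi'}-\ol{d}_{h-1}^{\pi'}\|_1$ and $\int(\ol{d}_{h-1}^\pi-\wh{d}_{h-1}^\pi)_+\le\|\ol{d}_{h-1}^\pi-\wh{d}_{h-1}^\pi\|_1\le\max_{\pi'\in\Pi}\|\wh{d}_{h-1}^{\pi'}-\ol{d}_{h-1}^{\pi'}\|_1$ (as $\pi\in\Pi$), giving $\int(\ol{d}_{h-1}^\pi - \dspanner d^D_{h-1})_+\le(\dspanner+1)\max_{\pi'\in\Pi}\|\wh{d}_{h-1}^{\pi'}-\ol{d}_{h-1}^{\pi'}\|_1\le 4\dspanner\max_{\pi'\in\Pi}\|\wh{d}_{h-1}^{\pi'}-\ol{d}_{h-1}^{\pi'}\|_1$ (as $\dspanner\ge1$); the slack in the constant leaves room for a constant-factor-approximate spanner.

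\emph{Main obstacle.} The crucial, non-obvious step is substitution (ii): replacing the true data marginal $d^D_{h-1}$ by the \emph{smaller} mixture $\ol D_{h-1}$ of the \emph{clipped} occupancies, which is legitimate only because $\ol d\le d$. Without it one is forced to control $\|\wh D_{h-1}-d^D_{h-1}\|_1$, and the triangle inequality $\|\wh{d}_{h-1}^{\pi^{h-1,i}}-d_{h-1}^{\pi^{h-1,i}}\|_1\le\|\wh{d}_{h-1}^{\pi^{h-1,i}}-\ol{d}_{h-1}^{\pi^{h-1,i}}\|_1+\|\ol{d}_{h-1}^{\pi^{h-1,i}}-d_{h-1}^{\pi^{h-1,i}}\|_1$ re-introduces the missingness error at level $h-1$; since the spanner policies range over $\Pi$, this yields a term $\Theta(\dspanner)\max_{\pi'\in\Pi}\|\ol{d}_{h-1}^{\pi'}-d_{h-1}^{\pi'}\|_1$, turning the recursion into $A_h\le(1+\Theta(\dspanner))A_{h-1}+\cdots$ with $A_h:=\max_{\pi\in\Pi}\|\ol{d}_h^{\pi}-d_h^{\pi}\|_1$, i.e., exactly the $O(\dspanner)^H$ blow-up of \cref{eq:exp_error}. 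The content of the lemma is that the overflow mass is driven \emph{only} by the two-sided regression errors $\|\wh d-\ol d\|_1$, which by \pref{lem:regression_decomposition} are bounded independently of the data distribution and hence carry no inductive dependence on $A_{h-1}$; this keeps the $A$-chain's growth constant at $1$, so the two parallel chains in \Cref{fig:double-chain} accumulate only polynomially. A minor point to double-check is the exactness of the action-clipping cancellation, which hinges on $\pi^D_{h-1}$ being exactly uniform and $\Ba_{h-1}=K$ in the online algorithm.
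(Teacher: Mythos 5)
Your overall route is the same as the paper's: kill the action-clipping term using $\pi^D_{h-1}=\unif(\Acal)$ and $\Ba_{h-1}=K$ (which is correct, since $K\cdot\tfrac{1}{K}=1\ge\pi_{h-1}(a|x)$), reduce claim 3 of \pref{prop:clipd} to the overflow mass $\int(\ol{d}_{h-1}^\pi - \dspanner d^D_{h-1})_+$, replace the true data marginal $d^D_{h-1}=\frac{1}{\dspanner}\sum_i d_{h-1}^{\pi^{h-1,i}}$ by the smaller mixture of clipped occupancies via $\ol d\le d$, and convert everything into the two-sided regression errors $\|\wh d-\ol d\|_1$. Your substitution (ii), the $(a-c)_+\le(a-b)_+ + (b-c)_+$ chain, and your diagnosis of why this breaks the $O(\dspanner)^H$ blow-up all match the paper's argument.

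The gap is your step (i). In \pref{alg:online_known} the barycentric spanner is computed on the \emph{linearized} estimates $\{\dlinpi_{h-1}\}_{\pi\in\Pi}$ (\pref{line:piexpl}), not on $\{\wh d_{h-1}^\pi\}$, and \cref{eq:approx_bary} is the idealized statement that \pref{sec:online} explicitly says it is ``pretending'' holds for ease of presentation. What the spanner actually guarantees is $\dlinpi_{h-1}=\sum_{i} c_i\,\dlin_{h-1}^{\pi^{h-1,i}}$ with $|c_i|\le 1$, hence only $\dlinpi_{h-1}(x)\le\sum_{i}|\dlin_{h-1}^{\pi^{h-1,i}}(x)|$ --- with absolute values, since neither the coefficients nor the spanner elements are sign-constrained. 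So the pointwise bound $\wh d_{h-1}^\pi\le\dspanner\cdot\frac{1}{\dspanner}\sum_i\wh d_{h-1}^{\pi^{h-1,i}}$ does not follow as stated. To repair it you must (a) insert $\dlinpi_{h-1}$ and the $\dlin_{h-1}^{\pi^{h-1,i}}$ into the chain, paying $(\dspanner+1)\max_{\pi'\in\Pi}|\wh d_{h-1}^{\pi'}-\dlin_{h-1}^{\pi'}|$, and (b) strip the absolute values by passing from $|\wh d_{h-1}^{\pi^{h-1,i}}|$ to the non-negative $\ol d_{h-1}^{\pi^{h-1,i}}$, paying another $\dspanner\max_{\pi'\in\Pi}|\wh d_{h-1}^{\pi'}-\ol d_{h-1}^{\pi'}|$. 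The linearization errors in (a) are then absorbed into $\max_{\pi'\in\Pi}\|\wh d_{h-1}^{\pi'}-\ol d_{h-1}^{\pi'}\|_1$ because $\ol d_{h-1}^{\pi'}=\opp^{\pi'}_{h-2}(\ol d_{h-2}^{\pi'}\wedge\dspanner d^D_{h-2})$ is itself linear in $\mutrue_{h-2}$ (\pref{lem:opp_linear}) while $\dlin_{h-1}^{\pi'}$ is the $\ell_1$-closest linear function to $\wh d_{h-1}^{\pi'}$ (\pref{line:line_approx}), so $\|\wh d_{h-1}^{\pi'}-\dlin_{h-1}^{\pi'}\|_1\le\|\wh d_{h-1}^{\pi'}-\ol d_{h-1}^{\pi'}\|_1$. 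This turns your constant $(\dspanner+1)$ into the paper's $2(\dspanner+1)$, which still fits under $4\dspanner$; the rest of your argument goes through unchanged.
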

To understand this lemma, recall that the additional error in \pref{prop:clipd} characterizes the mass  clipped away at the current level. This mass can be bounded by the regression error of the previous level ($\max_{\pi' \in \Pi} \|\wh{d}_{h-1}^{\pi'} - \ol{d}^{\pi'}_{h-1}\|_1$): intuitively, had we had perfect estimation of $\wh d_{h-1}^{\pi'} = \ol d_{h-1}^{\pi'}$, our barycentric spanner would also be perfect and we would not need any clipping at all in level $h$, implying $0$ additional error in the bound. More generally, the closer $\wh d_{h-1}^{\pi'}$ is to $\ol d_{h-1}^{\pi'}$, the less mass we need to clip away.

\begin{figure}[t!]
\centering
\includegraphics[scale=0.3]{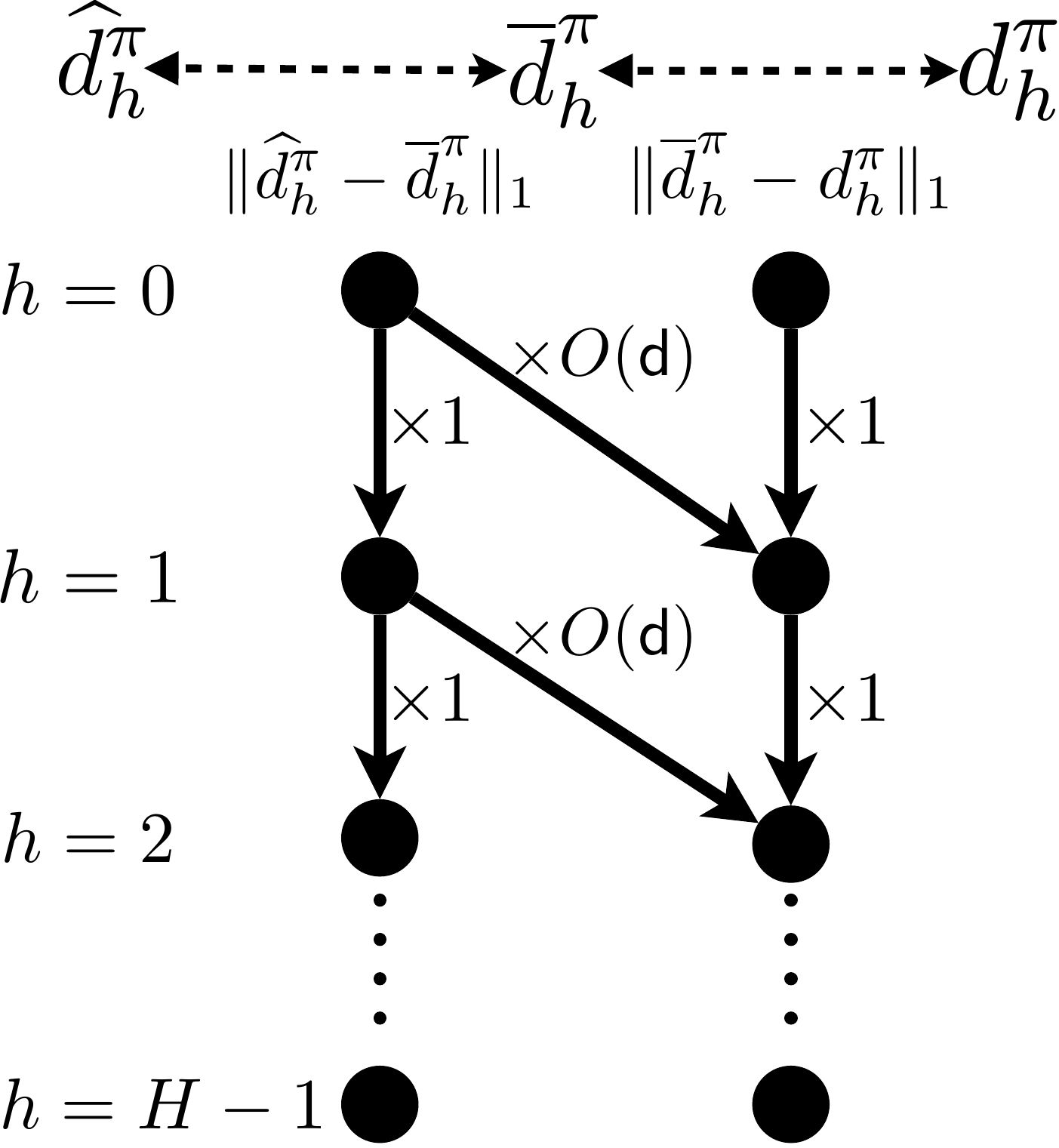}
\caption{ Error propagation diagram for \onalg. ``$\bullet \to \bullet$'' with $\times c$ means $(\bullet) \le c \times (\bullet)$ + (other instantaneous errors that do not accumulate over horizon), and multiple incoming arrows imply sum of errors. The left $\times 1$ chain is from \pref{lem:regression_decomposition}, the right $\times 1$ chain from \pref{prop:clipd}, and the $\times O(\dlr)$ edges from \pref{lem:missingness_decomposition}.
\label{fig:double-chain}}
\end{figure}

That said, this term is not instantaneous and depends inductively on quantities in the previous time step, still raising concerns of error exponentiation. To see why this is not a problem, we visualize error propagation in \pref{fig:double-chain}: it can be clearly seen that such a dependence corresponds to a ``cross-edge'', and appears at most once along any long chain. This also explains the destined failure of directly analyzing $\|\wh d_h^\pi - d_h^\pi\|_1$ in \cref{eq:exp_error}, as that corresponds to merging the two chains into one, where every edge along the only chain acquires an $O(\dspanner)$ multiplicative factor.

With this, we can now state the formal guarantee for our algorithm, \onalg. See \pref{alg:online_known} for its pseudo-code, and the proof of the guarantee is deferred to \pref{app:online_occu}.

\label{sec:online_res}
\begin{theorem}[Online $d^\pi$ estimation]
\label{thm:online_d}
Fix $\delta\in(0,1)$ and consider an MDP $\Mcal$ that satisfies \pref{assum:lowrank}, and $\mu^*$ is known. Then by setting $\nmle = \wt O\rbr{\dlr^3 K^2 H^4\log(1/\delta)/\veps^2},$$\nreg=\wt O\rbr{\dlr^{5} K^2 H^4\log(|\Pi|/\delta)/\veps^2},n=\nmle + \nreg,$ with probability at least $1-\delta$, \onalg returns state occupancy estimates $\{\wh d^\pi_h\}_{h\in[H],\pi\in\Pi}$ satisfying that 
\begin{align*}
    \nbr{\wh{d}_{h}^\pi - d_h^\pi}_1 \le \veps, \forall h\in[H],\pi\in\Pi.
\end{align*}
The total number of episodes required by the algorithm is 
\[
\wt O(nH)=\wt O\rbr{\dlr^{5} K^2 H^5\log(|\Pi|/\delta)/\veps^2}.
\]
\end{theorem}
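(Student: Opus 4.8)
The plan is to carry out the two-chain error decomposition of \pref{fig:double-chain}: for every $h\in[H]$ and $\pi\in\Pi$, write $\|\wh d_h^\pi - d_h^\pi\|_1 \le \|\wh d_h^\pi - \ol d_h^\pi\|_1 + \|\ol d_h^\pi - d_h^\pi\|_1$, call the first term the regression error and the second the missingness error, bound each separately, and then tune $\nmle,\nreg$ so that both are $O(\veps)$.

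\emph{Regression error.} First I would check that the offline module of \pref{alg:online_known} (lines \ref{line:mle_off}--\ref{line:multiply}, inherited verbatim from \offalg) meets the hypotheses of \pref{thm:offline_d_known}. At level $h$, $\Dcal_{h-1}$ is drawn from $\unif(\Piexpl_{h-1})\circ\unif(\Acal)$: the roll-in is an $(h-1)$-step (possibly non-Markov) policy, $\pi^D_{h-1}=\unif(\Acal)$ is a known single-step Markov policy, and both depend only on $\Dcal_{0:h-2}$ through the spanner of \pref{line:piexpl}; hence \pref{assum:data} holds conditionally on the past at every level, and realizability of $\Fcal_h$ and $\Wclip_h$ holds as in \pref{sec:offline}. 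With $\Ba_{h-1}=K$ and $\pi^D_{h-1}=\unif$, action clipping is inactive (so $\ol\pi=\pi$), but state clipping with $\Bx_{h-1}=\dlr$ can be active, since the true data marginal $d^D_{h-1}=\frac1\dlr\sum_i d^{\pi^{h-1,i}}_{h-1}$ need not dominate $d^\pi_{h-1}/\dlr$ --- which is exactly why $\ol d_h^\pi\ne d_h^\pi$ in general and the missingness term is nonzero. Then \pref{lem:regression_decomposition} applies; summing it over $h$, inserting the MLE rate for $\Fcal_h$ (optimistic covering) and the fast-rate squared-loss regression rate for $\Wclip_h$ (via its pseudo-dimension), and taking a union bound over the $H$ shared MLE events and the $H|\Pi|$ per-policy regression events, yields that with probability $\ge 1-\delta$, $\max_{h\in[H],\pi\in\Pi}\|\wh d_h^\pi - \ol d_h^\pi\|_1 \le \veps_{\reg}$, where $\veps_{\reg}$ is the bound of \pref{thm:offline_d_known} instantiated with $\Bx_h=\dlr$, $\Ba_h=K$ and with confidences $\log(1/\delta)$ and $\log(|\Pi|/\delta)$ respectively. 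The crucial feature is that this holds \emph{regardless of how exploratory $\Dcal_{h-1}$ is}, because clipping caps the regression target at $\Bx_{h-1}\Ba_{h-1}=\dlr K$.

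\emph{Missingness error and final tuning.} Next I would unroll \pref{lem:missingness_decomposition} from the base case $\ol d_0^\pi=d_0^\pi$, obtaining $\|\ol d_h^\pi - d_h^\pi\|_1 \le 4\dlr\sum_{h'<h}\max_{\pi'\in\Pi}\|\wh d_{h'}^{\pi'}-\ol d_{h'}^{\pi'}\|_1 \le 4\dlr H\,\veps_{\reg}$ from the preceding bound. The reason this is not circular --- and where the $O(\dlr)^H$ blow-up of \cref{eq:exp_error} is sidestepped --- is precisely that \pref{lem:missingness_decomposition} bounds the mass clipped at level $h$ by the \emph{regression} error at level $h-1$, not by the missingness error at $h-1$, so the two chains of \pref{fig:double-chain} stay disjoint, each grows only linearly in $H$, and the single $O(\dlr)$ cross-edge is traversed at most once. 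Combining by the triangle inequality, $\max_{h,\pi}\|\wh d_h^\pi - d_h^\pi\|_1 \le (1+4\dlr H)\veps_{\reg}$; requiring $\veps_{\reg}=\Theta(\veps/(\dlr H))$ and substituting into \pref{thm:offline_d_known} with $\sum_h\Bx_h\Ba_h=\dlr K H$ gives the stated $\nmle$ and $\nreg$ (with $\log(1/\delta)$ for the shared MLE step and $\log(|\Pi|/\delta)$ for the per-policy regression), and hence the episode count $nH=(\nmle+\nreg)H=\wt O(\dlr^5 K^2 H^5\log(|\Pi|/\delta)/\veps^2)$. The non-linearity of $\wh d_h^\pi$ from its product form does not affect this accounting --- as in the text, either linearize via \pref{line:line_approx} at the cost of a benign $\ell_1$ term, or use the linear-output variant of \offalg. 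The conceptually hard part is already compressed into \pref{lem:missingness_decomposition} (showing the clipped mass at level $h$ is controlled by $\max_{\pi'}\|\wh d_{h-1}^{\pi'}-\ol d_{h-1}^{\pi'}\|_1$ with no dependence on $\|\ol d_{h-1}^{\pi'}-d_{h-1}^{\pi'}\|_1$); granting that and the other stated lemmas, the main thing requiring care in assembling the theorem is the adaptivity --- the data at level $h$ depends on all estimates from levels $0,\dots,h-1$ --- so every concentration bound must be invoked conditionally and chained through the level-wise (and, for regression, policy-wise) union bound.
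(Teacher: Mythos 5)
Your proposal is correct and follows essentially the same route as the paper: the same split into the two-sided regression error $\|\wh d_h^\pi - \ol d_h^\pi\|_1$ (controlled level-by-level via \pref{lem:regression_decomposition}/\pref{thm:offline_d_known} with $\Bx_h=\dlr$, $\Ba_h=K$, and a union bound over $\Pi$ for the regression events only) and the one-sided missingness error (controlled by unrolling \pref{lem:missingness_decomposition}, picking up a single extra $O(\dlr H)$ factor), followed by the same tuning of $\nmle,\nreg$. The only nit is bookkeeping: plugging the target $\veps_{\reg}=\Theta(\veps/(\dlr H))$ back into \pref{thm:offline_d_known} yields $\nmle=\wt O(\dlr^{5} K^2 H^4\log(1/\delta)/\veps^2)$ rather than the stated $\dlr^{3}$ dependence, but since $n=\nmle+\nreg$ is dominated by $\nreg$ this does not affect the claimed episode count.
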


\pref{thm:online_d} also immediately translates to a policy optimization guarantee when combined with \pref{prop:density2return}:
\begin{theorem}[Online policy optimization]
\label{thm:online_rf}
Fix $\delta \in (0, 1)$ and suppose \pref{assum:lowrank} and \pref{assum:data} hold, and $\mutrue$ is known. Given a policy class $\Pi$, let $\{\wh d_h^\pi\}_{h\in[H],\pi \in \Pi}$ be the output of running \onalg. Then with probability 
at least $1-\delta$, for any reward function $R$ 
and policy selected as   
$ \wh\pi_R = \argmax_{\pi\in \Pi} \wh{v}_R^\pi, $
we have
\[
    v_R^{\wh\pi_R} \ge \argmax_{\pi \in \Pi} v_R^\pi - \veps,
\]
where $v_R^\pi$ and $\wh{v}_R^\pi$ are defined in \pref{prop:density2return}. 
The total number of episodes required by the algorithm is
\[
\textstyle
\tilde{O}\rbr{\dspanner^5 K^2 H^7 \log(|\Pi|/\delta)/\veps^2}.
\] 
\end{theorem}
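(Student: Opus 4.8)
The plan is to obtain this statement as an essentially immediate corollary of \pref{thm:online_d} (online $d^\pi$ estimation) and \pref{prop:density2return} (density-to-return conversion). First I would run \onalg (\pref{alg:online_known}) with its target accuracy set to $\veps' \defeq \veps/(2H)$ in place of $\veps$. By \pref{thm:online_d}, with probability at least $1-\delta$ the returned occupancy estimates satisfy $\nbr{\wh d_h^\pi - d_h^\pi}_1 \le \veps/(2H)$ \emph{simultaneously} for every $h\in[H]$ and every $\pi\in\Pi$. Condition on this event for the rest of the argument.

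Next, fix an arbitrary reward function $R=\{R_h\}$ with $R_h:\Xcal\times\Acal\to[0,1]$. The key observation is that the estimates $\{\wh d_h^\pi\}$ output by \onalg do not depend on $R$ (the algorithm is reward-free), so the high-probability event above is valid uniformly over all choices of $R$; no additional union bound over reward functions is needed, and in fact $R$ may be selected adaptively after the estimates are observed. Applying \pref{prop:density2return} with these estimates, the hypothesis $\nbr{\wh d_h^\pi - d_h^\pi}_1 \le \veps/(2H)$ for all $\pi\in\Pi$, $h\in[H]$ is exactly what is required, and the proposition yields $v_R^{\wh\pi_R} \ge \max_{\pi\in\Pi} v_R^\pi - \veps$ for $\wh\pi_R = \argmax_{\pi\in\Pi}\wh v_R^\pi$, which is the claimed suboptimality bound.

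It then remains to track the sample complexity. \pref{thm:online_d} consumes $\wt O(\dspanner^5 K^2 H^5 \log(|\Pi|/\delta)/(\veps')^2)$ episodes to reach accuracy $\veps'$. Substituting $\veps' = \veps/(2H)$ replaces $(\veps')^2$ by $\veps^2/(4H^2)$, i.e.\ multiplies the episode count by $4H^2$, giving $\wt O(\dspanner^5 K^2 H^7 \log(|\Pi|/\delta)/\veps^2)$ after absorbing the constant $4$ (and any polylog factors) into $\wt O$, matching the statement.

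Since each step is simply an invocation of a previously established result, there is no substantive obstacle here; the only points needing (minor) care are the rescaling $\veps \mapsto \veps/(2H)$, which converts the $\ell_1$-occupancy guarantee into a return guarantee to compensate for the $H$-term sum in the definition of $v_R^\pi$, and the remark that the reward-free nature of \onalg makes the guarantee hold simultaneously for all reward functions without an extra union bound (so the quantifier ``for any reward function $R$'' in the statement is justified).
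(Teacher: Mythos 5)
Your proposal is correct and matches the paper's own proof: both invoke \pref{thm:online_d} with accuracy $\veps/(2H)$ (costing an extra $H^2$ factor, turning $H^5$ into $H^7$) and then apply \pref{prop:density2return}. Your added remark that the reward-free nature of the estimates makes the guarantee hold uniformly over all reward functions without an extra union bound is a correct and worthwhile clarification of a point the paper leaves implicit.
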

The proof is deferred to \pref{app:online_rf}. 
We remark that \pref{thm:online_rf} is a \emph{reward-free} learning guarantee \citep{jin2020reward, chen2022statistical}, and it is easy to see that \pref{alg:online_known} is deployment efficient \citep{huang2022towards}. 

\section{Representation learning}
\label{sec:repr}
In this section, we extend the offline (\pref{sec:offline}) and online (\pref{sec:online}) results to the representation learning setting. Here, the true density feature $\mutrue$ is unknown, 
but the learner has access to a realizable density feature class $\Upsilon$, defined formally below. 
For simplicity, we consider finite and normalized $\Upsilon$, as is standard in the literature \citep{agarwal2020flambe,modi2021model,uehara2021representation}. 
\begin{assum}\label{assum:realizability} 
    We have a finite density feature class $\Upsilon = \bigcup_{h \in [H]} \Upsilon_{h}$ such that $\mutrue_h \in \Upsilon_h$ for each $h \in [H]$, thus $\mutrue \in \Upsilon$. Further, for any $\mu_h \in \Upsilon_h$, we have $\int \|\mu_h(x)\|_1 (\dd x) \le \munorm$.
\end{assum}

The algorithms and analyses for the representation learning case mostly follow the same template as the known feature case, so we restrict our discussion to their differences. 
Recall that, in order to have realizable function classes for regression and MLE  in \pref{sec:offline}, we constructed $\Fcal_{h},\Wclip_{h}$ using functions linear in the known $\mutrue_{h-1}$. In order to maintain this realizability when $\mutrue_{h-1}$ is unknown, we instead construct $\Fcal_{h},\Wclip_{h}$ using the union of all functions linear in some candidate $\mu_{h-1} \in \Upsilon_{h-1}$, i.e., 
$\bigcup_{\mu_{h-1} \in \Upsilon_{h-1}} \{\langle \mu_{h-1}, \theta_h \rangle, \theta_h \in \RR^\dspanner \}$ (see \cref{eq:F_unknown} and \cref{eq:wclip_unknown} for their formal definitions). 

While such union classes allow most of \pref{sec:offline} and \pref{sec:online} to straightforwardly extend to the representation learning setting, a nontrivial modification must be made to the online algorithm.  
Recall in \pref{line:line_approx} of \pref{alg:online_known}, we constructed our policy cover using the barycentric spanner of $\{\wt d_h^\pi\}_{\pi \in \Pi}$, the set of linearized approximations to the density estimates. Importantly, this guaranteed a concentrability coefficient of $\dspanner$ because all $\wt d_h^\pi$ are linear in the same feature $\mutrue_{h-1}$. This is no longer the case with unknown features because, if linearized in the same way (but over all feasible $\mu_{h-1} \in \Upsilon_{h-1}$), each $\wt d_{h}^\pi$ can be composed of a different $\mu_{h-1}$ feature, resulting in a CC linear in $|\Pi|$. To overcome this issue, we replace  \pref{line:line_approx}  with the following ``joint linearization'' step (see \pref{line:feasle_unknown} in \pref{alg:online_unknown}):
\begin{align*}
\wh \mu_{h-1} = \min_{\mu_{h-1}\in\Upsilon_{h-1}}\max_{\pi\in\Pi}\min_{\theta_{h}\in\RR^{\dlr}} \|\langle \mu_{h-1}, \theta_{h} \rangle - \hatdpi_{h}\|_1,
\end{align*}
where all density estimates are linearized using a single feature $\wh \mu_{h-1}$, whose linear span approximates all $\wh{d}_{h}^\pi$ well. 
We provide theorems for offline/online $d^\pi$ estimation with representation learning below.

\begin{theorem}[Offline $d^\pi$ estimation with representation learning]
\label{thm:offline_d_unknown}
Fix $\delta\in(0,1)$. Suppose \pref{assum:lowrank}, \pref{assum:data}, and \pref{assum:realizability} hold. Then, given an evaluation policy $\pi$, by setting \\ 
$\nmle = \tilde{O}(\dspanner (\sum_{h\in[H]} \Bx_h \Ba_{h})^2 \log(|\Upsilon|/\delta)/\veps^2) \text{ and } \nreg = \tilde{O}( \dspanner 
 (\sum_{h\in[H]} \Bx_h \Ba_{h} )^2 \log(|\Upsilon|/\delta)/\veps^2 ),$ 
with probability at least $1-\delta$, \offrepralg (\pref{alg:offline_unknown}) returns state occupancy estimates $\{\wh d^\pi_h\}_{h=0}^{H-1}$ satisfying that 
\[\nbr{\wh{d}_{h}^\pi - \ol d_h^\pi}_1 \le \veps, \forall h\in[H].\] 
The total number of episodes required by the algorithm is 
\[ \textstyle
\tilde{O}\rbr{\dspanner H \rbr{\sum_{h \in [H]} \Bx_h \Ba_{h} }^2\log(|\Upsilon|/\delta)/\veps^2}.
\]
\end{theorem}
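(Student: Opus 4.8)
The plan is to follow the proof of \pref{thm:offline_d_known} essentially line by line, tracking only the two places where the unknown feature matters: the realizability of the MLE and regression classes, and their statistical complexity. Algorithm \offrepralg differs from \offalg (\pref{alg:offline_known}) solely in that $\Fcal_h$ and $\Wclip_h$ are replaced by the union classes $\bigcup_{\mu_{h-1}\in\Upsilon_{h-1}}\{\langle\mu_{h-1},\theta_h\rangle: d_h\in\Delta(\Xcal),\|\theta_h\|_\infty\le 1\}$ and $\bigcup_{\mu_{h-1}\in\Upsilon_{h-1}}\{\langle\mu_{h-1},\thetaup_h\rangle/\langle\mu_{h-1},\thetadown_h\rangle:\|w_h\|_\infty\le\Bx_{h-1}\Ba_{h-1}\}$ (\cref{eq:F_unknown}, \cref{eq:wclip_unknown}). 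Since \pref{lem:regression_decomposition} is purely structural---it only uses $\wh d_h^\pi=\hatwpi_h\,\hatdnext_{h-1}$ with $\hatwpi_h$ the clipped least-squares minimizer and $\hatd_{h-1},\hatdnext_{h-1}$ the MLEs---it holds verbatim, so it again suffices to bound the three instantaneous quantities on its right-hand side (the two MLE deviations and the regression error) and to telescope the resulting linear recursion over $h=1,\dots,H$, exactly as before; because the definition of $\ol d_h^\pi$ (\pref{def:clipd}) is feature-free, the target of the claim is unchanged.

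\emph{Realizability.} By \pref{assum:realizability}, $\mutrue_{h-1}\in\Upsilon_{h-1}$, so the $\mutrue$-slice of each union class is present. Under \pref{assum:lowrank}, the marginals $d^D_{h-1}$ and $\dnext_{h-1}$ are linear in $\mutrue_{h-2}$ and $\mutrue_{h-1}$ respectively (for $d^D_{h-1}$ by \pref{lem:opp_linear}, for $\dnext_{h-1}$ by averaging the low-rank factorization of $P_{h-1}$ over the data distribution), with $\|\theta\|_\infty\le 1$ since $\|\phi^*\|_\infty\le 1$, hence they lie in $\Fcal_{h-1}$ and $\Fcal_h$; and the Bayes-optimal regression target appearing in \pref{lem:regression_decomposition} is a ratio of two functions each linear in $\mutrue_{h-1}$ (numerator by \pref{lem:opp_linear}, denominator equal to $\dnext_{h-1}$), hence lies in $\Wclip_h$ after clipping. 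Thus all the MLE and regression subproblems remain realizable within the union classes.

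\emph{Complexity and wrap-up.} For MLE, apply the optimistic-covering bound of \citet{chen2022unified} to each of the $|\Upsilon_{h-1}|$ slices and union bound; the covering number of the union is at most $|\Upsilon_{h-1}|$ times that of a single slice, adding an $O(\log|\Upsilon|)$ term to the deviations $\|\hatd_{h-1}-d^D_{h-1}\|_1$ and $\|\hatdnext_{h-1}-\dnext_{h-1}\|_1$. For regression, within each fixed slice the pseudo-dimension of the ratio class is bounded via \citet{bartlett2006sample} and \citet{goldberg1993bounding} exactly as in the known-feature proof, since the associated thresholding rule is still a bounded Boolean combination of polynomial-inequality predicates in $\thetaup_h,\thetadown_h$; unioning over $|\Upsilon_{h-1}|$ slices multiplies the $\ell_1$-covering number by $|\Upsilon_{h-1}|$, so the fast-rate regression bound of \citet{dong2019sqrt,modi2021model} again picks up only an additive $O(\log|\Upsilon|)$. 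With $\nmle,\nreg=\wt O(\dspanner(\sum_h\Bx_h\Ba_h)^2\log(|\Upsilon|/\delta)/\veps^2)$ and a union bound over $h\in[H]$, each instantaneous error---after multiplying by the $\Bx_{h-1}$ or $\Bx_{h-1}\Ba_{h-1}$ prefactor in \pref{lem:regression_decomposition}---is $O(\veps/H)$, so summing the recursion gives $\|\wh d_h^\pi-\ol d_h^\pi\|_1\le\veps$ for all $h$; the episode count is $H(\nmle+\nreg)$, matching the statement.

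\emph{Main obstacle.} The only genuinely new work is the complexity control of the \emph{union} ratio class for $\Wclip_h$: one must verify that the \citet{bartlett2006sample,goldberg1993bounding} pseudo-dimension machinery still applies slice by slice, and that finiteness of $\Upsilon$ converts the union into a clean additive $\log|\Upsilon|$ rather than anything that would couple badly with the $\sum_h\Bx_h\Ba_h$ factors. Everything else---realizability, the error recursion of \pref{lem:regression_decomposition}, and the horizon summation---transfers from the proof of \pref{thm:offline_d_known} with only notational changes.
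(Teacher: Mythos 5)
Your proposal is correct and follows essentially the same route as the paper's proof: reuse the known-feature argument verbatim (the error recursion of \pref{lem:regression_decomposition} and the definition of $\ol d_h^\pi$ are feature-free), note that realizability of the MLE and regression targets is preserved because $\mutrue_{h-1}\in\Upsilon_{h-1}$, and absorb the union over the finite class $\Upsilon$ into an additive $\log|\Upsilon|$ in both the optimistic-cover bound for MLE and the deviation bound for regression (the paper folds the latter union bound directly into \pref{lem:conc}, which is stated with a union over $\Upsilon_h$). No gaps.
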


\begin{theorem}[Online $d^\pi$ estimation with representation learning]
\label{thm:online_d_unknown}
Fix $\delta\in(0,1)$ and suppose \pref{assum:lowrank} and \pref{assum:realizability} hold. Then by setting 
$\nmle = \wt O(\dlr^3 K^2 H^4\log(|\Upsilon|/\delta)/\veps^2), \nreg=\wt O(\dlr^{5} K^2 H^4\log(|\Pi||\Upsilon|/\delta)/\veps^2),$ $n=\nmle + \nreg,$
with probability at least $1-\delta$, \onrepralg (\pref{alg:online_unknown}) returns state occupancy estimates $\{\wh d^\pi_h\}_{h=0}^{H-1}$ satisfying that
\begin{align*}
    \|\wh{d}_{h}^\pi - d_h^\pi\|_1 \le \veps, \forall h\in[H],\pi\in\Pi.
\end{align*}
The total number of episodes required by the algorithm is 
\[ \textstyle
\wt O\rbr{\dlr^{5} K^2 H^5\log(|\Pi||\Upsilon|/\delta)/\veps^2}.
\]
\end{theorem}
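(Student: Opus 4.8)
The plan is to follow the template behind \pref{thm:online_d}, changing only the two places where unknown density features matter: the statistical complexity of the function classes, and the linearization step that builds the policy cover. For the complexities, I would first note that the MLE class becomes $\Fcal_h=\bigcup_{\mu_{h-1}\in\Upsilon_{h-1}}\{\langle\mu_{h-1},\theta_h\rangle:d_h\in\Delta(\Xcal),\|\theta_h\|_\infty\le 1\}$ and the regression class $\Wclip_h=\bigcup_{\mu_{h-1}\in\Upsilon_{h-1}}\{\langle\mu_{h-1},\thetaup_h\rangle/\langle\mu_{h-1},\thetadown_h\rangle:\|w_h\|_\infty\le\Bx_{h-1}\Ba_{h-1}\}$. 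Since $\Upsilon_{h-1}$ is finite, the covering number of $\Fcal_h$ is at most $|\Upsilon_{h-1}|$ times that of the known-feature class, so the MLE guarantee from \pref{sec:offline} only picks up an additive $\log|\Upsilon|$; similarly, the pseudo-dimension bound for $\Wclip_h$ via \citet{bartlett2006sample,goldberg1993bounding} increases by $O(\log|\Upsilon|)$ because the syntactic description of the thresholding class now also selects $\mu_{h-1}\in\Upsilon_{h-1}$, and together with a union bound over $\pi\in\Pi$ this adds $\log(|\Pi||\Upsilon|)$ to $\nreg$. Realizability is preserved since $\mutrue_{h-1}\in\Upsilon_{h-1}$, so $d^D_{h-1}\in\Fcal_{h-1}$ and the Bayes-optimal weight lies in $\Wclip_h$; hence \pref{lem:regression_decomposition} carries over verbatim and the regression error $\|\wh d^\pi_h-\ol d^\pi_h\|_1$ still accumulates linearly over $h$ regardless of the data distribution (left chain of \pref{fig:double-chain}).

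The substantive change I would make is to replace \pref{line:line_approx} of \pref{alg:online_known} with the joint-linearization step $\wh\mu_{h-1}=\argmin_{\mu_{h-1}\in\Upsilon_{h-1}}\max_{\pi\in\Pi}\min_{\theta_h\in\RR^{\dlr}}\|\langle\mu_{h-1},\theta_h\rangle-\hatdpi_h\|_1$, defining $\dlinpi_h=\langle\wh\mu_{h-1},\thetalin_h\rangle$ from the corresponding minimizers. The key step is to observe that $\mutrue_{h-1}$ is feasible in the outer minimization and that $\ol d^\pi_h=\opp^{\ol\pi}_{h-1}(\clipbardpi{h-1}{d^D})$ is linear in $\mutrue_{h-1}$ by \pref{lem:opp_linear}; hence $\min_{\theta_h}\|\langle\mutrue_{h-1},\theta_h\rangle-\wh d^\pi_h\|_1\le\|\ol d^\pi_h-\wh d^\pi_h\|_1$ for every $\pi$, so the min-max-min value is at most $\max_{\pi'\in\Pi}\|\wh d^{\pi'}_h-\ol d^{\pi'}_h\|_1$, and by the triangle inequality each $\dlinpi_h$ satisfies $\|\dlinpi_h-\ol d^\pi_h\|_1\le 2\max_{\pi'\in\Pi}\|\wh d^{\pi'}_h-\ol d^{\pi'}_h\|_1$. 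Because all $\{\dlinpi_h\}_{\pi\in\Pi}$ now live in the span of the \emph{single} feature $\wh\mu_{h-1}$, their barycentric spanner mixture $\frac{1}{\dlr}\sum_i\dlin^{\pi^{h,i}}_h$ has concentrability at most $\dlr$ over $\{\dlinpi_h\}_{\pi\in\Pi}$, exactly as in the known-feature case. Since the extra slack $\max_{\pi'}\|\wh d^{\pi'}_h-\ol d^{\pi'}_h\|_1$ has the same order as the regression error already present in \pref{lem:missingness_decomposition}, re-running that lemma's derivation yields the same recursion $\|\ol d^\pi_h-d^\pi_h\|_1\le\|\ol d^\pi_{h-1}-d^\pi_{h-1}\|_1+O(\dlr)\max_{\pi'\in\Pi}\|\wh d^{\pi'}_{h-1}-\ol d^{\pi'}_{h-1}\|_1$ up to constants, so the $O(\dlr)$ cross-edge still appears at most once along any chain of \pref{fig:double-chain}.

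To conclude, I would combine the two chains: $\|\wh d^\pi_h-d^\pi_h\|_1\le\|\wh d^\pi_h-\ol d^\pi_h\|_1+\|\ol d^\pi_h-d^\pi_h\|_1$ is at most $\wt O(\dlr H)$ times the per-level error, which itself carries a factor $\Bx_{h-1}\Ba_{h-1}=\dlr K$ and scales as $\sqrt{\dlr/\nreg}$ (and $\sqrt{\dlr/\nmle}$ for the data-distribution estimates). Solving $\wt O(\dlr H)\cdot H\cdot\dlr K\cdot\sqrt{\dlr/\nreg}\le\veps$ and the analogous inequality for $\nmle$ gives $\nmle=\wt O(\dlr^3K^2H^4\log(|\Upsilon|/\delta)/\veps^2)$, $\nreg=\wt O(\dlr^5K^2H^4\log(|\Pi||\Upsilon|/\delta)/\veps^2)$, and an episode count $\wt O(nH)=\wt O(\dlr^5K^2H^5\log(|\Pi||\Upsilon|/\delta)/\veps^2)$; the $K^2$ dependence and the remaining powers of $H$ and $\dlr$ come from the per-level roll-in with $\unif(\Piexpl_{h-1})\circ\unif(\Acal)$ and the horizon-length accumulation, exactly as in \pref{thm:online_d}.

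I expect the main obstacle to be the joint-linearization step: one must verify that a \emph{single} feature $\wh\mu_{h-1}$ approximates all of $\{\wh d^\pi_h\}_{\pi\in\Pi}$ simultaneously — which rests entirely on $\ol d^\pi_h$ being exactly linear in the feasible $\mutrue_{h-1}$ — and then carefully re-audit \pref{lem:missingness_decomposition} so that this new, inductively-defined slack enters the missingness recursion only as a one-time $O(\dlr)$ cross-edge and not as a multiplicative factor propagating over the horizon, which is precisely what averts the $O(\dlr)^H$ blow-up. The remaining pieces — union covering numbers, the pseudo-dimension of the ratio class over $\Upsilon_{h-1}$, and the bias bound of \pref{prop:clipd} — are routine adaptations of the known-feature arguments.
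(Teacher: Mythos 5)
Your proposal is correct and follows essentially the same route as the paper: the decisive step is exactly the min--max--min argument for the joint linearization (feasibility of $\mutrue_{h-1}$ in the outer minimization plus linearity of $\ol d_h^\pi$ in $\mutrue_{h-1}$ via \pref{lem:opp_linear}, giving $\max_{\pi'}\|\wt d_{h}^{\pi'}-\wh d_{h}^{\pi'}\|_1\le\max_{\pi'}\|\ol d_{h}^{\pi'}-\wh d_{h}^{\pi'}\|_1$), after which \pref{lem:missingness_decomposition} and the double-chain analysis of \pref{thm:online_d} carry over with $\log|\Upsilon|$ added to the MLE and regression complexities. The only cosmetic difference is that the paper obtains the extra $\log|\Upsilon|$ for the weight class by keeping the per-feature pseudo-dimension bound of \pref{lem:pdim} and union-bounding over $\mu_{h}\in\Upsilon_{h}$ inside \pref{lem:conc}, rather than by folding the discrete feature selection into the syntactic complexity of the thresholding class (which \citet{goldberg1993bounding} does not directly accommodate); the resulting sample complexities are identical.
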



The detailed proofs of these two theorems are given in \pref{app:repr}. We also present the theorems and proofs for offline/online policy optimization with representation learning as well as the formal representation learning algorithms in \pref{app:repr}.

\section{Conclusion} 
We have shown how to leverage density features for statistically efficient state occupancy estimation and reward-free exploration in low-rank MDPs, culminating in policy optimization guarantees. 
An important open problem lies in investigating the computational efficiency of our algorithms (e.g., through off-policy policy gradient).

\section*{Acknowledgements}
The authors thank Akshay Krishnamurthy and Dylan Foster for discussions related to MLE generalization error bounds. NJ acknowledges  funding support from NSF IIS-2112471 and NSF CAREER IIS-2141781.

\bibliography{refs}

\newpage
\appendix
\onecolumn

\section{Related works}
\label{app:related}
In this section, we discuss a few lines of related work in detail. 

First, the closest related works involve RL with unsupervised-learning oracles \citep{du2019provably,feng2020provably}. Instead of investigating low-rank MDPs, they consider more restricted block MDPs and need stronger assumptions such as reachability, identifiability, and separatability (we refer the reader to their works for the definitions). Their notion of ``decoder'' looks like density features in low-rank MDPs, but they are incomparable. The crucial property of ``decoder'' is that it is a map from the $\Xcal$ space to the low $\dlr$ dimensional space. This map itself no longer exists in low-rank MDPs. In addition, the density feature serves a different purpose in our paper, as its primary purpose is for constructing the weight function class.

A second line of related work is model-based representation learning in low-rank MDPs \citep{agarwal2020flambe,uehara2021representation, ren2022spectral}, which assumes that both a realizable left feature class $\Phi\ni\phi^*$ and realizable density (right) feature class $\Upsilon \ni \mutrue $ are given to the learner, essentially inducing a realizable dynamics model class. The learned model (features) are subsequently used for downstream planning. In comparison, we utilize a much weaker inductive bias as we only require a realizable density feature class $\Upsilon$, and we do not try to learn a dynamics model. Though we additionally need a policy class $\Pi$, this is a very basic and natural function class to include. It can be immediately obtained from the (Q-)value function class in the value-based approach, and from the dynamics model class (given a reward function) in the model-based approach above. In terms of the algorithm design, we also use MLE, but for a different objective (the data distribution, instead of the dynamics model).

The importance weight (density-ratio) learning used within our algorithms is related to the marginalized importance sampling of the offline RL algorithms in \citet{nachum2019dualdice,lee2021optidice,uehara2021finite,zhan2022offline,chen2022offline,huang2022beyond,ozdaglar2022revisiting}. 
These works do not make the low-rank MDP assumption and study the problem in general MDPs, and require both a weight function class and value function class for learning. 
We leverage the true density $\mutrue$ or density feature class $\Upsilon$ to construct the realizable weight function class, allowing us to achieve statistically faster rates in the low-rank MDP setting. We do not need a value function class and instead only need a weaker (as discussed in the previous paragraph) policy class $\Pi$. Lastly, we note that the aforementioned works all learn weights, while our goal is to learn the densities. Extracting the densities from the weights allows us to efficiently explore the MDP using its low-dimensional structure, and additionally enables our return maximization guarantees of \pref{prop:density2return} by separating them from the underlying data distribution.   

\section{Hardness result without the policy class}
\label{app:hard}
In this section, we show that without policy class $\Pi$, learning in low-rank MDPs (or an easier simplex feature setting) is provably hard even when the true density feature $\mu^*$ is known to the learner. The crux is that low-rank MDPs can readily emulate a fully general contextual bandit problem, where $\mu^*$ is useless. For the hardness result, we adapt Theorem 2 of \citet{dann2015sample} to our case by 
only keeping their second to third level to get a contextual bandit problem. 

To provide specifics for the reward and transition functions, we first note that the subscript of the reward/transition function denotes which level it applies to (e.g., $P_0$ are the transitions to $x_1$ from $x_0$). 
Level $h = 0$ is composed of $|\Xcal|-3$ states with zero reward, i.e., $x_0 \in \{1,\ldots,|\Xcal|-3\}$ and $R_0(i) = 0, \forall i \in \{1,\ldots,|\Xcal|-3\}$. Level $h = 1$ is composed of 2 states, i.e., $x_1 \in \{+, -\}$, where $R_1(+) = 1$ and $R_1(-) = 0$. Lastly, at level $h = 2$ we have a single null absorbing state $x_2$. 

For the transition functions, in level $h = 0$ the transitions $P_0$ are Bernoulli distributions where for any state $i \in \{1,\ldots,|\Xcal|-3\}$ and action $a_0 \in \Acal$, we have $P_0(+|i,a_0) = \frac{1}{2} + \veps'_i(a_0)$ and $P_0(-|i,a_0) = \frac{1}{2} - \veps'_i(a_0)$. Here, $\veps'_i$ is defined in a per-state manner given a parameter $\veps$. We have $\veps'_i(a_0) = \veps/2$ if $a_0 = a_0^*$, where $a_0^*$ is a fixed action; $\veps'_i(a_0) = \veps$ if $a_0 = a_0^{i,*}$ where $a_0^{i,*}$ is an unknown action defined per state $i$; and $\veps'_i(a_0) = 0$ otherwise. In level $h=1$, the transitions $P_1$ simply transmit deterministically to the absorbing state $x_2$, i.e., $P_1(x_2 | x_1, a_1) = 1$ for all $x_1 \in \{+, -\}$ and $a_1 \in \Acal$.  

It is easy to see that the dynamics of this contextual bandit can be modeled using simplex features, thus it is an instantiation of low-rank MDPs. Since we only have two levels ($H=2$), we only need to verify that $P_0$ and $P_1$ can be written in the desired form (\pref{assum:lowrank}). In level $h = 0$, we add two latent states corresponding to the rewarding and non-rewarding state, thus $\dspanner = 2$. Then in level $h = 0$, we have right features $\mu^*_0(+) = [1, 0]$ and $\mu^*_0(-) = [0, 1]$, and left features $\phi^*_0(x_0,a_0) = [P_1(+|x_0, a_0), P_1(-|x_0, a_0)]$ for any $(x_0, a_0)$, corresponding to the original Bernoulli distribution. It is easy to see that this satisfies \pref{assum:lowrank}, i.e., for any $(x_0, a_0, x_1)$ we have 
$P_0(x_1|x_0,a_0) = \langle \phi^*_0(x_0,a_0), \mu^*_0(x_1) \rangle$. In level $h = 1$ we can simply set a single latent state representing the singleton $x_2$, and observe that \pref{assum:lowrank} is trivially satisfied with $\mu^*_1(x_2) = 1$, and $\phi^*_1(x_1,a_1) = 1$ for any $(x_1, a_1)$. 

Finally, from Theorem 2 of \citet{dann2015sample}, we know that the sample complexity of learning in this contextual bandit problem is $\Omega(|\Xcal|)$, demonstrating that efficient learning is impossible in low-rank MDPs (or the simplex feature setting) given only $\mutrue$.

\paragraph{The necessity of $K=|\Acal|$ dependence} It is well known that learning contextual bandits with just a policy class requires a dependence on $|\Acal|$ in regret and sample complexity; see \citet{agarwal2014taming} and the references therein. This can also be reproduced in the above hardness result: first, we can scale up the construction by adding more actions, and show an $\Omega(|\Xcal|K)$ lower bound. Second, we now provide the learner with a policy class that contains all Markov deterministic policies. The size of the class is $O(K^{|\Xcal|})$, and the log-size is $O(|\Xcal|\log (K))$. Given the logarithmic dependence on $K$, no polynomial dependence on $\log(|\Pi|)$ can explain away the linear-in-$K$ dependence in the lower bound, and we must introduce $K$ as a separate factor in the sample complexity.

\section{RL with objectives on state distributions} \label{app:convex}
\pref{prop:density2return} also extends to general optimization objectives $f(\{d_h\})$ that are Lipschitz in the input $\{d_h\}$ (note the Lipschitz property does not require the input to be a valid distribution).  This Lipschitzness property is key for many recent results in convex RL \citep{zahavy2021reward,mutti2022challenging}, and also holds for return maximization where $f(\{d_h^\pi\}) = v_R^\pi$, in which case the Lipschitz constant is related to the maximum reward $\max_{h,x,a} R_h(x,a)$. While we write the objective $f(\{d_h\})$ using state densities $d_h(x_h)$ as input for simplicity, it is straightforward to instead use state-action densities $d_h(x_h)\pi(a_h|x_h)$ formed by directly composing the state density $d_h$ with the policy $\pi$. If $f$ is Lipschitz in state-action densities, it will still be Lipschitz in the state-action densities in the $\ell_1$ norm, which is the exactly the case in return maximization, since any input density will be composed with same $\pi$. Lastly, we note that constraints can also be added to the objective and to result in a similar statement. 

\begin{proposition} \label{prop:lipschitz_density2return}
Suppose the optimization objective is $f(\{d_h\})$, where $f$ is Lipschitz in $\{d_h\}$ under the $\ell_1$ norm, i.e., there exists a constant $L > 0$ such that for any $\{d_h'\}$ and $\{d_h''\}$
\[
    \abr{f(\{d_h'\}) - f(\{d_h''\})} \le L \sum_{h \in [H]} \|d_h' - d_h''\|_1. 
\]
Then for $\{\wh{d}_h^\pi\}$ such that $\|\wh{d}_h^\pi - d_h^\pi\|_1 \le \frac{\veps}{2H}$ for all $\pi \in \Pi$ and $h \in [H]$, and $\wh\pi$ maximizing the plug-in estimate of the objective: 
\[
    \wh\pi = \argmax_{\pi \in \Pi}f(\{\wh{d}_h^\pi\}),
\]
we have 
\[
    f(\{d_h^{\wh\pi}\}) \ge \max_{\pi \in \Pi}f(\{d_h^{\pi}\}) - L\veps.
\]
\end{proposition}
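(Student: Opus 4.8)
The plan is to reuse, in abstract form, the optimization-error decomposition behind the proof of \pref{prop:density2return}, with the reward-range constant replaced by the Lipschitz constant $L$. Write $\pi^\star \defeq \argmax_{\pi\in\Pi} f(\{d_h^\pi\})$ for the true maximizer, and recall $\wh\pi = \argmax_{\pi\in\Pi} f(\{\wh d_h^\pi\})$. The goal is to chain three inequalities: (i) pass from $f(\{d_h^{\wh\pi}\})$ to $f(\{\wh d_h^{\wh\pi}\})$ using Lipschitzness, (ii) invoke optimality of $\wh\pi$ for the plug-in objective, (iii) pass from $f(\{\wh d_h^{\pi^\star}\})$ back to $f(\{d_h^{\pi^\star}\})$ using Lipschitzness again.

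Concretely, first I would apply the Lipschitz hypothesis with $\{d_h'\} = \{d_h^{\wh\pi}\}$ and $\{d_h''\} = \{\wh d_h^{\wh\pi}\}$ to obtain
\[
f(\{d_h^{\wh\pi}\}) \ge f(\{\wh d_h^{\wh\pi}\}) - L\sum_{h\in[H]}\nbr{\wh d_h^{\wh\pi} - d_h^{\wh\pi}}_1 \ge f(\{\wh d_h^{\wh\pi}\}) - \tfrac{L\veps}{2},
\]
where the last step sums the per-$(\pi,h)$ accuracy bound $\|\wh d_h^\pi - d_h^\pi\|_1 \le \veps/(2H)$ over the $H$ levels. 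Next, optimality of $\wh\pi$ for the plug-in objective gives $f(\{\wh d_h^{\wh\pi}\}) \ge f(\{\wh d_h^{\pi^\star}\})$. Finally, applying Lipschitzness once more with $\{d_h'\} = \{\wh d_h^{\pi^\star}\}$ and $\{d_h''\} = \{d_h^{\pi^\star}\}$ yields $f(\{\wh d_h^{\pi^\star}\}) \ge f(\{d_h^{\pi^\star}\}) - \tfrac{L\veps}{2}$. Concatenating the three bounds gives $f(\{d_h^{\wh\pi}\}) \ge f(\{d_h^{\pi^\star}\}) - L\veps = \max_{\pi\in\Pi} f(\{d_h^\pi\}) - L\veps$, as claimed.

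I do not expect a real obstacle here: the argument is a two-sided (uniform-over-$\Pi$) application of the Lipschitz property combined with the plug-in optimality of $\wh\pi$, exactly mirroring the return-maximization proof with $v_R^\pi$ replaced by $f(\{d_h^\pi\})$. The one subtlety worth a remark is that the Lipschitz inequality is invoked at arguments $\{\wh d_h^\pi\}$ that need not be valid probability distributions — the estimates (and their clipped or linearized variants) can be unnormalized — which is precisely why the statement assumes $f$ Lipschitz on all tuples $\{d_h\}$ rather than only on $\Delta(\Xcal)^H$. For the state-action-density variant one replaces each $d_h$ by $d_h(\cdot)\pi(\cdot|\cdot)$ throughout; since $\|d_h'\pi - d_h''\pi\|_1 = \sum_{x}|d_h'(x)-d_h''(x)|\sum_a\pi(a|x) = \|d_h'-d_h''\|_1$, the same accuracy bounds transfer verbatim, and constraints can be folded in by the standard device of treating the constrained objective as $f$ restricted to (an enlargement of) the feasible set.
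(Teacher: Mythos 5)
Your proof is correct and follows essentially the same argument as the paper: two applications of the Lipschitz hypothesis (each contributing $L\veps/2$ via the per-level accuracy bound summed over $H$ levels) sandwiching the plug-in optimality of $\wh\pi$, which is exactly the paper's three-term decomposition of $f(\{d_h^{\wh\pi}\}) - f(\{d_h^{\pi^*}\})$.
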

\begin{proof}
    For any $\pi \in \Pi$, from the Lipschitz assumption, 
    \begin{align*}
        \abr{f(\{d^\pi_h\}) - f(\{\wh{d}^\pi_h\})} \le L \sum_{h \in [H]} \|d^\pi_h - \wh{d}_h^\pi\|_1 \le L\veps/2.
    \end{align*}
    Then, letting $\pi^* = \argmax_{\pi \in \Pi}f(\{d_h^{\pi}\})$ denote the maximizer of the true objective and using the above inequality, 
    \begin{align*}
        f(\{d_h^{\wh\pi}\}) - f(\{d_h^{\pi^*}\}) = f(\{d_h^{\wh\pi}\}) - f(\{\wh{d}_h^{\wh\pi}\}) + f(\{\wh{d}_h^{\wh\pi}\}) - f(\{\wh{d}_h^{\pi^*}\}) + f(\{\wh{d}_h^{\pi^*}\}) - f(\{d_h^{\pi^*}\}) \ge -L\veps.  \tag*{\qedhere}
    \end{align*}
\end{proof}

\paragraph{On $\wh d_h^\pi$ being invalid distributions} One potential issue is that some of the objective functions $f$ considered in the literature are only well defined for valid probability distributions (e.g., entropy).  This is easy to deal with in the online setting, as we can simply project $\wh d_h^\pi$ onto the probability simplex, which picks up a multiplicative factor of $2$ in $\|\wh d_h^\pi - d_h^\pi\|_1$ (c.f.~the analysis of the linearization step in \pref{alg:online_known}). 

For the offline setting, however, the situation can be trickier. For example, the above projection idea is clearly bad for return maximization, since after projection all $\wh d_h^\pi$ satisfy $\|\wh d_h^\pi\|_1 = 1$ and we lose pessimism. From an analytical point of view, pessimistic approaches (e.g., \pref{thm:offline_rf}) only pays one factor of the missingness error $\|\ol d_h^\pi - d_h^\pi\|_1$ by leveraging its one-sidedness, and a factor of $2$ introduced by projection is simply unacceptable. Therefore, the question is whether we can generalize the pessimism in \pref{thm:offline_rf} to general objective functions. We only answer this question with a rough sketch and leave the full investigation to future work: roughly speaking, since we know $\|\wh d_h^\pi - \ol d_h^\pi\|_1 \le  \veps' $ (for some appropriate value of $\veps'$ from our analysis), we can form a version space for $d_h^\pi$ as:
$$
d_h^\pi \in \{d_h: \exists d_h', \textrm{s.t.~} d_h\ge d_h'  \textrm{ and } \|d_h' - \wh d_h^\pi\|_1 \le \veps' \}.
$$
Then we can simply come up with pessimistic evaluation of $f(\{d_h^\pi\})$ by minimizing $f(\{d_h\})$ over the above set. It is not hard to see that such an approach will provide similar guarantees to \pref{thm:offline_rf} when applied to return maximization.

\section{Alternative setups, algorithm designs, and analyses} \label{app:alt}
\subsection{Offline data assumptions} 
As mentioned in \pref{sec:offline}, our offline data assumption allows sequentially dependent batches, where in-batch tuples are i.i.d.~samples. This is already weaker than the standard fully i.i.d.~settings considered in the offline RL literature, and here we further comment on how to handle various extensions. 

\paragraph{Trajectory data} One simple setting is when data are i.i.d.~trajectories sampled from a fixed policy. (This setting does not fit our need for the online algorithm, but is a representative setup for the purpose of offline learning.) 
While our protocol directly handles it (we can simply split the data in $H$ chunks and call them $\Dcal_0, \Dcal_1, \ldots$), it seems somewhat wasteful as we only extract 1 transition tuple per trajectory, potentially worsening the sample complexity by a factor of $H$. This is because in our analysis of the regression step (\pref{alg:offline_known}, \pref{line:reg_off}), we treat the regression target (which depends on $\wh d_{h}^\pi$) as fixed and independent of the current dataset. If we want to use all the data, we would need to union bound over the target as well; see similar considerations in the work of \citet{fan2020theoretical}. A slow-rate analysis follows straightforwardly, and we leave the investigation of fast-rate analysis to future work. We also remark that our current offline setup (\pref{assum:data}) is the most natural protocol for the data collected from the online algorithm (\pref{sec:online}), and using full trajectory data does not seem to improve the theoretical guarantees of the online setting. 

\paragraph{Fully adaptive data} 
A more general setting than \pref{assum:data} is that the data is fully adaptive, i.e., each trajectory is allowed to depend on all trajectories that before it. To handle such a case, we will need to replace the i.i.d.~concentration inequalities with their martingale versions. Some special treatment in the concentration bounds will also be needed to handle the random data-splitting step in \pref{alg:offline_known}, \pref{line:split} \citep[c.f.~][]{mohri2008rademacher}; alternatively, if we union bound over regression targets (see previous paragraph), the data splitting step will no longer be needed. 

\paragraph{Unknown and/or non-Markov $\pi^D$} In \pref{assum:data} we assume that the last-step policy in the data-collecting policy is Markov and known, as we need it to form the importance weights on actions. When $\pi^D$ is still Markov and unknown, we can use behavior cloning to back it out from data, which would require some additional assumptions (e.g., having access to a policy class that realizes $\pi^D$), and we do not further expand on such an analysis. When $\pi^D$ is non-Markov, it is well known that the action in the data tuple $(x_h, a_h, x_{h+1})$ can be still treated as if it were generated from a Markov policy---one can compute the state-action occupancy for $(x_h, a_h)$ (which is well-defined even if $\pi^D$ is non-Markov) and then obtain the equivalent Markov policy by conditioning on $x_h$. Incidentally, the algorithmic solution is the same as the case of unknown Markov $\pi^D$, i.e., behavior cloning. 

\subsection{Stochastic and/or unknown reward functions}\label{app:reward}
When the reward function is stochastic but still known, \pref{prop:density2return} and all policy optimization guarantees extend straightforwardly, since we can still directly compute the return. The more nontrivial case is when the reward function $R$ is unknown and comes as part of the data, i.e., we have the usual format of data tuples that include (possibly) stochastic reward signals, 
$\{(x_h^{(i)}, a_h^{(i)}, r_h^{(i)})\}_{i=1}^{n_{\mathrm{ret}}} \sim d^D_h$. Then given estimates $\{\wh d^D_h\}$ (from MLE) and $\{\wh d^\pi_h\}$ (from \pref{alg:offline_known} or \pref{alg:online_known}), the expected return can be estimated by reweighting the rewards according to the importance weight $\wh d_h^\pi / \wh d^D_h$, and assuming this ratio is well-defined:
\[
    \wh v^\pi_R = \frac{1}{n_{\mathrm{ret}}} \sum_{i=1}^{n_{\mathrm{ret}}} \sum_{h\in[H]} \frac{\wh d_h^\pi (x_h^{(i)})}{ \wh{d}^D_h (x_h^{(i)}) } \frac{\pi_h(a_h^{(i)}|x_h^{(i)}) }{ \pi^D_h(a_h^{(i)}|x_h^{(i)}) } r_h^{(i)}.
\]
It can be shown that we then have $|\wh v^\pi_R - v^\pi_R| \le \veps + \text{(additive terms)}$, where the additive terms correspond to the statistical error of return and MLE estimation, which is $O((n_{\mathrm{ret}})^{-1/2})$. 
If $\wh{d}^D_h$ does not cover $\wh{d}_h^\pi$, which may generally be the case, clipping (e.g., according to thresholds $\Bx_h, \Ba_h$) can again be used, which will lead to additional error corresponding to clipped mass.

\subsection{Algorithm design and analyses}
\label{app:alg_anal}

In this section, we discuss alternative designs of the offline density learning algorithm (\pref{alg:offline_known}), as well as their downstream impacts on the online and representation learning algorithms, which use the offline module in their inner loops. For simplicity, most discussions are in the case of offline density learning with known representation $\mutrue$.

\para{Point estimate in denominator} 
First, we discuss alternative parameterizations of the weight function class. To enable more ``elementary" $\ell_\infty$ covering arguments, one may consider instead parameterizing the weight function class as a ratio of linear functions over a fixed function $v_h : \Xcal \rightarrow \RR$, specifically
\[
    \Wcal_{h}(v_h) = \cbr{w_{h} = \frac{\langle \mutrue_{h-1}, \theta_{h}\rangle}{v_h} :\nbr{w_{h}}_\infty \le  \Bx_{h-1}\Ba_{h-1},\theta_{h} \in \RR^{\dlr}}.
\]
When $\mutrue$ consists of simplex features, it can be shown that an $\ell_\infty$ covering with scale $\gamma$ of size $(1/\gamma)^{\dspanner}$ can be constructed for $\Wcal_{h}(v_h)$, because it can be induced by an $\ell_\infty$ covering of the low-dimensional parameter space that has scale adaptively chosen according to how much the weight can be perturbed with respect to the denominator, thus fixed size. 
It is unclear how to construct such $\ell_\infty$ coverings for ``linear-over-linear" function classes such as $\Wcal_{h}$ of \pref{alg:offline_known}. One may consider compositions of standard $\ell_\infty$ coverings generated separately for the linear numerator and denominator, but bounding the covering error is challenging due to sensitivity of the denominator to perturbations. 

As we will see, however, the key issue with such fixed-denominator parameterizations is that the Bayes-optimal solution is no longer realizable. To handle this in the analysis, we can introduce an additional \textit{approximation error} (similar to \citet[Assumption 3]{chen2019information} in the value learning setting) that will appear in the final bound, corresponding to how well the Bayes-optimal solution is approximated by the function class. Depending on the choice of denominator, the approximation error may not be controlled, or may lead to a slower rate of estimation; loosely, it is defined as 
\[
\veps^{\mathrm{approx}}_h = \max_{\substack{w_{h-1} :  \|w_{h-1}\|_\infty \le \Bx_{h-1} }} \min_{w_{h} \in \Wcal_{h}(v_h)} \nbr{ w_{h} - \opex^\pi_{h-1}(d^D_{h-1}w_{h-1}) }_{2, \dnext_{h-1}}.
\]

One obvious choice for the fixed denominator is $v_h = \hatdnext_{h-1}$, since it is immediately available from the MLE data estimation step, plus the linear numerator can then be extracted exactly through the elementwise multiplication $\wh{d}_h^\pi = \wh{w}_h^\pi \hatdnext_{h-1}$. However, the Bayes-optimal predictor $\opex^\pi_{h-1}(d_{h-1})$ is no longer realizable, since $\opex^\pi_{h-1}(d_{h-1}) = \opp^\pi_{h-1}(d_{h-1}) / \dnext_{h-1}$ is a linear function over the true data distribution $\dnext_{h-1}$. In this case, using \pref{lem:clipped_concentrability} gives a more interpretable upper bound on the approximation error involves the difference between the ratio of any linear $d_{h}$ covered on $\dnext_{h-1}$ and the corresponding ratio over $\hatdnext_{h-1}$: 
\[
\veps^{\mathrm{approx}}_h \le \max_{ \substack{d_{h} = \langle \mutrue_{h-1}, \theta_h \rangle : \\ d_{h} \le \Bx_{h-1} \Ba_{h-1} \dnext_{h-1}}} \nbr{\frac{d_{h}}{\hatdnext_{h-1}} -\frac{d_{h}}{\dnext_{h-1}}}_{2, \dnext_{h-1}}.
\] 
However such approximation error may be difficult to control even with small data estimation error due to sensitivity of the denominator (for example if $\|\hatdnext_{h-1} - \dnext_{h-1} \|_1 \le \veps_{\mle}$ but they have disjoint support).

\para{Barycentric spanner in denominator} 
To avoid the above support issue and control the approximation error, we can instead consider a denominator function upon which $\dnext_{h-1}$ is supported. This is satisfied by the barycentric spanner of the version space of the estimate $\hatdnext_{h-1}$,  
\[
\Vcal_{h} = \cbr{ v_{h} = \langle \mutrue_{h-1}, \theta_{h} \rangle : \|v_{h} - \hatdnext_{h-1}\|_1 \le \emle, \theta_{h} \in \RR^\dspanner },
\]
noting that $\dnext_{h-1} \in \Vcal_{h}$ with high probability due to the MLE guarantee.  
Then letting $\wt v_{h}$ denote the spanner, \pref{lem:barycentric} guarantees that $\frac{\dnext_{h-1}}{\wt v_{h}} \le \dspanner$, and the approximation error of $\Wcal_{h}(\wt v_{h})$ can be controlled by the error of MLE estimation, since for any $d_{h} \le \Bx_{h-1} \Ba_{h-1} \dnext_{h-1}$ we have
\begin{align*}
    \nbr{\frac{d_{h}}{\wt v_{h}} -\frac{d_{h}}{\dnext_{h-1}}}_{2, \dnext_{h-1}}^2 \le&~ (\Bx_{h-1} \Ba_{h-1})^2 \int \frac{\dnext_{h-1}(x)}{\wt v_{h}(x)} \rbr{1+\frac{\dnext_{h-1}(x)}{\wt v_{h}(x)}  } \abr{\wt v_{h}(x) - \dnext_{h-1}(x)} (\dd x) 
    \\
    \le&~ 2 (\Bx_{h-1} \Ba_{h-1} \dspanner)^2 \| \wt{v}_h - \dnext_{h-1}\|_1
\end{align*}
which implies that $\veps^{\mathrm{approx}}_h \le 2 \Bx_{h-1} \Ba_{h-1} \dspanner \sqrt{\veps_{\mle}}$ by the definition of $\Vcal_h$.  
However, since $\veps_{\mle}$ is $O(n_{\mle}^{-1/2})$, this results in a slow rate of $1/\veps^{4}$ total sample complexity for offline density estimation, and from a computational standpoint, introduces another barycentric spanner construction step in the algorithm which can be expensive. The representation learning setting has the additional challenge that there will be approximation error if the wrong representation $\wh\mu_{h-1}\in\Upsilon_{h-1}$ is chosen for $\hatdnext_{h-1}$, since $\dnext_{h-1} \notin \Vcal_{h}(\wh\mu_h)$ (we extend the definition to $\Vcal_{h}(\mu_{h-1}) = \cbr{ v_{h} = \langle \mu_{h-1}, \theta_{h} \rangle : \|v_{h} - \hatdnext_{h-1}\|_1 \le \emle, \theta_{h} \in \RR^\dspanner }$), which, as in the first case above, may be difficult to bound.

\para{Clipped function class with point estimate in denominator} 
Generalizing and improving upon the previous analyses, using a clipped version of the function class $\Wcal_h(v_h)$ 
\[
    \Wcal^\clip_{h}(v_h) = \cbr{w_{h} = \frac{\langle \mutrue_{h-1}, \theta_{h}\rangle \wedge \Bx_{h-1} \Ba_{h-1} v_h}{v_h} : \theta_{h+1} \in \RR^{\dlr}}
\]
will allow us to bound the approximation error for general denominator functions $v_h$. 
For any $d_{h}$ such that $d_h \le \Bx_{h-1} \Ba_{h-1} \dnext_{h-1}$, we can approximate the ratio $\frac{d_h}{\dnext_{h-1}}$ with $\frac{d_h \wedge \Bx_{h-1} \Ba_{h-1} v_h}{v_h} \in \Wcal^\clip_{h}(v_h)$, and separate the approximation error into two terms, based on whether $\dnext_{h-1}$ is covered by $v_h$ according to a threshold $C \ge 1$: 
\begin{align*}
    &~\nbr{\frac{d_h \wedge \Bx_{h-1} \Ba_{h-1} v_h}{v_h} -\frac{d_{h}}{\dnext_{h-1}}}_{2, \dnext_{h-1}}^2 
    \\
    \le&~  \nbr{\rbr{\frac{d_h \wedge \Bx_{h-1} \Ba_{h-1} v_h}{v_h} -\frac{d_{h}}{\dnext_{h-1}}} \cdot \one\sbr{\frac{\dnext_{h-1}(x)}{v_h(x)} \le C}}_{2, \dnext_{h-1}}^2 \tag{``covered"}
    \\
    &\quad +  \nbr{\rbr{\frac{d_h \wedge \Bx_{h-1} \Ba_{h-1} v_h}{v_h} -\frac{d_{h}}{\dnext_{h-1}}} \cdot \one\sbr{\frac{\dnext_{h-1}(x)}{v_h(x)} > C}}_{2, \dnext_{h-1}}^2 \tag{``not covered"}\\
\end{align*}
Bounding the two terms individually, for the ``covered" term, we have
\begin{align*}
    \text{(``covered")} \le&~ \int_{x : \frac{\dnext_{h-1}(x)}{v_h(x)} \le C} \dnext_{h-1}(x) \rbr{\frac{d_h(x)}{v_h(x)} -\frac{d_{h}(x)}{\dnext_{h-1}(x)}}^2 (\dd x)
    \\
    \le&~ (\Bx_{h-1} \Ba_{h-1})^2\int_{x : \frac{\dnext_{h-1}(x)}{v_h(x)} \le C} \frac{\dnext_{h-1}(x)}{v_h(x)}\frac{(\dnext_{h-1}(x) - v_h(x))^2}{v_h(x)} (\dd x)
    \\
    \le&~ (\Bx_{h-1} \Ba_{h-1})^2C(1+C)\int_{x : \frac{\dnext_{h-1}(x)}{v_h(x)} \le C} \abr{\dnext_{h-1}(x) - v_h(x)} 
    \\
    \le&~ (\Bx_{h-1} \Ba_{h-1})^2C(1+C)\nbr{\dnext_{h-1} - v_h}_1.
\end{align*}

For the ``not covered" term, noticing that both parenthesized ratios are bounded on $[0, \Bx_{h-1}\Ba_{h-1}]$, we have
\begin{align*}
    \text{(``not covered")} \le&~ (\Bx_{h-1} \Ba_{h-1})^2 \int \dnext_{h-1}(x) \cdot \one\sbr{\frac{\dnext_{h-1}(x)}{v_h(x)} > C}  (\dd x) 
    \\
    \le&~ (\Bx_{h-1} \Ba_{h-1})^2 \rbr{ 1 - \frac{1}{C} }^{-1} \nbr{ \dnext_{h-1} - v_h }_1,
\end{align*}
where the second inequality is because 
\begin{align*}
    \rbr{ 1 - \frac{1}{C} } \int_{x : \frac{\dnext_{h-1}(x)}{v_h(x)} > C} \dnext_{h-1}(x) (\dd x) <  \int_{x : \frac{\dnext_{h-1}(x)}{v_h(x)} > C} (\dnext_{h-1}(x) - v_h(x)) (\dd x) \le \nbr{ \dnext_{h-1} - v_h}_1 
\end{align*}
since $\frac{\dnext_{h-1}}{C}  > v_h$. Thus in total, we have 
\begin{align*}
    \veps^{\mathrm{approx}}_h \le \Bx_{h-1} \Ba_{h-1} \rbr{C  + C^2 + \frac{C}{C-1}} \sqrt{\nbr{ \dnext_{h-1} - v_h }_1}.
\end{align*}
The bound depends on how close the point estimate $v_h$ is to the true $\dnext_{h-1}$, as well as the threshold $C$. 
In the case where $v_h = \hatdnext_{h-1}$ is the point estimate, we are now able to bound $\veps^{\mathrm{approx}}_h \le \Bx_{h-1} \Ba_{h-1} (C  + C^2 + \frac{C}{C-1}) \sqrt{
\veps_{\mle}}$, which results in a slower rate than our results in the main text. If $v_h = \wt{v}_h$ is the barycentric spanner of the version space, then it suffices to set $C = \dspanner$, in which case only the ``covered" part of the error is nonzero, and we recover the analysis in the previous paragraph. 

In general, the best choice of threshold $C$ is not obvious because $\dnext_{h-1}$ is not known, and will trade off between the two errors. When $C$ is large, the ``covered" error will be large since it is proportional to $C^2$, while if $C$ is too small (too close to 1), the ``not-covered" error will be large since it is proportional to $\frac{C}{C - 1}$.

\para{Direct extraction of the estimate} 
Putting aside the discussion of point estimates in the denominator, we now present an alternative to pointwise multiplication + linearization used to extract $\wh{d}_h^\pi$ from \pref{alg:offline_known}. 
Instead, we can directly extract the numerator, which will already be a linear function (in $\mutrue$), from weight ratio and use it as the estimate for $\wh{d}_h^\pi$. The regression objective might then be (replacing \pref{line:reg_off} in \pref{alg:offline_known})
\[
    \_ ~, \wh{d}_h^\pi = \argmin_{v_h \in \Vcal_h} \argmin_{d_h \in \Fcal_h(v_h)} \Lcal_{\Dcal_{h-1}^{\reg}}\rbr{\frac{d_h}{v_h}, \frac{\cliphatdpi{h-1}{\hatd}}{\wh d^D_{h-1}}, \pi_{h-1} \wedge \Ba_{h-1} \pi_{h-1}^D }, 
\]
where the version space of denominator functions $\Vcal_h$ is defined above, and $\Fcal_h(v_h) = \{d_{h} = \langle \mutrue_{h-1}, \theta_h \rangle :\nbr{d_h / v_h}_\infty \le  \Bx_{h-1}\Ba_{h-1},\theta_{h} \in \RR^{\dlr}\}$ represents linear numerator functions covered by $v_h$. It is necessary to constrain the denominator functions to the version space in order to ensure that the numerator is close to the true density, since regression only guarantees quality of estimated weight. For example, even if $\wh{w}_h^\pi = w_h^\pi$, if the denominator function is $c \cdot \dnext_{h-1}$ then the numerator will be $c \cdot d^\pi_h$, leading to large $\wh{d}_h^\pi$ estimation error. In terms of the analysis, this is quantified as the error between the denominator and true $\dnext_{h-1}$ in \cref{eq:hatbar3}, which is controlled by $\emle$ when the denominator is constrained to the version space $\Vcal_h$, and will result in the same guarantee as we have for \pref{alg:offline_known} and \pref{alg:online_known} in the known feature setting. In the online setting with known features, direct extraction has the advantage of no longer requiring the linearization step (\pref{line:line_approx} in \pref{alg:online_known}), though it is computationally more expensive because the function classes are jointly optimized, and the version space must be maintained. This advantage is lost in the representation learning setting because the estimates $\{\wh d_h^\pi\}_{\pi \in \Pi}$ must be jointly re-linearized with the same representation in order to construct the policy cover (\pref{line:line_approx_unknown} of \pref{alg:online_unknown}). 

\para{MLE instead of regression} An alternative to using regression to estimate the occupancy is instead using MLE-type estimation. Along similar veins as the regression algorithm, (a clipped version of) the previous-level estimate $\wh{d}_{h-1}^\pi$ must be reused to reweight the data distribution in order to estimate $\wh{d}_h^\pi$: 
\[
    \wh{d}_{h}^\pi = \argmin_{ f_{h} \in \Fcal_{h} }\frac{1}{n} \sum_{i=1}^n \frac{ \wh{d}_{h-1}^\pi \wedge \Bx_{h-1}\wh{d}^D_{h-1}}{\wh{d}^D_{h-1}} \frac{\pi_{h-1} \wedge \Ba_{h-1} \pi^D_{h-1}}{ \pi^D_{h-1} } \log (f_{h}). 
\]
where $\Fcal_{h}$ is some linear function class. One possible advantage of such an approach is that a linear density estimate can be directly learned, but establishing formal guarantees for an MLE-type algorithm remains future work. 
After separating the missingness error $\|d_h^\pi - \ol{d}_h^\pi\|_1$ in the same way as in \pref{sec:offline}, 
similar methods as classical MLE analysis (\pref{app:mle}) might be used to control $\|\wh{d}_h^\pi - \ol{d}_h^\pi\|_1$. The challenge is that such MLE analyses require $\Fcal_{h}$ to include only valid densities $\in \Delta(\Xcal)$, but this is at odds with reweighted MLE objectives such as the one above, since the weights $\frac{ \wh{d}_{h-1}^\pi \wedge \Bx_{h-1}\wh{d}^D_{h-1}}{\wh{d}^D_{h-1}}$ generally will not induce a valid density when multiplied with the data distribution. 

\subsection{Discussion of other approaches for controlling error exponentiation in the online setting} \label{app:no_mle}

\paragraph{Barycentric spanner in regression target (without clipping)}
In \pref{sec:online} we controlled the error exponentiation arising from having only approximately exploratory data by first clipping the regression target $\wh d_h^\pi / \wh d_h^D$ (since the MLE estimate $\wh d_h^D$ does not necessarily cover $\wh d_h^\pi$), then separating the error $\|\wh{d}_h^\pi - d_h^\pi\|_1$ into the ``two-sided regression error" and ``one-sided missingness error". 
It  will be instructive to also look at an alternative approach that avoids clipping and ``pretends'' that  data is perfectly exploratory, which provides interesting insights on the underlying issue and the delicacy of error propagation in our problem from a different perspective. 

The seemingly feasible solution is based on the observation that $\frac{1}{\dspanner} \sum_{i=1}^\dspanner \wh d_h^{\pi^{h,i}}$, the barycentric spanner of $\{\wh d_h^\pi\}_{\pi \in \Pi}$ in the denominator of \cref{eq:approx_bary}, \emph{is} a good approximation of $d_h^D$. So instead of using MLE to estimate $d^D_h = \frac{1}{\dspanner} \sum_{i=1}^\dspanner d_h^{\pi^{h,i}}$, we could simply use $\frac{1}{\dspanner} \sum_{i=1}^\dspanner \wh d_h^{\pi^{h,i}}$, which will keep the regression target bounded in \pref{alg:offline_known} without any clipping.  

However, a closer look reveals that this only sweeps the issue under the rug. The problem does not go away, and only appears in a different form: recall from \pref{lem:regression_decomposition} that the bound includes a term of $2 \dspanner \nbr{ \wh{d}^D_{h} - d^D_{h}}_1$, and when we use $\frac{1}{\dspanner} \sum_{i=1}^\dspanner \wh d_h^{\pi^{h,i}}$ to replace $\wh{d}^D_{h}$, we obtain
\[
    \nbr{ \wh{d}^D_{h} - d^D_{h}}_1 = \nbr{\frac{1}{\dspanner} \sum_{i=1}^\dspanner \wh d_h^{\pi^{h,i}} - \frac{1}{\dspanner} \sum_{i=1}^\dspanner d_h^{\pi^{h,i}}}_1 \le \max_{\pi \in \Pi} \|\wh d_h^\pi - d_h^\pi \|_1 
\]
which,
in addition to merging the two inductive chains, 
gives us $\|\wh d_h^\pi - d_h^\pi \|_1 \le (1 + \dspanner) \max_{\pi \in \Pi} \|\wh d_h^\pi - d_h^\pi \|_1  + \ldots$, resulting in  $O(\dspanner)^H$ error. In other words, because the error of the denominator distribution depends on the quality of regression, even with full coverage we will suffer the same error exponentiation issues.

\paragraph{Reachability-based approach} 
Error exponentiation can be avoided if a \textit{reachability} assumption \citep{du2019provably,modi2021model} is satisfied in the underlying MDP.  Formally, this assumption requires that  there exists a constant $\etamin$ such that $\forall h \in [H], z \in \Zcal_{h+1}$ we have $\max_{\pi \in \Pi} \PP_\pi[z_{h+1} = z] \ge \etamin$, where $\Zcal_{h+1}$ correspond to the latent states of the MDP. For example, in the case where $\mutrue_h$ is full-rank and composed of simplex features, $\Zcal_{h+1} = \{1,\ldots,\dlr\}$ and $\theta_h[i]$ directly corresponds to $\PP_\pi[z_{h+1} = i]$ for $i \in \{1,\ldots,\dlr\}$. 
The direct implication is that we can construct a fully exploratory policy cover that reaches all latent states (and thus covers all $\pi \in \Pi$) as long as we find, for each latent state, the policy that reaches it with probability at least $\etamin$. This policy can be found as long as $\wh{d}_h^\pi$ is estimated sufficiently well, which when backed up implies the latent state visitation is estimated sufficiently well. 

Specifically, in the offline module used in \pref{alg:online_known}, we can instead set $n_{\reg}$ such that $\|\wh{d}_h^\pi - d_h^\pi\|_1 \le \sigma_{\min}(\mutrue_{h-1})\etamin/4$ for all $\pi \in \Pi$, which implies that when backed up to latent states the error of estimation is $\|\wh\theta_h^\pi - \theta_h^\pi\|_\infty \le \etamin/4$. Then the exploratory policy cover can be chosen as $\Piexpl_h = \{\pi^{h,i}\}_{i=1}^\dspanner$ where for each $i \in \{1,\ldots,\dlr\}$, $\pi^{h,i}$ is such that $\wh\theta_h^{\pi^{h,i}}[i] \ge \etamin/4$, which implies $\theta_h^{\pi^{h,i}}[i] \ge \etamin/2$ with high probability, and such a policy is guaranteed to exist from the reachability assumption. 
Since the policy cover is fully exploratory, a single induction chain in the error analysis (instead of the two in \pref{fig:double-chain}) will suffice.

\section{Off-policy occupancy estimation proofs (\pref{sec:offline})} 
\label{app:offline}

\subsection{Discussion of clipping thresholds for $\bar{d}^\pi$}\label{app:clipd_threshold}

As we have previously mentioned, the clipped occupancy $\ol{d}_h^\pi$ depends on clipping thresholds $\{\Bx_h\}$ and $\{\Ba_h\}$ that are hyperparameter inputs to the offline estimation algorithm (\pref{alg:offline_known}). To better understand the effects of $\Bx_h, \Ba_h$ on $\ol{d}_h^\pi$ and downstream analysis, we highlight three properties below, which we have written only for $\Bx_h$ (but that take analogous forms for $\Ba_h$). 

Importantly, property 3 shows that the missingness error $\|\ol d_h^\pi - d_h^\pi\|_1$ is Lipschitz in the clipping thresholds $\{\Bx_h\}$, indicating that small changes in $\Bx_h$ will only lead to small changes in the missingness error, and thus the result of \pref{thm:offline_d_known}. For practical purposes, this serves as a reassurance that, within some limit, misspecifications of $\Bx_h, \Ba_h$ in the algorithm do not have catastrophic consequences. 

\begin{proposition}
    For two sets of clipping thresholds $\{\Bx_h\}, \{(\Bx_h)'\}$, following \pref{def:clipd}, for each $h = 1,\ldots,H$ let their corresponding clipped occupancies be defined recursively as 
    \begin{align*}
        \ol{d}_{h}^\pi =&~ \opp^{\ol\pi}_{h-1} ~\rbr{\clipbardpi{h-1}{d^D}}
        \\ 
        (\ol{d}_{h}^\pi)' =&~ \opp^{\ol\pi}_{h-1} ~\rbr{ (\ol{d}_{h-1}^\pi)' \wedge (\Bx_{h-1})' d^D_{h-1}}
    \end{align*}
    with $\ol{d}_0^\pi = (\ol{d}_0^\pi)' = d_0$. Then the following two properties hold for each $h \in [H]$: 
    \begin{enumerate}[leftmargin=*]
    \item (Monotonicity) $\ol{d}_h^\pi \le (\ol{d}_h^\pi)' $ if $\Bx_{h'} \le (\Bx_{h'})'$ for all $h' < h$. The relationship also holds in the other direction, i.e., replacing ``$\le$" with ``$>$". 
    \item (Clipped occupancy Lipschitz in thresholds) $\|(\ol d_h^\pi)'- \ol d_h^\pi \|_1 \le \sum_{h' < h} |(\Bx_{h'})' - \Bx_{h'}|$.
    \item (Missingness error Lipschitz in thresholds) $\abr{\|d_h^\pi - (\ol d_h^\pi)'\|_1 - \|d_h^\pi - \ol d_h^\pi\|_1} \le \sum_{h' < h} |(\Bx_{h'})' - \Bx_{h'}|$.
\end{enumerate}
\end{proposition}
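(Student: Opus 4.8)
The plan is to prove all three claims by a single induction on $h$, with claims~1 and~2 carrying the inductive content and claim~3 following immediately from claim~2. Three elementary facts are used throughout. First, $\min$ is monotone in each argument, so $a \wedge c \le a' \wedge c$ whenever $a \le a'$. Second, $\min$ is $1$-Lipschitz in each argument, so pointwise $\abr{a \wedge s - a \wedge t} \le \abr{s-t}$ and $\abr{a \wedge c - b \wedge c} \le \abr{a-b}$. Third, any pseudo-policy $\ol\pi$ obtained by clipping satisfies $\sum_a \ol\pi_{h-1}(a\mid x) \le \sum_a \pi_{h-1}(a\mid x) = 1$, so the flow operator $\opp^{\ol\pi}_{h-1}$ is linear, is order-preserving on nonnegative functions, and is nonexpansive in the $\ell_1$ norm on signed functions, $\|\opp^{\ol\pi}_{h-1} g\|_1 \le \|g\|_1$; the last property follows by pushing $\abr{\cdot}$ inside the two integrals and using $\int P_{h-1}(x_h\mid x_{h-1},a_{h-1})\,(\dd x_h) = 1$ together with $\sum_a \ol\pi_{h-1}(a\mid x)\le 1$. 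I also use that each $d^D_{h-1}$ is a probability measure, hence $\|d^D_{h-1}\|_1 = 1$; this holds in every setting where the proposition is invoked.

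For claim~1, the base case $h=0$ is immediate since $\ol d_0^\pi = (\ol d_0^\pi)' = d_0$. For the inductive step, assume $\ol d_{h-1}^\pi \le (\ol d_{h-1}^\pi)'$ and $\Bx_{h-1} \le (\Bx_{h-1})'$. Applying monotonicity of $\wedge$ once in each argument gives $\clipbardpi{h-1}{d^D} \le (\ol d_{h-1}^\pi)' \wedge (\Bx_{h-1})' d^D_{h-1}$ pointwise, and applying the order-preserving operator $\opp^{\ol\pi}_{h-1}$ to both sides yields $\ol d_h^\pi \le (\ol d_h^\pi)'$. The reverse statement follows by interchanging the roles of the two threshold sequences in exactly this argument.

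For claim~2, induct on $h$ with the empty-sum base case $h=0$ reading $0\le 0$. For the step, write $(\ol d_h^\pi)' - \ol d_h^\pi = \opp^{\ol\pi}_{h-1}\rbr{(\ol d_{h-1}^\pi)' \wedge (\Bx_{h-1})' d^D_{h-1} - \clipbardpi{h-1}{d^D}}$ by linearity, so by $\ell_1$-nonexpansiveness it suffices to bound $\|(\ol d_{h-1}^\pi)' \wedge (\Bx_{h-1})' d^D_{h-1} - \clipbardpi{h-1}{d^D}\|_1$. Inserting the intermediate value $\ol d_{h-1}^\pi \wedge (\Bx_{h-1})' d^D_{h-1}$ and applying the $1$-Lipschitzness of $\wedge$ in each argument, the integrand is bounded pointwise by $\abr{(\ol d_{h-1}^\pi)' - \ol d_{h-1}^\pi} + \abr{(\Bx_{h-1})' - \Bx_{h-1}}\,d^D_{h-1}$; integrating, using $\|d^D_{h-1}\|_1 = 1$, and invoking the induction hypothesis on the first term yields $\|(\ol d_h^\pi)' - \ol d_h^\pi\|_1 \le \sum_{h' < h-1}\abr{(\Bx_{h'})' - \Bx_{h'}} + \abr{(\Bx_{h-1})' - \Bx_{h-1}} = \sum_{h' < h}\abr{(\Bx_{h'})' - \Bx_{h'}}$.

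Claim~3 then follows at once from the reverse triangle inequality, $\abr{\|d_h^\pi - (\ol d_h^\pi)'\|_1 - \|d_h^\pi - \ol d_h^\pi\|_1} \le \|(\ol d_h^\pi)' - \ol d_h^\pi\|_1$, together with claim~2. I do not anticipate any real obstacle: the only points that deserve to be written out carefully are the $\ell_1$-nonexpansiveness of the clipped flow operator $\opp^{\ol\pi}_{h-1}$ (pseudo-policies being sub-normalized over actions) and the fact that $d^D_{h-1}$ has unit mass, both of which are routine.
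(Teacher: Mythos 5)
Your proof is correct and follows essentially the same route as the paper's: induction with monotonicity of $\wedge$ and the flow operator for claim~1, the $\ell_1$-nonexpansiveness of $\opp^{\ol\pi}_{h-1}$ (the paper's \pref{lem:opexp_ineq}) plus $1$-Lipschitzness of $\wedge$ and $\|d^D_{h-1}\|_1=1$ to unfold the recursion for claim~2, and the reverse triangle inequality for claim~3. The only cosmetic difference is your choice of intermediate term $\ol d_{h-1}^\pi \wedge (\Bx_{h-1})' d^D_{h-1}$ versus the paper's $(\ol d_{h-1}^\pi)' \wedge \Bx_{h-1} d^D_{h-1}$, which changes nothing.
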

\begin{proof}
We prove these three claims one by one.
\paragraph{Proof of Claim 1}
We will prove Claim 1 via induction. Suppose $\ol d_{h'-1}^\pi \le (\ol d_{h'-1}^\pi)'$ for some $h' \le h$. This holds for the base case $h' = 1$ since $\ol d_0^\pi = (\ol d_0^\pi)'$. Then since $\Bx_{h'-1} \le (\Bx_{h'-1})'$,
\begin{align*}
\ol d_{h'}^\pi = \opp^{\ol\pi}_{h'-1}\rbr{ \ol d_{h'-1}^\pi \wedge \Bx_{h'-1} d^D_{h'-1}} \le \opp^{\ol\pi}_{h'-1}\rbr{ (\ol d_{h'-1}^\pi)' \wedge (\Bx_{h'-1})' d^D_{h'-1}} = (\ol d_{h'}^\pi)'. 
\end{align*}
Then by induction we have that $\ol d_h^\pi \le (\ol d_h^\pi)'$. 
\paragraph{Proof of Claim 2}
For Claim 2, using \pref{lem:opexp_ineq}, we have
\begin{align*}
 &~\|(\ol d_h^\pi)' - \ol d_h^\pi\|_1 
 \\
 \le&~ \nbr{ \rbr{\clipbardpi{h-1}{d^D}} - \rbr{(\ol{d}_{h-1}^\pi)' \wedge (\Bx_{h-1})' d^D_{h-1}} }_1 
 \\
 \le&~ \nbr{ \rbr{\clipbardpi{h-1}{d^D}} - \rbr{(\ol{d}_{h-1}^\pi)' \wedge \Bx_{h-1} d^D_{h-1}} }_1 + \nbr{ \rbr{(\ol{d}_{h-1}^\pi)' \wedge \Bx_{h-1} d^D_{h-1}} - \rbr{(\ol{d}_{h-1}^\pi)' \wedge (\Bx_{h-1})' d^D_{h-1}} }_1
 \\
 \le&~ \nbr{ \ol d_{h-1}^\pi - (\ol{d}_{h-1}^\pi)' }_1 + \nbr{ \Bx_{h-1} d^D_{h-1} - (\Bx_{h-1})' d^D_{h-1} }_1 
 \\
 =&~ \nbr{ \ol d_{h-1}^\pi - (\ol{d}_{h-1}^\pi)' }_1 + \abr{ \Bx_{h-1} - (\Bx_{h-1})' }.
\end{align*}
Unfolding this recursion from level $h-1$ through level $0$ gives the result. 
\paragraph{Proof of Claim 3}
For Claim 3, we have
\begin{align*}
\abr{\|d_h^\pi - (\ol d_h^\pi)'\|_1 - \|d_h^\pi - \ol d_h^\pi\|_1} =&~ \abr{\int |d_h^\pi(x) - (\ol d_h^\pi)'(x)| - |d_h^\pi(x) - \ol d_h^\pi(x)| (\dd x) } 
\\
\le&~ \int \abr{|d_h^\pi(x) - (\ol d_h^\pi)'(x)| - |d_h^\pi(x) - \ol d_h^\pi(x)|} (\dd x) 
\\
\le&~ \int \abr{(\ol d_h^\pi)'(x)- \ol d_h^\pi(x)} (\dd x) &&\text{(since $||x|-|y|| \le |x-y|$)}
\\
=&~ \nbr{(\ol d_h^\pi)' - \ol d_h^\pi}_1
\end{align*}
Then applying Claim 2 gives the stated claim. 
\end{proof}

\subsection{Proof of occupancy estimation}
\label{app:off_occu}

\begin{proposition*}[Restatement of \pref{prop:clipd}]  We have the following properties for $\ol{d}_{h}^\pi$:
\begin{enumerate}[leftmargin=*]
\item $\ol{d}_{h}^\pi \le d_h^\pi$.
\item $\ol{d}_{h}^\pi = d_h^\pi$ when data covers $\pi$, i.e., $\forall h' < h$ we have $d_{h'}^\pi \le \Bx_{h'} d^D_{h'}$ and $\pi_{h'} \le \Ba_{h'} \pi^D_{h'}$. 
\item $\|\ol{d}_h^\pi - d_h^\pi\|_1 \le \|\ol{d}_{h-1}^\pi - d_{h-1}^\pi\|_1 + \|  \ol{d}_{h-1}^\pi-\clipbardpi{h-1}{d^D}\|_1  + \| \opp^{\pi}_{h-1} d_{h-1}^\pi-\opp^{\ol\pi}_{h-1} d_{h-1}^\pi\|_1.$
\end{enumerate}
\end{proposition*}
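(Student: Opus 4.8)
The plan is to prove all three claims from the definition in \pref{def:clipd} by a short induction on $h$, building on two elementary facts about the (pseudo-)Bellman-flow operator $\opp^{\ol\pi}_{h-1}$, which is linear in its density argument. First, \textbf{monotonicity}: if $0\le g\le g'$ pointwise and $0\le \ol\pi\le\pi$ pointwise, then $\opp^{\ol\pi}_{h-1}g\le\opp^{\pi}_{h-1}g'$, which is immediate from $(\opp^{\ol\pi}_{h-1}g)(x_h)=\iint P_{h-1}(x_h\mid x_{h-1},a_{h-1})\,\ol\pi_{h-1}(a_{h-1}\mid x_{h-1})\,g(x_{h-1})\,(\dd x_{h-1})(\dd a_{h-1})$ since the integrand is nonnegative and monotone in both $\ol\pi$ and $g$. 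Second, \textbf{$\ell_1$ non-expansiveness}: $\nbr{\opp^{\ol\pi}_{h-1}g}_1\le\nbr{g}_1$ for any (possibly signed, possibly unnormalized) $g$, because $\int\big|\iint P_{h-1}\,\ol\pi_{h-1}\,g\big|\le\int |g(x_{h-1})|\big(\int\ol\pi_{h-1}(\cdot\mid x_{h-1})\,(\dd a)\big)(\dd x_{h-1})\le\nbr{g}_1$, using $\int P_{h-1}(\cdot\mid x,a)(\dd x)=1$ and $\int\ol\pi_{h-1}(a\mid x)(\dd a)\le\int\pi_{h-1}(a\mid x)(\dd a)=1$. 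Note that $\ol\pi=\pi\wedge\Ba\pi^D\le\pi$ and $\clipbardpi{h-1}{d^D}\le\ol d_{h-1}^\pi$ hold by definition of $\wedge$, and these will be the only pointwise inequalities we feed into the two facts above.

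For \textbf{Claim 1} I would induct on $h$. The base case is $\ol d_0^\pi=d_0^\pi=\initdist$. Assuming $\ol d_{h-1}^\pi\le d_{h-1}^\pi$, we have $\clipbardpi{h-1}{d^D}\le\ol d_{h-1}^\pi\le d_{h-1}^\pi$ and $\ol\pi_{h-1}\le\pi_{h-1}$, so monotonicity gives $\ol d_h^\pi=\opp^{\ol\pi}_{h-1}\!\big(\clipbardpi{h-1}{d^D}\big)\le\opp^{\pi}_{h-1}d_{h-1}^\pi=d_h^\pi$. For \textbf{Claim 2} I would again induct on $h$ under the coverage hypothesis. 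The base case holds; assuming $\ol d_{h-1}^\pi=d_{h-1}^\pi$, the condition $d_{h-1}^\pi\le\Bx_{h-1}d^D_{h-1}$ makes the state clip inactive, i.e. $\clipbardpi{h-1}{d^D}=\ol d_{h-1}^\pi=d_{h-1}^\pi$, and $\pi_{h-1}\le\Ba_{h-1}\pi^D_{h-1}$ makes the action clip inactive, i.e. $\ol\pi_{h-1}=\pi_{h-1}$; hence $\ol d_h^\pi=\opp^{\pi}_{h-1}d_{h-1}^\pi=d_h^\pi$.

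For \textbf{Claim 3}, no induction is needed — it is exactly the one-step telescoping inequality. Writing $d_h^\pi=\opp^{\pi}_{h-1}d_{h-1}^\pi$ and inserting the two intermediate quantities $\opp^{\ol\pi}_{h-1}\ol d_{h-1}^\pi$ and $\opp^{\ol\pi}_{h-1}d_{h-1}^\pi$, linearity of the operator gives
\begin{align*}
\ol d_h^\pi-d_h^\pi
= \opp^{\ol\pi}_{h-1}\!\Big(\clipbardpi{h-1}{d^D}-\ol d_{h-1}^\pi\Big)
+ \opp^{\ol\pi}_{h-1}\!\Big(\ol d_{h-1}^\pi-d_{h-1}^\pi\Big)
+ \Big(\opp^{\ol\pi}_{h-1}d_{h-1}^\pi-\opp^{\pi}_{h-1}d_{h-1}^\pi\Big).
\end{align*}
Taking $\nbr{\cdot}_1$, applying the triangle inequality, and using $\ell_1$ non-expansiveness on the first two terms yields precisely $\nbr{\ol d_h^\pi-d_h^\pi}_1\le\nbr{\ol d_{h-1}^\pi-d_{h-1}^\pi}_1+\nbr{\ol d_{h-1}^\pi-\clipbardpi{h-1}{d^D}}_1+\nbr{\opp^{\pi}_{h-1}d_{h-1}^\pi-\opp^{\ol\pi}_{h-1}d_{h-1}^\pi}_1$. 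The only place requiring care — the ``main obstacle,'' though a mild one — is that all of this must be stated for \emph{pseudo}-policies $\ol\pi$ (sub-normalized in the action argument) and for \emph{unnormalized} signed densities; this is exactly where $\int\ol\pi_{h-1}(a\mid x)(\dd a)\le1$ enters, and once the monotonicity and non-expansiveness facts are established at that level of generality the rest is bookkeeping.
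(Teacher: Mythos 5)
Your proof is correct and follows essentially the same route as the paper's: induction for Claims 1 and 2, and for Claim 3 the same insertion of the intermediate quantities $\opp^{\ol\pi}_{h-1}\ol d_{h-1}^\pi$ and $\opp^{\ol\pi}_{h-1}d_{h-1}^\pi$ followed by the $\ell_1$ non-expansiveness of the flow operator under a sub-normalized pseudo-policy (the paper's \pref{lem:opexp_ineq} and \pref{lem:pseudopolicy_norm}, which you re-derive inline). The only cosmetic differences are that you prove Claim 2 by a direct induction showing both clips are inactive, whereas the paper routes it through Claims 1 and 3, and that you write the Claim 3 decomposition as an exact telescoping identity rather than iterated triangle inequalities — neither changes the substance.
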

\begin{proof}
We prove these three claims one by one.
\paragraph{Proof of Claim 1} 
Firstly, we have $\ol{d}_{h}^\pi=d_h^\pi=\initdist$. Assuming the claim holds for $h'-1$, then we have $\ol{d}_{h'}^\pi =\opp^{\ol\pi}_{h'-1} (\clipbardpi{h'-1}{d^D})\le\opp^{\pi}_{h'-1} (\clipbardpi{h'-1}{d^D})\le\opp^{\pi}_{h'-1} (d_{h'-1}^\pi \wedge \Bx_{h'-1} d^D_{h'-1})\le\opp^{\pi}_{h'-1} d_{h'-1}^\pi=d_{h'}^\pi.$ By induction, we complete the proof.
\paragraph{Proof of Claim 2} It is easy to see that $d_{h'}^\pi \le \Bx_{h'} d^D_{h'}$ together with Claim 1 implies $\ol{d}_{h'}^\pi \le \Bx_{h'} d^D_{h'}$, thus
$\|  \ol{d}_{h'}^\pi-\clipbardpi{h'}{d^D}\|_1=0$. In addition, $\pi_{h'} \le \Ba_{h'} \pi^D_{h'}$ gives us $\pi_{h'} = \ol \pi_{h'}$, therefore $\nbr{\opp^{\pi}_{h'-1} d_{h'-1}^\pi - \opp^{\ol \pi}_{h'-1} d_{h'-1}^\pi}_1=0$. Now we can prove Claim 2 inductively. For $h'=0$, we know the claim holds since $\ol{d}_{0}^\pi = d_0^\pi=\initdist$. Assuming the claim holds for $h'-1$, by Claim 3 we have that
\[
0\le\|\ol{d}_{h'}^\pi - d_{h'}^\pi\|_1 \le \|\ol{d}_{h'-1}^\pi - d_{h'-1}^\pi\|_1 + \|  \ol{d}_{h'-1}^\pi-\clipbardpi{h'-1}{d^D}\|_1  + \| \opp^{\pi}_{h'-1} d_{h'-1}^\pi-\opp^{\ol\pi}_{h'-1} d_{h'-1}^\pi\|_1=0.
\]

This means the claim holds for $h'$. By induction, we complete the proof.
\paragraph{Proof of Claim 3} For the third part, we have the following decomposition
\begin{align*}
    \|\ol{d}_{h}^\pi - d_h^\pi\|_1 =&~ \nbr{\opp^{\ol\pi}_{h-1}  ~\rbr{\clipbardpi{h-1}{d^D}} - \opp^{\pi}_h d_{h-1}^\pi}_1 
    \\
    \le &~ \nbr{\opp^{\ol\pi}_{h-1} ~\rbr{\clipbardpi{h-1}{d^D}} - \opp^{\ol\pi}_{h-1} d_{h-1}^\pi}_1 + \nbr{\opp^{\ol\pi}_{h-1} d_{h-1}^\pi - \opp^{\pi}_h d_{h-1}^\pi}_1 
    \\
    \le&~ \nbr{\clipbardpi{h-1}{d^D} - d_{h-1}^\pi}_1 + \nbr{\opp^{\ol\pi}_{h-1} d_{h-1}^\pi - \opp^{\pi}_h d_{h-1}^\pi}_1  \tag{\pref{lem:opexp_ineq}}
    \\
    \le&~ \nbr{\clipbardpi{h-1}{d^D} - \ol{d}_{h-1}^\pi}_1 + \nbr{\ol{d}_{h-1}^\pi - d_{h-1}^\pi}_1 + \nbr{\opp^{\ol\pi}_{h-1} d_{h-1}^\pi - \opp^{\pi}_h d_{h-1}^\pi}_1. \qedhere
\end{align*}
\end{proof}

\begin{lemma*}[Restatement of \pref{lem:regression_decomposition}]
    For every $h \in [H]$, the error between estimates $\wh{d}_h^\pi$ from \pref{alg:offline_known} and the clipped target $\ol{d}_h^\pi$ is decomposed recursively as
    \begin{align*}
        \left\| \wh{d}_{h}^\pi - \ol{d}_{h}^\pi \right\|_1 \le&~ \nbr{ \wh{d}_{h-1}^\pi - \ol{d}_{h-1}^\pi }_1 + 2\Bx_{h-1} \nbr{ \wh{d}^D_{h-1} - d^D_{h-1} }_1 + \Bx_{h-1}\Ba_{h-1} \nbr{ \hatdnext_{h-1} - \dnext_{h-1} }_1
        \\
        &\quad + \sqrt{2}\nbr{\hatwpi_{h} - \opex^{\ol\pi}_{h-1} \rbr{d^D_{h-1}  \frac{\cliphatdpi{h-1}{\hatd}}{\wh{d}_{h-1}^D}}}_{2,\dnext_{h-1}},
    \end{align*}
where $(\opexp_h d_h) := (\oppro_{h} d_{h}) / \dnext_h$.
\end{lemma*}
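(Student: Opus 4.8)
The plan is to pivot the bound on $\nbr{\wh d_h^\pi - \ol d_h^\pi}_1$ through the Bayes-optimal predictor of the squared-loss regression in \cref{eq:obj_reg}, and then bound the three resulting pieces one at a time. Throughout, write $\ol\pi_{h-1} \defeq \pi_{h-1}\wedge\Ba_{h-1}\pi^D_{h-1}$ for the pseudo-policy appearing in the loss and $g_{h-1} \defeq \cliphatdpi{h-1}{\hatd} = \wh d_{h-1}^\pi \wedge \Bx_{h-1}\wh d_{h-1}^D$ for the clipped previous-level estimate. First I would observe that the regression response in $\Lcal_{\Dcal^\reg_{h-1}}$, namely $(g_{h-1}/\wh d^D_{h-1})(x_{h-1})\cdot(\ol\pi_{h-1}/\pi^D_{h-1})(a_{h-1}\mid x_{h-1})$, is a deterministic function of $(x_{h-1},a_{h-1})$. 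Taking its conditional expectation given $x_h$ under the sampling distribution $d^D_{h-1}(x_{h-1})\pi^D_{h-1}(a_{h-1}\mid x_{h-1})P_{h-1}(x_h\mid x_{h-1},a_{h-1})$ of $\Dcal^\reg_{h-1}$, the $\pi^D_{h-1}$ factors cancel and one obtains exactly $\bar w_h \defeq \opex^{\ol\pi}_{h-1}\rbr{d^D_{h-1}\,g_{h-1}/\wh d^D_{h-1}}$. Hence $\bar w_h$ is the Bayes-optimal predictor (and it lies in $\Wclip_h$, so the regression is realizable), and $\nbr{\hatwpi_h - \bar w_h}_{2,\dnext_{h-1}}$ is precisely the finite-sample regression error in the last term of the claimed bound.

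Next I would apply the triangle inequality, using $\wh d_h^\pi = \hatwpi_h\,\hatdnext_{h-1}$ and $\bar w_h\cdot\dnext_{h-1} = \opp^{\ol\pi}_{h-1}\rbr{g_{h-1}\,d^D_{h-1}/\wh d^D_{h-1}}$:
\[
\nbr{\wh d_h^\pi - \ol d_h^\pi}_1 \le \nbr{\hatwpi_h\rbr{\hatdnext_{h-1} - \dnext_{h-1}}}_1 + \nbr{\rbr{\hatwpi_h - \bar w_h}\dnext_{h-1}}_1 + \nbr{\bar w_h\,\dnext_{h-1} - \ol d_h^\pi}_1.
\]
The first term is at most $\nbr{\hatwpi_h}_\infty\nbr{\hatdnext_{h-1} - \dnext_{h-1}}_1 \le \Bx_{h-1}\Ba_{h-1}\nbr{\hatdnext_{h-1} - \dnext_{h-1}}_1$ by the $\ell_\infty$ constraint $\nbr{\hatwpi_h}_\infty\le\Bx_{h-1}\Ba_{h-1}$ built into $\Wclip_h$. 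The second term is at most $\nbr{\hatwpi_h - \bar w_h}_{2,\dnext_{h-1}}$ by Cauchy--Schwarz, since $\dnext_{h-1}$ is a probability measure (the constant $\sqrt 2$ in the statement is slack).

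The third term is the crux. Since $\ol d_h^\pi = \opp^{\ol\pi}_{h-1}\rbr{\ol d_{h-1}^\pi\wedge\Bx_{h-1}d^D_{h-1}}$ (\cref{eq:def_dbar}) and $\opp^{\ol\pi}_{h-1}$ is $\ell_1$ non-expansive (\pref{lem:opexp_ineq}), it suffices to bound $\nbr{g_{h-1}\,d^D_{h-1}/\wh d^D_{h-1} - \ol d_{h-1}^\pi\wedge\Bx_{h-1}d^D_{h-1}}_1$. Inserting $g_{h-1}$ and splitting, this is at most $\nbr{g_{h-1}\rbr{d^D_{h-1}/\wh d^D_{h-1} - 1}}_1 + \nbr{g_{h-1} - \ol d_{h-1}^\pi\wedge\Bx_{h-1}d^D_{h-1}}_1$. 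For the first summand, the clipping bound $g_{h-1}\le\Bx_{h-1}\wh d^D_{h-1}$ gives the pointwise inequality $g_{h-1}\abr{d^D_{h-1}/\wh d^D_{h-1} - 1}\le\Bx_{h-1}\abr{d^D_{h-1} - \wh d^D_{h-1}}$, hence a bound of $\Bx_{h-1}\nbr{\wh d^D_{h-1} - d^D_{h-1}}_1$; this is the key point, since without clipping the reweighting $d^D_{h-1}/\wh d^D_{h-1}$ is uncontrolled, and the clip at $\Bx_{h-1}\wh d^D_{h-1}$ is exactly what turns it into a bounded multiple of the MLE error. For the second summand, inserting $\ol d_{h-1}^\pi\wedge\Bx_{h-1}\wh d^D_{h-1}$ and using twice that $x\mapsto x\wedge c$ (for fixed $c$) and $c\mapsto a\wedge c$ (for fixed $a$) are $1$-Lipschitz in $\ell_1$ gives $\nbr{\wh d_{h-1}^\pi - \ol d_{h-1}^\pi}_1 + \Bx_{h-1}\nbr{\wh d^D_{h-1} - d^D_{h-1}}_1$. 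Summing the three pieces yields the stated recursion (with the last constant even improvable from $\sqrt 2$ to $1$). The only genuine difficulty is the bookkeeping around the nested $\wedge$'s and the $d^D_{h-1}/\wh d^D_{h-1}$ reweighting in the third term; the rest is triangle inequalities, the $\ell_\infty$ constraint of $\Wclip_h$, and non-expansiveness of $\opp^{\ol\pi}_{h-1}$.
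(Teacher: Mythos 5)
Your proof is correct and follows essentially the same route as the paper's: both pivot through the intermediate quantities $\hatwpi_h\,\dnext_{h-1}$ and $\opp^{\ol\pi}_{h-1}\bigl(g_{h-1}\,d^D_{h-1}/\wh d^D_{h-1}\bigr)$, bound the reweighting factor $d^D_{h-1}/\wh d^D_{h-1}$ via the clip $g_{h-1}\le \Bx_{h-1}\wh d^D_{h-1}$, use the $1$-Lipschitzness of $\wedge$ in each argument to recover the recursive term plus a second $\Bx_{h-1}\nbr{\wh d^D_{h-1}-d^D_{h-1}}_1$, and finish with the $\ell_\infty$ constraint of $\Wclip_h$ and Cauchy--Schwarz. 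You merely apply the triangle inequalities in a different order (starting from the $\wh d_h^\pi$ side rather than the $\ol d_h^\pi$ side), and your observation that the constant $\sqrt2$ is slack is consistent with the paper, whose own derivation also yields constant $1$ for that term.
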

\begin{proof}
We start by separating out the recursive term
\begin{align}
    &~\nbr{\wh{d}_{h}^\pi - \ol{d}_{h}^\pi}_1 = \nbr{\wh{d}_{h}^\pi - \opp^{\ol\pi}_{h-1}~\rbr{\clipbardpi{h-1}{d^D}}}_1 \notag
    \\
    \le&~ \nbr{\wh{d}_{h}^\pi - \opp^{\ol\pi}_{h-1}~\rbr{\cliphatdpi{h-1}{\hatd}}}_1 + \nbr{\opp^{\ol\pi}_{h-1} ~\rbr{\cliphatdpi{h-1}{\hatd}} - \opp^{\ol\pi}_{h-1} ~\rbr{\clipbardpi{h-1}{\hatd}}}_1
    \notag
    \\
    &~ \quad + \nbr{\opp^{\ol\pi}_{h-1} ~\rbr{\clipbardpi{h-1}{\hatd}} - \opp^{\ol\pi}_{h-1} ~\rbr{\clipbardpi{h-1}{d^D}}}_1 \notag
    \\ 
    \le&~ \nbr{\wh{d}_{h}^\pi - \opp^{\ol\pi}_{h-1} ~\rbr{\cliphatdpi{h-1}{\hatd}}}_1 + \nbr{\cliphatdpi{h-1}{\hatd} - \clipbardpi{h-1}{\hatd}}_1  \notag
    \\
    &~\quad + \nbr{ \clipbardpi{h-1}{\hatd} - \clipbardpi{h-1}{d^D}}_1\notag
    \\ 
    \le&~ \nbr{\wh{d}_{h}^\pi - \opp^{\ol\pi}_{h-1}~ \rbr{\cliphatdpi{h-1}{\hatd}}}_1 + \nbr{ \wh{d}_{h-1}^\pi - \ol{d}_{h-1}^\pi}_1 + \Bx_{h-1}\nbr{\hatd_{h-1}-d^D_{h-1}}_1.\label{eq:hatbar1}
\end{align}
Here, we apply \pref{lem:opexp_ineq} in the second inequality. The last inequality is due to $|\min(x,y)-\min(x,z)| \le |y-z|$ for $x,y,z\in \RR$.

Now, we consider the first term in \cref{eq:hatbar1} and get 
\begin{align}
    &~\nbr{\wh{d}_{h}^\pi - \opp^{\ol\pi}_{h-1}~ \rbr{\cliphatdpi{h-1}{\hatd}}}_1  \notag
    \\
    \le&~ \nbr{\wh{d}_{h}^\pi - \opp^{\ol\pi}_{h-1} \rbr{ \frac{\cliphatdpi{h-1}{\hatd}}{\hatd_{h-1}}\,d^D_{h-1} }}_1 \notag
    \\
    &~ \quad + \nbr{ \opp^{\ol\pi}_{h-1} \rbr{ \frac{\cliphatdpi{h-1}{\hatd}}{\hatd_{h-1}}\,d^D_{h-1}}-\opp^{\ol\pi}_{h-1} \rbr{ \frac{\cliphatdpi{h-1}{\hatd}}{\hatd_{h-1}}\,\hatd_{h-1}}}_1 \notag
    \\
    \le&~ \nbr{\wh{d}_{h}^\pi - \opp^{\ol\pi}_{h-1} \rbr{ \frac{\cliphatdpi{h-1}{\hatd}}{\hatd_{h-1}}\,d^D_{h-1}}}_1 + \Bx_{h-1} \nbr{d^D_{h-1} - \hatd_{h-1}}_1. \label{eq:hatbar2}
\end{align}
In the last inequality, we notice $\nbr{\frac{\cliphatdpi{h-1}{\hatd}}{\hatd_{h-1}}}_\infty \le \Bx_{h-1}$ by our convention $\frac{0}{0}=0$ and apply \pref{lem:opexp_ineq} again.

Let $\wt{w}_{h-1} := \frac{\cliphatdpi{h-1}{\hatd}}{\hatd_{h-1}}$ for short. Since $\|\wt{w}_{h-1}\|_\infty \le \Bx_{h-1}$, \pref{lem:clipped_concentrability} guarantees $\frac{\rbr{\opp^{\ol\pi}_{h-1}\rbr{d^D_{h-1} \wt{w}_{h-1}}}}{\dnext_{h-1}} \le \Bx_{h-1}\Ba_{h-1}$, thus the ratio is well-defined. Then we can further upper-bound the first term in \cref{eq:hatbar2} as 
\begin{align}
&~ \nbr{\wh{d}_{h}^\pi - \opp^{\ol\pi}_{h-1}\rbr{d^D_{h-1} \wt{w}_{h-1}}}_1 
= \nbr{\hatwpi_{h} \, \hatdnext_{h-1}- \frac{\opp^{\ol\pi}_{h-1} \rbr{d^D_{h-1}  \wt{w}_{h-1}}}{\dnext_{h-1}}\,\dnext_{h-1} }_1 \notag
\\
\le&~\nbr{\hatwpi_{h} \, \hatdnext_{h-1}- \hatwpi_{h} \, \dnext_{h-1} }_1 
+ \nbr{\hatwpi_{h} \, \dnext_{h-1} -  \frac{\opp^{\ol\pi}_{h-1} \rbr{d^D_{h-1}  \wt{w}_{h-1}}}{\dnext_{h-1}} \, \dnext_{h-1}}_{1} \notag
\\
=&~\nbr{\hatwpi_{h} \, \hatdnext_{h-1}- \hatwpi_{h} \, \dnext_{h-1} }_1 
+ \nbr{\hatwpi_{h} -  \frac{\opp^{\ol\pi}_{h-1} \rbr{d^D_{h-1}  \wt{w}_{h-1}}}{\dnext_{h-1}}}_{1,\dnext_{h-1}} \notag
\\
\le&~\nbr{\hatwpi_{h}}_{\infty} \nbr{\hatdnext_{h-1} - \dnext_{h-1}}_1 
+ \nbr{\hatwpi_{h} -  \frac{\opp^{\ol\pi}_{h-1} \rbr{d^D_{h-1}  \wt{w}_{h-1}}}{\dnext_{h-1}}}_{1,\dnext_{h-1}} \notag
\\
\le&~ \Bx_h \Ba_h \nbr{\hatdnext_{h-1} - \dnext_{h-1}}_1  + \nbr{\hatwpi_{h} -  \frac{\opp^{\ol\pi}_{h-1} \rbr{d^D_{h-1}  \wt{w}_{h-1}}}{\dnext_{h-1}}}_{2,\dnext_{h-1}}. \label{eq:hatbar3} 
\end{align}

Combining \cref{eq:hatbar1}, \cref{eq:hatbar2}, and \cref{eq:hatbar3} and noticing the definition of $\opexp_h$ and $\wt{w}_{h-1}$ completes the proof.
\end{proof}

\begin{theorem*}[Restatement of
\pref{thm:offline_d_known}]
Fix $\delta\in(0,1)$. Suppose \pref{assum:lowrank} and \pref{assum:data} hold, and $\mu^*$ is known. 
Then, given an evaluation policy $\pi$, by setting 
\[\nmle = \tilde{O}\rbr{\dspanner \rbr{\sum_{h\in[H]} \Bx_h \Ba_{h} }^2 \log(1/\delta)/\veps^2} \text{ and } \nreg = \tilde{O}\rbr{ \dspanner \rbr{\sum_{h\in[H]} \Bx_h \Ba_{h} }^2 \log(1/\delta)/\veps^2 },
\]
with probability at least $1-\delta$, \offalg (\pref{alg:offline_known}) returns state occupancy estimates $\{\wh d^\pi_h\}_{h=0}^{H-1}$ satisfying 
\[\nbr{\wh{d}_{h}^\pi - \ol d_h^\pi}_1 \le \veps, \forall h\in[H].\] 
The total number of episodes required by the algorithm is 
\[
\tilde{O}\rbr{\dspanner H \rbr{\sum_{h \in [H]} \Bx_h \Ba_{h} }^2\log(1/\delta)/\veps^2}.
\]
\end{theorem*}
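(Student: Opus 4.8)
The plan is to take the recursive error bound of \pref{lem:regression_decomposition} as the backbone, unfold it over the horizon, and then control each family of ``instantaneous'' error terms it produces by an appropriate generalization guarantee, choosing $\nmle$ and $\nreg$ so that the accumulated error is at most $\veps$. Since $\wh d_0^\pi = \ol d_0^\pi = \initdist$, iterating \pref{lem:regression_decomposition} yields
\begin{align*}
\nbr{\wh d_h^\pi - \ol d_h^\pi}_1 \le \sum_{h'<h}\bigg( &2\Bx_{h'}\nbr{\wh d^D_{h'}-d^D_{h'}}_1 + \Bx_{h'}\Ba_{h'}\nbr{\hatdnext_{h'}-\dnext_{h'}}_1 \\
&+ \sqrt{2}\,\nbr{\hatwpi_{h'+1} - \opex^{\ol\pi}_{h'}\!\left(d^D_{h'}\tfrac{\cliphatdpi{h'}{\hatd}}{\wh d^D_{h'}}\right)}_{2,\dnext_{h'}} \bigg),
\end{align*}
so it suffices to make the $\Bx\Ba$-weighted sum of the MLE errors and of the (rooted) regression errors each $O(\veps)$ on a high-probability event.

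\textbf{MLE terms.} For $h'\ge 1$, the target $d^D_{h'}$ is a valid density that is linear in $\mutrue_{h'-1}$ with $\ell_\infty$-bounded coefficients, so $d^D_{h'}\in\Fcal_{h'}$, and likewise $\dnext_{h'}$ is linear in $\mutrue_{h'}$ and lies in $\Fcal_{h'+1}$ (this is exactly what the constraint $\nbr{\theta}_\infty\le1$ in $\Fcal$ is designed to guarantee), so MLE is realizable. Combining an optimistic $\ell_1$-covering of $\Fcal$ with log-covering-number $\tilde O(\dlr)$ \citep{chen2022unified} with the standard MLE/TV generalization bound, both $\nbr{\wh d^D_{h'}-d^D_{h'}}_1$ and $\nbr{\hatdnext_{h'}-\dnext_{h'}}_1$ are $\tilde O\!\left(\sqrt{\dlr\log(1/\delta)/\nmle}\right)$ with high probability; summing the weighted terms gives $\tilde O\!\left(\big(\textstyle\sum_h\Bx_h\Ba_h\big)\sqrt{\dlr\log(1/\delta)/\nmle}\right)$, which is $O(\veps)$ for the stated $\nmle$.

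\textbf{Regression terms.} By the data split in \pref{line:split} and the sequential structure of \pref{assum:data}, the regression target in \cref{eq:obj_reg} is conditionally independent of $\Dcal_{h'}^\reg$ and can be treated as fixed; its conditional mean on $\Dcal_{h'}^\reg$ is $\opex^{\ol\pi}_{h'}\!\big(d^D_{h'}\,\tfrac{\cliphatdpi{h'}{\hatd}}{\wh d^D_{h'}}\big)$, a ratio whose numerator is linear in $\mutrue_{h'}$ (the Bellman-flow image is, by \pref{lem:opp_linear}) and whose denominator $\dnext_{h'}$ is also linear in $\mutrue_{h'}$, and which is uniformly bounded by $\Bx_{h'}\Ba_{h'}$; hence it lies in $\Wclip_{h'+1}$ and the bounded squared-loss regression is realizable. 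The crux, flagged just before \pref{thm:offline_d_known}, is controlling the statistical complexity of the ``linear-over-linear'' class $\Wclip_{h'+1}$: direct $\ell_\infty$ coverings of $(\thetaup,\thetadown)$ are unstable in the denominator, so I instead bound the pseudo-dimension of $\Wclip_{h'+1}$ by the VC dimension of its thresholding class \citep{bartlett2006sample}, which is $\tilde O(\dlr)$ via the syntactic-complexity bound of \citet{goldberg1993bounding} for Boolean formulas of polynomial-inequality predicates. This gives an $\ell_1$-covering bound on $\Wclip_{h'+1}$, and the fast-rate squared-loss regression guarantees of \citet{dong2019sqrt,modi2021model} then yield $\nbr{\hatwpi_{h'+1} - \opex^{\ol\pi}_{h'}\!\big(d^D_{h'}\,\tfrac{\cliphatdpi{h'}{\hatd}}{\wh d^D_{h'}}\big)}_{2,\dnext_{h'}}^2 = \tilde O\!\left((\Bx_{h'}\Ba_{h'})^2\dlr\log(1/\delta)/\nreg\right)$; taking square roots and summing the weighted terms gives $O(\veps)$ for the stated $\nreg$.

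\textbf{Conclusion and main obstacle.} A union bound over the $O(H)$ MLE/regression failure events (absorbing the resulting $\log H$ into $\tilde O$) gives $\nbr{\wh d_h^\pi - \ol d_h^\pi}_1\le\veps$ for all $h$ with probability at least $1-\delta$. Each level $h'$ consumes a single tuple dataset $\Dcal_{h'}$ of $\nmle+\nreg$ samples, extractable from that many episodes, so the episode total is $H(\nmle+\nreg)=\tilde O\!\left(\dlr H\big(\textstyle\sum_h\Bx_h\Ba_h\big)^2\log(1/\delta)/\veps^2\right)$. I expect the main obstacle to be the complexity control of $\Wclip_h$ in the regression step — specifically, obtaining a \emph{fast} rate (rather than a slow, $n^{-1/4}$-type rate, which would inflate the final sample complexity to order $1/\veps^4$) from the pseudo-dimension/VC argument for a ratio-of-linear-functions class, together with the bookkeeping that certifies realizability of the ever-changing regression target and keeps each level's $\Bx\Ba$-weighted error from accumulating worse than linearly over $h$.
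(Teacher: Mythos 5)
Your proposal is correct and follows essentially the same route as the paper's proof: unfold \pref{lem:regression_decomposition} over the horizon, control the MLE terms via realizability of $d^D_{h}$ and $\dnext_{h}$ in the linear class together with an optimistic $\ell_1$ cover, and control the regression term via realizability of the clipped Bayes-optimal predictor in $\Wclip_h$ (using \pref{lem:opp_linear} and the boundedness from \pref{lem:clipped_concentrability}) combined with a pseudo-dimension/VC bound on the linear-over-linear class and a fast-rate squared-loss deviation bound, before union bounding over $h$. You also correctly identify the one genuinely delicate point — obtaining the fast rate for $\Wclip_h$ — which the paper resolves exactly as you anticipate, via \citet{goldberg1993bounding} and the Bernstein-type uniform deviation bound with covering numbers controlled by pseudo-dimension.
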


\begin{proof}
We first make two claims on MLE estimation and error propagation.
\paragraph{Claim 1}
Our estimated data distributions satisfy that with probability $1-\delta/2$, for any $h\in[H]$
\begin{align}
    \nbr{ \hatd_h - d_h^D }_1 \le \emle \,\text{ and }\, \nbr{ \hatdnext_h - \dnext_h }_1 \le \emle,
\label{eq:emle_bound}
\end{align}
where 
\[
\emle:=6 \sqrt{\frac{\dlr \log (16H \munorm n_{\mle}/\delta)}{\nmle}}. 
\]

\paragraph{Claim 2} Under the high-probability event that \cref{eq:emle_bound} holds, we further have that with probability at least $1-\delta/2$, for any $1\le h\le H$,
\begin{align}
    \nbr{ \wh{d}_{h}^\pi - \ol{d}_{h}^\pi }_1 
    \le \nbr{ \wh{d}_{h-1}^\pi - \ol{d}_{h-1}^\pi }_1 + 3\Bx_{h-1}\Ba_{h-1} \evnext + \sqrt{2}\ereg{h-1},
\label{eq:offline_hattobar}
\end{align} 
where 
\begin{align*}
    \ereg{h-1} \defeq \sqrt{\frac{ 221184 \dspanner (\Bx_{h-1} \Ba_{h-1})^2\log\rbr{2Hn_{\reg}/\delta} }{ n_{\reg} }}.
\end{align*}

Now we establish the final error bound with these two claims. Notice that the total failure probability is less than $\delta$. Unfolding \cref{eq:offline_hattobar} from $h'=h$ to $h'=1$ and noticing that $\wh{d}_0^\pi = \ol{d}_0^\pi=\initdist$ yields that for any $h\in[H]$
\begin{align}
    \label{eq:offline_hattobar_unfold}
    \nbr{ \wh{d}_{h}^\pi - \ol{d}_{h}^\pi }_1 \le \sum_{h'=0}^{h-1}
    \rbr{3\Bx_{h'}\Ba_{h'} \evnext + \sqrt{2}\ereg{h'} }.
\end{align}
Substituting in the expressions for $\emle$ and $\ereg{}$, we have 
\begin{align}
    \label{eq:offline_hattobar_unfold_sub}
    \nbr{ \wh{d}_{h}^\pi - \ol{d}_{h}^\pi }_1 \le \sum_{h'=0}^{h-1}
    \rbr{18\Bx_{h'}\Ba_{h'} \sqrt{\frac{\dlr \log (16H \munorm n_{\mle}/\delta)}{\nmle}} + 666\Bx_{h'}\Ba_{h'} \sqrt{\frac{ \dspanner \log\rbr{2Hn_{\reg}/\delta} }{ n_{\reg} }} }.
\end{align}
It is easy to see that if we set
\[\nmle = \tilde{O}\rbr{\dspanner \rbr{\sum_{h\in[H]} \Bx_h \Ba_{h} }^2 \log(1/\delta)/\veps^2} \text{ and } \nreg = \tilde{O}\rbr{ \dspanner \rbr{\sum_{h\in[H]} \Bx_h \Ba_{h} }^2 \log(1/\delta)/\veps^2 },
\]
then we have 
\[\nbr{\wh{d}_{h}^\pi - \ol d_h^\pi}_1 \le \veps,\forall h\in[H].\]

In the following, we provide the proof of these two claims 
respectively.

\paragraph{Proof of Claim 1}
We start with a fixed $h\in[H]$ and bounding $\|\hatd_h-d^D_h\|_1$, where we recall that $\hatd_h$ is the MLE solution in \cref{eq:obj_mle}. By \pref{lem:opt_cover}, we know that function class $\Fcal_h$ has an $\ell_1$ optimistic cover with scale $1/\nmle$ of size $\rbr{2\lceil \munorm n_{\mle} \rceil}^{\dlr}$. It is easy to see that the true marginal distribution $d_h^D\in\Fcal_h$ from \pref{lem:mle_realizability} and any $d_h\in\Fcal_h$ is a valid probability distribution over $\Xcal$. From \pref{assum:data}, we know that once conditioned on prior dataset $\Dcal_{0:h-1}$, the current dataset $\Dcal_h^\mle$ is drawn i.i.d. from the fixed distribution denoted as $d_h^D$. Thus, \pref{lem:mle} tells us that when conditioned on $\Dcal_{0:h-1}$, with probability at least $1-\delta/(4H)$
\begin{align}
\label{eq:hatd_bound}
    \|\hatd_h - d_h^D\|_1 \le&~ \frac{1}{\nmle} + \sqrt{\frac{12\log (4H\rbr{2\lceil \munorm n_{\mle} \rceil}^{\dlr}/\delta)}{\nmle} + \frac{6}{\nmle}}
    \\
    \le&~ \frac{1}{\nmle} + \sqrt{\frac{12\dlr \log (16H \munorm n_{\mle}/\delta)}{\nmle} + \frac{6}{\nmle}}\notag
    \\
    \le&~ 6 \sqrt{\frac{\dlr \log (16H \munorm n_{\mle}/\delta)}{\nmle}}=\emle.
\end{align}

Since \cref{eq:hatd_bound} holds for any such fixed $\Dcal_{0:h-1}$, applying the law of total expectation gives us this that \cref{eq:hatd_bound} holds with probability $1-\delta/(4H)$ without conditioning on $\Dcal_{0:h-1}$. 

Similarly, with probability at least $1-\delta/(4H)$, for the MLE solution $\hatdnext_h$ we have $\|\hatdnext_h - \dnext_h\|_1 \le \emle.$ Union bounding these two high-probability events and further union bounding over $h\in[H]$ gives us that \cref{eq:emle_bound} holds with probability $1-\delta/2$.

\paragraph{Proof of Claim 2}
Notice that the proof in this part is under the high-probability event that \cref{eq:emle_bound} holds. 
We consider a fixed $h\in[H]$. From \pref{lem:regression_decomposition}, we have the error propagation result that
\begin{align}
    \left\| \wh{d}_{h}^\pi - \ol{d}_{h}^\pi \right\|_1 \le&~ \nbr{ \wh{d}_{h-1}^\pi - \ol{d}_{h-1}^\pi }_1 + 2\Bx_{h-1} \nbr{ \wh{d}^D_{h-1} - d^D_{h-1} }_1 + \Bx_{h-1}\Ba_{h-1} \nbr{ \hatdnext_{h-1} - \dnext_{h-1} }_1 \notag
    \\
    &\quad + \sqrt{2}\nbr{\hatwpi_{h} - \frac{\opp^{\ol\pi}_{h-1} \rbr{d^D_{h-1}  \wt{w}_{h-1}}}{\dnext_{h-1}}}_{2,\dnext_{h-1}},
    \label{eq:hatbar7}
\end{align}
where $\wt{w}_{h-1} := \frac{\cliphatdpi{h-1}{\hatd}}{\hatd_{h-1}}$.

Since $\wh{w}_h^\pi \in \Wcal_h$, we have $\|\wh{w}_h^\pi\|_\infty \le \Bx_h \Ba_h$. 
The last term on RHS isolates the finite-sample error of regression, involving the difference between the empirical minimizer $\wh{w}_h^\pi$ and the population minimizer $\frac{\opp^{\ol\pi}_{h-1} \rbr{d^D_{h-1}  \wt{w}_{h-1}}}{\dnext_{h-1}}$ of the regression objective. To bound this error, we apply \pref{lem:variance_decomposition} and \pref{lem:conc}, which give us that, with probability at least $1-\delta/(2H)$,  
\begin{align}
&~\nbr{\hatwpi_{h} - \frac{\opp^{\ol\pi}_{h-1} \rbr{d^D_{h-1}  \wt{w}_{h-1}}}{\dnext_{h-1}}}_{2, \dnext_{h-1}}^2 \notag
\\
=&~ \EE\sbr{\Lcal_{\Dcal_{h-1}^\reg}\rbr{\hatwpi_{h}, \wt w_{h-1},\ol\pi}} - \EE\sbr{\Lcal_{\Dcal_{h-1}^\reg}\rbr{\frac{\opp^{\ol\pi}_{h-1} \rbr{d^D_{h-1}  \wt{w}_{h-1}}}{\dnext_{h-1}},\wt w_{h-1},\ol\pi}} \notag
\\
\le&~ 2\rbr{\Lcal_{\Dcal_{h-1}^\reg}\rbr{\hatwpi_{h}, \wt w_{h-1},\ol\pi} - \Lcal_{\Dcal_{h-1}^\reg}\rbr{\frac{\opp^{\ol\pi}_{h-1} \rbr{d^D_{h-1}  \wt{w}_{h-1}}}{\dnext_{h-1}},\wt w_{h-1},\ol \pi}} + 2\ereg{h-1}^2
\label{eq:hatbar4}
\end{align}
where 
\[
    \ereg{h-1} \defeq \sqrt{\frac{ 221184 \cdot \dspanner (\Bx_{h-1} \Ba_{h-1})^2\log\rbr{2Hn_{\reg}/\delta} }{ n_{\reg} }}
\]
The first term in \cref{eq:hatbar4} compares the empirical regression loss of the empirical minimizer $\wh{w}_h^\pi$ against the population solution. In order to show that this is $\le 0$, we first need to check that $\frac{\opp^{\ol\pi}_{h-1} \rbr{d^D_{h-1}  \wt{w}_{h-1}}}{\dnext_{h-1}} \in \Wcal_{h}$. As we have previously seen, we have $\frac{\opp^{\ol\pi}_{h-1} \rbr{d^D_{h-1}  \wt{w}_{h-1}}}{\dnext_{h-1}} \le \Bx_{h-1} \Ba_{h-1}$ from \pref{lem:clipped_concentrability}, thus satisfying the norm constraints of $\Wcal_h$. Further, \pref{lem:opp_linear} guarantees that both the numerator and denominator are linear functions of $\mutrue_{h-1}$, i.e., $\opp^{\ol\pi}_{h-1} \rbr{d^D_{h-1}  \wt{w}_{h-1}} = \langle \mutrue_{h-1}, \thetaup_h \rangle$ and $\dnext_{h-1} = \langle \mutrue_{h-1}, \thetadown_h \rangle$ for some $\thetaup_h, \thetadown_h \in \RR^d$. Then since $\hatwpi_h$ minimzes the empirical regression loss \cref{eq:obj_reg}, we have 
\begin{align}
 \Lcal_{\Dcal_{h-1}^\reg}\rbr{\hatwpi_{h-1}, \wt w_{h-1},\ol\pi} - \Lcal_{\Dcal_{h-1}^\reg}\rbr{\frac{\opp^{\ol\pi}_{h-1} \rbr{d^D_{h-1}  \wt{w}_{h-1}}}{\dnext_{h-1}},\wt w_{h-1},\ol \pi}\le 0.
 \label{eq:hatbar5}
\end{align}

Combining \cref{eq:hatbar7}, \cref{eq:hatbar4}, \cref{eq:hatbar5} with the MLE bound of \cref{eq:emle_bound}, with probability at least $1-\delta/(2H)$ we have 
\begin{align*}
     \|\wh{d}_{h}^\pi - \ol{d}_{h}^\pi\|_1 \le&~ \|\wh{d}_{h-1}^\pi - \ol{d}_{h-1}^\pi\|_1 + 2\Bx_{h-1} \ev + \Bx_{h-1}\Ba_{h-1} \evnext + \sqrt{2}\ereg{h-1}
     \\
     \le&~ \|\wh{d}_{h-1}^\pi - \ol{d}_{h-1}^\pi\|_1 + 3\Bx_{h-1}\Ba_{h-1} \evnext + \sqrt{2}\ereg{h-1}.
\end{align*}

Finally, union bounding over $h\in[H]$, plugging in the definition of $\emle$, and rearranging gives that \cref{eq:offline_hattobar} holds with probability at least $1-\delta/2$.
\end{proof}

\subsection{Proof of offline policy optimization}
\label{app:offline_rf}
\begin{theorem*}[Restatement of \pref{thm:offline_rf}]
Fix $\delta \in (0, 1)$ and suppose \pref{assum:lowrank} and \pref{assum:data} hold. Given a policy class $\Pi$, let $\{\wh d_h^\pi\}_{h\in[H],\pi \in \Pi}$ be the output of running \pref{alg:offline_known}. Then with probability at least $1-\delta$, for any deterministic reward function $R$ 
and policy selected as   
$ \wh\pi_R = \argmax_{\pi\in \Pi} \wh{v}_R^\pi, $
we have 
\[
    v_R^{\wh\pi_R} \ge \argmax_{\pi \in \Pi} \ol{v}_R^\pi - \veps,
\]
where $\wh{v}_R^\pi \defeq \sum_{h=0}^{H-1} \iint \wh d_h^\pi(x_h) R(x_h,a_h) \pi(a_h|x_h) (\dd x_h) (\dd a_h)$ and $\ol{v}_R$ is defined similarly for $\{\ol{d}_h^\pi\}$. 
The total number of episodes required by the algorithm is
\[\tilde{O}\rbr{\dspanner H^3 \rbr{\sum_{h \in [H]} \Bx_h \Ba_{h}}^2\log(|\Pi|/\delta)/\veps^2}.
\]
Additionally, define the set of policies fully covered by the data to be 
    \begin{align*}
        \Pi^{\covered} = \cbr{ \pi \in \Pi : d_h^\pi = \ol{d}_h^\pi, \forall h \in [H] }. 
    \end{align*}
    Then with the same total number of episodes required by the algorithm, for any reward function $R$ and policy selected as $ \wh\pi_R = \argmax_{\pi\in \Pi^{\covered}} \wh{v}_R^\pi,$  with probability at least $1-\delta$, we have
    \[v_R^{\wh\pi_R} \ge \argmax_{\pi \in \Pi^{\covered} } v_R^\pi - \veps.\] 
\end{theorem*}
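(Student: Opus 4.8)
The plan is a textbook \emph{pessimism} argument layered on top of the occupancy-estimation guarantee of \pref{thm:offline_d_known}. \textbf{Step 1 (invoke the estimation bound uniformly over $\Pi$).} I would run \pref{alg:offline_known} with target accuracy $\veps'\defeq\veps/(2H)$ and confidence $\delta/|\Pi|$, and union bound over $\pi\in\Pi$. The key bookkeeping point is that the offline dataset is fixed (it is collected once under \pref{assum:data}) and \offalg is merely re-executed for each $\pi$, so $|\Pi|$ enters only the failure probability, not the episode count. This yields, with probability at least $1-\delta$,
\[
\nbr{\wh d_h^\pi-\ol d_h^\pi}_1\le\frac{\veps}{2H}\qquad\text{for all }\pi\in\Pi,\ h\in[H],
\]
at a total cost of $\tilde{O}\rbr{\dspanner H\rbr{\sum_{h\in[H]}\Bx_h\Ba_h}^2\log(|\Pi|/\delta)/(\veps')^2}=\tilde{O}\rbr{\dspanner H^3\rbr{\sum_{h\in[H]}\Bx_h\Ba_h}^2\log(|\Pi|/\delta)/\veps^2}$, which matches the claimed complexity.

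\textbf{Step 2 (two elementary value comparisons $+$ pessimism chain).} For any policy $\pi$ and any reward with $R_h\in[0,1]$: since $R_h\ge0$, $\pi(\cdot\mid x)$ integrates to $1$, and $\ol d_h^\pi\le d_h^\pi$ pointwise by Claim~1 of \pref{prop:clipd}, we get the \emph{one-sided} comparison $\ol v_R^\pi\le v_R^\pi$; and since $R_h\le1$, on the good event of Step~1 we have $\abr{\wh v_R^\pi-\ol v_R^\pi}\le\sum_{h\in[H]}\nbr{\wh d_h^\pi-\ol d_h^\pi}_1\le\veps/2$. Writing $\pi^*\defeq\argmax_{\pi\in\Pi}\ol v_R^\pi$ and chaining these inequalities with the optimality of $\wh\pi_R$,
\[
v_R^{\wh\pi_R}\ \ge\ \ol v_R^{\wh\pi_R}\ \ge\ \wh v_R^{\wh\pi_R}-\tfrac{\veps}{2}\ \ge\ \wh v_R^{\pi^*}-\tfrac{\veps}{2}\ \ge\ \ol v_R^{\pi^*}-\veps\ =\ \max_{\pi\in\Pi}\ol v_R^\pi-\veps,
\]
which is the first claim. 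Note that $v_R^{\wh\pi_R}\ge\ol v_R^{\wh\pi_R}$ uses only $\ol d\le d$ and $R\ge0$, so no coverage property of $\wh\pi_R$ itself is needed; this is precisely where pessimism buys the one-sided competition against $\ol v_R$ while paying the missingness error only once.

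\textbf{Step 3 (restriction to $\Pi^{\covered}$).} I would repeat the chain of Step~2 verbatim but with every $\argmax$ taken over $\Pi^{\covered}$ instead of $\Pi$, arriving at $v_R^{\wh\pi_R}\ge\max_{\pi\in\Pi^{\covered}}\ol v_R^\pi-\veps$; then the defining property of $\Pi^{\covered}$ (namely $d_h^\pi=\ol d_h^\pi$ for all $h$, hence $v_R^\pi=\ol v_R^\pi$ for every $\pi\in\Pi^{\covered}$) lets me replace the right-hand side by $\max_{\pi\in\Pi^{\covered}}v_R^\pi-\veps$. The dataset and the good event are unchanged, so the episode count is the same.

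I do not expect a genuine obstacle here; the only delicate points are bookkeeping: (i) justifying that the $|\Pi|$-dependence is logarithmic because the offline data is reused across all candidate policies rather than re-collected, and (ii) tracking which direction of each inequality relies on nonnegativity of $R$ (to keep $\ol v_R^\pi\le v_R^\pi$, i.e.\ preserve pessimism) versus the $\ell_1$-closeness of $\wh d$ to $\ol d$, so that the one-sided missingness error is never double-counted along the chain.
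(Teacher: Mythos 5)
Your proposal is correct and follows essentially the same route as the paper's proof: invoke \pref{thm:offline_d_known} at accuracy $\veps/(2H)$ with confidence $\delta/|\Pi|$ and union bound over $\Pi$, then use $|\wh v_R^\pi - \ol v_R^\pi|\le \veps/2$ together with the one-sided inequality $v_R^\pi \ge \ol v_R^\pi$ (from $\ol d_h^\pi \le d_h^\pi$ and $R\ge 0$) in the standard pessimism chain, and specialize to $\Pi^{\covered}$ via $v_R^\pi=\ol v_R^\pi$ there. The bookkeeping on the episode count and the $\log|\Pi|$ dependence also matches the paper.
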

\begin{proof}


Firstly, \pref{thm:offline_d_known} states that, with probability at least $1-\delta/|\Pi|$,  $\tilde{O}\rbr{\dspanner H^3 \rbr{\sum_{h \in [H]} \Bx_h \Ba_{h} }^2\log(|\Pi|/\delta)/\veps^2}$ samples are sufficient for learning $\{\wh{d}_h^\pi\}$ such that $\|\wh{d}_{h}^\pi - \ol d_h^\pi\|_1 \le \frac{\veps}{2H}$ for all $h \in [H]$ and each $\pi \in \Pi$. Taking a union bound over $\pi \in \Pi$, with probability at least $1-\delta$, we have that for all $h\in[H],\pi \in \Pi$, 
\[
    \|\wh{d}_{h}^\pi - \ol d_h^\pi\|_1 \le \frac{\veps}{2H}. 
\]  
Then since the $R$ is bounded on $[0,1]$, for any $\pi \in \Pi$ we have 
\begin{align*}
    |\wh v^\pi_R - \ol{v}^\pi_R| = &~\sum_{h=0}^{H-1} \iint (\wh d_h^\pi(x_h)- \ol d_h^\pi(x_h)) R(x_h,a_h) \pi(a_h|x_h) (\dd x_h) (\dd a_h)
    \\
    \le &~ \sum_{h=0}^{H-1} \int |\wh d_h^\pi(x_h)- \ol d_h^\pi(x_h)|\rbr{\int \pi(a_h|x_h)  (\dd a_h)} (\dd x_h)
    \\
    =&~ \sum_{h=0}^{H-1} \|\wh{d}_{h}^\pi - \ol d_h^\pi\|_1 \le \veps / 2. 
\end{align*}  
Denote $\ol \pi^*_R = \argmax_{\pi \in \Pi} \ol v_R^\pi$, and recall that we pick $\wh\pi_R =\argmax_{\pi\in\Pi} \wh v^\pi_R$. Then 
\begin{align*}
    v^{\wh \pi_R}_R - \max_{\pi\in\Pi} \ol{v}^\pi_R = v^{\wh \pi_R}_R - \ol{v}^{\ol \pi_R^*}_R \ge \ol{v}^{\wh \pi_R}_R - \ol{v}^{\ol\pi_R^*}_R 
    = \ol v^{\wh \pi_R}_R - \wh v^{\wh \pi_R}_R + \wh v^{\wh \pi_R}_R - \wh v^{\ol\pi_R^*}_R + \wh v^{\ol\pi_R^*}_R - \ol v^{\ol\pi_R^*}_R \ge -\veps, 
\end{align*}
where the first inequality follows from the fact that $d_h^\pi \ge \ol{d}_h^\pi$, thus $v_R^\pi \ge \ol{v}_R^\pi$. The second inequality results from the fact that $\wh v^{\wh \pi_R}_R \ge \wh v^{\pi_R^*}_R$ and $|\wh v^\pi_R - \ol{v}^\pi_R| \le \veps / 2$ for all $\pi \in \Pi$. 

The result for $\Pi^{\covered}$ is a straightforward from the observation that for each $\pi \in \Pi^{\covered}$, we have $d_h^\pi = \ol{d}_h^\pi$ for all $h \in [H]$ and $v_R^\pi=\ol v_R^\pi$. 
\end{proof}

\section{Online policy cover construction proofs (\pref{sec:online})}
\label{app:online}

\subsection{Proof of occupancy estimation}
\label{app:online_occu}
\begin{lemma*}[Restatement of \pref{lem:missingness_decomposition}]
    For any $h \in [H]$ and $\pi \in \Pi$ in \pref{alg:online_known}, 
    \[
        \nbr{ \ol{d}_{h}^\pi - d_h^\pi }_1 \le \nbr{\ol{d}_{h-1}^\pi - d_{h-1}^\pi}_1  + 4\dspanner\max_{\pi' \in \Pi} \nbr{\wh{d}_{h-1}^{\pi'} - \ol{d}^{\pi'}_{h-1}}_1.
    \]
\end{lemma*}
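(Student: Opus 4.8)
The plan is to derive the bound directly from the third claim of \pref{prop:clipd}, specialized to the quantities that appear inside \pref{alg:online_known}: there $\Bx_{h-1}=\dlr$, $\Ba_{h-1}=K$, the data action policy is $\pi^D_{h-1}=\unif(\Acal)$, and the roll-in distribution is $d^D_{h-1}=\tfrac{1}{\dlr}\sum_{i=1}^{\dlr} d_{h-1}^{\pi^{h-1,i}}$ (from $\unif(\Piexpl_{h-1})\circ\unif(\Acal)$). Since \pref{prop:clipd}(3) already gives linear propagation of the missingness error plus two ``instantaneous'' terms, it suffices to show (i) the action-clipping term $\|\opp^{\pi}_{h-1}d_{h-1}^\pi-\opp^{\ol\pi}_{h-1}d_{h-1}^\pi\|_1$ vanishes, and (ii) the state-clipping term $\|\ol d_{h-1}^\pi-\ol d_{h-1}^\pi\wedge\dlr d^D_{h-1}\|_1$ is at most $4\dlr\max_{\pi'\in\Pi}\|\wh d_{h-1}^{\pi'}-\ol d_{h-1}^{\pi'}\|_1$, with \emph{no} dependence on the inductive missingness error $\|\ol d_{h-1}^{\pi'}-d_{h-1}^{\pi'}\|_1$. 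Claim (i) is immediate: $\Ba_{h-1}\pi^D_{h-1}(a\mid x)=K\cdot\tfrac1K=1\ge\pi_{h-1}(a\mid x)$, so $\ol\pi_{h-1}=\pi_{h-1}\wedge\Ba_{h-1}\pi^D_{h-1}=\pi_{h-1}$ and $\opp^{\ol\pi}_{h-1}=\opp^{\pi}_{h-1}$.

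For (ii) the key move is to pass from the true roll-in distribution to its clipped counterpart $\ol d^D_{h-1}:=\tfrac{1}{\dlr}\sum_{i=1}^{\dlr}\ol d_{h-1}^{\pi^{h-1,i}}$: by \pref{prop:clipd}(1), $\ol d_{h-1}^{\pi^{h-1,i}}\le d_{h-1}^{\pi^{h-1,i}}$ pointwise, so $\ol d^D_{h-1}\le d^D_{h-1}$, and since $z\mapsto z'-z'\wedge z$ is nonincreasing we get, writing $(z)_+:=\max\{z,0\}$,
\[
\ol d_{h-1}^\pi-\ol d_{h-1}^\pi\wedge\dlr d^D_{h-1}\ \le\ \ol d_{h-1}^\pi-\ol d_{h-1}^\pi\wedge\dlr\ol d^D_{h-1}\ =\ \bigl(\ol d_{h-1}^\pi-\dlr\ol d^D_{h-1}\bigr)_+ .
\]
Next I would insert the linearized estimates $\wt d_{h-1}^\pi$ (\pref{line:line_approx}) and $\wt d^D_{h-1}:=\tfrac{1}{\dlr}\sum_i\wt d_{h-1}^{\pi^{h-1,i}}$ and split $\ol d_{h-1}^\pi-\dlr\ol d^D_{h-1}=(\wt d_{h-1}^\pi-\dlr\wt d^D_{h-1})+(\ol d_{h-1}^\pi-\wt d_{h-1}^\pi)+\dlr(\wt d^D_{h-1}-\ol d^D_{h-1})$. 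The barycentric-spanner guarantee \cref{eq:approx_bary} (applied at level $h-1$) makes the first bracket $\le0$ pointwise, so the positive part is at most $|\ol d_{h-1}^\pi-\wt d_{h-1}^\pi|+\dlr|\wt d^D_{h-1}-\ol d^D_{h-1}|$; integrating and using the triangle inequality on $\wt d^D_{h-1}-\ol d^D_{h-1}$ over the $\dlr$ spanner policies gives
\[
\|\ol d_{h-1}^\pi-\ol d_{h-1}^\pi\wedge\dlr d^D_{h-1}\|_1\ \le\ \|\wt d_{h-1}^\pi-\ol d_{h-1}^\pi\|_1+\sum_{i=1}^{\dlr}\|\wt d_{h-1}^{\pi^{h-1,i}}-\ol d_{h-1}^{\pi^{h-1,i}}\|_1 .
\]

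The last step converts linearization error into regression error. Since $\ol d_{h-1}^\pi=\opp^{\ol\pi}_{h-2}(\,\cdot\,)$ is linear in $\mutrue_{h-2}$ by \pref{lem:opp_linear}, it is a feasible competitor in the $\ell_1$-projection defining $\wt d_{h-1}^\pi$, so $\|\wt d_{h-1}^\pi-\wh d_{h-1}^\pi\|_1\le\|\ol d_{h-1}^\pi-\wh d_{h-1}^\pi\|_1$, hence $\|\wt d_{h-1}^\pi-\ol d_{h-1}^\pi\|_1\le2\|\wh d_{h-1}^\pi-\ol d_{h-1}^\pi\|_1$; the same holds for each $\pi^{h-1,i}\in\Pi$. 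Bounding every term by $\max_{\pi'\in\Pi}\|\wh d_{h-1}^{\pi'}-\ol d_{h-1}^{\pi'}\|_1$ turns the last display into $2(1+\dlr)\le4\dlr$ times that maximum, which together with (i) and \pref{prop:clipd}(3) is exactly the stated inequality (the case $h=1$ is trivial, as $\ol d_0^\pi=\wt d_0^\pi=\wh d_0^\pi=\initdist$). The main obstacle — and the whole point — is step (ii): the naive bound on the clipped mass routes through $\|\wh d^D_{h-1}-d^D_{h-1}\|_1\le\tfrac{1}{\dlr}\sum_i\|\wh d_{h-1}^{\pi^{h-1,i}}-d_{h-1}^{\pi^{h-1,i}}\|_1$, which reintroduces the inductive missingness error and forces $A_h\lesssim(1+\dlr)A_{h-1}$ (exponential blow-up); passing instead through $\ol d^D_{h-1}$ and the spanner bound keeps the right-hand side a function of the two-sided regression error only. (One technical caveat handled elsewhere in the paper: \cref{eq:approx_bary} is invoked as a black box, i.e.\ we adopt the section's convention that the linearized $\wt d_{h-1}^{\pi}$ behave like genuine densities for the purpose of the spanner bound.)
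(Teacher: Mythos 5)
Your proposal follows essentially the same route as the paper: start from \pref{prop:clipd}(3), observe that $\Ba_{h-1}\pi^D_{h-1}=1$ kills the action-clipping term, and bound the state-clipping term by passing through the linearized estimates, the barycentric spanner at level $h-1$, and the feasibility of the (linear) $\ol d_{h-1}^{\pi}$ in the $\ell_1$-projection, arriving at the same $2(\dspanner+1)\le 4\dspanner$ count. The only soft spot is your claim that the spanner guarantee makes $\wt d_{h-1}^\pi-\sum_i\wt d_{h-1}^{\pi^{h-1,i}}\le 0$ pointwise: the spanner only gives coefficients in $[-1,1]$, so this needs $\wt d_{h-1}^\pi\le\sum_i\bigl|\wt d_{h-1}^{\pi^{h-1,i}}\bigr|$, and the linearized estimates need not be nonnegative. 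The paper's proof handles exactly this by carrying absolute values and then using $\bigl|\wt d_{h-1}^{\pi^{h-1,i}}\bigr|-\ol d_{h-1}^{\pi^{h-1,i}}\le\bigl|\wt d_{h-1}^{\pi^{h-1,i}}-\ol d_{h-1}^{\pi^{h-1,i}}\bigr|$ (valid since $\ol d\ge 0$), which lands in the same bound; this is a one-line repair rather than a "convention," so your argument is correct once that substitution is made.
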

\begin{proof}
Firstly, from the third claim of \pref{prop:clipd}, we have that for any $h\in[H], \pi\in\Pi$
\begin{align}
\label{eq:online_bartotrue1}
\|\ol{d}_h^\pi - d_h^\pi\|_1 \le \|\ol{d}_{h-1}^\pi - d_{h-1}^\pi\|_1 + \|  \ol{d}_{h-1}^\pi-\clipbardpi{h-1}{d^D}\|_1  + \| \opp^{\pi}_{h-1} d_{h-1}^\pi-\opp^{\ol\pi}_{h-1} d_{h-1}^\pi\|_1.
\end{align}

Now we further simplify the latter two error terms on the RHS of \cref{eq:online_bartotrue1} by noticing that $\Bx_h = \dspanner$ and $\Ba_h = K$ for all $h \in [H]$. For the last term, $\pi^D=\unif(\Acal)$ gives us 
\[\ol \pi (a_{h-1}|x_{h-1}) = \min\{\pi(a_{h-1}|x_{h-1}), \Ba_{h-1} \pi^D(a_{h-1}|x_{h-1})\}=\min\{\pi(a_{h-1}|x_{h-1}), 1)\}=\pi(a_{h-1}|x_{h-1})\]
and thus $\nbr{\opp^{\pi}_{h-1} d_{h-1}^\pi - \opp^{\ol \pi}_{h-1} d_{h-1}^\pi}_1=0$. For the middle term, we expand the expression as
\[
    \nbr{\ol{d}_{h-1}^\pi - \clipbarddpi{h-1}{d^D}}_1 = \int \ol{d}_{h-1}^\pi(x_{h-1}) - \rbr{\clipbarddpi{h-1}{d^D}}(x_{h-1}) (\dd x_{h-1}).
\]

Consider a fixed $x_{h-1}\in\Xcal$. Note that $\ol{d}_{h-1}^\pi(x_{h-1}) - \rbr{\clipbarddpi{h-1}{d^D}}(x_{h-1})$ is nonzero only if $\dspanner d^D_{h-1}(x_{h-1}) < \ol{d}_{h-1}^\pi(x_{h-1})$, for which we have 
\begin{align*}
    &~\ol{d}_{h-1}^\pi(x_{h-1}) - \rbr{\clipbarddpi{h-1}{d^D}}(x_{h-1})=\ol{d}_{h-1}^\pi(x_{h-1}) - \dspanner d^D_{h-1}(x_{h-1}) 
    \\
    \le&~\wh{d}_{h-1}^\pi(x_{h-1}) - \dspanner d^D_{h-1}(x_{h-1}) + \abr{\ol{d}_{h-1}^\pi(x_{h-1}) - \wh{d}_{h-1}^\pi(x_{h-1})}.
\end{align*}

To bound $\wh{d}_{h-1}^\pi(x_{h-1}) - \dspanner d^D_{h-1}(x_{h-1})$, we have 
\begin{align*}
    &~\wh{d}_{h-1}^\pi(x_{h-1}) - \dspanner d_{h-1}^D(x_{h-1}) 
    \\
    \le&~ \dlinpi_{h-1}(x_{h-1}) - \dspanner d_{h-1}^D(x_{h-1}) + \abr{\wh{d}_{h-1}^\pi(x_{h-1}) - \dlinpi_{h-1}(x_{h-1})} 
    \\
    \le&~ \sum_{i=1}^{\dlr} \abr{\dlin_{h-1}^{\pi^{h-1,i}}(x_{h-1})}  - \dlr d_{h-1}^D(x_{h-1}) + \abr{\wh{d}_{h-1}^\pi(x_{h-1}) - \dlinpi_{h-1}(x_{h-1})}  
    \\ 
    \le&~ \sum_{i=1}^{\dlr} \abr{\wh{d}_{h-1}^{\pi^{h-1,i}}(x_{h-1})}  - \dlr d_{h-1}^D(x_{h-1}) + (\dspanner+1) \max_{\pi' \in \Pi} \abr{\wh{d}_{h-1}^{\pi'}(x_{h-1}) - \dlin^{\pi'}_{h-1}(x_{h-1})}  
    \\ 
    \le&~ \sum_{i=1}^{\dlr} \abr{\ol{d}_{h-1}^{\pi^{h-1,i}}(x_{h-1})}  - \dlr d_{h-1}^D(x_{h-1}) + (\dspanner+1) \max_{\pi' \in \Pi} \abr{\wh{d}_{h-1}^{\pi'}(x_{h-1}) - \dlin^{\pi'}_{h-1}(x_{h-1})}  \\
    &+~ \dspanner \max_{\pi' \in \Pi} \abr{\wh{d}_{h-1}^{\pi'}(x_{h-1}) - \ol{d}^{\pi'}_{h-1}(x_{h-1})}  
    \\
    =&~ \sum_{i=1}^{\dlr} \ol{d}_{h-1}^{\pi^{h-1,i}}(x_{h-1})  - \dlr d_{h-1}^D(x_{h-1}) + (\dspanner+1) \max_{\pi' \in \Pi}\abr{\wh{d}_{h-1}^{\pi'}(x_{h-1}) - \dlin^{\pi'}_{h-1}(x_{h-1})}  \\
    &+~ \dspanner \max_{\pi' \in \Pi}\abr{\wh{d}_{h-1}^{\pi'}(x_{h-1}) - \ol{d}^{\pi'}_{h-1}(x_{h-1})}  
    \\
    \le&~ \sum_{i=1}^{\dlr} d_{h-1}^{\pi^{h-1,i}}(x_{h-1})  - \dspanner d^D_{h-1}(x_{h-1})  + (\dspanner +1) \max_{\pi' \in \Pi}\abr{\wh{d}_{h-1}^{\pi'}(x_{h-1}) - \dlin^{\pi'}_{h-1}(x_{h-1})}  \\
    &+~ \dspanner \max_{\pi' \in \Pi}\abr{\wh{d}_{h-1}^{\pi'}(x_{h-1}) - \ol{d}^{\pi'}_{h-1}(x_{h-1})} 
    \\
    =&~ (\dspanner +1) \max_{\pi' \in \Pi} \abr{\wh{d}_{h-1}^{\pi'}(x_{h-1}) - \dlin^{\pi'}_{h-1}(x_{h-1})} + \dspanner \max_{\pi' \in \Pi}\abr{\wh{d}_{h-1}^{\pi'}(x_{h-1}) - \ol{d}^{\pi'}_{h-1}(x_{h-1})}.  
\end{align*}
In the second inequality, we use that $\Piexpl_{h-1}=\{\pi^{h-1,1},\ldots,\pi^{h-1,\dspanner}\}$ are the policies corresponding to the barycentric spanner,  which \pref{lem:barycentric} guarantees to be of cardinality no larger than $\dspanner$.  
The first equality is because $\ol{d}_{h-1}^{\pi}(x_{h-1}) \ge 0,\forall \pi$, which can be seen by the induction definition in \cref{eq:def_dbar} and the non-negativity of $\initdist$. The fifth inequality is due to $\ol d^{\pi}_{h-1}(x_{h-1}) \le  d^{\pi}_{h-1}(x_{h-1}),\forall \pi$, which can be shown inductively by noticing $\ol d^{\pi}_{0}\le d^{\pi}_{0}$ and the definition of $\ol d^{\pi}_h$ in \cref{eq:def_dbar}. The last equality can be seen from that $d^D_{h-1}(x_{h-1})$ is the marginal distribution of $\Dcal_{h-1}$ and $\Dcal_{h-1}$ is rolled in with $\unif(\Piexpl_{h-1})$.

Integrating over $x_{h-1}$ yields 
\begin{align*}
    \nbr{ \ol{d}_{h-1}^\pi - \clipbarddpi{h-1}{d^D} }_1 \le (\dspanner + 1) \max_{\pi' \in \Pi} \nbr{\wh{d}_{h-1}^{\pi'} - \dlin^{\pi'}_{h-1}}_1 + (\dspanner + 1)\max_{\pi' \in \Pi} \nbr{\wh{d}_{h-1}^{\pi'} - \ol{d}^{\pi'}_{h-1}}_1.
\end{align*}

Since $\ol{d}^{\pi'}_{h-1} = \opp^{\pi'}_{h-2} (\ol{d}_{h-2}^\pi \wedge \Bx_{h-2} d^D_{h-2})= \opp^{\pi'}_{h-2} ( \ol{d}_{h-2}^\pi \wedge \dspanner d^D_{h-2})$ is linear in the features $\mu_{h-2}^*$ (\pref{lem:opp_linear}), and $\dlin^{\pi'}_{h-1}$ is the closest linear approximation in the $\ell_1$ norm to $\wh{d}^{\pi'}_{h-1}$ (\pref{line:line_approx}), for any $\pi' \in \Pi$ we have 
\begin{align}
\label{eq:diff_repr}
\nbr{\wh{d}^{\pi'}_{h-1} - \dlin^{\pi'}_{h-1}}_1 \le \nbr{\wh{d}^{\pi'}_{h-1} - \ol{d}^{\pi'}_{h-1}}_1
\end{align}
and thus  
\begin{align}
    \nbr{ \ol{d}_{h-1}^\pi - \clipbarddpi{h-1}{d^D} }_1 \le  2(\dspanner + 1)\max_{\pi' \in \Pi}\nbr{\wh{d}_{h-1}^{\pi'} - \ol{d}^{\pi'}_{h-1}}_1.
    \label{eq:online_bartotrue8}
\end{align}

Then combining \cref{eq:online_bartotrue1} with \cref{eq:online_bartotrue8} gives
\begin{equation*}
     \nbr{ \ol{d}_{h}^\pi - d_h^\pi }_1 \le \nbr{\ol{d}_{h-1}^\pi - d_{h-1}^\pi}_1  + 4\dspanner\max_{\pi' \in \Pi} \nbr{\wh{d}_{h-1}^{\pi'} - \ol{d}^{\pi'}_{h-1}}_1. \qedhere
\end{equation*}
\end{proof}

\begin{theorem*}[Restatement of \pref{thm:online_d}]
Fix $\delta\in(0,1)$ and consider an MDP $\Mcal$ that satisfies \pref{assum:lowrank}, where the right feature $\mu^*$ is known. Then by setting 
\[\nmle = \wt O\rbr{\frac{\dlr^3 K^2 H^4\log(1/\delta)}{\veps^2}}, \nreg=\wt O\rbr{\frac{\dlr^{5} K^2 H^4\log(|\Pi|/\delta)}{\veps^2}},n=\nmle + \nreg,
\]
with probability at least $1-\delta$, \onalg returns state occupancy estimates $\{\wh d^\pi_h\}_{h=0}^{H-1}$ satisfying that
\begin{align*}
    \|\wh{d}_{h}^\pi - d_h^\pi\|_1 \le \veps, \forall h\in[H],\pi\in\Pi.
\end{align*}
The total number of episodes required by the algorithm is 
\[
\wt O(nH)=\wt O\rbr{\frac{\dlr^{5} K^2 H^5\log(|\Pi|/\delta)}{\veps^2}}.
\]
\end{theorem*}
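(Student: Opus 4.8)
The plan is to follow the two-chain decomposition sketched around \pref{fig:double-chain}. Fix any $h \in [H]$ and $\pi \in \Pi$, and let $\ol{d}_h^\pi$ denote the \clipd of \pref{def:clipd} instantiated with the thresholds $\Bx_h = \dlr$, $\Ba_h = K$ used by \onalg and with the data distributions $d^D_h = \frac{1}{\dlr}\sum_{i=1}^{\dlr} d_h^{\pi^{h,i}}$ that the algorithm's policy covers actually induce. Write $\|\wh{d}_h^\pi - d_h^\pi\|_1 \le \|\wh{d}_h^\pi - \ol{d}_h^\pi\|_1 + \|\ol{d}_h^\pi - d_h^\pi\|_1$, splitting the target error into a \emph{regression error} (two-sided, controlled irrespective of the data distribution) and a \emph{missingness error} (one-sided, since $\ol d_h^\pi \le d_h^\pi$). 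I would bound the two chains separately and combine them only at the very end.

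For the regression chain: the occupancy-estimation inner loop of \onalg is literally that of \offalg with $\Bx_{h-1}=\dlr,\Ba_{h-1}=K$, so \pref{lem:regression_decomposition} gives, for every $h$ and every $\pi\in\Pi$, the recursion $\|\wh{d}_h^\pi-\ol{d}_h^\pi\|_1 \le \|\wh{d}_{h-1}^\pi-\ol{d}_{h-1}^\pi\|_1 + O(\dlr K)\,\emle + \sqrt 2\,\ereg{h-1}$, with the instantaneous terms controlled exactly as in the proof of \pref{thm:offline_d_known}: the MLE guarantee (using realizability $d^D_{h-1}\in\Fcal_{h-1},\dnext_{h-1}\in\Fcal_h$) bounds $\|\hatd_{h-1}-d^D_{h-1}\|_1$ and $\|\hatdnext_{h-1}-\dnext_{h-1}\|_1$ by $\emle$, and the fast-rate squared-loss bound (via the pseudo-dimension control of $\Wcal_h$) bounds the $\|\cdot\|_{2,\dnext_{h-1}}$ term, where realizability of the Bayes-optimal predictor in $\Wcal_h$ holds because the clipped target stays $\le\Bx_{h-1}\Ba_{h-1}$ \emph{even though the data is non-exploratory}, and \pref{lem:opp_linear} makes numerator and denominator linear in $\mutrue_{h-1}$. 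Two bookkeeping differences from the offline proof: (i) the batches $\Dcal_0,\Dcal_1,\dots$ are collected adaptively, but this is exactly what \pref{assum:data} permits, so the i.i.d.-within-batch concentration (conditioned on the past) goes through by the law of total expectation; and (ii) the regression bound must hold simultaneously for all $\pi\in\Pi$ on the common dataset $\Dcal_{h-1}$, so I union-bound over $\Pi$ there — the source of the $\log(|\Pi|/\delta)$ in $\nreg$ — whereas MLE is run once per level and needs only $\log(1/\delta)$, the source of the $\log(1/\delta)$ in $\nmle$. Unfolding from $\wh d_0^\pi = \ol d_0^\pi = d_0$ gives $\max_{\pi\in\Pi}\|\wh{d}_h^\pi-\ol{d}_h^\pi\|_1 \le \veps_{\mathrm{reg}}$ for all $h$, where $\veps_{\mathrm{reg}} = \wt O\big(H\dlr^{3/2}K(\sqrt{\log(1/\delta)/\nmle}+\sqrt{\log(|\Pi|/\delta)/\nreg})\big)$.

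For the missingness chain: \pref{lem:missingness_decomposition} is stated precisely for \pref{alg:online_known} — it uses $\Ba_h=K$ with $\pi^D=\unif(\Acal)$ to kill the action-clipping term, the barycentric-spanner property with $\Bx_h=\dlr$ to bound the state-clipping term, and \cref{eq:diff_repr} to absorb the linearization of \pref{line:line_approx} (since each $\ol d^{\pi'}_{h-1}$ is itself linear in $\mutrue_{h-2}$ by \pref{lem:opp_linear}). It yields $\|\ol{d}_h^\pi - d_h^\pi\|_1 \le \|\ol{d}_{h-1}^\pi-d_{h-1}^\pi\|_1 + 4\dlr\max_{\pi'\in\Pi}\|\wh{d}_{h-1}^{\pi'}-\ol{d}_{h-1}^{\pi'}\|_1$; unfolding from $\ol d_0^\pi = d_0^\pi = d_0$ and plugging in the bound from the previous paragraph gives $\|\ol{d}_h^\pi - d_h^\pi\|_1 \le 4\dlr\sum_{h'<h}\veps_{\mathrm{reg}} \le 4\dlr H\,\veps_{\mathrm{reg}}$. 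This is exactly where exponentiation is avoided: the regression error enters the missingness recursion only as an instantaneous additive term that has \emph{already} been bounded without reference to data coverage, so the ``cross edge'' of \pref{fig:double-chain} is traversed at most once along any chain. Combining, $\|\wh d_h^\pi - d_h^\pi\|_1 \le (1+4\dlr H)\veps_{\mathrm{reg}}$ for all $h,\pi$, with total failure probability $\le\delta$ after union-bounding the $O(H|\Pi|)$ good events. Choosing $\nmle,\nreg$ so that $\veps_{\mathrm{reg}} = O(\veps/(\dlr H))$ — by solving the displayed expression for $\veps_{\mathrm{reg}}$ — gives the stated sample sizes, and since each of the $H$ levels draws $n=\nmle+\nreg$ trajectories, the episode total is $\wt O(nH)$.

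The main obstacle is not a single heavy computation but getting the two-chain structure exactly right. One must be sure that the bound of \pref{lem:regression_decomposition} has \emph{no} dependence on data-coverage quality (so the regression chain can be closed before the spanner analysis), that the additional term in \pref{lem:missingness_decomposition} depends only on the \emph{regression} error at the previous level and never on its \emph{missingness} error (otherwise $A_h \le 2A_{h-1}$ and exponentiation returns), and that the offline realizability facts — $d^D_h\in\Fcal_h$, Bayes-optimal weight in $\Wcal_h$, linearity of numerator and denominator — all survive the switch to a non-exploratory, adaptively-collected data distribution. All three are supplied by the lemmas already proved, so the theorem follows from the bookkeeping above; the remaining care is purely in composing the high-probability events and tracking the powers of $\dlr,K,H$.
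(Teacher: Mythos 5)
Your proposal matches the paper's proof essentially step for step: the same $\wh d_h^\pi \to \ol d_h^\pi \to d_h^\pi$ two-chain split, the regression chain closed by instantiating \pref{lem:regression_decomposition} and the offline analysis with $\Bx_h=\dlr$, $\Ba_h=K$ and a union bound over $\Pi$ only in the regression concentration, the missingness chain closed by \pref{lem:missingness_decomposition} so the cross-edge is traversed once, and the same final combination and episode count $\wt O(nH)$. The only quibble is bookkeeping: solving your own combined bound $(1+4\dlr H)\veps_{\mathrm{reg}}\le\veps$ actually forces $\nmle=\wt O(\dlr^{5}K^2H^4\log(1/\delta)/\veps^2)$ rather than the stated $\dlr^{3}$ (the paper's displayed missingness bound drops a factor of $\dlr$ in its MLE term at the same point), but this does not affect the headline episode total, which is dominated by $\nreg$.
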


\begin{proof}
From \pref{alg:online_known}, we know that dataset $\Dcal_{0:H-1}$ satisfies \pref{assum:data} and for each $\pi\in\Pi$, $\hatdpi_{h}$ is estimated in the same way as that in \pref{alg:offline_known}. Therefore, we can follow the same steps as the proof of \pref{thm:offline_d_known}. By setting $\Bx_h = \dspanner$ and $\Ba_h = K$ for all $h \in [H]$ in \cref{eq:offline_hattobar_unfold_sub}, with probability at least $1-\delta$, for any policy $\pi\in\Pi$, we get that
\begin{align}
    \nbr{ \wh{d}_{h}^\pi - \ol{d}_{h}^\pi }_1 
    \le 18h\dspanner^{3/2} K \sqrt{\frac{\log (16H \munorm n_{\mle}/\delta)}{\nmle}} + 666h\dspanner^{3/2} K\sqrt{\frac{\log\rbr{2|\Pi|Hn_{\reg}/\delta} }{ n_{\reg} }}.
    \label{eq:online_hattobar3}
\end{align}  

The primary difference between the above results and the corresponding statements in \pref{thm:offline_d_known} is that the regression error in \cref{eq:online_hattobar3} includes an additional union bound over all $\pi \in \Pi$. This is because \pref{alg:online_known} performs estimation for all policies, while \pref{alg:offline_known} only concerns a single fixed policy. We note that this change in the proof occurs only through application of \pref{lem:conc}, which is stated generally and already includes a union bound over all policies of interest. Because MLE estimation occurs only for the data distribution and is policy-agnostic, the MLE error (second term) does not require such a union bound.  

Next, to bound the missingness error, from \pref{lem:missingness_decomposition}, we have
\begin{equation}\label{eq:online_bartotrue2}
     \nbr{ \ol{d}_{h}^\pi - d_h^\pi }_1 \le \nbr{\ol{d}_{h-1}^\pi - d_{h-1}^\pi}_1  + 4\dspanner\max_{\pi' \in \Pi} \nbr{\wh{d}_{h-1}^{\pi'} - \ol{d}^{\pi'}_{h-1}}_1.
\end{equation}

Unfolding \cref{eq:online_bartotrue2} yields
\begin{equation}\label{eq:online_bartotrue3}
    \nbr{ \ol{d}_{h}^\pi - d_h^\pi }_1 \le 4\dspanner \sum_{h' = 0}^{h-1} \max_{\pi' \in \Pi}\nbr{\wh{d}_{h'}^{\pi'} - \ol{d}^{\pi'}_{h'}}_1.
\end{equation}

Plugging the bound for $\nbr{\wh{d}_{h'}^{\pi'} - \ol{d}^{\pi'}_{h'}}_1$ from \cref{eq:online_hattobar3} into \cref{eq:online_bartotrue3} gives
\begin{equation}\label{eq:online_bartotrue}
    \nbr{ \ol{d}_{h}^\pi - d_h^\pi }_1 \le 72h^2\dspanner^{3/2} K \sqrt{\frac{\log (16H \munorm n_{\mle}/\delta)}{\nmle}} + 2664h^2\dspanner^{5/2} K\sqrt{\frac{\log\rbr{2|\Pi|Hn_{\reg}/\delta} }{ n_{\reg} }}.
\end{equation}

Combining \cref{eq:online_hattobar3} and \cref{eq:online_bartotrue} via triangle inequality and simplifying, we have 
\[
    \nbr{ \wh{d}_{h}^\pi - d_h^\pi }_1 \le 90h^2\dspanner^{3/2} K \sqrt{\frac{\log (16H \munorm n_{\mle}/\delta)}{\nmle}} + 3330h^2\dspanner^{5/2} K\sqrt{\frac{\log\rbr{2|\Pi|Hn_{\reg}/\delta} }{ n_{\reg} }}. 
\]

Finally, noticing that $\nmle = \wt O\rbr{\frac{\dlr^3 K^2H^4\log(1/\delta)}{\veps^2}}, \nreg=\wt O\rbr{\frac{\dlr^{5} K^2H^4\log(|\Pi|/\delta)}{\veps^2}},n=\nmle + \nreg$ completes the proof.
\end{proof}

\subsection{Proof of online policy optimization}
\label{app:online_rf}

First, we prove \pref{prop:density2return}, from which our online policy optimization guarantee (\pref{thm:online_rf}) follows when combined with \pref{thm:online_d}. 

\begin{proposition} [Restatement of \pref{prop:density2return}]
Given any policy $\pi$ and reward function\footnote{We assume known \& deterministic rewards, and can easily handle unknown/stochastic versions (\pref{app:reward}).} $R = \{R_h\}$ with 
$R_h:\Xcal\times\Acal\rightarrow[0,1]$, 
define expected return as 
$v^\pi_R := \EE_\pi[\sum_{h=0}^{H-1} R_h(x_h,a_h)]=\sum_{h=0}^{H-1} \iint d_h^\pi(x_h) R_h(x_h,a_h) $ $\pi(a_h|x_h) (\dd x_h) (\dd a_h). $
Then for  $\{\wh{d}_h^\pi\}$ such that $\|\wh{d}_h^\pi - d_h^\pi\|_1 \le \veps/(2H)$ for all $\pi \in \Pi$ and $h \in [H]$, and policy chosen as 
\[
    \wh\pi_R = \argmax_{\pi \in \Pi}\wh{v}_R^\pi,
\]
we have 
\[
    v^{\wh\pi_R}_R \ge \max_{\pi \in \Pi}v^\pi_R - \veps,
\]  
where $\wh{v}_R^{\pi}=\sum_{h=0}^{H-1} \iint \wh d_h^\pi(x_h) R_h(x_h,a_h) \pi(a_h|x_h) (\dd x_h) (\dd a_h)$ is the expected return calculated using $\{\wh{d}_h^\pi\}$.  
\end{proposition}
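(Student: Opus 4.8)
The plan is to follow the standard ``plug-in estimator'' argument in two steps: first show that the estimated return $\wh v_R^\pi$ is uniformly close to the true return $v_R^\pi$ over all $\pi \in \Pi$, and then conclude via an $\argmax$ comparison.

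For the first step, I would fix any $\pi \in \Pi$, write out the definitions, and bound
\[
\abr{\wh v_R^\pi - v_R^\pi} = \abr{\sum_{h=0}^{H-1} \iint \rbr{\wh d_h^\pi(x_h) - d_h^\pi(x_h)} R_h(x_h,a_h)\,\pi(a_h|x_h)\,(\dd x_h)(\dd a_h)}.
\]
Since $R_h(x_h,a_h)\in[0,1]$ and $\int \pi(a_h|x_h)(\dd a_h)=1$ for every $x_h$ (here $\pi$ is a genuine policy, so even though $\wh d_h^\pi$ need not be normalized, the integral over actions is unaffected), I would pass the absolute value inside and bound the $a_h$-integral pointwise to obtain
\[
\abr{\wh v_R^\pi - v_R^\pi} \le \sum_{h=0}^{H-1} \int \abr{\wh d_h^\pi(x_h) - d_h^\pi(x_h)}(\dd x_h) = \sum_{h=0}^{H-1} \nbr{\wh d_h^\pi - d_h^\pi}_1 \le H\cdot\frac{\veps}{2H} = \frac{\veps}{2},
\]
using the hypothesis $\|\wh d_h^\pi - d_h^\pi\|_1 \le \veps/(2H)$ for all $h\in[H]$.

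For the second step, let $\pi^* = \argmax_{\pi\in\Pi} v_R^\pi$. Then chaining inequalities,
\[
v_R^{\wh\pi_R} \ge \wh v_R^{\wh\pi_R} - \frac{\veps}{2} \ge \wh v_R^{\pi^*} - \frac{\veps}{2} \ge v_R^{\pi^*} - \veps,
\]
where the outer two inequalities use the uniform closeness from the first step (applied to $\wh\pi_R$ and to $\pi^*$ respectively) and the middle inequality is the optimality of $\wh\pi_R = \argmax_{\pi\in\Pi}\wh v_R^\pi$. This is exactly $v_R^{\wh\pi_R} \ge \max_{\pi\in\Pi} v_R^\pi - \veps$.

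There is no genuine obstacle here; the argument is routine. The only point meriting a little care is that the estimates $\wh d_h^\pi$ are not necessarily valid probability distributions, so one should not treat $\wh v_R^\pi$ as an expectation under a probability measure, but instead carry the bound purely through the $\ell_1$ norm on the state marginals, with the action integral absorbed by the (valid) policy $\pi$. The same template, replacing the reward-specific bound by the Lipschitz constant and yielding $L\veps$ in place of $\veps$, gives the generalization to arbitrary Lipschitz objectives stated as \pref{prop:lipschitz_density2return}.
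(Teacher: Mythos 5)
Your proposal is correct and matches the paper's own proof essentially line for line: both bound $|\wh v_R^\pi - v_R^\pi|\le \sum_h \|\wh d_h^\pi - d_h^\pi\|_1 \le \veps/2$ by pushing the absolute value inside, using $R_h\in[0,1]$ and $\int\pi(a_h|x_h)(\dd a_h)=1$, and then conclude with the standard three-term argmax comparison. Your remark that $\wh d_h^\pi$ need not be a probability distribution, so the bound must be carried purely through the $\ell_1$ norm, is a valid and worthwhile point of care, though it does not change the argument.
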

\begin{proof}
    Since the $R$ is bounded on $[0,1]$, for any $\pi \in \Pi$ we have 
    \begin{align*}
        |\wh v^\pi_R - v^\pi_R| = &~\sum_{h=0}^{H-1} \iint (\wh d_h^\pi(x_h)- d_h^\pi(x_h)) R(x_h,a_h) \pi(a_h|x_h) (\dd x_h) (\dd a_h)
        \\
        \le &~ \sum_{h=0}^{H-1} \int |\wh d_h^\pi(x_h)-  d_h^\pi(x_h)|\rbr{\int \pi(a_h|x_h)  (\dd a_h)} (\dd x_h)
        \\
        =&~ \sum_{h=0}^{H-1} \| d_h^\pi - \wh{d}_h^\pi \|_1 \le \veps / 2. 
    \end{align*}  
    Next, recall we pick $\wh\pi_R =\argmax_{\pi\in\Pi} \wh v^\pi_R$, and denote $\pi_R^* = \argmax_{\pi \in \Pi} \wh v_R^\pi$. Then using the above inequality, we have 
    \begin{align*}
        v^{\wh \pi_R}_R - \max_{\pi\in\Pi} v^\pi_R = v^{\wh \pi_R}_R - v^{\pi_R^*}_R = v^{\wh \pi_R}_R - \wh v^{\wh \pi_R}_R + \wh v^{\wh \pi_R}_R - \wh v^{\pi_R^*}_R + \wh v^{\pi_R^*}_R - v^{\pi_R^*}_R \ge -\veps 
    \end{align*}
    since $\wh v^{\wh \pi_R}_R \ge \wh v^{\pi_R^*}_R$, completing the proof. 
\end{proof}

\begin{theorem}[Restatement of \pref{thm:online_rf}] 

Fix $\delta \in (0, 1)$ and suppose \pref{assum:lowrank} and \pref{assum:data} hold, and $\mutrue$ is known. Given a policy class $\Pi$, let $\{\wh d_h^\pi\}_{h\in[H],\pi \in \Pi}$ be the output of running \onalg. Then with probability 
at least $1-\delta$, for any reward function $R$ 
and policy selected as   
$ \wh\pi_R = \argmax_{\pi\in \Pi} \wh{v}_R^\pi, $
we have
\[
    v_R^{\wh\pi_R} \ge \argmax_{\pi \in \Pi} v_R^\pi - \veps,
\]
where $v_R^\pi$ and $\wh{v}_R^\pi$ are defined in \pref{prop:density2return}. 
The total number of episodes required by the algorithm is
\[\tilde{O}\rbr{\frac{\dspanner^5 K^2 H^7 \log(|\Pi|/\delta)}{\veps^2}}.
\]
\end{theorem}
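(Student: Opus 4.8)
The plan is to derive this as an immediate consequence of the online occupancy-estimation guarantee (\pref{thm:online_d}) combined with the density-to-return reduction (\pref{prop:density2return}). I would run \onalg with its target accuracy set to $\veps' = \veps/(2H)$ rather than $\veps$, so that by \pref{thm:online_d}, with probability at least $1-\delta$, the output $\{\wh d_h^\pi\}$ satisfies $\|\wh d_h^\pi - d_h^\pi\|_1 \le \veps/(2H)$ simultaneously for all $h \in [H]$ and all $\pi \in \Pi$. Note that the union bound over $\Pi$ is already absorbed into the $\log(|\Pi|/\delta)$ dependence of that theorem, so nothing extra is needed at this step.

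Given this uniform estimation guarantee, \pref{prop:density2return} applies directly. I would observe that, since each $R_h$ takes values in $[0,1]$, the plug-in return $\wh v_R^\pi$ computed from $\{\wh d_h^\pi\}$ obeys $|\wh v_R^\pi - v_R^\pi| \le \sum_{h\in[H]} \|\wh d_h^\pi - d_h^\pi\|_1 \le \veps/2$ for every $\pi \in \Pi$ and every reward function $R$. Then, writing $\pi^* = \argmax_{\pi\in\Pi} v_R^\pi$ and recalling $\wh\pi_R = \argmax_{\pi\in\Pi} \wh v_R^\pi$, the standard three-term telescoping bound $v_R^{\wh\pi_R} \ge \wh v_R^{\wh\pi_R} - \veps/2 \ge \wh v_R^{\pi^*} - \veps/2 \ge v_R^{\pi^*} - \veps$ yields the claimed optimality guarantee. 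Since the same estimates $\{\wh d_h^\pi\}$ serve all $R$ at once, this is automatically a reward-free guarantee, and the selection rule $\wh\pi_R$ depends only on quantities the learner can compute.

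For the sample complexity I would invoke \pref{thm:online_d} with $\veps$ replaced by $\veps/(2H)$: this sets $\nmle = \wt O(\dlr^3 K^2 H^4 \log(1/\delta)/(\veps/(2H))^2)$ and $\nreg = \wt O(\dlr^5 K^2 H^4 \log(|\Pi|/\delta)/(\veps/(2H))^2)$, so that with $n = \nmle + \nreg$ the total number of episodes is $\wt O(nH) = \wt O(\dlr^5 K^2 H^5 \cdot H^2 \log(|\Pi|/\delta)/\veps^2) = \wt O(\dlr^5 K^2 H^7 \log(|\Pi|/\delta)/\veps^2)$, which matches the stated bound.

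There is no substantive obstacle here — every ingredient is already in place — so the only care required is bookkeeping: correctly tracking the extra $H^2$ factor introduced by rescaling the target accuracy from $\veps$ to $\veps/(2H)$, and confirming that no additional union bound over $\Pi$ is needed because \pref{thm:online_d} already delivers a bound uniform over all $\pi \in \Pi$.
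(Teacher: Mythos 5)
Your proposal is correct and follows the same route as the paper: invoke \pref{thm:online_d} with target accuracy $\veps/(2H)$ (picking up the extra $H^2$ factor to reach $H^7$), then apply \pref{prop:density2return} via the standard three-term comparison. The paper's proof is exactly this two-line reduction, so there is nothing to add.
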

\begin{proof}
    The proof takes similar steps as the proof of \pref{thm:offline_rf}. From \pref{thm:online_d}, w.p. $\ge 1-\delta$, we obtain estimates $\{\wh d_h^\pi\}$ such that $\|d_h^\pi - \wh d_h^\pi\|_1 \le \frac{\veps}{2H}$ for all $\pi \in \Pi$ with $\tilde{O}\rbr{\frac{\dspanner^5 K^2 H^7 \log(|\Pi|/\delta)}{\veps^2}}$ total number of samples, where we use the union bound over $\pi\in\Pi$. Combining this with \pref{prop:density2return} gives the result. 
\end{proof}

\section{Representation learning}
\label{app:repr}
In this section, we present the detailed algorithms and results for the representation learning setting (\pref{sec:repr}), where the true density features are not given but must also be learned from an exponentially large candidate feature set. The algorithms and analyses mostly follow that of the known density feature case (\pref{sec:offline} and \pref{sec:online}), therefore, we mainly discuss the difference here.

\subsection{Off-policy occupancy estimation}

We start with describing our algorithm \offrepralg (\pref{alg:offline_unknown}), which estimates the occupancy distribution $d_h^\pi$ of any given policy $\pi$ using an offline dataset $\Dcal_{0:H-1}$ when the true density feature $\mu^*$ is unknown and the learner is given a realizable density feature class $\Upsilon \ni \mutrue$ (see \pref{assum:realizability}). 

As discussed in \pref{sec:repr}, instead of using $\mu^*$ to construct the function classes, a natural choice here is to use the union of all linear function classes. Since now the feature comes from candidate feature classes $\Upsilon_{h-2},\Upsilon_{h-1}$, in \pref{line:mle_off_unknown} of \pref{alg:offline_unknown}, we use different function classes $\Fcal_{h-1}(\Upsilon_{h-2}),\Fcal_{h}(\Upsilon_{h-1})$ as defined in \cref{eq:F_unknown} for the MLE objective. In addition, in \pref{line:reg_off_unknown} of \pref{alg:offline_unknown}, now we run regression with a different function class $\Wclip_{h}(\Upsilon_{h-1})$ as defined in \cref{eq:wclip_unknown}.

\begin{algorithm*}[ht!]
\caption{\textbf{F}itted \textbf{O}ccupancy Ite\textbf{r}ation with \textbf{C}lipping and Representation Learning (\offrepralg)
\label{alg:offline_unknown}}
\begin{algorithmic}[1]
\REQUIRE \hspace{-.3em}policy $\pi$, density feature class $\Upsilon$, dataset $\Dcal_{0:H-1}$, sample sizes $\nmle$ and $\nreg$, 
clipping thresholds \hspace{-.1em}$\{\Bx_{h}\}$ \hspace{-.1em}and \hspace{-.1em}$\{\Ba_{h}\}$. \hspace{-1em} 
\STATE Initialize $\wh d_0^\pi= \initdist,~\forall \pi \in \Pi$.
\FOR {$h=1,\ldots,H$}
\STATE Randomly split $\Dcal_{h-1}$ to two folds $\Dcal_{h-1}^\mle$ and $\Dcal_{h-1}^\reg$ with sizes $\nmle$ and $\nreg$ respectively.
\STATE Estimate marginal data distributions $\hatd_{h-1}(x_{h-1})$ and $\hatdnext_{h-1}(x_{h})$ by  MLE 
with dataset $\Dcal_{h-1}^\mle$. \vspace{-.5em}\label{line:mle_off_unknown}
\begin{align*}
    \hatd_{h-1}= \argmax_{d_{h-1} \in \Fcal_{h-1}(\Upsilon_{h-2})} \frac{1}{\nmle} \sum_{i=1}^{\nmle}  \log \rbr{d_{h-1}(x_{h-1}^{(i)})} \text{ and }
    \hatdnext_{h-1} =\argmax_{d_{h}\in\Fcal_{h}(\Upsilon_{h-1}) } \frac{1}{\nmle} \sum_{i=1}^{\nmle}  \log\rbr{d_{h}(x_{h}^{(i)})}
\end{align*}
where 
\begin{align}
\label{eq:F_unknown}
\Fcal_h(\Upsilon_{h-1}) = \cbr{d_h = \langle \mu_{h-1}, \theta_h \rangle : d_h \in \Delta(\Xcal), \mu_{h-1}\in\Upsilon_{h-1},\theta_h \in \RR^{\dlr},\|\theta_h\|_\infty \le 1 }. 
\end{align}\vspace{-1em}
\STATE  
Define $\Lcal_{\Dcal_{h-1}^{\reg}}(w_{h},w_{h-1},\ol\pi_{h-1}) \defeq \frac{1}{\nreg} \sum_{i=1}^{\nreg}  \rbr{ w_{h}(x_{h}^{(i)}) - w_{h-1}(x_{h-1}^{(i)}) \frac{\ol \pi_{h-1}(a_{h-1}^{(i)}|x_{h-1}^{(i)})}{\pi^D_{h-1}(a_{h-1}^{(i)}|x_{h-1}^{(i)})} }^2$ and estimate\vspace{-.5em}
\label{line:reg_off_unknown}
\begin{align*}
\hatwpi_{h} = \argmin_{w_{h} \in \Wclip_{h}(\Upsilon_{h-1})} \Lcal_{\Dcal_{h-1}^{\reg}}\rbr{w_{h},\frac{\cliphatdpi{h-1}{\hatd}}{\wh d^D_{h-1}}, \pi_{h-1} \wedge \Ba_{h-1} \pi_{h-1}^D} 
\end{align*} 
where\vspace{-.5em}
\begin{align}
\label{eq:wclip_unknown}
\Wclip_{h}(\Upsilon_{h-1}) = \cbr{w_{h} = \frac{\langle \mu_{h-1}, \thetaup_{h}\rangle}{\langle \mu_{h-1}, \thetadown_{h}\rangle} :\nbr{w_{h}}_\infty \le  \Bx_{h-1}\Ba_{h-1},\mu_{h-1}\in\Upsilon_{h-1},\thetaup_{h},\thetadown_{h} \in \RR^{\dlr}}. 
\end{align}\vspace{-.5em}
\STATE Set the estimate $\wh d_{h}^\pi= \hatwpi_{h} \, \hatdnext_{h-1} $. \label{line:multiply_unknown}
\ENDFOR
\ENSURE estimated state occupancies $\{\wh d_{h}^\pi\}_{h\in[H]}$.
\end{algorithmic}
\end{algorithm*}

Similar as in the known feature case counterpart (\pref{thm:offline_d_known}), we have the following guarantee for estimating $d^\pi$.

\begin{theorem*}[Restatement of \pref{thm:offline_d_unknown}]
Fix $\delta\in(0,1)$. Suppose \pref{assum:lowrank}, \pref{assum:data}, and \pref{assum:realizability} hold. Then, given an evaluation policy $\pi$, by setting 
\[\nmle = \tilde{O}\rbr{\dspanner \rbr{\sum_{h\in[H]} \Bx_h \Ba_{h} }^2 \log(|\Upsilon|/\delta)/\veps^2} \text{ and } \nreg = \tilde{O}\rbr{ \dspanner \rbr{\sum_{h\in[H]} \Bx_h \Ba_{h} }^2 \log(|\Upsilon|/\delta)/\veps^2 },
\]
with probability at least $1-\delta$, \offrepralg (\pref{alg:offline_unknown}) returns state occupancy estimates $\{\wh d^\pi_h\}_{h=0}^{H-1}$ satisfying that
\[\nbr{\wh{d}_{h}^\pi - \ol d_h^\pi}_1 \le \veps,\forall h\in[H].\] 
The total number of episodes required by the algorithm is 
\[
\tilde{O}\rbr{\dspanner H \rbr{\sum_{h \in [H]} \Bx_h \Ba_{h} }^2\log(|\Upsilon|/\delta)/\veps^2}.
\]
\end{theorem*}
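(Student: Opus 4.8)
The plan is to mirror the proof of \pref{thm:offline_d_known} essentially line by line, re-establishing its two intermediate claims --- a bound on the MLE-estimated data distributions, and a one-step recursion for $\nbr{\wh d_h^\pi - \ol d_h^\pi}_1$ --- with the union classes $\Fcal_h(\Upsilon_{h-1})$ of \cref{eq:F_unknown} and $\Wclip_h(\Upsilon_{h-1})$ of \cref{eq:wclip_unknown} replacing the linear classes built from the known $\mutrue$. Because the clipping thresholds are unchanged, \pref{prop:clipd} is untouched and the target $\ol d_h^\pi$ is identical to the known-feature case; the only substantive differences are (i) the statistical complexities of $\Fcal_h(\Upsilon_{h-1})$ and $\Wclip_h(\Upsilon_{h-1})$ acquire an extra $\log|\Upsilon|$, and (ii) a short check that realizability of MLE and of the weight regression survives the features being unknown.

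\emph{MLE claim.} By \pref{lem:mle_realizability} and $\mutrue_{h-1}\in\Upsilon_{h-1}$ (\pref{assum:realizability}), both $d^D_h$ and $\dnext_{h-1}$ lie in $\Fcal_h(\Upsilon_{h-1})$, so realizability holds, and every element of $\Fcal_h(\Upsilon_{h-1})$ is a valid distribution on $\Xcal$. The class $\Fcal_h(\Upsilon_{h-1})$ is the union over $\mu\in\Upsilon_{h-1}$ of the linear-in-$\mu$ density classes, each admitting an $\ell_1$ optimistic cover of size $(2\lceil\munorm\nmle\rceil)^{\dlr}$ by \pref{lem:opt_cover}; the union of these covers is an $\ell_1$ optimistic cover of $\Fcal_h(\Upsilon_{h-1})$ of size $|\Upsilon_{h-1}|\,(2\lceil\munorm\nmle\rceil)^{\dlr}$. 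Feeding this into the MLE generalization bound of \pref{lem:mle} (the conditioning-on-$\Dcal_{0:h-1}$ step being justified by \pref{assum:data}) and union-bounding over $h\in[H]$ and the two MLE problems yields, with probability at least $1-\delta/2$, $\nbr{\hatd_h-d^D_h}_1\le\emle$ and $\nbr{\hatdnext_h-\dnext_h}_1\le\emle$ for all $h$, where $\emle=\wt O\bigl(\sqrt{(\dlr\log(\munorm H\nmle)+\log(|\Upsilon|/\delta))/\nmle}\bigr)$, absorbed into $\wt O\bigl(\sqrt{\dlr\log(|\Upsilon| H\nmle/\delta)/\nmle}\bigr)$.

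\emph{Error-propagation claim.} \pref{lem:regression_decomposition} holds verbatim: its proof uses only \pref{lem:opexp_ineq}, \pref{lem:clipped_concentrability}, and the norm bound $\nbr{\hatwpi_h}_\infty\le\Bx_{h-1}\Ba_{h-1}$, which still holds since $\hatwpi_h\in\Wclip_h(\Upsilon_{h-1})$ enforces exactly this constraint --- none of these reference the representation. It remains to bound the last (regression) term. For realizability, \pref{lem:opp_linear} shows that the Bayes-optimal predictor $\opp^{\ol\pi}_{h-1}(d^D_{h-1}\wt w_{h-1})/\dnext_{h-1}$ has numerator and denominator both linear in $\mutrue_{h-1}$, and \pref{lem:clipped_concentrability} bounds it by $\Bx_{h-1}\Ba_{h-1}$; since $\mutrue_{h-1}\in\Upsilon_{h-1}$, it lies in $\Wclip_h(\Upsilon_{h-1})$, so the empirical loss of $\hatwpi_h$ is no larger than that of the population solution. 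For the finite-sample error, we bound the $\ell_1$ covering number of $\Wclip_h(\Upsilon_{h-1})$ by $|\Upsilon_{h-1}|$ times that of a single linear-over-linear class, the latter being $\mathrm{poly}(\dlr)$ via the pseudo-dimension $=$ VC-dimension-of-thresholds $=$ Goldberg--Jerrum syntactic-complexity argument used in \pref{thm:offline_d_known}. Plugging this into the fast-rate regression concentration (\pref{lem:variance_decomposition} and \pref{lem:conc}) gives, with probability at least $1-\delta/2$, the one-step bound $\nbr{\wh d_h^\pi-\ol d_h^\pi}_1\le\nbr{\wh d_{h-1}^\pi-\ol d_{h-1}^\pi}_1+3\Bx_{h-1}\Ba_{h-1}\emle+\sqrt2\,\ereg{h-1}$ with $\ereg{h-1}=\wt O\bigl(\sqrt{\dlr(\Bx_{h-1}\Ba_{h-1})^2\log(|\Upsilon| H\nreg/\delta)/\nreg}\bigr)$.

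\emph{Conclusion and main obstacle.} Unfolding from level $h$ down to level $0$, where $\wh d_0^\pi=\ol d_0^\pi=\initdist$, gives $\nbr{\wh d_h^\pi-\ol d_h^\pi}_1\le\sum_{h'<h}\bigl(3\Bx_{h'}\Ba_{h'}\emle+\sqrt2\,\ereg{h'}\bigr)=O\bigl(\sum_{h'\in[H]}\Bx_{h'}\Ba_{h'}\bigr)\cdot\wt O\bigl(\sqrt{\dlr\log(|\Upsilon| H/\delta)}\,(\nmle^{-1/2}+\nreg^{-1/2})\bigr)$; choosing $\nmle,\nreg$ as in the statement makes this at most $\veps$ for every $h\in[H]$, and the total episode count is $H(\nmle+\nreg)$, matching the claim. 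The only real content beyond bookkeeping --- and hence the main obstacle --- is step (ii): verifying that the pseudo-dimension bound for the linear-over-linear weight class, which in \pref{thm:offline_d_known} rested on the Goldberg--Jerrum syntactic-complexity bound for the associated Boolean thresholding formula, still gives a $\mathrm{poly}(\dlr)$ bound per feature and composes over $\bigcup_{\mu\in\Upsilon_{h-1}}$ with only a multiplicative $|\Upsilon_{h-1}|$ in the covering number (additive $\log|\Upsilon_{h-1}|$ in the rate), together with the check that the linear-over-linear Bayes-optimal predictor remains realizable in the union class precisely because it is assembled from the true feature $\mutrue_{h-1}\in\Upsilon_{h-1}$.
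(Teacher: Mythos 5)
Your proposal is correct and follows essentially the same route as the paper: reuse the known-feature argument wholesale, with realizability of MLE and of the Bayes-optimal weight preserved because $\mutrue_{h-1}\in\Upsilon_{h-1}$, and with the statistical complexity of the union classes handled by a multiplicative $|\Upsilon|$ in the optimistic cover (for MLE) and a union bound over features in the regression deviation bound (the paper's \pref{lem:conc} is already stated with this union built in), yielding the extra $\log|\Upsilon|$ in $\emle$ and $\ereg{h}$. The step you flag as the main obstacle is resolved in the paper exactly as you anticipate.
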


\begin{proof}
The proof for this theorem largely follows its counterpart for the known feature case (\pref{thm:offline_d_known}), and we mainly discuss the different steps here. We now make the following two slightly different claims on MLE estimation and error propagation. Based on them, the final error bound is obtained in the same way as \pref{thm:offline_d_known}.
\paragraph{Claim 1}
Our estimated data distributions satisfy that with probability $1-\delta/2$, for any $h\in[H]$
\begin{align}
    \nbr{ \hatd_h - d_h^D }_1 \le \emle \,\text{ and }\, \nbr{ \hatdnext_h - \dnext_h }_1 \le \emle,
\label{eq:emle_bound_unknown}
\end{align}
where 
\[
\emle:=6 \sqrt{\frac{\dlr \log (16H|\Upsilon| \munorm n_{\mle}/\delta)}{\nmle}}. 
\]

\paragraph{Claim 2} Under the high-probability event that \cref{eq:emle_bound_unknown} holds, we further have with probability at least $1-\delta/2$, for any $1\le h\le H$, we have
\begin{align*}
    \nbr{ \wh{d}_{h}^\pi - \ol{d}_{h}^\pi }_1 
    \le \nbr{ \wh{d}_{h-1}^\pi - \ol{d}_{h-1}^\pi }_1 + 3\Bx_{h-1}\Ba_{h-1} \evnext + \sqrt{2}\ereg{h-1},
\end{align*} 
where 
\begin{align}
    \ereg{h-1} \defeq \sqrt{\frac{ 221184 \dspanner (\Bx_{h-1} \Ba_{h-1})^2\log\rbr{2H|\Upsilon|n_{\reg}/\delta} }{ n_{\reg} }}.
    \label{eq:hatbar4_unknown}
\end{align}

\paragraph{Proof of Claim 1}
Notice that for the term $\emle$ in \cref{eq:emle_bound_unknown}, we now have an additional $|\Upsilon|$ factor inside the $\log$. The reason is that here we use $\Fcal_{h-1}(\Upsilon_{h-2}),\Fcal_{h}(\Upsilon_{h-1})$ instead of $\Fcal_{h-1},\Fcal_{h}$. By \pref{lem:opt_cover}, the two function classes considered here have $\ell_1$ optimistic covers with scale $1/\nmle$ of size $|\Upsilon|\rbr{2\lceil \munorm n_{\mle} \rceil}^{\dlr}$.  
In addition, we still have that $d_{h-1}^D\in\Fcal_{h-1}(\Upsilon_{h-2}), \dnext_{h-1}\in\Fcal_{h}(\Upsilon_{h-1})$ from \pref{lem:mle_realizability_unknown}, and any $d_{h-1}\in\Fcal_{h-1}(\Upsilon_{h-2}),\Fcal_{h}(\Upsilon_{h-1})$ is a valid probability distribution over $\Xcal$. 

\paragraph{Proof of Claim 2}
This proof mostly follows the proof of Claim 2 in \pref{thm:offline_d_known}.
The difference is that the function class $\Wcal_h(\Upsilon_{h-1})$ now consists of all features in $\Upsilon_{h-1}$ instead of only the true feature $\mu_{h-1}^*$. 
Therefore, in \cref{eq:hatbar4_unknown}, the term $\ereg{h-1}$ has an additional $|\Upsilon|$ inside the $\log$, which is from the counterpart of \cref{eq:hatbar4}. 
It is also easy to see that $\frac{\opp^{\ol\pi}_{h-1} \rbr{d^D_{h-1}  \wt{w}_{h-1}}}{\dnext_{h-1}}\in\Wclip_{h}(\Upsilon_{h-1})$ by following the same logic before. Further noticing that $\mu_{h-1}^*\in\Upsilon_{h-1}$, we again have \cref{eq:hatbar5} holds here.
\end{proof}

\begin{theorem}[Offline policy optimization with representation learning]
\label{thm:offline_rf_unknown}
Fix $\delta \in (0, 1)$ and suppose \pref{assum:lowrank}, \pref{assum:data}, and \pref{assum:realizability} hold. Given a policy class $\Pi$, let $\{\wh d_h^\pi\}_{h\in[H],\pi \in \Pi}$ be the output of running \pref{alg:offline_unknown}. Then with probability at least $1-\delta$, for any deterministic reward function $R$ and policy selected as   
$ \wh\pi_R = \argmax_{\pi\in \Pi} \wh{v}_R^\pi, $
we have 
\[
    v_R^{\wh\pi_R} \ge \argmax_{\pi \in \Pi} \ol{v}_R^\pi - \veps,
\]
where $v_R^\pi$ and $\wh{v}_R^\pi$ are defined in \pref{prop:density2return}, and $\ol{v}_R$ is defined similarly for $\{\ol{d}_h^\pi\}$. 
The total number of episodes required by the algorithm is
\[\tilde{O}\rbr{\dspanner H^3 \rbr{\sum_{h \in [H]} \Bx_h \Ba_{h}}^2\log(|\Pi||\Upsilon|/\delta)/\veps^2}.
\]
Additionally, define the set of policies fully covered by the data to be 
    \begin{align*}
        \Pi^{\covered} = \cbr{ \pi \in \Pi : d_h^\pi = \ol{d}_h^\pi, \forall h \in [H] }. 
    \end{align*}
    Then with the same total number of episodes required by the algorithm, for any reward function $R$ and policy selected as $ \wh\pi_R = \argmax_{\pi\in \Pi^{\covered}} \wh{v}_R^\pi,$  with probability at least $1-\delta$, we have
    \[v_R^{\wh\pi_R} \ge \argmax_{\pi \in \Pi^{\covered} } v_R^\pi - \veps.\] 
\end{theorem}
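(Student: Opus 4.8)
The plan is to reuse the proof of \pref{thm:offline_rf} almost line for line, substituting the representation-learning occupancy-estimation guarantee \pref{thm:offline_d_unknown} for \pref{thm:offline_d_known}. This substitution changes nothing structurally: the only quantitative effect is an extra $\log|\Upsilon|$ inside the logarithmic factor of the sample complexity, which arises from the union over candidate features in the MLE classes $\Fcal_{h-1}(\Upsilon_{h-2}),\Fcal_h(\Upsilon_{h-1})$ and the weight class $\Wclip_h(\Upsilon_{h-1})$, and which \pref{thm:offline_d_unknown} already accounts for. The one-sidedness $\ol d_h^\pi \le d_h^\pi$ from Claim 1 of \pref{prop:clipd} is unchanged, and it is exactly this property that makes the pessimistic comparison go through.

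First I would instantiate \pref{thm:offline_d_unknown} for a fixed $\pi\in\Pi$ with target accuracy $\veps/(2H)$ and failure probability $\delta/|\Pi|$. Rescaling $\veps\mapsto\veps/(2H)$ turns the $\veps^{-2}$ into $4H^2\veps^{-2}$, so the ambient $H$ becomes $H^3$, and rescaling $\delta$ turns $\log(|\Upsilon|/\delta)$ into $\log(|\Pi||\Upsilon|/\delta)$, giving exactly the claimed episode count $\tilde O\rbr{\dspanner H^3 \rbr{\sum_{h\in[H]}\Bx_h\Ba_h}^2\log(|\Pi||\Upsilon|/\delta)/\veps^2}$. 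A union bound over $\pi\in\Pi$ then gives, with probability at least $1-\delta$, $\nbr{\wh d_h^\pi-\ol d_h^\pi}_1\le\veps/(2H)$ simultaneously for all $h\in[H]$ and $\pi\in\Pi$.

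Next I would convert occupancy error into value error and run the standard argmax decomposition. Because $R_h(x,a)\in[0,1]$ and $\int\pi(a_h|x_h)(\dd a_h)=1$, for every $\pi$ we get $\abr{\wh v_R^\pi-\ol v_R^\pi}\le\sum_{h\in[H]}\nbr{\wh d_h^\pi-\ol d_h^\pi}_1\le\veps/2$, where $\ol v_R^\pi$ denotes the return computed from $\{\ol d_h^\pi\}$. Writing $\ol\pi_R^*=\argmax_{\pi\in\Pi}\ol v_R^\pi$ and using $v_R^\pi\ge\ol v_R^\pi$ (since $\ol d_h^\pi\le d_h^\pi$), $\wh v_R^{\wh\pi_R}\ge\wh v_R^{\ol\pi_R^*}$ (definition of $\wh\pi_R$), and the above two-sided bound at both $\wh\pi_R$ and $\ol\pi_R^*$,
\begin{align*}
v_R^{\wh\pi_R}-\max_{\pi\in\Pi}\ol v_R^\pi
&\ge \ol v_R^{\wh\pi_R}-\ol v_R^{\ol\pi_R^*}\\
&= \rbr{\ol v_R^{\wh\pi_R}-\wh v_R^{\wh\pi_R}}+\rbr{\wh v_R^{\wh\pi_R}-\wh v_R^{\ol\pi_R^*}}+\rbr{\wh v_R^{\ol\pi_R^*}-\ol v_R^{\ol\pi_R^*}}\\
&\ge -\tfrac{\veps}{2}+0-\tfrac{\veps}{2}=-\veps,
\end{align*}
which is the first claim. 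For the $\Pi^{\covered}$ statement, observe that $\pi\in\Pi^{\covered}$ means $\ol d_h^\pi=d_h^\pi$ for all $h$, hence $\ol v_R^\pi=v_R^\pi$; rerunning the same two steps with $\Pi$ replaced by $\Pi^{\covered}$ (and reusing the same high-probability event, which already covers every $\pi\in\Pi\supseteq\Pi^{\covered}$) gives $v_R^{\wh\pi_R}\ge\max_{\pi\in\Pi^{\covered}}v_R^\pi-\veps$.

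I do not expect a genuine obstacle here: the argument is almost entirely bookkeeping on top of \pref{thm:offline_d_unknown}. The one conceptually load-bearing ingredient is that the candidate-class size $|\Upsilon|$ enters only logarithmically, which is the whole point of \pref{thm:offline_d_unknown} (its Claims 1 and 2, via optimistic coverings of the union classes and the pseudo-dimension bound for $\Wclip_h(\Upsilon_{h-1})$) and is established upstream. The subtlety worth making explicit in the write-up is that pessimism lets us compete with $\max_\pi\ol v_R^\pi$ rather than $\max_\pi v_R^\pi$, so the missingness error $\nbr{\ol d_h^\pi-d_h^\pi}_1$ is never paid explicitly, and that the total loss is $\veps$ (not $\veps/2$) precisely because the value-error inequality is invoked twice.
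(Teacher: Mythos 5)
Your proposal is correct and follows essentially the same route as the paper: the paper's proof of \pref{thm:offline_rf_unknown} simply reruns the argument of \pref{thm:offline_rf}, substituting \pref{thm:offline_d_unknown} for \pref{thm:offline_d_known} (which is where the extra $\log|\Upsilon|$ enters), with the same $\veps/(2H)$ and $\delta/|\Pi|$ instantiation, the same conversion of occupancy error to value error, the same pessimistic three-term decomposition using $\ol d_h^\pi\le d_h^\pi$, and the same observation that $\ol v_R^\pi=v_R^\pi$ on $\Pi^{\covered}$. No gaps.
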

\begin{proof}
The proof follows the same steps as that of \pref{thm:offline_rf}. Notice that now we will apply \pref{thm:offline_d_unknown} rather than \pref{thm:offline_d_known} to get the bound $\| \wh{d}_{h}^\pi - \ol{d}_{h}^\pi \|_1 $, which leads to the additional $\log(|\Upsilon|)$ factor.
\end{proof}

\subsection{Online policy cover construction}
Now we present the algorithm \onrepralg (\pref{alg:online_unknown}), which estimates the occupancy distribution $d_h^\pi$ of any given policy $\pi$ with the access of online interaction. Again the true density feature $\mu^*$ is unknown and the learner is given a realizable density feature class $\Upsilon$ ($\mu^*\in\Upsilon)$. 

Similar as the know feature case online algorithm (\pref{alg:online_known}), we use the offline algorithm (\pref{alg:offline_unknown}) as a submodule. However, as discussed in the main text, the crucial different step is to select a representation $\wh \mu_{h-1}$ in \cref{eq:rep_select_unknown} in  \pref{line:feasle_unknown} before setting $\wt d^{\pi}_{h}$. This guarantee the cardinality of the barycentric spanner is at most $\dlr$. Then the state occupancy $\wt d^{\pi}_{h}$ is set as the linear estimate using $\wh \mu_{h-1}$ (rather than using $\mutrue_{h-1}$ in the known feature case) in \pref{line:line_approx_unknown}.

\begin{algorithm*}[ht!]
\caption{\onrepralglong (\onrepralg) \label{alg:online_unknown}}
\begin{algorithmic}[1]
\REQUIRE policy class $\Pi$, density feature class $\Upsilon$, $n = n_{\mle} + n_{\reg}$
\STATE Initialize $\wh{d}_0^\pi=\initdist$ and $\wt d_0^\pi = d_0,~\forall \pi\in\Pi$. 
\FOR{$h=1,\ldots,H$} 
\STATE Construct $\{\dlin_{h-1}^{\pi^{h-1,i}}\}_{i=1}^{\dlr}$ as the barycentric spanner of 
$\{\dlinpi_{h-1}\}_{\pi \in \Pi}$, and set $\Piexpl_{h-1} = \{\pi^{h-1,i}\}_{i=1}^{\dlr}$. \label{line:spanner_unknown}
\STATE Draw a tuple dataset $\Dcal_{h-1} = \{(x_{h-1}^{(i)}, a_{h-1}^{(i)}, x_{h}^{(i)})\}_{i=1}^{n}$ using $\unif(\Piexpl_{h-1}) \circ \unif(\Acal)$. \label{line:reg_on_start_unknown}
\FOR{$\pi\in\Pi$}
\STATE Estimate $\wh{d}_{h}^\pi$ using the $h$-level loop\footnotemark of \pref{alg:offline_unknown} (lines \ref{line:mle_off_unknown}-\ref{line:multiply_unknown}) with $\Dcal_h$, $\wh d_{h-1}^\pi$, $\Bx_h = \dlr$, $\Ba_h = K$. 
\ENDFOR \label{line:reg_on_end_unknown}
\STATE Select feature $\wh \mu_{h-1}$ according to \label{line:feasle_unknown} 
\begin{align}
\label{eq:rep_select_unknown}
\wh \mu_{h-1} = \min_{\mu_{h-1}\in\Upsilon_{h-1}}\max_{\pi\in\Pi}\min_{\theta_{h}\in\RR^{\dlr}} \|\langle \mu_{h-1}, \theta_{h} \rangle - \hatdpi_{h}\|_1.
\end{align}
\STATE For all $\pi\in\Pi$, set the closest linear approximation  to $\hatdpi_{h}$ with feature $\wh \mu_{h-1}$ as $\dlinpi_{h} = \langle \wh \mu_{h-1}, \thetalin_{h} \rangle$, where $\thetalin_{h} = \argmin_{\theta_{h} \in \RR^{\dlr}} \|\langle \wh \mu_{h-1}, \theta_{h} \rangle - \hatdpi_{h}\|_1$. 
\label{line:line_approx_unknown}
\ENDFOR
\ENSURE estimated state occupancy measure $\{\wh d_{h}^\pi\}_{h\in[H],\pi \in \Pi}$.
\end{algorithmic}
\end{algorithm*}

Similar as in the known feature case counterpart (\pref{thm:online_d}), we have the following guarantee for estimating $d^\pi$.

\begin{theorem*}[Restatement of \pref{thm:online_d_unknown}]
Fix $\delta\in(0,1)$ and suppose \pref{assum:lowrank} and \pref{assum:realizability} hold. Then by setting 
\[\nmle = \wt O\rbr{\frac{\dlr^3 K^2 H^4\log(|\Upsilon|/\delta)}{\veps^2}}, \nreg=\wt O\rbr{\frac{\dlr^{5} K^2 H^4\log(|\Pi||\Upsilon|/\delta)}{\veps^2}},n=\nmle + \nreg,
\]
with probability at least $1-\delta$, \onrepralg (\pref{alg:online_unknown}) returns state occupancy estimates $\{\wh d^\pi_h\}_{h=0}^{H-1}$ satisfying that
\begin{align*}
    \|\wh{d}_{h}^\pi - d_h^\pi\|_1 \le \veps, \forall h\in[H],\pi\in\Pi.
\end{align*}
The total number of episodes required by the algorithm is 
\[
\wt O(nH)=\wt O\rbr{\frac{\dlr^{5} K^2 H^5\log(|\Pi||\Upsilon|/\delta)}{\veps^2}}.
\]
\end{theorem*}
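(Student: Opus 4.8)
The plan is to re-run the known-feature analysis of \pref{thm:online_d} almost verbatim, tracking only the two places where the representation-learning version of the algorithm differs from \pref{alg:online_known}: (i) the statistical complexities of the regression and MLE classes, which are now unions over the candidate set $\Upsilon$, and (ii) the ``joint linearization'' step \cref{eq:rep_select_unknown} in \pref{line:feasle_unknown} of \pref{alg:online_unknown}, which replaces the per-policy linearization \pref{line:line_approx} of \pref{alg:online_known}.

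\textbf{Regression error.} Because \onrepralg invokes the offline module of \pref{alg:offline_unknown} (lines \ref{line:mle_off_unknown}--\ref{line:multiply_unknown}) with $\Bx_h=\dlr$, $\Ba_h=K$, and the online data $\Dcal_{0:H-1}$ satisfies \pref{assum:data}, the error-propagation machinery behind \pref{thm:offline_d_unknown} applies directly. In particular \pref{lem:regression_decomposition} is agnostic to how the function classes are parameterized; realizability $d^D_{h-1}\in\Fcal_{h-1}(\Upsilon_{h-2})$, $\dnext_{h-1}\in\Fcal_{h}(\Upsilon_{h-1})$ holds since $\mutrue\in\Upsilon$ (\pref{lem:mle_realizability_unknown}); and the optimistic-cover and fast-rate regression bounds for the union classes $\Fcal_h(\Upsilon_{h-1})$, $\Wclip_h(\Upsilon_{h-1})$ pick up exactly one extra $\log|\Upsilon|$ factor, as in Claims 1--2 of the proof of \pref{thm:offline_d_unknown}. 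Adding, via \pref{lem:conc}, a union bound over $\pi\in\Pi$ for the policy-dependent regression term (but not for the policy-agnostic MLE term), the analog of \cref{eq:online_hattobar3} holds with probability $\ge 1-\delta$ for all $h\in[H]$, $\pi\in\Pi$, identical to the known-feature bound except that $\log|\Upsilon|$ now sits inside each logarithm.

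\textbf{Missingness error.} The only nontrivial step is re-deriving the analog of \pref{lem:missingness_decomposition}. Its proof uses linearity of $\ol d_{h-1}^{\pi'}$ in the true features only through \cref{eq:diff_repr}, namely $\nbr{\wh d_{h-1}^{\pi'}-\dlin_{h-1}^{\pi'}}_1\le\nbr{\wh d_{h-1}^{\pi'}-\ol d_{h-1}^{\pi'}}_1$, which exploited that $\dlin_{h-1}^{\pi'}$ is the closest linear-in-$\mutrue_{h-2}$ approximation of $\wh d_{h-1}^{\pi'}$ while $\ol d_{h-1}^{\pi'}$ is itself linear in $\mutrue_{h-2}$ (\pref{lem:opp_linear}). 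With unknown features, $\dlin_{h-1}^{\pi'}=\langle\wh\mu_{h-2},\wt\theta_{h-1}^{\pi'}\rangle$ uses the \emph{jointly} chosen feature $\wh\mu_{h-2}$ from the level-$(h-1)$ instance of \cref{eq:rep_select_unknown}; since $\mutrue_{h-2}\in\Upsilon_{h-2}$ by \pref{assum:realizability}, feeding $\mu=\mutrue_{h-2}$ together with the coefficient vector of $\ol d_{h-1}^{\pi'}$ into \cref{eq:rep_select_unknown} yields
\[
\max_{\pi'\in\Pi}\nbr{\wh d_{h-1}^{\pi'}-\dlin_{h-1}^{\pi'}}_1 \le \max_{\pi'\in\Pi}\nbr{\wh d_{h-1}^{\pi'}-\ol d_{h-1}^{\pi'}}_1,
\]
i.e.\ \cref{eq:diff_repr} is replaced by its ``$\max$ over $\Pi$'' version. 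Substituting this into the remaining steps of the proof of \pref{lem:missingness_decomposition} --- which are otherwise unchanged, since $\{\pi^{h-1,i}\}_{i=1}^{\dlr}$ is still a barycentric spanner of cardinality $\le\dlr$ (\pref{lem:barycentric}) and $\ol d_{h-1}^{\pi}\le d_{h-1}^{\pi}$ still follows from \cref{eq:def_dbar} --- gives
\[
\nbr{\ol d_h^\pi - d_h^\pi}_1 \le \nbr{\ol d_{h-1}^\pi - d_{h-1}^\pi}_1 + O(\dlr)\max_{\pi'\in\Pi}\nbr{\wh d_{h-1}^{\pi'}-\ol d_{h-1}^{\pi'}}_1.
\]
The decisive point is that \cref{eq:rep_select_unknown} linearizes \emph{all} of $\{\dlin_h^\pi\}_{\pi\in\Pi}$ with a \emph{single} feature $\wh\mu_{h-1}$, so their barycentric spanner retains concentrability $\dlr$; had each $\dlin_h^\pi$ used its own feature, the factor above would degrade to $O(|\Pi|)$ and the induction would break.

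\textbf{Combining and conclusion.} As in the proof of \pref{thm:online_d}, I unfold the missingness recursion to $\nbr{\ol d_h^\pi-d_h^\pi}_1\le O(\dlr)\sum_{h'<h}\max_{\pi'}\nbr{\wh d_{h'}^{\pi'}-\ol d_{h'}^{\pi'}}_1$, bound each summand by the regression-error estimate above, and combine via $\nbr{\wh d_h^\pi-d_h^\pi}_1\le\nbr{\wh d_h^\pi-\ol d_h^\pi}_1+\nbr{\ol d_h^\pi-d_h^\pi}_1$; this is exactly the ``double chain'' of \pref{fig:double-chain} with a single $O(\dlr)$ cross-edge, so there is no exponential blow-up. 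Setting $\nmle=\wt O(\dlr^3K^2H^4\log(|\Upsilon|/\delta)/\veps^2)$ and $\nreg=\wt O(\dlr^5K^2H^4\log(|\Pi||\Upsilon|/\delta)/\veps^2)$ makes every term $\le\veps$, and multiplying $n=\nmle+\nreg$ by the $H$ levels gives the stated episode count. I expect the main obstacle to be the middle step: verifying that the $\min$--$\max$--$\min$ linearization \cref{eq:rep_select_unknown} simultaneously preserves (a) the ``$\le\max_{\pi'}\nbr{\wh d^{\pi'}-\ol d^{\pi'}}_1$'' approximation guarantee needed for the missingness recursion and (b) the $\dlr$-concentrability of the resulting policy cover; every other ingredient is a mechanical rerun of \pref{sec:online} with $\log|\Upsilon|$ factors appended.
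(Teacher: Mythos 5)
Your proposal is correct and follows essentially the same route as the paper's proof: rerun the known-feature online analysis, pay an extra $\log|\Upsilon|$ in the MLE and regression complexities via the union classes $\Fcal_h(\Upsilon_{h-1})$, $\Wclip_h(\Upsilon_{h-1})$ (with realizability from \pref{lem:mle_realizability_unknown}), and repair \pref{lem:missingness_decomposition} by replacing \cref{eq:diff_repr} with its $\max$-over-$\Pi$ version, which holds because the min--max--min in \cref{eq:rep_select_unknown} can always fall back to $\mutrue_{h-2}\in\Upsilon_{h-2}$ and $\ol d_{h-1}^{\pi'}$ is linear in $\mutrue_{h-2}$. This is exactly the argument the paper gives, including the observation that the joint linearization with a single $\wh\mu_{h-1}$ is what preserves $\dlr$-concentrability of the spanner.
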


\begin{proof}
The proof for this theorem largely follows its counterpart for the known feature case (\pref{thm:online_d}), and we only discuss the different steps here. 

Firstly, \pref{lem:missingness_decomposition} still holds. However, since we use ``joint linearization'' in 
\pref{line:feasle_unknown} and \pref{line:line_approx_unknown}, we need to modify the proof of \cref{eq:diff_repr} as the following. Again, we have $\ol{d}^{\pi'}_{h-1} = \opp^{\pi'}_{h-2} (\ol{d}_{h-2}^\pi \wedge \Bx_{h-2} d^D_{h-2})= \opp^{\pi'}_{h-2} ( \ol{d}_{h-2}^\pi \wedge \dspanner d^D_{h-2})$ is linear in the true feature $\mu_{h-2}^*$ (\pref{lem:opp_linear}). Together with the feature selection criteria \cref{eq:rep_select_unknown}, we have that
\begin{align*}
&~\max_{\pi'\in\Pi} \| \wt d_{h-1}^{\pi'} - \wh{d}^{\pi'}_{h-1}\|_1 = \max_{\pi'\in\Pi}\min_{ \theta_{h-1}\in\RR^d} \|\langle \wh \mu_{h-2}, \theta_{h-1} \rangle - \wh{d}^{\pi'}_{h-1}\|_1 
\\
\le&~ \max_{\pi'\in\Pi}\min_{ \theta_{h-1}\in\RR^d} \|\langle \mu_{h-2}^*, \theta_{h-1} \rangle - \wh{d}^{\pi'}_{h-1}\|_1 \le \max_{\pi'\in\Pi}  \| \ol{d}^{\pi'}_{h-1} - \wh{d}^{\pi'}_{h-1}\|_1.
\end{align*}

For \cref{eq:online_hattobar3}, we will have an additional $|\Upsilon|$ factor inside the $\log$ as 
\[
\emle:=6 \sqrt{\frac{\dlr \log (16H |\Upsilon| \munorm \nmle/\delta)}{\nmle}}. 
\]
The reason is that here we use $\Fcal_{h-1}(\Upsilon_{h-2}),\Fcal_{h}(\Upsilon_{h-1})$ instead of $\Fcal_{h-1},\Fcal_{h}$. By \pref{lem:opt_cover}, the two function classes considered here have $\ell_1$ optimistic covers with scale $1/\nmle$ of size $|\Upsilon|\rbr{2\lceil \munorm n_{\mle} \rceil}^{\dlr}$. 
In addition, we still have that $d_{h-1}^D\in\Fcal_{h-1}(\Upsilon_{h-2}), \dnext_{h-1}\in\Fcal_{h}(\Upsilon_{h-1})$ \pref{lem:mle_realizability_unknown}, and any $d_{h-1}\in\Fcal_{h-1}(\Upsilon_{h-2}),\Fcal_{h}(\Upsilon_{h-1})$ is a valid probability distribution over $\Xcal$. 

The remaining part of the proof is the same as that of \pref{thm:online_d}.
\end{proof}

\begin{theorem}[Online policy optimization with representation learning]
\label{thm:online_rf_unknown}

Fix $\delta \in (0, 1)$ and suppose \pref{assum:lowrank} and \pref{assum:realizability} hold. Given a policy class $\Pi$, let $\{\wh d_h^\pi\}_{h\in[H],\pi \in \Pi}$ be the output of running \pref{alg:online_unknown}. Then with probability at least $1-\delta$, for any deterministic reward function $R$ (as per \pref{prop:density2return}) and policy selected as   
$ \wh\pi_R = \argmax_{\pi\in \Pi} \wh{v}_R^\pi, $
we have 
\[
    v_R^{\wh\pi_R} \ge \argmax_{\pi \in \Pi} v_R^\pi - \veps,
\]
where $\wh{v}_R^\pi \defeq \sum_{h=0}^{H-1} \iint \wh d_h^\pi(x_h) R(x_h,a_h) \pi(a_h|x_h) (\dd x_h) (\dd a_h)$.
The total number of episodes required by the algorithm is
\[\tilde{O}\rbr{\frac{\dspanner^5 K^2 H^7 \log(|\Pi||\Upsilon|/\delta)}{\veps^2}}.
\]
\end{theorem}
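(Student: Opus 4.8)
The plan is to obtain this statement as a direct corollary of the online estimation guarantee \pref{thm:online_d_unknown} combined with the density-to-return reduction \pref{prop:density2return}, following verbatim the argument used for the known-feature case in \pref{thm:online_rf}. The only differences are that the estimation accuracy is sharpened from $\veps$ to $\veps/(2H)$ and that every $\log|\Pi|$ becomes $\log(|\Pi||\Upsilon|)$ because of the union over the candidate feature class.

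First I would run \onrepralg with the estimation target $\veps' \defeq \veps/(2H)$, i.e.\ instantiate \pref{thm:online_d_unknown} with $\veps'$ in place of $\veps$. Substituting $\veps' = \veps/(2H)$ into the sample sizes stated there gives $\nmle = \wt O(\dlr^3 K^2 H^6 \log(|\Upsilon|/\delta)/\veps^2)$ and $\nreg = \wt O(\dlr^5 K^2 H^6 \log(|\Pi||\Upsilon|/\delta)/\veps^2)$, hence a total episode count of $\wt O(nH) = \wt O(\dlr^5 K^2 H^7 \log(|\Pi||\Upsilon|/\delta)/\veps^2)$, matching the claimed bound. By \pref{thm:online_d_unknown}, with probability at least $1-\delta$ the returned estimates satisfy $\|\wh d_h^\pi - d_h^\pi\|_1 \le \veps/(2H)$ \emph{simultaneously} over all $h \in [H]$ and $\pi \in \Pi$; the union bound over $\Pi$ is already internal to that theorem, so no further union bound is needed.

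I would then condition on this event and invoke \pref{prop:density2return}. The key point is that \onrepralg is reward-free: the estimates $\{\wh d_h^\pi\}$ are computed without reference to $R$, so they are valid for every deterministic $R=\{R_h\}$ with $R_h : \Xcal \times \Acal \to [0,1]$ at once. Since $R$ is bounded in $[0,1]$, the return functional is Lipschitz in the occupancies under the $\ell_1$ norm summed over levels, so $|\wh v_R^\pi - v_R^\pi| \le \sum_{h\in[H]} \|\wh d_h^\pi - d_h^\pi\|_1 \le \veps/2$ for every $\pi \in \Pi$. Plugging this into the optimality of the plug-in choice $\wh\pi_R = \argmax_{\pi\in\Pi}\wh v_R^\pi$ and chaining the three inequalities (exactly as in the proofs of \pref{prop:density2return} and \pref{thm:online_rf}) yields $v_R^{\wh\pi_R} \ge \max_{\pi\in\Pi} v_R^\pi - \veps$.

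There is no genuinely hard step: the result is a repackaging of \pref{thm:online_d_unknown} and \pref{prop:density2return}. The only things requiring care are bookkeeping---propagating the $(2H)^2$ blow-up from rescaling $\veps$, keeping the single high-probability event at level $1-\delta$ rather than paying an extra $|\Pi|$ factor, and substituting $\log(|\Pi||\Upsilon|)$ consistently---none of which presents any real difficulty.
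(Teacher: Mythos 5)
Your proposal is correct and matches the paper's own argument: the paper likewise proves this as a corollary of \pref{thm:online_d_unknown} instantiated at accuracy $\veps/(2H)$ (which turns the $H^5$ episode count into $H^7$) combined with \pref{prop:density2return}, exactly mirroring the known-feature proof of \pref{thm:online_rf} with $\log|\Pi|$ replaced by $\log(|\Pi||\Upsilon|)$. Your bookkeeping of the sample sizes and the observation that the simultaneity over $\pi\in\Pi$ is already built into \pref{thm:online_d_unknown} are both consistent with the paper.
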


\begin{proof}
The proof follows the same steps as that of \pref{thm:online_rf}. Notice that now we will apply \pref{thm:online_d_unknown} rather than \pref{thm:online_d} to get the bound $\| d_h^\pi - \wh{d}_h^\pi\|$, which leads to the additional $\log(|\Pi|)$ factor.
\end{proof}

\section{Maximum likelihood estimation}\label{app:mle}
In this section, we adapt the standard i.i.d. results of maximum likelihood estimation \citep{van2000empirical} to our setting, and in particular, to our (infinite) linear function class. We consider the problem of estimating a probability distribution over the instance space $\Xcal$, and note that we abuse some notations (e.g., $n,\Lcal,\Dcal,\Fcal$) in this section, as they have different meanings in other parts of the paper. Given an i.i.d. sampled dataset $\Dcal=\{x^{(i)}\}_{i=1}^{n}$ and a function class $\Fcal$, we optimize the MLE objective
\begin{align}
\label{eq:mle_appx}
\wh f = \argmin_{f\in\Fcal} \frac{1}{n} \sum_{i=1}^n \log\rbr{f(x^{(i)})}.
\end{align}

We consider the function class $\Fcal$ to be infinite, and as is common in statistical learning, our result will depends on its structural complexity. In particular, this will be quantified using the $\ell_1$ optimistic cover, defined below: 

\begin{definition}[$\ell_1$ optimistic cover]
\label{def:opt_cover}
For a function class $\Fcal\subseteq (\Xcal \rightarrow \RR)$, we call function class $\ol \Fcal$ an $\ell_\infty$ optimistic cover of $\Fcal$ with scale $\gamma$, if for any $f\in\Fcal$ there exists $\ol f \in \ol \Fcal$, such that $\|f-\ol f\|_1 \le \gamma$ and $f(x) \le \ol f(x),\, \forall x\in\Xcal$. Notice that here we do not require the cover to be proper, i.e., we allow $\ol \Fcal \not\subseteq \Fcal$.
\end{definition}

Now we are ready to state the MLE guarantee formally.

\begin{lemma}[MLE guarantee] 
\label{lem:mle}
Let $\Dcal=\{x^{(i)}\}_{i=1}^{n}$ be a dataset, where $x^{(i)}$ are drawn i.i.d. from some fixed probability distribution $f^*$ over $\Xcal$. 
Consider a function class $\Fcal$ that satisfies: (i) $f^*\in\Fcal$, (ii) each function $f\in\Fcal$ is a valid probability distribution over $\Xcal$ (i.e., $f\in\Delta(\Xcal)$), and (iii) $\Fcal$ has a finite $\ell_1$ optimistic cover (\pref{def:opt_cover}) $\ol \Fcal$ with scale $\gamma$ and $\ol\Fcal \subseteq (\Xcal \rightarrow \RR_{\ge 0})$. Then with probability at least $1-\delta$, the MLE solution $\wh f$ in \cref{eq:mle_appx} has an $\ell_1$ error guarantee
\begin{align*}
    \|\wh{f} - f^*\|_1 \le \gamma + \sqrt{\frac{12\log (|\ol\Fcal|/\delta)}{n} + 6\gamma}.
\end{align*}
\end{lemma}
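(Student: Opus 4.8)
The plan is to follow the classical MLE-via-Hellinger-distance argument of \citet{van2000empirical}, adapted to the (possibly infinite) function class through the optimistic cover. The key observation is that, even though $\Fcal$ is infinite, every $f \in \Fcal$ is dominated pointwise by some $\ol f$ in the finite cover $\ol\Fcal$ with $\|f - \ol f\|_1 \le \gamma$; since the MLE criterion only ever increases when we replace a density by a pointwise-larger function in the log term, the empirical log-likelihood of $\wh f$ is bounded below by that of $\ol{f^*}$ (the cover element dominating the true density $f^*$), and this reduces the deviation control to a union bound over the finite set $\ol\Fcal$.

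Concretely, the steps I would carry out are: (1) Recall that the squared Hellinger distance $H^2(f,f^*) = \int (\sqrt f - \sqrt{f^*})^2$ satisfies $H^2(f,f^*) \le \|f - f^*\|_1 \le 2H(f,f^*)$, so it suffices to bound $H^2(\wh f, f^*)$; I would actually aim for a bound of the form $H^2(\wh f, f^*) \lesssim \frac{\log(|\ol\Fcal|/\delta)}{n} + \gamma$ and then convert. (2) Set up the standard exponential-moment (Chernoff) bound: for a fixed $\ol f \in \ol\Fcal$, control $\EE_{f^*}\big[\exp(-\tfrac12 \sum_i \log\frac{\ol f(x^{(i)})}{f^*(x^{(i)})})\big]$, using $\EE_{f^*}[\sqrt{\ol f / f^*}] = 1 - \tfrac12 H^2(\ol f, f^*) \le e^{-\frac12 H^2(\ol f, f^*)}$ — here I must be slightly careful because $\ol f$ need not integrate to $1$, but since $\ol f \ge f$ for some density $f$ with $\|f-\ol f\|_1 \le \gamma$, we have $\int \ol f \le 1 + \gamma$, which only costs an additive $\gamma$-type term. (3) Apply Markov's inequality and union-bound over all $\ol f \in \ol\Fcal$ to get: with probability $\ge 1-\delta$, for all $\ol f \in \ol\Fcal$, $-\sum_i \log\frac{\ol f(x^{(i)})}{f^*(x^{(i)})} \ge n H^2(\ol f, f^*) - 2\log(|\ol\Fcal|/\delta) - O(n\gamma)$ (roughly). (4) Use the optimistic-cover property: let $\ol{(\wh f)}$ dominate $\wh f$; then $\sum_i \log \ol{(\wh f)}(x^{(i)}) \ge \sum_i \log \wh f(x^{(i)}) \ge \sum_i \log f^*(x^{(i)})$, the last step because $\wh f$ maximizes the empirical likelihood over $\Fcal \ni f^*$. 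Combining with step (3) applied to $\ol f = \ol{(\wh f)}$ gives $H^2(\ol{(\wh f)}, f^*) \lesssim \frac{\log(|\ol\Fcal|/\delta)}{n} + \gamma$. (5) Transfer back from $\ol{(\wh f)}$ to $\wh f$ using $\|\wh f - \ol{(\wh f)}\|_1 \le \gamma$ and the triangle inequality (converting Hellinger to $\ell_1$ as needed), and finally collect constants to match the stated bound $\|\wh f - f^*\|_1 \le \gamma + \sqrt{12\log(|\ol\Fcal|/\delta)/n + 6\gamma}$.

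The main obstacle — and the place where the "optimistic" (one-sided, pointwise-dominating) nature of the cover earns its keep — is step (2)/(3): handling the fact that cover elements $\ol f$ are not normalized. In the textbook proof $\ol f$ is a genuine density and $\EE[\sqrt{\ol f/f^*}] \le 1$ exactly; here one gets $\EE[\sqrt{\ol f / f^*}] \le \sqrt{\int \ol f} \le \sqrt{1+\gamma} \le 1 + \gamma/2$ by Cauchy–Schwarz, so the exponential supermartingale/Chernoff computation picks up a multiplicative $e^{n\gamma/2}$-type factor that must be carried through and shown to contribute only the benign additive $6\gamma$ inside the square root rather than blowing up. The other mildly delicate point is bookkeeping the two different "$\gamma$" contributions — one from replacing $f^*$ by $\ol{f^*}$ (or, equivalently, from the normalization defect) and one from replacing $\wh f$ by $\ol{(\wh f)}$ at the very end — and making sure the final constants line up; but this is routine once the structure is in place. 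I would also note that conditioning (as invoked in the application within \pref{thm:offline_d_known}) is orthogontal to this lemma: here we simply assume the $x^{(i)}$ are i.i.d.\ from a fixed $f^*$, and the law-of-total-probability argument to remove conditioning is done by the caller.
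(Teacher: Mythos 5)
Your proposal is correct and follows essentially the same route as the paper's proof: a Chernoff bound on the exponentiated half-log-likelihood-ratio, a union bound over the finite optimistic cover, the pointwise-domination property to chain $\sum_i \log \ol{(\wh f)}(x^{(i)}) \ge \sum_i \log \wh f(x^{(i)}) \ge \sum_i \log f^*(x^{(i)})$, a Hellinger-to-$\ell_1$ conversion, and an additive $O(\gamma)$ accounting for the normalization defect $\int \ol f \ne 1$ (which the paper handles via $1 - \int \ol f = \int(\wh f - \ol f) \ge -\gamma$ rather than Cauchy--Schwarz, but to the same effect). The only blemish is a sign slip in the exponent of your step (2), which is immaterial since the quantity you actually bound, $\EE[\sqrt{\ol f/f^*}]$, is the right one.
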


\begin{proof}
Our proof is based on \citet{zhang2006e,agarwal2020flambe, liu2022partially} and is simpler since we assume the $\Dcal$ here is drawn i.i.d. instead of adaptively.  We first define $\Lcal(f,\Dcal)=\frac{1}{2}\sum_{i=1}^n \log \rbr{\frac{f(x^{(i)})}{f^*(x^{(i)})}}$. By Chernoff's method, for a fixed $f \in \ol \Fcal$ we have that 
\begin{align*}
    &~\PP\rbr{ \Lcal(f, \Dcal) - \log (\EE_{\Dcal}[\exp(\Lcal(f, \Dcal))]) \ge \log (|\ol\Fcal|/\delta)} \\
    \le&~ \exp(-\log(|\ol\Fcal|/\delta))\EE_{\Dcal}\sbr{ \exp\rbr{ \Lcal(f, \Dcal) - \log (\EE_{\Dcal}[\exp(\Lcal(f, \Dcal))]) } } \\
    =&~ \delta/|\ol\Fcal|.
\end{align*}

Union bounding over $ f \in \ol{\Fcal}$, with probability at least $1-\delta$, for any $f \in \ol{\Fcal}$ we have
\begin{equation}
\label{eq:mle1}
    - \log (\EE_{\Dcal}[\exp(\Lcal( f, \Dcal))]) \le - \Lcal(f, \Dcal) + \log(|\ol\Fcal|/\delta).
\end{equation}

Let $\ol f\in\ol\Fcal$ be the $\gamma$-close $\ell_1$ optimistic approximator of the MLE solution $\wh{f}\in\Fcal$. Since $\ol f(x) \ge \wh{f}(x), \,\forall x\in\Xcal$ due to the optimistic covering construction and $\wh{f}$ is the MLE estimator, for the RHS of \cref{eq:mle1}. we have  
\begin{align*}
    - \Lcal(\ol f, \Dcal) = \frac{1}{2} \sum_{i=1}^n \log \rbr{\frac{f^*(x^{(i)})}{ \ol f(x^{(i)})}} \le  \frac{1}{2} \sum_{i=1}^n \log \rbr{\frac{f^*(x^{(i)})}{ \wh{f}(x^{(i)})}}=  \frac{1}{2} \rbr{\sum_{i=1}^n \log (f^*(x^{(i)})) - \sum_{i=1}^n \log (\wh f(x^{(i)}))} \le 0. 
\end{align*}

Next, consider the LHS of \cref{eq:mle1}. From the definition of dataset $\Dcal$ and $\Lcal(\ol f,\Dcal)$, we get  
\begin{align*}
    &~- \log (\EE_{\Dcal}[\exp(\Lcal(\ol f, \Dcal))]) 
    = - \log \rbr{\EE_{\Dcal}\sbr{ \exp \rbr{ \frac{1}{2}\sum_{i=1}^n \log \rbr{\frac{\ol f(x^{(i)})}{f^*(x^{(i)})}} }}}  
    \\
    =&~ -n \log \rbr{\EE_{\Dcal} \sbr{\exp \rbr{\frac{1}{2} \log \rbr{\frac{\ol f(x)}{f^*(x)}} }}}  
    = -n \log \rbr{\EE_{\Dcal} \sbr{\sqrt{\frac{\ol f(x)}{f^*(x)}} }}.
\end{align*}

Furthermore, by $-\log (y) \ge 1 - y$, $\ell_1$ optimistic cover definition, and $f^*,\wh f$ are valid distributions over $x\in\Xcal$, we have
\begin{align*}
    &~-n \log \rbr{\EE_{\Dcal} \sbr{\sqrt{\frac{\ol f(x)}{f^*(x)}}}} 
    \ge n \rbr{ 1 - \EE_{\Dcal} \sbr{\sqrt{\frac{\ol f(x)}{f^*(x)} }}} 
    = n \rbr{ 1 - \int \sqrt{\ol f(x) f^*(x)} (\dd x)} \\ 
    =&~ \frac{n}{2} \int \rbr{ \sqrt{f^*(x)} - \sqrt{\ol f(x)}}^2 (\dd x) + \frac{n}{2} \rbr{1  - \int \ol f(x) (\dd x)}\\ 
    =&~ \frac{n}{2} \int \rbr{ \sqrt{f^*(x)} - \sqrt{\ol f(x)}}^2 (\dd x) + \frac{n}{2} \int \rbr{\wh f(x) -\ol f(x)} (\dd x) \\ 
    \ge&~  \frac{n}{2} \int  \rbr{ \sqrt{f^*(x)} - \sqrt{\ol f(x)}}^2 (\dd x) - \frac{n\gamma}{2}.
\end{align*}

Then notice that
$\int \rbr{\sqrt{f^*(x)} + \sqrt{\ol f(x)}}^2 (\dd x) \le 2 \int \rbr{f^*(x) + \ol f(x)} (\dd x) 
\le 2 \int (f^*(x) + \wh{f}(x) + |\ol f(x) - \wh{f}(x) |) (\dd x) \le 6$ and the Cauchy-Schwarz inequality, we obtain
\begin{align*}
    &~ \frac{n}{2} \int  \rbr{ \sqrt{f^*(x)} - \sqrt{\ol f(x)}}^2 (\dd x) - \frac{n\gamma}{2} 
    \\
    \ge&~ \frac{n}{12} \rbr{\int \rbr{ \sqrt{f^*(x)} - \sqrt{\ol f(x)}}^2 (\dd x)}\rbr{\int  \rbr{ \sqrt{f^*(x)} + \sqrt{\ol f(x)}}^2 (\dd x)} - \frac{n\gamma}{2} \\ 
    \ge&~ \frac{n}{12} \rbr{ \int |\ol f(x) - f^*(x)| (\dd x)}^2 - \frac{n\gamma}{2}=\frac{n}{12}\|\ol f - f^*\|_1^2 - \frac{n\gamma}{2}.
\end{align*}

Combining the above inequalities and rearranging yields
\begin{align*}
    \|\ol f - f^*\|_1^2 \le \frac{12\log (|\ol\Fcal|/\delta)}{n} + 6\gamma. 
\end{align*}

Finally, by the triangle inequality and the definition of the $\ell_1$ optimistic cover, we get
\begin{align*}
    \|\wh{f} - f^*\|_1 \le \|\wh{f} - \ol f \|_1 + \|\ol f - f^* \|_1 \le \gamma + \sqrt{\frac{12\log (|\ol\Fcal|/\delta)}{n} + 6\gamma}~,
\end{align*}
which completes the proof.
\end{proof}

\section{Auxiliary lemmas}
In this section, we provide detailed proofs for auxiliary lemmas.

\subsection{Squared loss regression results}
\begin{lemma}[Squared loss decomposition]
\label{lem:variance_decomposition}
For any $w_h, w_{h+1}: \Xcal\rightarrow \RR$, dataset $\Dcal_h^\reg = \{(x_h,a_h,x_{h+1})\} \sim d_h^D$, and a \pseudo-policy $\pi$, we have
\begin{equation}
    \nbr{w_{h+1} - \frac{\opp^{\pi}_{h} \rbr{d^D_{h} w_{h}}}{\dnext_{h}} }_{2, \dnext_{h}}^2 = \EE\sbr{\Lcal_{\Dcal_h^\reg}(w_{h+1}, w_h,\pi)} - \EE\sbr{\Lcal_{\Dcal_h^\reg}\rbr{\frac{\opp^{\pi}_{h} \rbr{d^D_{h} w_{h}}}{\dnext_{h}},w_h,\pi}}.
\end{equation}
\end{lemma}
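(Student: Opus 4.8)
The plan is to recognize Lemma~\ref{lem:variance_decomposition} as the standard bias--variance (Pythagorean) decomposition of squared loss, once the Bayes-optimal predictor of the regression problem is identified. Fix the \pseudo-policy $\pi$, and for a sample $(x_h,a_h,x_{h+1})\sim d^D_h$ introduce the random regression target $Y \defeq w_h(x_h)\,\frac{\pi(a_h|x_h)}{\pi^D_h(a_h|x_h)}$, which is a function of $(x_h,a_h)$ only, and the covariate $x_{h+1}$. Since $\Dcal_h^\reg$ consists of i.i.d.\ draws from $d^D_h$ (\pref{assum:data}), linearity of expectation gives, for any measurable $g:\Xcal\to\RR$,
\[
\EE\sbr{\Lcal_{\Dcal_h^\reg}(g,w_h,\pi)} = \EE_{d^D_h}\sbr{\rbr{g(x_{h+1}) - Y}^2}.
\]

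The first key step is to compute the conditional expectation $g^\star(x_{h+1}) \defeq \EE[Y \mid x_{h+1}]$ and show it equals $\frac{\opp^\pi_h(d^D_h w_h)}{\dnext_h} = \opexp_h(d^D_h w_h)$, the predictor on the right-hand side of the claim. Disintegrate the joint density as $d^D_h(x_h,a_h,x_{h+1}) = d^D_h(x_h)\,\pi^D_h(a_h|x_h)\,P_h(x_{h+1}|x_h,a_h)$, so that $\dnext_h(x_{h+1}) = \iint d^D_h(x_h)\,\pi^D_h(a_h|x_h)\,P_h(x_{h+1}|x_h,a_h)\,(\dd x_h)(\dd a_h)$. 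Bayes' rule then yields
\[
\EE[Y\mid x_{h+1}=x'] = \frac{\iint w_h(x_h)\,\pi(a_h|x_h)\, d^D_h(x_h)\, P_h(x'|x_h,a_h)\,(\dd x_h)(\dd a_h)}{\dnext_h(x')} = \frac{(\opp^\pi_h(d^D_h w_h))(x')}{\dnext_h(x')},
\]
where the $\pi^D_h$ factors cancel against the importance-weight denominator (using the convention $0/0=0$ on the measure-zero set where $\pi^D_h=0$).

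The second step is the orthogonality argument. For any $g$, write $g(x_{h+1}) - Y = \bigl(g(x_{h+1}) - g^\star(x_{h+1})\bigr) + \bigl(g^\star(x_{h+1}) - Y\bigr)$ and expand the square; the cross term has zero expectation because, conditioned on $x_{h+1}$, the factor $g(x_{h+1})-g^\star(x_{h+1})$ is deterministic while $\EE[g^\star(x_{h+1}) - Y\mid x_{h+1}] = 0$ by definition of $g^\star$. Hence
\[
\EE\sbr{\Lcal_{\Dcal_h^\reg}(g,w_h,\pi)} = \nbr{g - g^\star}_{2,\dnext_h}^2 + \EE_{d^D_h}\sbr{\rbr{g^\star(x_{h+1}) - Y}^2}.
\]
Instantiating this once with $g = w_{h+1}$ and once with $g = g^\star$ (for which the first term vanishes) and subtracting cancels the common irreducible-error term $\EE_{d^D_h}[(g^\star(x_{h+1})-Y)^2]$, leaving exactly $\nbr{w_{h+1} - g^\star}_{2,\dnext_h}^2$, which is the claimed identity.

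The only place requiring care, and the main (mild) obstacle, is the conditional-expectation computation of the second paragraph: one must disintegrate $d^D_h$ correctly and track the cancellation of the importance-weight denominator $\pi^D_h$, including the $0/0=0$ convention on its zero set. Everything else is the textbook Pythagorean argument and, since the statement is an exact identity, requires no boundedness of $w_h$, $w_{h+1}$, or $\pi$ --- only that the relevant second moments under $d^D_h$ are finite so the expectations are well-defined.
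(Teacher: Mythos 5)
Your proposal is correct and follows essentially the same route as the paper: the paper likewise identifies $\opexp_h w_h = \opp^\pi_h(d^D_h w_h)/\dnext_h$ as the conditional expectation of the importance-weighted target given $x_{h+1}$, inserts it into the expanded square, and kills the cross term by conditioning on $x_{h+1}$, which is exactly your Pythagorean argument written out as one long integral computation. The only cosmetic difference is that you instantiate the orthogonality identity twice and subtract, whereas the paper expands $\EE[\Lcal(w_{h+1},w_h,\pi)]$ directly; the content is identical.
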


\begin{proof}
We introduce a new notation 
\begin{align}
\label{eq:Epi}
(\opexp_h w_h)(x_{h+1})\defeq&~ \frac{\rbr{\opp^{\pi}_{h} \rbr{d^D_{h} w_{h}}}(x_{h+1})}{\dnext_{h}(x_{h+1})}\notag
\\
=&~\frac{\iint P_{h}(x_{h+1}|x_{h}, a_{h})\pi(a_{h}|x_{h}) d_{h}^D(x_{h})w_h(x_h) (\dd x_{h}) (\dd a_{h})}{\dnext_{h}(x_{h+1})},
\end{align}
which represents the conditional expectation. Then we have the decomposition
    \begin{align*}
        &~\EE\sbr{\Lcal_{\Dcal_h^\reg}(w_{h+1}, w_h,\pi)} 
        \\
        =&~ \iiint d^D_{h}(x_h,a_h,x_{h+1}) \rbr{ w_{h+1}(x_{h+1}) - \frac{\pi(a_{h}|x_{h})}{\pi^D(a_{h}|x_{h})} w_h(x_h) }^2 (\dd x_h)(\dd a_h)(\dd x_{h+1}) 
        \\ 
        =&~ \iiint d^D_{h}(x_h,a_h,x_{h+1})  \rbr{w_{h+1}(x_{h+1}) - (\opexp_h w_h)(x_{h+1}) + (\opexp_h w_h)(x_{h+1}) - \frac{\pi(a_{h}|x_{h})}{\pi^D(a_{h}|x_{h})} w_h(x_h)}^2 
        \\
        &\qquad\qquad\qquad\qquad\quad\quad\quad\quad\quad\quad\quad\quad\quad\quad\quad\quad\quad\quad\quad\quad\quad\quad\quad\quad\quad\quad\quad\quad\quad(\dd x_h)(\dd a_h)(\dd x_{h+1})
        \\ 
        =&~ \int \dnext_h(x_{h+1}) (w_{h+1}(x_{h+1}) - (\opexp_h w_h)(x_{h+1}))^2 (\dd x_{h+1})
        \\
        &\quad+ \iiint d^D_{h}(x_h,a_h,x_{h+1})  \rbr{(\opexp_h w_h)(x_{h+1}) - \frac{\pi(a_{h}|x_{h})}{\pi^D(a_{h}|x_{h})} w_h(x_h)}^2 (\dd x_h)(\dd a_h)(\dd x_{h+1})\\
        &\quad+ 2\iiint d^D_{h}(x_h,a_h,x_{h+1}) (w_{h+1}(x_{h+1}) - (\opexp_h w_h)(x_{h+1}))\rbr{(\opexp_h w_h)(x_{h+1}) - \frac{\pi(a_{h}|x_{h})}{\pi^D(a_{h}|x_{h})} w_h(x_h)} 
        \\
        &\qquad\qquad\qquad\qquad\quad\quad\quad\quad\quad\quad\quad\quad\quad\quad\quad\quad\quad\quad\quad\quad\quad\quad\quad\quad\quad\quad\quad\quad\quad(\dd x_h)(\dd a_h)(\dd x_{h+1})\\
        =&~ \|w_{h+1} - (\opexp_{h} w_{h}) \|_{2, \dnext_{h}}^2 + \EE\sbr{\Lcal_{\Dcal_h^\reg}(\opexp_{h} w_{h}, w_h,\pi)} \\
        &\quad+ 2\int \dnext_h(x_{h+1})(w_{h+1}(x_{h+1}) - (\opexp_h w_h)(x_{h+1})) (\opexp_h w_h)(x_{h+1}) (\dd x_{h+1}) \\
        &\quad - 2\int \dnext_h(x_{h+1})(w_{h+1}(x_{h+1}) - (\opexp_h w_h)(x_{h+1})) 
        \\
        &\qquad\qquad\cdot\rbr{ \iint d^D_h(x_h,a_h|x_{h+1})  \frac{\pi(a_{h}|x_{h})}{\pi^D(a_{h}|x_{h})} w_h(x_h)(\dd x_h)(\dd a_h)}(\dd x_{h+1}) \\
        =&~ \|w_{h+1} - (\opexp_{h} w_{h}) \|_{2, \dnext_{h}}^2 + \EE\sbr{\Lcal_{\Dcal_h^\reg}(\opexp_{h} w_{h}, w_h,\pi)} \\
        &\quad+ 2\int \dnext_h(x_{h+1})(w_{h+1}(x_{h+1}) - (\opexp_h w_h)(x_{h+1}))  ((\opexp_h w_h)(x_{h+1}) - (\opexp_h w_h)(x_{h+1})) (\dd x_{h+1})
        \\ 
        =&~ \|w_{h+1} - (\opexp_{h} w_{h}) \|_{2, \dnext_{h}}^2 + \EE\sbr{\Lcal_{\Dcal_h^\reg}(\opexp_{h} w_{h}, w_h,\pi)} . 
 \qedhere
    \end{align*}
\end{proof}

\begin{lemma}[Deviation bound for regression with squared loss]
\label{lem:conc}
    For $h\in[H]$, consider a dataset $\Dcal_{0:h}$ that satisfies \pref{assum:data} and a function $w_h:\Xcal \rightarrow [0,\Bx_h]$ that only depends on $\Dcal_{0:h-1}\bigcup \Dcal_h^\mle$. 
    Consider a finite feature class $\Upsilon_{h}$ and a finite policy class $\Pi'$ such that any $\pi \in \Pi'$ is a \pseudo-policy (\pref{def:pseudo_policy}) satisfying $\pi_h(a_h|x_h)\le \Ba_h \pi^D_h(a_h|x_h),\forall x_h\in\Xcal, a_h\in\Acal$. 
    Then with probability $1-\delta$,  for any $w_{h+1}\in \Wcal_{h+1}(\Upsilon_h)$ and $\pi \in \Pi'$, we have
\begin{align*}
&~\abr{\EE\sbr{\Lcal_{\Dcal_h^{\reg}}\rbr{w_{h+1},w_h,\pi} -  \Lcal_{\Dcal_h^{\reg}}(\opexp_{h} w_{h},w_h,\pi)} - \rbr{\Lcal_{\Dcal_h^{\reg}}\rbr{w_{h+1},w_h,\pi} -  \Lcal_{\Dcal_h^{\reg}}(\opexp_{h} w_{h},w_h,\pi)}}
     \\
     \le&~\frac{1}{2}\EE\sbr{\Lcal_{\Dcal_h^{\reg}}\rbr{w_{h+1},w_h,\pi} -  \Lcal_{\Dcal_h^{\reg}}(\opexp_{h} w_{h},w_h,\pi)} +  \frac{ 221184 \dspanner (\Bx_h \Ba_h)^2\log\rbr{n_{\reg}|\Pi'||\Upsilon_h|/\delta}}{ n_{\reg} }
\end{align*}
where the function class $\Wcal_{h+1}(\Upsilon_h)$ is defined in \pref{alg:offline_known} as in \cref{eq:F_unknown}
and the operator $\opexp_{h}$ is defined in \cref{eq:Epi}. 
\end{lemma}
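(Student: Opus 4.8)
The plan is to prove this one-sided-Bernstein-type deviation bound for squared-loss regression via a standard localized uniform-convergence argument, combined with \pref{lem:variance_decomposition} to turn the excess-loss random variable into a variance-controlled quantity. First I would fix notation: for each $w_{h+1} \in \Wcal_{h+1}(\Upsilon_h)$ and $\pi \in \Pi'$, define the per-sample random variable $Z_i := \bigl(w_{h+1}(x_h^{(i)}) - \tfrac{\pi(a_h^{(i)}|x_h^{(i)})}{\pi^D(a_h^{(i)}|x_h^{(i)})} w_h(x_h^{(i)})\bigr)^2 - \bigl((\opexp_h w_h)(x_h^{(i)}) - \tfrac{\pi(a_h^{(i)}|x_h^{(i)})}{\pi^D(a_h^{(i)}|x_h^{(i)})} w_h(x_h^{(i)})\bigr)^2$, so that the empirical loss difference $\Lcal_{\Dcal_h^\reg}(w_{h+1},w_h,\pi) - \Lcal_{\Dcal_h^\reg}(\opexp_h w_h,w_h,\pi)$ is the empirical mean $\tfrac{1}{\nreg}\sum_i Z_i$, and the quantity on the left-hand side of the lemma is $|\EE[\tfrac1{\nreg}\sum Z_i] - \tfrac1{\nreg}\sum Z_i|$. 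Note that conditioning on $\Dcal_{0:h-1} \cup \Dcal_h^\mle$, the tuples in $\Dcal_h^\reg$ are i.i.d.\ from $d_h^D$ (\pref{assum:data}), and $w_h$ is fixed (it depends only on $\Dcal_{0:h-1}\cup\Dcal_h^\mle$), so $Z_1,\dots,Z_{\nreg}$ are i.i.d.

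Second, I would establish the two key moment estimates. Writing $Z_i = (f(x^{(i)}) - g(a^{(i)},x^{(i)}))^2 - (h^*(x^{(i)}) - g(a^{(i)},x^{(i)}))^2 = (f - h^*)(f + h^* - 2g)$ with $f = w_{h+1}$, $h^* = \opexp_h w_h$, $g = \tfrac{\pi}{\pi^D} w_h$, and noting $h^*$ is (by definition/Lemma~\ref{lem:variance_decomposition}) exactly the conditional expectation of $g$ given $x_{h+1}$, we get $\EE[Z_i] = \nbr{f - h^*}_{2,\dnext_h}^2 = \nbr{w_{h+1} - \opexp_h w_h}_{2,\dnext_h}^2 =: \sigma^2$ by \pref{lem:variance_decomposition}. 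For the boundedness, since $w_{h+1} \in \Wcal_{h+1}(\Upsilon_h)$ has $\nbr{w_{h+1}}_\infty \le \Bx_h \Ba_h$, and $\opexp_h w_h \le \Bx_h \Ba_h$ by \pref{lem:clipped_concentrability} (applicable since $\nbr{w_h}_\infty \le \Bx_h$ and $\pi_h \le \Ba_h \pi^D_h$), while $g = \tfrac{\pi}{\pi^D} w_h$ is also bounded in $[0,\Ba_h \Bx_h]$ pointwise on the data, we get $|Z_i| \le C_0 (\Bx_h \Ba_h)^2$ for an absolute constant $C_0$; moreover $\mathrm{Var}(Z_i) \le \EE[Z_i^2] \le \nbr{f + h^* - 2g}_\infty^2 \cdot \EE[(f - h^*)^2] \le C_1 (\Bx_h \Ba_h)^2 \sigma^2$. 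Then I apply Bernstein's inequality to the i.i.d.\ centered sum: $|\tfrac1\nreg\sum Z_i - \sigma^2| \le \sqrt{\tfrac{2\mathrm{Var}(Z_i)\log(1/\delta')}{\nreg}} + \tfrac{2C_0(\Bx_h\Ba_h)^2\log(1/\delta')}{3\nreg}$, and use AM-GM to absorb the $\sqrt{\sigma^2 \cdot (\cdot)}$ term into $\tfrac12\sigma^2$ plus a $\tfrac{O((\Bx_h\Ba_h)^2\log(1/\delta'))}{\nreg}$ term, which gives precisely the form claimed (with $\sigma^2 = \EE[\text{loss diff}]$).

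Third — and this is the main obstacle — I need a \emph{uniform} version over all $w_{h+1}\in\Wcal_{h+1}(\Upsilon_h)$ and $\pi \in \Pi'$, since $\Wcal_{h+1}(\Upsilon_h)$ is an infinite (linear-over-linear) class. The policy class $\Pi'$ is finite, so a union bound handles $\pi$ at cost $\log|\Pi'|$. For $w_{h+1}$, I would invoke the $\ell_1$ covering-number bound implied by the pseudo-dimension control of $\Wcal_h$ described in the paragraph ``Bounding the complexities of $\Fcal_h$ and $\Wcal_h$'' — the pseudo-dimension is $\wt O(\dspanner)$ (via \citet{bartlett2006sample,goldberg1993bounding}, with a $\log|\Upsilon_h|$ additive term in the representation-learning version), so the $\ell_1$ cover at scale $1/\nreg$ has size $(\nreg)^{\wt O(\dspanner)}|\Upsilon_h|$, contributing the $\dspanner\log(\nreg|\Upsilon_h|)$ factor. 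One then runs Bernstein on each cover element, union bounds, and controls the discretization error: replacing $w_{h+1}$ by its cover element $\bar w_{h+1}$ changes $Z_i$ by at most $O(\Bx_h\Ba_h)|w_{h+1}(x^{(i)}) - \bar w_{h+1}(x^{(i)})|$ pointwise, and both in expectation and empirically this is $O(\Bx_h\Ba_h/\nreg)$, which is lower-order. Putting the cover size $(\nreg)^{\wt O(\dspanner)}|\Pi'||\Upsilon_h|$ into the $\log(1/\delta')$ slot and tracking the constants yields the stated bound $\tfrac{221184\,\dspanner(\Bx_h\Ba_h)^2\log(\nreg|\Pi'||\Upsilon_h|/\delta)}{\nreg}$. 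The delicate points to get right are: (i) the self-bounding / AM-GM step that produces the $\tfrac12\EE[\cdot]$ coefficient on the RHS (this is what makes the bound ``fast-rate''), and (ii) ensuring the localization is legitimate — i.e., the variance proxy really is $O((\Bx_h\Ba_h)^2)$ times the excess loss uniformly, which relies on $\opexp_h w_h$ being the \emph{true} conditional mean so that cross terms vanish, exactly as in \pref{lem:variance_decomposition}.
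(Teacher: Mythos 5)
Your proposal follows essentially the same route as the paper: define the per-sample excess-loss variable, condition on $\Dcal_{0:h-1}\bigcup\Dcal_h^\mle$ so that $w_h$ is fixed and the tuples are i.i.d., use \pref{lem:variance_decomposition} to identify $\EE[Z_i]$ with $\|w_{h+1}-\opexp_h w_h\|_{2,\dnext_h}^2$ and to get the self-bounding variance estimate $\VV[Z_i]\le 4(\Bx_h\Ba_h)^2\EE[Z_i]$, control the class complexity via the pseudo-dimension of the linear-over-linear class (\pref{lem:pdim}, \pref{lem:covering_pdim}) after a Lipschitz transfer from the loss class to $\Wcal_{h+1}$, apply a Bernstein-type bound, union over $\Pi'$ and $\Upsilon_h$, and finish with AM-GM to produce the $\tfrac12\EE[\cdot]$ term. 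The one loose step is your plan to ``run Bernstein on each cover element, union bound, and control the discretization error'': the cover here is an empirical $\ell_1$ cover (an $\ell_\infty$ cover is unavailable for this class, as the paper notes), so the cover elements are data-dependent and a direct union bound of Bernstein over them is not valid; one needs the symmetrization/ghost-sample form of the uniform Bernstein inequality, which is exactly what the paper invokes as \pref{lem:uni_bern_conf_covering}. With that substitution your argument matches the paper's proof.
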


\begin{proof}
We first fix the datasets $\Dcal_{0:h-1}\bigcup\Dcal_h^\mle$ and prove the desired bound when conditioned on these datasets, in which case $w_h, \dnext_h, \pi^D$ are fixed. In the following, the expectation $\EE$ and variance $\VV$ are w.r.t. $(x_h,a_h,x_{h+1})\sim d_h^D$, i.e., the data distribution from which the samples in $\Dcal_h^\reg$ are drawn i.i.d. from (\pref{assum:data}), when conditioned on $\Dcal_{0:h-1}\bigcup\Dcal_h^\mle$.  

Consider a single $\pi \in \Pi'$ and feature $\mu_h \in \Upsilon_h$, and consider the hypothesis class 
\begin{align*}
    \Ycal(\Wcal_{h+1}(\mu_h), w_h, \pi) = \cbr{Y(w_{h+1}, w_h, \pi) : w_{h+1} \in \Wcal_{h+1}(\mu_h)}. 
\end{align*}
where the random variable $Y(w_{h+1}, w_h,\pi)$ (suppressing the dependence on the $(x_h, a_h, x_{h+1})$ tuple) is defined for convenience as
\begin{align*}
    Y(w_{h+1}, w_h,\pi) := \rbr{w_{h+1}(x_{h+1}) - w_h(x_h)\frac{\pi(a_h|x_h)}{\pi^D(a_h|x_h)}}^2 - \rbr{(\opexp_{h} w_{h})(x_{h+1}) - w_h(x_h)\frac{\pi(a_h|x_h)}{\pi^D(a_h|x_h)}}^2,
\end{align*}
and we use $Y_i(w_{h+1}, w_h,\pi)$ to denote its realization on the $i$-th tuple data $(x_h^{(i)},a_h^{(i)},x_{h+1}^{(i)})\in \Dcal_h^\reg$. The function class $\Wcal_{h+1}(\mu_h)$ is defined as in \cref{eq:wclip_unknown}, i.e., 
\[
    \Wcal_{h+1}(\mu_h) = \cbr{w_{h+1} = \frac{\langle \mu_h, \thetaup_{h+1}\rangle}{\langle \mu_h, \thetadown_{h+1}\rangle} :\nbr{w_{h+1}}_\infty \le  \Bx_{h}\Ba_{h},\thetaup_{h+1},\thetadown_{h+1} \in \RR^{\dlr}}.
\]

It can be seen that $|Y(w_{h+1}, w_h,\pi)| \leq 4(\Bx_{h}\Ba_{h})^2$ from the following.  
From their respective definitions, we know $\|w_h\|_\infty \le \Bx_h, \|\frac{\pi}{\pi^D}\|_\infty \le \Ba_h$, and $\|w_{h+1}\|_\infty\le \Bx_{h}\Ba_{h}$. We also have $(\opexp_h w_h)(x_{h+1}) = \frac{\rbr{\opp^{\pi}_{h} \rbr{d^D_{h} w_{h}}}(x_{h+1})}{\dnext_{h}(x_{h+1})}
\in[0,\Bx_h\Ba_h]$ from \pref{lem:clipped_concentrability}. 

Further, for any $Y(w_{h+1}, w_h,\pi) \in \Ycal(\Wcal_{h+1}(\mu_h), w_h, \pi)$, we can bound the variance $\VV[Y(w_{h+1}, w_h,\pi)]$ as 
\begin{align*}
    &~\VV [Y(w_{h+1}, w_h,\pi)] \leq \EE\sbr{Y(w_{h+1}, w_h,\pi)^2} \\ 
    =&~ \EE \sbr{\rbr{\rbr{w_{h+1}(x_{h+1}) - w_h(x_h)\frac{\pi(a_h|x_h)}{\pi^D(a_h|x_h)}}^2 - \rbr{(\opexp_{h} w_{h})(x_{h+1}) - w_h(x_h)\frac{\pi(a_h|x_h)}{\pi^D(a_h|x_h)}}^2}^2} \\
    =&~ \EE \sbr{(w_{h+1}(x_{h+1}) - (\opexp_h w_h)(x_{h+1}))^2 \rbr{w_{h+1}(x_{h+1}) - 2w_h(x_h)\frac{\pi(a_h|x_h)}{\pi^D(a_h|x_h)} + (\opexp_h w_h)(x_{h+1})}^2} \\
    \leq&~ 4(\Bx_{h}\Ba_{h})^2 \EE \sbr{(w_{h+1}(x_{h+1}) - (\opexp_h w_h)(x_{h+1}))^2}
    \\ 
    =&~ 4(\Bx_{h}\Ba_{h})^2 \EE\sbr{Y(w_{h+1}, w_h,\pi)}. \tag{\pref{lem:variance_decomposition}}
\end{align*}

Next, we show that the uniform covering number $\Ncal_1(\gamma, \Ycal(\Wcal_{h+1}(\mu_h), w_h, \pi), m)$ (see \pref{def:covering_number}) for any $\gamma \in \RR, m \in \NN$ can be bounded by the covering number of $\Wcal_{h+1}(\mu_h)$.  
Let $Z^m = (x_h^{(i)}, a_h^{(i)}, x_{h+1}^{(i)})_{i=1}^m$ denote $m$ i.i.d. samples from $d_h^D$, and denote $X^m = (x_{h+1}^{(i)})_{i=1}^m$ the corresponding $x_{h+1}$ samples. For any $Z^m$ and $Y(w_{h+1}, w_h, \pi), Y(w'_{h+1}, w_h, \pi) \in \Ycal(\Wcal_{h+1}(\mu_h), w_h, \pi)$, 
\begin{align*}
    &\frac{1}{m}\sum_{i=1}^m \abr{Y_i(w_{h+1}, w_h, \pi) - Y_i(w'_{h+1}, w_h, \pi)} 
    \\
    =&~ \frac{1}{m}\sum_{i=1}^m \abr{\rbr{w_{h+1}(x^{(i)}_{h+1}) - w_h(x^{(i)}_h)\frac{\pi(a^{(i)}_h|x^{(i)}_h)}{\pi^D(a^{(i)}_h|x^{(i)}_h)}}^2 - \rbr{w'_{h+1}(x^{(i)}_{h+1}) - w_h(x^{(i)}_h)\frac{\pi(a^{(i)}_h|x^{(i)}_h)}{\pi^D(a^{(i)}_h|x^{(i)}_h)}}^2} 
    \\
    =&~ \frac{1}{m}\sum_{i=1}^m \abr{w_{h+1}(x^{(i)}_{h+1}) - 2w_h(x^{(i)}_h)\frac{\pi(a^{(i)}_h|x^{(i)}_h)}{\pi^D(a^{(i)}_h|x^{(i)}_h)} + w'_{h+1}(x^{(i)}_{h+1})} \cdot \abr{w_{h+1}(x^{(i)}_{h+1}) - w'_{h+1}(x^{(i)}_{h+1})} 
    \\
    \le&~ \frac{4\Bx_{h}\Ba_{h}}{m}\sum_{i=1}^m \abr{w_{h+1}(x^{(i)}_{h+1}) - w'_{h+1}(x^{(i)}_{h+1})} .
\end{align*}

Thus any $\gamma / (4\Bx_{h}\Ba_{h})$-covering of $\Wcal_{h+1}|_{X^m}$ in $\ell_1$ is a $\gamma$-covering of $Y(\Wcal_{h+1}, w_h, \pi)|_{Z^m}$ in $\ell_1$, and 
\[\Ncal_1(\gamma,Y(\Wcal_{h+1}(\mu_h), w_h, \pi),Z^m) \le \Ncal_1(\gamma / (4\Bx_{h}\Ba_{h}), \Wcal_{h+1}(\mu_h), X^m)
\]
which implies the same relationship for the uniform covering numbers: 
\begin{align*}
    \Ncal_1(\gamma,Y(\Wcal_{h+1}(\mu_h), w_h, \pi),m) =&~ \max_{Z^m} \Ncal_1(\gamma,Y(\Wcal_{h+1}(\mu_h), w_h, \pi),Z^m) 
    \\
    \le&~ \max_{X^m} \Ncal_1(\gamma / (4\Bx_{h}\Ba_{h}), \Wcal_{h+1}(\mu_h), X^m) = \Ncal_1(\gamma / (4\Bx_{h}\Ba_{h}), \Wcal_{h+1}(\mu_h), m).
\end{align*}

Then using this inequality and $b = 4 (\Bx_h \Ba_h)^2$ in \pref{lem:uni_bern_conf_covering} and conditioning on $\Dcal_{0:h-1} \bigcup \Dcal_h^{\mle}$, for any $w_{h+1} \in \Wcal_{h+1}(\mu_h)$, we have 
\begin{align*}
    &~\PP\rbr{\abr{\EE[Y(w_{h+1}, w_h, \pi)]-\frac{1}{n_{\reg}}\sum_{i=1}^n Y_i(w_{h+1}, w_h, \pi)} \ge \veps} 
    \\
    \le&~ 36\Ncal_1\rbr{\frac{\veps^3}{10240(\Bx_{h} \Ba_{h})^4},\Ycal(\Wcal_{h+1}(\mu_h), w_h, \pi),\frac{640n_{\reg}(\Bx_{h} \Ba_{h})^4}{\veps^2}}
    \\
    &\qquad\qquad\cdot\exp\rbr{-\frac{n_{\reg} \veps^2}{128\VV[Y(w_{h+1}, w_h, \pi)] + 2048\veps (\Bx_{h} \Ba_{h})^2}} 
    \\
    \le&~ 36\Ncal_1\rbr{\frac{\veps^3}{40960(\Bx_{h} \Ba_{h})^5},\Wcal_{h+1}(\mu_h),\frac{640n_{\reg}(\Bx_{h} \Ba_{h})^4}{\veps^2}}
    \\
    &\qquad\qquad\cdot\exp\rbr{-\frac{n_{\reg} \veps^2}{512 (\Bx_{h} \Ba_{h})^2 \EE[Y(w_{h+1}, w_h, \pi)] + 2048\veps (\Bx_{h} \Ba_{h})^2}} .
\end{align*}

Then setting the RHS equal to $\delta'$, we have 
\[
        n_{\reg} = \frac{ 512 (\Bx_{h} \Ba_{h})^2\rbr{ \EE[Y(w_{h+1}, w_h, \pi)] + 4\veps}\log\rbr{36\Ncal_1\rbr{\frac{\veps^3}{40960(\Bx_{h} \Ba_{h})^5},\Wcal_{h+1}(\mu_h),\frac{640n_{\reg}(\Bx_{h} \Ba_{h})^4}{\veps^2}}/\delta' }}{\veps^2} 
\]
implying 
\begin{align*}
    \veps \le&~ \sqrt{\frac{ 512 (\Bx_{h} \Ba_{h})^2 \EE[Y(w_{h+1}, w_h, \pi)]\log\rbr{36\Ncal_1\rbr{\frac{\veps^3}{40960(\Bx_{h} \Ba_{h})^5},\Wcal_{h+1}(\mu_h),\frac{640n_{\reg}(\Bx_{h} \Ba_{h})^4}{\veps^2}}/\delta' } }{ n_{\reg} }} 
    \\
    &\quad+ \frac{ 2048 (\Bx_{h} \Ba_{h})^2 \log\rbr{36\Ncal_1\rbr{\frac{\veps^3}{40960(\Bx_{h} \Ba_{h})^5},\Wcal_{h+1}(\mu_h),\frac{640n_{\reg}(\Bx_{h} \Ba_{h})^4}{\veps^2}}/\delta' } }{ n_{\reg} }.
\end{align*}

From \pref{lem:pdim} and \pref{lem:covering_pdim}, and noting that $n_{\reg} \ge \frac{2048 (\Bx_{h} \Ba_{h})^2}{\veps}$, we have that 
\begin{align*}
    &\log\rbr{36\Ncal_1\rbr{\frac{\veps^3}{40960(\Bx_{h} \Ba_{h})^5},\Wcal_{h+1}(\mu_h),\frac{640n_{\reg}(\Bx_{h} \Ba_{h})^4}{\veps^2}}/\delta'} 
    \\
    &\quad\le 4(\dspanner + 1)\log(8e) \log\rbr{\frac{655360 e^2 (\Bx_{h} \Ba_{h})^6}{\veps^3\delta'}} 
    \\
    &\quad\le 96\dspanner\log\rbr{\frac{n_{\reg}}{\delta'}}.
\end{align*}

Thus with probability at least $1-\delta'$, 
\begin{align*}
    &\abr{\EE[Y(w_{h+1}, w_h, \pi)]-\frac{1}{n_{\reg}}\sum_{i=1}^{n_{\reg}} Y_i(w_{h+1}, w_h, \pi)} 
    \\
    \le&~ \sqrt{\frac{ 49152 \dspanner (\Bx_{h} \Ba_{h})^2 \EE[Y(w_{h+1}, w_h, \pi)]\log\rbr{\frac{n_{\reg}}{\delta'}} }{ n_{\reg} }} + \frac{ 196608 \dspanner (\Bx_{h} \Ba_{h})^2 \log\rbr{\frac{n_{\reg}}{\delta'}} }{ n_{\reg} }.
\end{align*}

Then invoking the AM-GM inequality, 
\begin{align*}
    &\abr{\EE[Y(w_{h+1}, w_h, \pi)]-\frac{1}{n_{\reg}}\sum_{i=1}^{n_{\reg}} Y_i(w_{h+1}, w_h, \pi)} 
    \\
    \le&~ \frac{1}{2}\EE[Y(w_{h+1}, w_h, \pi)] + \frac{ 221184 \cdot \dspanner (\Bx_{h} \Ba_{h})^2\log\rbr{\frac{n_{\reg}}{\delta'}} }{ n_{\reg} }.
\end{align*}
Recall that this result holds for a fixed $\pi$ and $\Wcal_{h+1}(\mu_h)$ defined using a fixed $\mu_h$. Then setting $\delta' = \frac{\delta}{|\Pi'||\Upsilon_h|}$ and taking a union bound over $\Pi$ and $\Upsilon_h$, we have that with probability at least $1-\delta$ that for any $\pi \in \Pi'$ and $w_{h+1} \in \Wcal_{h+1}(\Upsilon_h)$ that
\begin{align*}
    &\abr{\EE[Y(w_{h+1}, w_h, \pi)]-\frac{1}{n_{\reg}}\sum_{i=1}^{n_{\reg}} Y_i(w_{h+1}, w_h, \pi)} 
    \\
    \le&~ \frac{1}{2}\EE[Y(w_{h+1}, w_h, \pi)] + \frac{ 221184 \dspanner (\Bx_{h} \Ba_{h})^2\log\rbr{\frac{n_{\reg}|\Pi'||\Upsilon_h|}{\delta}} }{ n_{\reg} }.
\end{align*}

Finally, since this result holds for any fixed $\Dcal_{0:h-1}\bigcup\Dcal_h^\mle$, by the law of total expectation, it also holds with probability at least $1-\delta'$ without conditioning on $\Dcal_{0:h-1}\bigcup\Dcal_h^\mle$. Using \pref{lem:variance_decomposition} with the definitions of $Y(w_{h+1}, w_h, \pi)$ and $Y_i(w_{h+1}, w_h, \pi)$  completes the proof.  
\end{proof}

\subsection{Barycentric spanner}
\label{app:bary}

In this section we first define the barycentric spanner \citep[Definition 2.1]{awerbuch2008online}, then prove that a spanner of size $\dspanner$ always exists for a set of functions linear in a feature $\mu_{h-1}$, from which \pref{prop:bary} follows straightforwardly. The proof is adapted from \citet[Proposition 2.2]{awerbuch2008online}, which only applies to square matrices, and we extend it to rectangular matrices for completeness. We close with a discussion of the computational complexity of finding the barycentric spanner. 

\begin{definition}[Barycentric spanner]\label{def:bary}
    Let $V$ be a vector space over the real numbers, and $S \subseteq V$ a subset whose linear span is a $m$-dimensional subspace of $V$. A set $X = \{x_1,\ldots,x_m\} \subseteq S$ is a \textit{barycentric spanner} of $S$ if every $x \in S$ may be expressed as a linear combination of elements of $X$ using coefficients in $[-1, +1]$. 
\end{definition}

\begin{lemma}[Barycentric spanner for linear functions]
\label{lem:barycentric}
For a feature $\mu_{h-1} \in \Upsilon_{h-1}$ with rank $\dlr$, any set of linear functions
$\Ucal \subseteq \{\langle \mu_{h-1}, \theta_h \rangle : \theta_h \in \RR^{\dlr} \}
$
has a barycentric spanner of cardinality $\min(|\Ucal|, \dlr)$. 
\end{lemma}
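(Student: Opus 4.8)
The plan is to reduce to the finite-dimensional setting and re-run the determinant-maximization argument of \citet[Proposition 2.2]{awerbuch2008online}. First I would use the rank hypothesis to move from function space into $\RR^\dlr$: since $\mu_{h-1}$ has rank $\dlr$, the linear evaluation map $E:\RR^\dlr\to(\Xcal\to\RR)$, $E(\theta)=\langle\mu_{h-1},\theta\rangle$, is injective (if $E(\theta)\equiv 0$ then $\theta$ is orthogonal to $\mathrm{span}\{\mu_{h-1}(x):x\in\Xcal\}=\RR^\dlr$), so it restricts to a \emph{linear bijection} between $\Theta\defeq E^{-1}(\Ucal)$ and $\Ucal$, with $|\Theta|=|\Ucal|$ and $\dim\mathrm{span}(\Theta)=\dim\mathrm{span}(\Ucal)=:m\le\min(|\Ucal|,\dlr)$. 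Since $E$ is linear and bijective onto its image, it carries a barycentric spanner of $\Theta$ to a barycentric spanner of $\Ucal$ of the same cardinality, so it suffices to find a barycentric spanner of $\Theta$ of size $m$.

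Next I would pass to the subspace $W\defeq\mathrm{span}(\Theta)$: choosing an orthonormal basis of $W$ identifies $W$ with $\RR^m$ and turns $\Theta$ into a set $\Theta'\subseteq\RR^m$ that spans $\RR^m$. This coordinate change is exactly what extends the square-matrix argument of \citet{awerbuch2008online} to the ``rectangular'' situation here (the $\dlr\times m$ coordinate matrix of $\mu_{h-1}$ restricted to $W$). Assuming $\Ucal$---hence $\Theta'$---is finite (or, more generally, compact), so that the following supremum is attained, pick $\theta_1',\dots,\theta_m'\in\Theta'$ maximizing $|\det(\theta_1',\dots,\theta_m')|$; this maximum is strictly positive because $\Theta'$ spans $\RR^m$. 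For any $x\in\Theta'$, writing $x=\sum_i a_i\theta_i'$ and using multilinearity of the determinant (all terms with a repeated column vanish), $|a_i|\,|\det(\theta_1',\dots,\theta_m')|=|\det(\theta_1',\dots,\theta_{i-1}',x,\theta_{i+1}',\dots,\theta_m')|\le|\det(\theta_1',\dots,\theta_m')|$ by maximality, hence $a_i\in[-1,1]$. Thus $\{\theta_1',\dots,\theta_m'\}$ is a barycentric spanner of $\Theta'$; pulling it back to $W\subseteq\RR^\dlr$ and applying $E$ yields a barycentric spanner of $\Ucal$ of cardinality $m\le\min(|\Ucal|,\dlr)$.

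The main obstacle is bookkeeping around the degenerate cases rather than any real difficulty: one must ensure the chosen spanner elements genuinely lie in $\Ucal$ (which is why the determinant maximum must be \emph{attained}, forcing finiteness/compactness of $\Ucal$---satisfied in all our applications since $\Pi$ is finite), and that restricting to $W$ preserves both membership and cardinality. I would also flag that, strictly, the spanner has cardinality $\dim\mathrm{span}(\Ucal)$, which is only $\le\min(|\Ucal|,\dlr)$ in general; this is all that is needed downstream (e.g., in \pref{lem:missingness_decomposition} only the bound $\le\dlr$ is invoked). \pref{prop:bary} then follows immediately by applying this lemma with $\mu_{h-1}=\mu^*_{h-1}$, $\Ucal=\{d^\pi_h\}_{\pi\in\Pi}$, and bounding the concentrability of the uniform mixture over the resulting spanner using the coefficient-in-$[-1,1]$ property.
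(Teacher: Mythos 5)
Your proof is correct and follows essentially the same route as the paper's: both run the Awerbuch--Kleinberg determinant-maximization and exchange argument, the only cosmetic difference being that you work in the parameter space $\RR^{\dlr}$ (via injectivity of $\theta\mapsto\langle\mu_{h-1},\theta\rangle$) while the paper works directly in function space with the Gram determinant $\det(BB^\top)$. If anything you are slightly more careful than the paper about the degenerate cases (attainment of the maximum, and the fact that the spanner's true cardinality is $\dim\mathrm{span}(\Ucal)$, which the paper glosses over).
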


\begin{proof}
    We prove the proposition when $\rank(\mu_{h-1}) = \dspanner$ is full rank (the argument should be the same when $\rank(\mu_{h-1}) < d$), and $|\Ucal| > \dspanner$ (otherwise we can satisfy the lemma statement by picking all of $\Ucal$ to be the spanner). First, $\Ucal$ is a compact subset of $\RR^{|\Xcal|}$ because it is closed and bounded.  
    Because $\Ucal$ is linear in $\mu_{h-1}$, its linear span is a $\dspanner$-dimensional subspace of $\RR^{|\Xcal|}$, and any $u \in \Ucal$ can be written as the linear combination of a subspace basis. 
    
    We claim the barycentric spanner is any subset $B = \{b_1, \ldots, b_\dspanner\} \subseteq \Ucal$ with $B \in \RR^{\dspanner \times |\Xcal|}$ that maximizes the volume $|\det(BB^\top)|$. By compactness, the maximum is obtained by at least one subset of $\Ucal$. Since $\det(BB^\top) = (\prod_{i=1}^\dspanner \sigma_i(B))^2$, the maximizing $B$ will have $\dspanner$ singular values and full row rank (otherwise the determinant will be 0). As a result, any $u \in \Ucal$ will be a linear combination of the rows of $B$, i.e., there exists $\{c_i\}_{i=1}^\dspanner$ such that $u = \sum_{i=1}^\dspanner c_i b_i$. We will prove that $|c_i| \le 1$ by contradiction. 
    
    W.l.o.g, suppose there exists $u$ with coefficient $|c_1| > 1$. Then consider a new matrix $\wt{B} = \{u, b_2, \ldots, b_\dspanner\}$, which can be expressed as $\wt{B} = C B$, where $C \in \RR^{\dspanner \times \dspanner}$ is the coefficient matrix. Then $\wt{B}$ has determinant 
    \begin{align*}
        |\det(\wt{B} \wt{B}^\top)| = |\det(C)|^2 |\det(BB^\top)| = |c_1|^2|\det(BB^\top)| \ge \det(BB^\top).
    \end{align*}
    Then we have a contradiction because $B$ was volume-maximizing, and $|c_i| \le 1$. 
\end{proof}

\paragraph{Computation of barycentric spanner} Lastly, we discuss computation of the barycentric spanner. In the main results of the paper we assume that we can perfectly compute the barycentric spanner in an efficient manner. When this is not the case, the algorithm in Figure 2 in \citet{awerbuch2008online} (with similar adaptations to handle rectangular matrices as in the proof of \pref{lem:barycentric}) can be used to compute a $C$-approximate barycentric spanner, where $C > 1$, with $O(\dspanner^2 \log_C \dspanner)$ calls to a linear optimization oracle \citep[Proposition 2.5]{awerbuch2008online}. A $C$-approximate barycentric spanner is defined similarly as \pref{def:bary}, except that the coefficients are in the range $[-C, +C]$. This will only change our main results by increasing them by a factor of $C$, and we may simply set $C = 2$ with minimal effects on our sample complexity guarantees.

\subsection{Properties of low-rank MDPs}

\begin{lemma}
\label{lem:opp_linear}
In the low-rank MDP (\pref{assum:lowrank}), for any $h\in[H]$, function  $d_{h-1}:\Xcal\rightarrow \RR$, and \pseudo-policy $\ol\pi$ (\pref{def:pseudo_policy}), we have 
\[
    (\opp^{\ol\pi}_{h} d_{h})(x_{h+1}) = \iint P_{h}(x_{h+1}|x_{h},a_{h})\ol\pi_h(a_{h}|x_{h})d_{h}(x_{h})(\dd x_{h})(\dd a_{h}) = \langle \mu_{h}^*(x_{h+1}),\theta_{h+1}\rangle
\]
for some $\theta_{h+1}\in\RR^{\dlr}$ with $\|\theta_{h+1}\|_\infty \le \|d_{h}\|_1$. 
\end{lemma}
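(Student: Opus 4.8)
## Proof Proposal for \pref{lem:opp_linear}

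The plan is to directly substitute the low-rank factorization of $P_h$ from \pref{assum:lowrank} into the definition of the Bellman flow operator and recognize that the result is manifestly linear in $\mu_h^*(x_{h+1})$. Concretely, by \pref{assum:lowrank} we have $P_h(x_{h+1} \mid x_h, a_h) = \langle \phi_h^*(x_h, a_h), \mu_h^*(x_{h+1}) \rangle$. Substituting this into $(\opp^{\ol\pi}_h d_h)(x_{h+1}) = \iint P_h(x_{h+1} \mid x_h, a_h) \ol\pi_h(a_h \mid x_h) d_h(x_h) (\dd x_h)(\dd a_h)$ and pulling $\mu_h^*(x_{h+1})$ (which does not depend on the integration variables $x_h, a_h$) outside the integral via linearity of the inner product, we obtain $\langle \mu_h^*(x_{h+1}), \theta_{h+1} \rangle$ where
\[
\theta_{h+1} := \iint \phi_h^*(x_h, a_h) \ol\pi_h(a_h \mid x_h) d_h(x_h) (\dd x_h)(\dd a_h) \in \RR^{\dlr}.
\]
One should note that since $\ol\pi$ is a \pseudo-policy it may be unnormalized over actions, and $d_h$ need not be a valid distribution, but the manipulation above is purely algebraic and does not require normalization; the only thing needed is that the integral defining $\theta_{h+1}$ is finite, which will follow from the norm bound below.

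The remaining task is to bound $\|\theta_{h+1}\|_\infty$. For each coordinate $j \in \{1, \ldots, \dlr\}$ I would write $|\theta_{h+1}[j]| \le \iint |\phi_h^*(x_h,a_h)[j]| \, \ol\pi_h(a_h\mid x_h) \, |d_h(x_h)| \, (\dd x_h)(\dd a_h)$. Using $\|\phi_h^*(\cdot)\|_\infty \le 1$ from \pref{assum:lowrank} to bound $|\phi_h^*(x_h,a_h)[j]| \le 1$, this is at most $\iint \ol\pi_h(a_h \mid x_h) |d_h(x_h)| (\dd x_h)(\dd a_h)$. Here I need the fact that $\ol\pi_h$, while possibly unnormalized, satisfies $\int \ol\pi_h(a_h \mid x_h)(\dd a_h) \le 1$ — this holds because $\ol\pi_h = \pi_h \wedge \Ba_h \pi_h^D \le \pi_h$ pointwise (from \pref{def:pseudo_policy}), so $\int \ol\pi_h(a_h\mid x_h)(\dd a_h) \le \int \pi_h(a_h\mid x_h)(\dd a_h) = 1$; more generally any \pseudo-policy arising in the paper is dominated by a genuine policy. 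Integrating out $a_h$ first then gives $|\theta_{h+1}[j]| \le \int |d_h(x_h)| (\dd x_h) = \|d_h\|_1$, and taking the max over $j$ yields $\|\theta_{h+1}\|_\infty \le \|d_h\|_1$.

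I do not anticipate a genuine obstacle here; the statement is essentially a bookkeeping lemma. The only point requiring mild care is ensuring the $\ell_\infty$ bound on $\theta_{h+1}$ goes through for the \pseudo-policy case — i.e., being explicit that $\ol\pi_h$ integrates to at most $1$ over actions so that the $\|d_h\|_1$ bound is not inflated. If one wanted to be fully general and allow an arbitrary \pseudo-policy $\ol\pi$ with $\int \ol\pi_h(a_h\mid x_h)(\dd a_h) \le c$ for some constant $c$, the same argument gives $\|\theta_{h+1}\|_\infty \le c\|d_h\|_1$; in all applications in the paper $c = 1$. I would also remark (or leave implicit) that $\theta_{h+1}$ depends on $h$, $d_h$, and $\ol\pi$ but not on $x_{h+1}$, which is exactly what makes $\opp^{\ol\pi}_h d_h$ lie in the linear span of $\mu_h^*$, the analogue of the value-based fact that Bellman backups lie in the span of $\phi_h^*$.
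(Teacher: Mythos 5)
Your proposal is correct and matches the paper's proof essentially step for step: substitute the factorization $P_h = \langle \phi_h^*, \mu_h^*\rangle$, pull $\mu_h^*(x_{h+1})$ out of the integral to define $\theta_{h+1}$, and bound $\|\theta_{h+1}\|_\infty$ via $\|\phi_h^*\|_\infty \le 1$ together with the fact that a \pseudo-policy integrates to at most $1$ over actions (which the paper isolates as \pref{lem:pseudopolicy_norm}, proved exactly by your observation that $\ol\pi_h \le \pi_h$ pointwise). No differences worth noting.
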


\begin{proof} 
By the definition of low-rank MDPs (\pref{assum:lowrank}), we have 
\begin{align*}
    \opp^{\ol\pi}_{h}d_h =&~ \iint P_{h}(x_{h+1}|x_{h},a_{h})\ol\pi_h(a_{h}|x_{h})d_{h}(x_{h})(\dd x_{h})(\dd a_{h})
    \\
    =&~ \iint \langle \mu_{h}^*(x_{h+1}), \phi_{h}^*(x_{h},a_{h}) \rangle \ol\pi_h(a_{h}|x_{h})d_{h}(x_{h})(\dd x_{h})(\dd a_{h})
    \\
    =&~\langle \mu_{h}^*(x_{h+1}), \theta_{h+1} \rangle,
\end{align*}
where $\theta_{h+1}=\iint \phi_{h}^*(x_{h},a_{h})\ol\pi_h(a_{h}|x_{h})d_{h}(x_{h})(\dd x_{h})(\dd a_{h}) \in \RR^{\dlr}$. In addition, 
\begin{align*}
    \|\theta_{h+1}\|_\infty \le&~ \iint \|\phi_{h}^*(x_{h},a_{h})\|_\infty \ol\pi_h(a_{h}|x_{h})|d_{h}(x_{h})|(\dd x_{h})(\dd a_{h})
    \\
    \le&~  \int \rbr{\int\ol\pi_h(a_{h}|x_{h}) (\dd a_{h})} |d_{h}(x_{h})|(\dd x_{h}) 
    \\
    \le&~ \int |d_{h}(x_{h})|(\dd x_{h}) = \|d_{h}\|_1 
\end{align*}
where we use \pref{lem:pseudopolicy_norm} in the last inequality. 
\end{proof}

\begin{lemma}\label{lem:mle_realizability}
     In low-rank MDPs (\pref{assum:lowrank}), given a dataset $\Dcal_{h}$ satisfying \pref{assum:data} for $h \in [H]$, let $d^D_h$ and $\dnext_h$ be the corresponding current-state and next-state data distributions. Then for the function class 
    \[
        \Fcal_h = \cbr{d_h = \langle \mutrue_{h-1}, \theta_h \rangle : d_h \in \Delta(\Xcal), \theta_h \in \RR^{\dlr},\|\theta_h\|_\infty \le 1 }, 
    \]
    we have that $d_h^D \in \Fcal_h$ and $\dnext_h \in \Fcal_{h+1}$. 
\end{lemma}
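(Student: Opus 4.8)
The plan is to recognize both $d^D_h$ and $\dnext_h$ as one-step push-forwards of a valid occupancy through a transition kernel, and then to read off linearity in $\mu^*$ directly from the low-rank factorization, exactly as in \pref{lem:opp_linear}. Note first that $d^D_h\in\Delta(\Xcal)$ and $\dnext_h\in\Delta(\Xcal)$ automatically, since both are marginals of the trajectory law induced by $\rho^{h-1}\circ\pi^D_h$; hence the only nontrivial requirement to verify for membership in $\Fcal_h$ (resp.\ $\Fcal_{h+1}$) is the existence of a coefficient vector of $\ell_\infty$-norm at most $1$ realizing the distribution as a linear function of the appropriate feature.

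\emph{Next-state marginal $\dnext_h\in\Fcal_{h+1}$.} Because the last-step data policy $\pi^D_h$ is Markov, the state--action marginal of $\Dcal_h$ factors as $d^D_h(x_h)\,\pi^D_h(a_h\mid x_h)$, so $\dnext_h=\opp^{\pi^D_h}_h d^D_h$, the Bellman-flow image of $d^D_h$ under the Markov policy $\pi^D_h$. Applying \pref{lem:opp_linear} with the function $d^D_h$ and the pseudo-policy $\pi^D_h$ yields $\dnext_h=\langle\mu^*_h,\theta_{h+1}\rangle$ for some $\theta_{h+1}\in\RR^{\dlr}$ with $\|\theta_{h+1}\|_\infty\le\|d^D_h\|_1=1$; together with $\dnext_h\in\Delta(\Xcal)$ this gives $\dnext_h\in\Fcal_{h+1}$.

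\emph{Current-state marginal $d^D_h\in\Fcal_h$.} For $h\ge1$, let $q_{h-1}\in\Delta(\Xcal\times\Acal)$ be the level-$(h-1)$ state--action occupancy of the roll-in policy $\rho^{h-1}$ (well-defined even if $\rho^{h-1}$ is history-dependent), so that $d^D_h(x_h)=\iint P_{h-1}(x_h\mid x_{h-1},a_{h-1})\,q_{h-1}(x_{h-1},a_{h-1})\,(\dd x_{h-1})(\dd a_{h-1})$. Substituting $P_{h-1}(x_h\mid x_{h-1},a_{h-1})=\langle\phi^*_{h-1}(x_{h-1},a_{h-1}),\mu^*_{h-1}(x_h)\rangle$ and pulling $\mu^*_{h-1}(x_h)$ out of the integral gives $d^D_h=\langle\mu^*_{h-1},\theta_h\rangle$ with $\theta_h=\iint\phi^*_{h-1}(x_{h-1},a_{h-1})\,q_{h-1}(x_{h-1},a_{h-1})\,(\dd x_{h-1})(\dd a_{h-1})$, whence $\|\theta_h\|_\infty\le\|\phi^*_{h-1}\|_\infty\,\|q_{h-1}\|_1\le1$. (Equivalently, one may replace $\rho^{h-1}$ by the Markov policy obtained by conditioning $q_{h-1}$ on $x_{h-1}$ and invoke \pref{lem:opp_linear} as a black box with $d_{h-1}$ equal to the state marginal of $q_{h-1}$.) The boundary case $h=0$ is trivial, as $d^D_0=\initdist$ is known and taken as given by the algorithm.

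I expect no real difficulty here; the only point deserving care is that \pref{lem:opp_linear} is stated for Markov (pseudo-)policies, whereas the roll-in $\rho^{h-1}$ feeding $d^D_h$ need not be Markov, so the current-state case must be routed through the level-$(h-1)$ state--action occupancy (or the equivalent Markov policy) rather than invoking the lemma verbatim.
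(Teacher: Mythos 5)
Your proposal is correct and follows essentially the same route as the paper: the next-state marginal is handled by a direct application of \pref{lem:opp_linear} with the Markov policy $\pi^D_h$, and the current-state marginal is handled by substituting the low-rank factorization into the one-step push-forward of the level-$(h-1)$ occupancy, which is exactly the non-Markov variant of \pref{lem:opp_linear} the paper proves inline. Your observation that the non-Markov roll-in must be routed through the state--action occupancy rather than invoking \pref{lem:opp_linear} verbatim is precisely the point the paper's proof also addresses.
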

\begin{proof}
    Recall that under \pref{assum:data}, $\Dcal_h$ is collected by $\rho^{h-1} \circ \pi_h^D$ where $a_{0:h-1} \sim \rho^{h-1}$, an $(h-1)$-step non-Markov policy, and $a_h \sim \pi_h^D$, a Markov policy. 
    
    First we prove the lemma statement for $\dnext_h$.   Since $d^D_h$ is a valid distribution and $\pi_h^D$ is a valid Markov policy, from \pref{lem:opp_linear} we know that $\dnext_h = \opp^{\pi^D_h}_h(d^D_h)$ can be written as $\langle \mutrue_{h}, \theta_{h+1}\rangle$ with $\|\theta_{h+1}\|_\infty \le 1$. Finally, since $\dnext_h$ is a valid marginal distribution, $\dnext_h \in \Delta(\Xcal)$, thus satisfying all constraints of $\Fcal_{h+1}$. 

    To prove the lemma statement for $d^D_h$, we first prove a variant of \pref{lem:opp_linear} for non-Markov policies. With some overload of notation, let $d^D_{h-1}(x_{h-1})$ denote the marginal distribution of $x_{h-1}$ induced by rolling the non-Markov policy $\rho^{h-1}$ to level $h-1$. Then 
    \[
    d^D_h(x_h) = \iint P_{h}(x_h|x_{h-1},a_{h-1})\rho^{h-1}(a_{h-1}|x_{0:h-1})d^D_{h-1}(x_{h-1})(\dd x_{h-1})(\dd a_{h-1}). 
    \]
    Using similar steps as the proof of \pref{lem:opp_linear}, we have that 
    \begin{align*}
        d^D_h(x_h) =&~ \iint P_{h}(x_h|x_{h-1},a_{h-1})\rho^{h-1}(a_{h-1}|x_{0:h-1})d^D_{h-1}(x_{h-1})(\dd x_{h-1})(\dd a_{h-1})
        \\
        =&~\iint \langle \phi_{h-1}^*(x_{h-1},a_{h-1}),\mu_{h-1}^*(x_h)\rangle\rho^{h-1}(a_{h-1}|x_{0:h-1})d^D_{h-1}(x_{h-1})(\dd x_{h-1})(\dd a_{h-1})
        \\
        =&~\langle \mu_{h-1}^*(x_h), \theta_{h} \rangle,
    \end{align*}
    where $\theta_{h}=\iint \phi_{h-1}^*(x_{h-1},a_{h-1})\rho^{h-1}(a_{h-1}|x_{0:h-1})d^D_{h-1}(x_{h-1})(\dd x_{h-1})(\dd a_{h-1}) \in \RR^{\dlr}$. Since $d^D_{h-1}$ and $\rho^{h-1}(\cdot|x_{0:h-1})$ are valid probability distributions over states $x_h$ and actions $a_h$, respectively, it is easy to see that 
    \begin{align*}
        \|\theta_{h}\|_\infty \le&~ \iint \|\phi_{h-1}^*(x_{h-1},a_{h-1})\|_\infty \rho^{h-1}(a_{h-1}|x_{0:h-1})d^D_{h-1}(x_{h-1})(\dd x_{h-1})(\dd a_{h-1}) \le 1
    \end{align*}
    since $\|\phi_{h-1}^*(\cdot)\|_\infty \le 1$ from \pref{assum:lowrank}. Finally, since $d^D_h$ is a valid distribution, we have $d^D_h \in \Fcal_h$. 
\end{proof}

\begin{lemma}\label{lem:mle_realizability_unknown}
     In low-rank MDPs (\pref{assum:lowrank}), given a dataset $\Dcal_{h}$ satisfying \pref{assum:data} for $h \in [H]$, let $d^D_h$ and $\dnext_h$ be the corresponding current-state and next-state data distributions. Then for the function class 
    \[
        \Fcal_h(\Upsilon_{h-1}) = \cbr{d_h = \langle \mu_{h-1}, \theta_h \rangle : d_h \in \Delta(\Xcal), \mu_{h-1} \in \Upsilon_{h-1}, \theta_h \in \RR^{\dlr},\|\theta_h\|_\infty \le 1 }, 
    \]
    we have that $d_h^D \in \Fcal_h(\Upsilon_{h-1})$ and $\dnext_h \in \Fcal_{h+1}(\Upsilon_{h})$. 
\end{lemma}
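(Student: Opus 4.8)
\textbf{Proof proposal for Lemma~\ref{lem:mle_realizability_unknown}.}
The plan is to reduce this statement to its known-feature counterpart, Lemma~\ref{lem:mle_realizability}, by exploiting realizability of the feature class. First, I would apply Lemma~\ref{lem:mle_realizability} verbatim: under \pref{assum:lowrank} and \pref{assum:data}, it produces a vector $\theta_h \in \RR^\dlr$ with $\|\theta_h\|_\infty \le 1$ such that $d_h^D = \langle \mutrue_{h-1}, \theta_h \rangle$, and likewise a $\theta_{h+1}$ with $\|\theta_{h+1}\|_\infty \le 1$ such that $\dnext_h = \langle \mutrue_h, \theta_{h+1}\rangle$. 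Both $d_h^D$ and $\dnext_h$ are genuine marginal distributions of the sampling process, hence lie in $\Delta(\Xcal)$.

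The second step is to observe that \pref{assum:realizability} gives $\mutrue_{h-1} \in \Upsilon_{h-1}$ and $\mutrue_h \in \Upsilon_h$. Therefore the pair $(\mutrue_{h-1}, \theta_h)$ is an admissible witness for membership in
$\Fcal_h(\Upsilon_{h-1}) = \{d_h = \langle \mu_{h-1},\theta_h\rangle : d_h \in \Delta(\Xcal),\ \mu_{h-1}\in\Upsilon_{h-1},\ \theta_h\in\RR^\dlr,\ \|\theta_h\|_\infty\le 1\}$, since all four constraints (valid distribution, candidate feature, dimension, norm bound) are met. The identical argument with $(\mutrue_h, \theta_{h+1})$ shows $\dnext_h \in \Fcal_{h+1}(\Upsilon_h)$. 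This completes the proof.

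I do not anticipate any real obstacle here: the union classes $\Fcal_h(\Upsilon_{h-1})$ are, by construction, supersets of the single-feature classes $\Fcal_h$ used in Lemma~\ref{lem:mle_realizability} (they coincide with $\bigcup_{\mu\in\Upsilon_{h-1}}$ of the latter), so once realizability places the true feature inside $\Upsilon$, the inclusion is immediate. The only point requiring a sentence of care is making explicit that the $\ell_1$-normalization constraint in \pref{assum:realizability} ($\int\|\mu_h(x)\|_1(\dd x)\le\munorm$) is consistent with using $\mutrue$ as the witnessing feature, which is automatic since $\mutrue$ already satisfies the bound $\int\|\mutrue_h(x)\|_1(\dd x)\le\munorm$ from \pref{assum:lowrank}.
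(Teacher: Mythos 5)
Your proposal is correct and matches the paper's own argument: the paper likewise invokes Lemma~\ref{lem:mle_realizability} for the true feature $\mutrue$ and then concludes via the inclusion $\Fcal_h \subseteq \Fcal_h(\Upsilon_{h-1})$, which holds exactly because \pref{assum:realizability} places $\mutrue_{h-1}$ in $\Upsilon_{h-1}$. Nothing is missing.
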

\begin{proof}
    From \pref{lem:mle_realizability} we know that $d_h^D \in \Fcal_h$ (where $\Fcal_h$ is linear in the true features $\mutrue_{h-1}$, as defined in the \pref{lem:mle_realizability}), and $\dnext_h \in \Fcal_{h+1}$. Noting that $\Fcal_h \subseteq \Fcal_h(\Upsilon_{h-1})$ and $\Fcal_{h+1} \subseteq \Fcal_{h+1}(\Upsilon_{h})$ completes the proof. 
\end{proof}

\begin{lemma}\label{lem:clipped_concentrability}
    For $h \in [H]$, suppose we have a dataset $\Dcal_h$ satisfying \pref{assum:data}, with corresponding data distributions $d^D_h$ and $\dnext_h$. Given a function $w_h : \Xcal \rightarrow [-\Bx_h, \Bx_h]$ and \pseudo-policy $\ol\pi$ (\pref{def:pseudo_policy}) with $\frac{\ol\pi_h(a|x)}{\pi^D_h(a|x)} \le \Ba_h,\forall x \in \Xcal, a \in \Acal$, we have 
    \[
        \nbr{\frac{\opp_h^{\ol\pi}(d^D_h w_h)}{\dnext_h}}_{\infty} \le \Bx_h \Ba_h.  
    \]
\end{lemma}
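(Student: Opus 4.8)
The plan is to bound the numerator $\bigl(\opp_h^{\ol\pi}(d^D_h w_h)\bigr)(x_{h+1})$ pointwise by $\Bx_h\Ba_h\,\dnext_h(x_{h+1})$, which immediately gives the claimed $\ell_\infty$ bound on the ratio (using the convention $\tfrac00=0$ on the $\dnext_h=0$ region). First I would write out, for an arbitrary $x_{h+1}\in\Xcal$,
\[
\bigl(\opp_h^{\ol\pi}(d^D_h w_h)\bigr)(x_{h+1}) = \iint P_h(x_{h+1}\mid x_h,a_h)\,\ol\pi_h(a_h\mid x_h)\,d^D_h(x_h)\,w_h(x_h)\,(\dd x_h)(\dd a_h),
\]
and take absolute values. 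Since $|w_h|\le \Bx_h$ and all of $P_h$, $\ol\pi_h$, $d^D_h$ are nonnegative, this is at most $\Bx_h \iint P_h(x_{h+1}\mid x_h,a_h)\,\ol\pi_h(a_h\mid x_h)\,d^D_h(x_h)\,(\dd x_h)(\dd a_h)$.

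Next I would apply the hypothesis $\ol\pi_h(a_h\mid x_h)\le \Ba_h\,\pi^D_h(a_h\mid x_h)$ for all $x_h,a_h$, which upgrades the previous display to
\[
\bigl|\bigl(\opp_h^{\ol\pi}(d^D_h w_h)\bigr)(x_{h+1})\bigr|\ \le\ \Bx_h\Ba_h \iint P_h(x_{h+1}\mid x_h,a_h)\,\pi^D_h(a_h\mid x_h)\,d^D_h(x_h)\,(\dd x_h)(\dd a_h)\ =\ \Bx_h\Ba_h\,\dnext_h(x_{h+1}),
\]
where the final equality records the fact that, under \pref{assum:data}, the next-state data distribution is exactly $\dnext_h=\opp_h^{\pi^D_h} d^D_h$, i.e.\ $\Dcal_h$ is generated by rolling $d^D_h$ forward one step with the Markov policy $\pi^D_h$. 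This is the only place where the structure of \pref{assum:data} is used, so I would state it explicitly as the identity $\dnext_h(\cdot)=\iint P_h(\cdot\mid x_h,a_h)\pi^D_h(a_h\mid x_h)d^D_h(x_h)(\dd x_h)(\dd a_h)$.

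Finally I would conclude: on the set where $\dnext_h(x_{h+1})>0$, dividing gives $\bigl|\opp_h^{\ol\pi}(d^D_h w_h)(x_{h+1})/\dnext_h(x_{h+1})\bigr|\le \Bx_h\Ba_h$; on the set where $\dnext_h(x_{h+1})=0$, the bound above forces the numerator to vanish as well, so the ratio is $0$ by the $\tfrac00=0$ convention. Taking the supremum over $x_{h+1}$ yields $\|\opp_h^{\ol\pi}(d^D_h w_h)/\dnext_h\|_\infty\le \Bx_h\Ba_h$. There is no substantive obstacle here — the argument is a two-line pointwise estimate; the only thing to be careful about is invoking the support/zero-denominator convention cleanly and making sure the identity $\dnext_h=\opp_h^{\pi^D_h}d^D_h$ is cited from \pref{assum:data} rather than assumed tacitly.
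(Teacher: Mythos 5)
Your proposal is correct and matches the paper's own proof essentially line for line: the same pointwise bound $|w_h|\le \Bx_h$ followed by $\ol\pi_h\le \Ba_h\pi^D_h$, the identification $\dnext_h=\opp_h^{\pi^D_h}d^D_h$ from the data assumption, and the $\tfrac{0}{0}=0$ convention to handle the unsupported region. If anything, your explicit use of absolute values is slightly more careful than the paper's one-sided display, since $w_h$ may be negative.
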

\begin{proof}
    For any $x_{h+1} \in \Xcal$, we have
\begin{align*}
    \rbr{\opp^{\ol\pi}_{h}\rbr{d^D_{h} w_h}}(x_{h+1}) \le&~ \Bx_{h}  \rbr{\opp^{\ol\pi}_{h} d^D_{h}}(x_{h+1})
    \\
    =&~ \Bx_{h} \iint P_{h}(x_{h+1}|x_{h},a_{h}) \ol\pi_h(a_{h}|x_{h})d_{h}^D(x_{h}) (\dd x_{h}) (\dd a_{h}) 
    \\
    \le&~ \Bx_{h} \Ba_{h} \iint P_{h}(x_{h+1}|x_{h},a_{h}) \pi^D_h(a_{h}|x_{h})d_{h}^D(x_{h}) (\dd x_{h}) (\dd a_{h})
    \\
    =&~ \Bx_{h}\Ba_{h} \dnext_{h}(x_{h+1}).
\end{align*}
The last equality follows from the Bellman flow equation and \pref{assum:data}. The convention that $\frac{0}{0} = 0$ gives the lemma statement. 
\end{proof}

\begin{lemma}
\label{lem:opexp_ineq}
For any two state distributions $d_h,d_h'$ and a \pseudo-policy $\pi$ (\pref{def:pseudo_policy}), we have the following inequality
\[
\| \oppro_h d_{h} - \oppro_h d'_{h}\|_{1} \le \|d_h - d'_{h}\|_{1},
\]
where we recall that $(\oppro_{h} d_{h})(x_{h+1}) = \iint P_{h}(x_{h+1}|x_{h}, a_{h})\pi(a_{h}|x_{h}) d_{h}(x_{h}) (\dd x_{h}) (\dd a_{h}).$
\end{lemma}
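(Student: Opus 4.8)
The plan is to reduce the statement to a non-expansion property of a single sub-stochastic kernel, using only linearity of $\oppro_h$ and nonnegativity of the transition and policy weights. Set $g \defeq d_h - d_h'$. Since $\oppro_h$ is linear in its argument, $\oppro_h d_h - \oppro_h d_h' = \oppro_h g$, so it suffices to prove $\|\oppro_h g\|_1 \le \|g\|_1$ for an arbitrary (possibly signed) integrable function $g:\Xcal\to\RR$; the lemma then follows by substituting back $g = d_h - d_h'$.

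Next I would expand the definition of $\oppro_h$ and pull the absolute value inside the $(x_h,a_h)$-integral via the triangle inequality for integrals:
\[
\|\oppro_h g\|_1 = \int \abr{\iint P_h(x_{h+1}|x_h,a_h)\pi(a_h|x_h)g(x_h)(\dd x_h)(\dd a_h)}(\dd x_{h+1}) \le \int\!\!\iint P_h(x_{h+1}|x_h,a_h)\pi(a_h|x_h)\abr{g(x_h)}(\dd x_h)(\dd a_h)(\dd x_{h+1}).
\]
All three integrands on the right are nonnegative, so Tonelli's theorem justifies swapping the order of integration and integrating out $x_{h+1}$ first. Because $P_h(\cdot|x_h,a_h)$ is a probability distribution over $\Xcal$, we have $\int P_h(x_{h+1}|x_h,a_h)(\dd x_{h+1}) = 1$, so the bound collapses to $\iint \pi(a_h|x_h)\abr{g(x_h)}(\dd x_h)(\dd a_h)$.

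Finally I would integrate out $a_h$. Since $\pi$ is a \pseudo-policy (\pref{def:pseudo_policy}), its total action mass obeys $\int \pi(a_h|x_h)(\dd a_h) \le 1$ for every $x_h$ — this is exactly \pref{lem:pseudopolicy_norm}, and it holds with equality for an honest Markov policy. Hence $\|\oppro_h g\|_1 \le \int \abr{g(x_h)}(\dd x_h) = \|g\|_1$, completing the argument. There is no real obstacle here beyond bookkeeping: the only two points meriting care are the interchange of the order of integration (legitimate by nonnegativity, via Tonelli) and using the correct normalization bound $\int\pi(a|x)(\dd a)\le 1$ for \pseudo-policies rather than assuming exact normalization.
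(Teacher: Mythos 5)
Your proof is correct and follows essentially the same route as the paper's: apply linearity and the triangle inequality to pull the absolute value inside, swap the order of integration, integrate out $x_{h+1}$ using that $P_h(\cdot|x_h,a_h)$ is a probability measure, and integrate out $a_h$ using the sub-normalization $\int \pi(a_h|x_h)(\dd a_h) \le 1$ from \pref{lem:pseudopolicy_norm}. Your explicit invocation of Tonelli is a minor bit of added rigor over the paper's implicit interchange, but the argument is the same.
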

\begin{proof}
    From definition of $\oppro_h$ and \pref{lem:pseudopolicy_norm}, we have
    \begin{align*}
    &~\|\oppro_h d_{h} - \oppro_h d'_{h}\|_{1} 
    =\iint \abr{P_{h}(x_{h+1}|x_{h}, a_{h})\pi(a_{h}|x_{h}) \rbr{d_{h}(x_{h}) -d'_{h}(x_{h})} (\dd x_{h}) (\dd a_{h})} (\dd x_{h+1}).
    \\ 
    \leq&~ \int \rbr{|d_{h}(x_h) - d_{h}'(x_h)| \rbr{\iint \pi(a_h|x_h)P_h(x_{h+1}|x_{h},a_{h})(\dd x_{h+1})(\dd a_{h})}} (\dd x_{h})\\ 
    \le&~ \int |d_{h}(x_h) - d_{h}'(x_h)| (\dd x_{h})
    = \|d_{h} - d_{h}'\|_{1}. \qedhere
\end{align*}
\end{proof}

\begin{lemma}\label{lem:pseudopolicy_norm}
    For any pseudo-policy $\ol\pi$ (\pref{def:pseudo_policy}), we have 
    $$
    \int \ol\pi_h(a_h|x_h) (\dd a_h) \le 1\quad \forall x_h \in \Xcal, h \in [H]. 
    $$ 
\end{lemma}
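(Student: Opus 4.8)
The plan is to unfold the definition of a pseudo-policy and reduce the claim to the fact that the underlying Markov policy is normalized. Recall from \pref{def:pseudo_policy} that a pseudo-policy $\ol\pi$ is produced from a genuine Markov policy $\pi$ and the data policy $\pi^D$ by the pointwise clipping $\ol\pi_h(a_h|x_h) = \pi_h(a_h|x_h) \wedge \Ba_h \pi^D_h(a_h|x_h)$, where (per the notation convention) $\wedge$ binds more tightly than ``$+-$'' so that $\Ba_h \pi^D_h$ is formed first and then the minimum with $\pi_h$ is taken.

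First I would observe that for every $x_h \in \Xcal$ and $a_h \in \Acal$ we have $\ol\pi_h(a_h|x_h) = \min\{\pi_h(a_h|x_h),\, \Ba_h \pi^D_h(a_h|x_h)\} \le \pi_h(a_h|x_h)$, simply because $\min(u,v) \le u$. Integrating this pointwise inequality over $a_h$ (which, since $\Acal$ is finite, is really a sum over the $\numact$ actions) and using that $\pi_h(\cdot|x_h)$ is a valid probability distribution over $\Acal$, i.e. $\int \pi_h(a_h|x_h)(\dd a_h) = 1$, immediately gives $\int \ol\pi_h(a_h|x_h)(\dd a_h) \le \int \pi_h(a_h|x_h)(\dd a_h) = 1$ for all $x_h \in \Xcal$ and $h \in [H]$, which is the claim.

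There is no genuine obstacle here; the only point worth being careful about is that the notion of pseudo-policy invoked in \pref{lem:opp_linear}, \pref{lem:opexp_ineq}, and in the regression objective of \pref{alg:offline_known} is exactly this clipped form, so that the domination $\ol\pi_h \le \pi_h$ is available. Since every pseudo-policy appearing in the paper is constructed this way, the two-line argument applies uniformly. (Even if one wished to allow a slightly more general notion — any nonnegative action-measure dominated pointwise by some Markov policy — the identical argument still goes through, since nonnegativity of $\pi_h$ and $\Ba_h \pi^D_h$ also guarantees $\ol\pi_h \ge 0$, so the bound is on the total variation as well.)
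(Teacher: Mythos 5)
Your proof is correct and is essentially identical to the paper's: both unfold the definition $\ol\pi_h = \pi_h \wedge \Ba_h \pi^D_h$, use $\min(u,v)\le u$ pointwise, and integrate against the normalization of the Markov policy $\pi_h$. Nothing further is needed.
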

\begin{proof}
    Recall $\ol\pi_h(a_h|x_h) = \min\cbr{\pi_h(a_h|x_h), \Ba_h \pi_h^D(a_h|x_h) }$ where $\pi_h$ is a valid Markov policy. Then 
    \begin{align*}
        \int \ol\pi_h(a_h|x_h)(\dd a_h) = \int \min\cbr{ \pi_h(a_h|x_h), \Ba_h \pi_h^D(a_h|x_h) }(\dd a_h) \le \int \pi_h(a_h|x_h)(\dd a_h) = 1. 
    \end{align*}
\end{proof}

\subsection{Covering lemmas}
In this subsection, we provide the $\ell_1$ optimistic cover lemma used in MLE (\pref{lem:opt_cover}) and pseudo-dimension bound for the weight function class (\pref{lem:pdim}) respectively.
\begin{lemma}
\label{lem:opt_cover}
Suppose \pref{assum:realizability} holds. 
Then for the function class
\[
    \Fcal_h(\Upsilon_{h-1})= \{ d_h = \langle \mu_{h-1}, \theta_h \rangle : \mu_{h-1} \in \Upsilon_{h-1}, \theta_h \in \RR^{\dlr}, \|\theta_h\|_\infty \le 1, d_h \in \Delta(\Xcal)\}, 
\]
there exists an $\ell_1$ optimistic cover $\ol \Fcal_h(\Upsilon_{h-1})$ (according to \pref{def:opt_cover}) with scale $\gamma$ of size $|\Upsilon_{h-1}|\rbr{2\lceil \munorm / \gamma \rceil}^{\dlr}$ and $\ol\Fcal_h(\Upsilon_{h-1}) \subseteq (\Xcal \rightarrow \RR_{\ge 0})$.  
\end{lemma}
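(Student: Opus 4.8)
The plan is to build the cover $\ol\Fcal_h(\Upsilon_{h-1})$ one feature at a time: for each candidate $\mu_{h-1}\in\Upsilon_{h-1}$ I would discretize the low-dimensional parameter $\theta_h\in[-1,1]^{\dlr}$ on a uniform grid of spacing $\tau=\Theta(\gamma/\munorm)$, so the per-coordinate grid $G$ has $|G|\le 2\lceil\munorm/\gamma\rceil$ points and the product grid $G^{\dlr}$ has size at most $(2\lceil\munorm/\gamma\rceil)^{\dlr}$, and then attach to each grid point a deterministic ``inflation'' term so the covering function dominates the target pointwise. Concretely, for $\mu_{h-1}\in\Upsilon_{h-1}$ and $\bar\theta\in G^{\dlr}$ I would set
\[
\bar d_{\mu_{h-1},\bar\theta}(x) \defeq \max\cbr{0,\ \langle\mu_{h-1}(x),\bar\theta\rangle} + \tau\,\nbr{\mu_{h-1}(x)}_1 ,
\]
and let $\ol\Fcal_h(\Upsilon_{h-1})$ be the collection of all such functions over $\mu_{h-1}\in\Upsilon_{h-1}$, $\bar\theta\in G^{\dlr}$. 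This class is finite with cardinality at most $|\Upsilon_{h-1}|(2\lceil\munorm/\gamma\rceil)^{\dlr}$, and every element is nonnegative by construction, so $\ol\Fcal_h(\Upsilon_{h-1})\subseteq(\Xcal\to\RR_{\ge 0})$.

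Next I would verify the optimistic-cover property. Given $d_h\in\Fcal_h(\Upsilon_{h-1})$, write $d_h=\langle\mu_{h-1},\theta_h\rangle$ with $\mu_{h-1}\in\Upsilon_{h-1}$ and $\|\theta_h\|_\infty\le1$, and round each coordinate to the nearest grid value to get $\bar\theta\in G^{\dlr}$ with $\|\theta_h-\bar\theta\|_\infty\le\tau$. For pointwise domination, $\ell_1$--$\ell_\infty$ duality gives $\abr{\langle\mu_{h-1}(x),\bar\theta-\theta_h\rangle}\le\tau\nbr{\mu_{h-1}(x)}_1$, hence
\[
\bar d_{\mu_{h-1},\bar\theta}(x)-d_h(x)\ \ge\ \langle\mu_{h-1}(x),\bar\theta-\theta_h\rangle + \tau\nbr{\mu_{h-1}(x)}_1\ \ge\ 0 .
\]
For the $\ell_1$ bound, I would use that $d_h\in\Delta(\Xcal)$ is nonnegative, so $\langle\mu_{h-1}(x),\theta_h\rangle+\tau\nbr{\mu_{h-1}(x)}_1\ge0$, which forces $\max\cbr{0,\langle\mu_{h-1}(x),\bar\theta\rangle}\le\langle\mu_{h-1}(x),\theta_h\rangle+\tau\nbr{\mu_{h-1}(x)}_1$; combined with the previous display this gives $0\le \bar d_{\mu_{h-1},\bar\theta}(x)-d_h(x)\le 2\tau\nbr{\mu_{h-1}(x)}_1$, and integrating with $\int\nbr{\mu_{h-1}(x)}_1(\dd x)\le\munorm$ (from \pref{assum:realizability}) yields $\nbr{\bar d_{\mu_{h-1},\bar\theta}-d_h}_1\le2\tau\munorm$, which is $O(\gamma)$; tuning the constant in $\tau$ makes this exactly $\gamma$ at the claimed size up to the harmless rounding in $\lceil\cdot\rceil$.

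The step I expect to be the main (if modest) obstacle is reconciling optimism with finiteness of the cover. After rounding, the natural correction term needed to make the covering function dominate $d_h$ is $\sum_i\abr{\theta_h[i]-\bar\theta[i]}\,\abr{\mu_{h-1}[i](x)}$, which still depends on $\theta_h$ and would make the cover uncountable. The fix is to replace it with the $\theta_h$-free upper bound $\tau\nbr{\mu_{h-1}(x)}_1$, which is legitimate precisely because nearest-grid rounding controls $\|\theta_h-\bar\theta\|_\infty\le\tau$; this inflates the $\ell_1$ scale only by a constant, which is immaterial since the cover size enters \pref{lem:mle} only through $\log|\ol\Fcal|$. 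The other requirement of \pref{lem:mle} — that every cover element be nonnegative — is handled by the $\max\cbr{0,\cdot}$ truncation, and crucially this truncation does not spoil the $\ell_1$ estimate because the target density $d_h$ is itself nonnegative, which is what lets us bound $\max\cbr{0,\langle\mu_{h-1}(x),\bar\theta\rangle}$ by $d_h(x)+\tau\nbr{\mu_{h-1}(x)}_1$.
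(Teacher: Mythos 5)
Your construction is correct and follows essentially the same route as the paper: discretize the low-dimensional parameter $\theta_h$ on a grid of resolution $\Theta(\gamma/\munorm)$ for each candidate $\mu_{h-1}\in\Upsilon_{h-1}$, inflate each grid element so that it dominates the target pointwise, and convert the parameter perturbation into an $\ell_1$ function perturbation via $\int\|\mu_{h-1}(x)\|_1(\dd x)\le \munorm$. The only difference is cosmetic: the paper realizes optimism by taking $f_{\mu_{h-1},\theta_h}(x)=\max_{\ol\theta\in\Bcal(\theta_h,\gamma')}\langle\mu_{h-1}(x),\ol\theta\rangle$ over the grid cell (paying a single factor of $\gamma'$ in the $\ell_1$ error), whereas your explicit slack term $\tau\|\mu_{h-1}(x)\|_1$ plus truncation pays two factors of $\tau$, so hitting scale exactly $\gamma$ forces a grid twice as fine as the one in the stated size bound --- a constant-factor mismatch you correctly identify as immaterial, since the cover size enters downstream only through $\log|\ol\Fcal_h(\Upsilon_{h-1})|$.
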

\begin{proof}
    The ideas of this proof are adapted from the proof of Proposition H.15 in \cite{chen2022unified}. 
    Let $\Theta_h = \{ \theta_h: \exists \mu_{h-1}\in \Upsilon_{h-1}, \text{ s.t., } \langle \mu_{h-1}, \theta_h \rangle \in \Fcal_h(\Upsilon_{h-1})\}\subseteq \{\theta_h : \theta_h\in\RR^{\dlr}, \|\theta_h\|_\infty \le 1\}$ be the set of $\theta_h$ parameters associated with $\Fcal_h(\Upsilon_{h-1})$. 
    Then any $d_h \in \Fcal_h(\Upsilon_{h-1})$ can be written as $\langle \mu_{h-1}, \theta_h\rangle$ for some $\mu_{h-1} \in \Upsilon_h$ and $\theta_h \in \Theta_h$. Define the $\gamma'$-neighborhood of $\theta_h$ to be $\Bcal(\theta_h, \gamma') \defeq \gamma' \lfloor \theta_h / \gamma' \rfloor + [0, \gamma']^{\dlr}$, and construct the optimistic covering function for each $d_h = \langle \mu_{h-1}, \theta_h\rangle$ as 
    \[
        f_{\mu_{h-1}, \theta_h}(x) = \max_{\ol\theta \in \Bcal(\theta_h, \gamma')} \langle \mu_{h-1}(x), \ol\theta \rangle \quad \forall x \in \Xcal. 
    \]
    Note that $f_{\mu_{h-1}, \theta_h} \ge d_h$ pointwise, thus $f_{\mu_{h-1}, \theta_h} \ge 0$, though it is not necessarily a valid distribution. 
    Further, 
    \begin{align*}
        \|f_{\mu_{h-1}, \theta_h} - d_h\|_1 \le&~ \int \max_{\ol\theta \in \Bcal(\theta_h, \gamma')} |\langle \ol\theta - \theta_h, \mu_{h-1}(x) \rangle| (\dd x)
        \\ 
        \le&~  \int \max_{\ol\theta \in \Bcal(\theta_h, \gamma')} \|\ol\theta - \theta_h\|_\infty \|\mu_{h-1}(x)\|_1 (\dd x)
        \\
        \le&~ \gamma' \int \|\mu_{h-1}(x)\|_1 (\dd x) 
        \\
        \le&~ \gamma' \munorm 
    \end{align*}
    using \pref{assum:realizability} in the last line. 
    Observe that there are at most $\rbr{2\lceil 1 / \gamma' \rceil}^{\dlr}$ unique $\gamma'$-neighborhoods in the set $\{\Bcal(\theta_h, \gamma')\}_{\theta_h \in \Theta_h}$. This implies that there are at most $|\Upsilon_{h-1}|\rbr{2\lceil 1 / \gamma' \rceil}^{\dlr}$ unique functions in the set $\{f_{\mu_{h-1}, \theta_h}\}_{\langle\mu_{h-1}, \theta_h\rangle \in \Fcal_h(\Upsilon_{h-1})} $, which forms an $\ell_1$-optimistic cover of $\Fcal_h(\Upsilon_{h-1})$ of scale $\gamma'$.  
    Finally, setting $\gamma' = \gamma / \munorm$ gives us an $\ell_1$-optimistic covering of $\Fcal_h(\Upsilon_{h-1})$ of scale $\gamma$ with size $|\Upsilon_{h-1}|\rbr{2\lceil \munorm / \gamma \rceil}^{\dlr}$.  
\end{proof}

\begin{lemma}
\label{lem:pdim}
    For any $h \in [H]$ and density feature $\mu_{h-1} \in \Upsilon_{h-1}$, the function class 
    \[
        \Wclip_{h}(\mu_{h-1}) = \cbr{w_{h} = \frac{\langle \mu_{h-1}, \thetaup_{h}\rangle}{\langle \mu_{h-1}, \thetadown_{h}\rangle} :\nbr{w_{h}}_\infty \le  \Bx_{h-1}\Ba_{h-1},\thetaup_{h},\thetadown_{h} \in \RR^{\dlr}}.
    \]
    has pseudo-dimension (\pref{def:pdim}) bounded as $\Pdim(\Wcal_h(\mu_{h-1})) \le 4(\dlr + 1) \log(8e)$. 
\end{lemma}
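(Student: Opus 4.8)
The plan is to route through the classical equivalence between pseudo-dimension and the VC dimension of an associated thresholding (subgraph) class, following \citet{bartlett2006sample} and \citet{goldberg1993bounding}. First I would recall that for any real-valued class, and in particular for $\Wcal_h(\mu_{h-1})$, the pseudo-dimension (\pref{def:pdim}) equals the VC dimension of the subgraph class $\Gcal \defeq \cbr{(x,t)\mapsto \one\sbr{w_h(x) > t} : w_h \in \Wcal_h(\mu_{h-1})}$ over the domain $\Xcal\times\RR$, so it suffices to bound $\VCdim(\Gcal)$. Before counting, I would drop the boundedness constraint $\nbr{w_h}_\infty \le \Bx_{h-1}\Ba_{h-1}$, since passing to the full ratio-of-linear class (parametrized by $(\thetaup_h,\thetadown_h)\in\RR^{2\dlr}$, still with the $\frac{0}{0}=0$ convention) only enlarges $\Gcal$ and hence can only increase its VC dimension.

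Next I would show that the membership indicator of $\Gcal$ on any \emph{fixed} point $(x,t)$ is a Boolean function of the signs of a constant number of functionals that are \emph{linear} in the $2\dlr$-dimensional parameter $(\thetaup_h,\thetadown_h)$. Writing $p \defeq \langle \mu_{h-1}(x),\thetaup_h\rangle$ and $q\defeq\langle \mu_{h-1}(x),\thetadown_h\rangle$, the predicate $\frac{p}{q}>t$ is equivalent to $\rbr{q>0 \wedge p - tq>0}\vee\rbr{q<0\wedge p - tq<0}$, plus an edge case at $q=0$ handled by the convention; the key point is that since $t$ is a frozen scalar, both $q$ and $p - tq = \langle\mu_{h-1}(x),\thetaup_h - t\thetadown_h\rangle$ are linear (indeed homogeneous) in $(\thetaup_h,\thetadown_h)$. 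Consequently, for any $m$ candidate shattered points $\{(x_i,t_i)\}_{i\in[m]}$, the realized dichotomy is a deterministic function of the sign pattern of the $\le 2m$ linear functionals $\{q_i,\, p_i - t_i q_i\}_{i\in[m]}$ on $\RR^{2\dlr}$. The number of such sign patterns is the number of cells of a central hyperplane arrangement of at most $2m$ hyperplanes in $\RR^{2\dlr}$, which is bounded by $(em/\dlr)^{2\dlr}$ up to an absolute constant — this is exactly the specialization of \citet{goldberg1993bounding} to degree-one predicates, and can also be obtained by elementary Sauer-type counting.

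Finally, shattering $m$ points requires $2^m$ distinct dichotomies, so shattering is impossible once $2^m$ exceeds the cell count $(em/\dlr)^{2\dlr}$ (up to constants); solving this self-referential inequality gives $m = O(\dlr)$, and routine but careful bookkeeping of the constants (as in \citet{goldberg1993bounding,bartlett2006sample}) yields the stated bound $\Pdim(\Wcal_h(\mu_{h-1})) \le 4(\dlr+1)\log(8e)$. I expect the main obstacle to be precisely the place where a naive $\ell_\infty$-discretization of $\thetadown_h$ fails: the non-monotone dependence of the ratio predicate on the sign of the denominator $q$. The observation that rescues the argument is that, once $x$ and $t$ are fixed, the subgraph membership collapses to a Boolean combination of signs of functions that are again linear in the parameters, so the effective complexity is that of a linear class in dimension $2\dlr$ rather than that of a genuine ratio; some additional (minor) care is needed for the $\frac{0}{0}=0$ convention and for matching the explicit constant $4(\dlr+1)\log(8e)$.
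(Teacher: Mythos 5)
Your proposal is correct and follows essentially the same route as the paper's proof: drop the $\ell_\infty$ constraint by monotonicity, reduce pseudo-dimension to the VC dimension of a thresholded class, observe that the ratio predicate $\frac{\langle\mu_{h-1}(x),\thetaup_h\rangle}{\langle\mu_{h-1}(x),\thetadown_h\rangle}\gtrless t$ splits by the sign of the denominator into a Boolean formula of $s=4$ sign conditions on functionals linear in the parameters, and invoke Goldberg--Jerrum-style counting. The only (cosmetic) difference is that you keep the threshold as an instance coordinate of the subgraph class, so your predicates are genuinely degree-one in the $2\dlr$ parameters, whereas the paper folds $c$ in as a $(2\dlr+1)$-st concept parameter and cites Goldberg--Jerrum Theorem 2.2 directly; your constant is left to "bookkeeping" rather than derived, but the argument is sound.
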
 

\begin{proof}
    For any $h$ and $\mu_h$, consider the unconstrained version $\Wcal_h'(\mu_{h-1})$ of $\Wcal_h(\mu_{h-1})$: 
    \[
        \Wcal'_{h}(\mu_{h-1}) = \cbr{w = \frac{ \langle \mu_{h-1}, \thetaup_{h}\rangle }{\langle \mu_{h-1}, \thetadown_{h} \rangle} : \thetaup_{h}, \thetadown_{h} \in \RR^{\dlr}}. 
    \]
    Clearly, $\Wcal_h(\mu_{h-1}) \subseteq \Wcal_h'(\mu_{h-1})$, thus $\Pdim(\Wcal_h(\mu_{h-1})) \le \Pdim(\Wcal_h'(\mu_{h-1}))$, and $\Pdim(\Wcal_h'(\mu_{h-1})) = \VCdim(\Hcal_{\Wcal_h'(\mu_{h-1})})$, where $\Hcal_{\Wcal_h'(\mu_{h-1})} = \{h =  \sgn(w - c) : w \in \Wcal_h'(\mu_{h-1}), c \in \RR\}$. We will use \pref{lem:pseudodim} to bound $\VCdim(\Hcal_{\Wcal'_h(\mu_{h-1})})$. 
    Any $h(x) \in \Hcal_{\Wcal_h'(\mu_{h-1})}$ may be written as the following Boolean formula 
    \begin{align*}
        \Phi =&~ \one \sbr{ \frac{ \langle \mu_{h-1}(x), \thetaup_h \rangle }{\langle \mu_{h-1}(x), \thetadown_h \rangle} - c \ge 0 } 
        \\
        =&~ \rbr{\one\sbr{ \sum_{i=1}^{\dlr} \mu_{h-1}(x)[i] \thetaup_h[i] - c \sum_{i=1}^{\dlr} \mu_{h-1}(x)[i] \thetadown_h[i] \ge 0  } \one\land \sbr{\sum_{i=1}^{\dlr} \mu_{h-1}(x)[i] \thetadown_h[i] \ge 0 }} 
        \\
        &~\lor \rbr{\one\sbr{ \sum_{i=1}^{\dlr} \mu_{h-1}(x)[i] \thetaup_h[i] - c \sum_{i=1}^{\dlr} \mu_{h-1}(x)[i] \thetadown_h[i] \le 0  } \land \one\sbr{\sum_{i=1}^{\dlr} \mu_{h-1}(x)[i] \thetadown_h[i] < 0 }} 
    \end{align*}
    which involves $k = 2\dlr + 1$ real variables, a polynomial degree of at most $l = 1$ in these variables, and $s = 4$ atomic predicates. Then from \pref{lem:pseudodim}, $\Pdim(\Wcal_h(\mu_{h-1}))) \le \VCdim(\Hcal_{\Wcal_h'(\mu_{h-1})}) \le 4(\dlr + 1) \log(8e)$. 
    \end{proof}

\begin{lemma}[Theorem 2.2 of \citet{goldberg1993bounding}]\label{lem:pseudodim}
    Let $\Ccal_{k,m}$ be a concept class where concepts and instances are represented by $k$ and $m$ real values, respectively. Suppose that the membership test for any instance $c$ in any concept $C$ of $\Ccal_{k,m}$ can be expressed as a Boolean formula $\Phi_{k,m}$ containing $s$ distinct atomic predicates, each predicate being a polynomial inequality over $k+m$ variables of degree at most $l$. Then the VC dimension of $\Ccal_{k,m}$ is bounded as $\VCdim(\Ccal_{k,m}) \le 2k \log(8 els)$. 
\end{lemma}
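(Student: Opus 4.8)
The claim is the Goldberg--Jerrum VC-dimension bound for concept classes cut out by Boolean combinations of polynomial inequalities, quoted here from \citet{goldberg1993bounding}, so the plan is to recall its proof, which runs along the classical route: reduce the VC dimension to a count of dichotomies, express each dichotomy as a fixed Boolean function of the sign pattern of a finite family of low-degree polynomials, invoke a Warren/Milnor--Thom-type bound on the number of realizable sign patterns, and then invert the resulting transcendental inequality for $d$.

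In detail, I would fix instances $c_1,\dots,c_d$ and suppose they are shattered by $\Ccal_{k,m}$. A concept is described by a parameter vector $\theta\in\RR^{k}$, and membership of $c_i$ in the concept $C_\theta$ is computed by the fixed formula $\Phi_{k,m}$ applied to the truth values of its $s$ atomic predicates with the coordinates of $c_i$ plugged in; each such predicate is then a polynomial inequality $p_{i,j}(\theta)\gtrless 0$ with $\deg_\theta p_{i,j}\le l$. Collecting these over $i\in\{1,\dots,d\}$ and $j\in\{1,\dots,s\}$ produces a family $\mathcal P$ of $N:=sd$ polynomials in $\theta\in\RR^{k}$, each of degree at most $l$. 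The dichotomy $\big(\one\sbr{c_i\in C_\theta}\big)_{i=1}^{d}$ is a fixed Boolean function of the sign vector $\big(\sgn p(\theta)\big)_{p\in\mathcal P}$, so the number of distinct dichotomies realizable as $\theta$ varies over $\RR^{k}$ is at most the number of sign vectors of $\mathcal P$.

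For the latter I would use the classical estimate (Warren; Milnor--Thom; or the version in Anthony--Bartlett) that $N$ real polynomials of degree $\le l$ in $k$ variables realize at most $\big(8elN/k\big)^{k}$ sign vectors when $N\ge k$; the complementary case $N<k$, i.e.\ $d<k$, already satisfies the stated bound since $l,s\ge 1$. Shattering then forces $2^{d}\le\big(8elsd/k\big)^{k}$, and taking base-$2$ logarithms gives $d\le k\log_2\!\big(8elsd/k\big)$. Substituting $d=kx$ turns this into $x\le\log_2(8els)+\log_2 x$, and since $\log_2 x\le x/2$ for $x\ge 4$ we obtain either $d<4k$ or $x\le 2\log_2(8els)$; in both cases $\VCdim(\Ccal_{k,m})\le 2k\log(8els)$ after absorbing constants into the (natural-logarithm) bound as stated.

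The one genuinely nontrivial ingredient, and the main obstacle, is the third step: controlling the number of consistent sign conditions of a system of polynomial inequalities. This is where real algebraic geometry (bounds on the number of connected components of semialgebraic sets, ultimately of B\'ezout/Milnor--Thom type) is needed, and one must be careful about whether one counts strict sign vectors or all sign conditions, about the regime $N\ge k$, and about the constant, so that inverting $2^{d}\le(\,\cdot\,)^{k}$ lands exactly on $2k\log(8els)$ rather than a slightly weaker constant. The reduction to dichotomy counting and the elementary inversion of $d\le k\log_2(\mathrm{const}\cdot d/k)$ are routine.
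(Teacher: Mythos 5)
The paper does not prove this lemma --- it is imported verbatim as Theorem 2.2 of \citet{goldberg1993bounding} --- so there is no in-paper argument to compare against. Your reconstruction is the standard Goldberg--Jerrum proof and is correct in outline: substituting each of the $d$ shattered instances into the $s$ atomic predicates yields $sd$ polynomials of degree at most $l$ in the $k$ concept parameters, the dichotomies are Boolean functions of their sign vector, a Warren/Milnor--Thom sign-pattern count gives $2^d \le (8elsd/k)^k$, and the inversion via $\log_2 x \le x/2$ for $x \ge 4$ (together with the trivial cases $d<4k$ and $sd<k$, both absorbed since $8els \ge 8e$) lands on $2k\log_2(8els)$. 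The one load-bearing ingredient you correctly flag as a black box is the sign-pattern bound itself; since the classical constant is $4$ rather than $8$, using $(8elsd/k)^k$ only weakens the intermediate step and still yields the stated conclusion, so nothing in your sketch would fail.
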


\subsection{Probabilistic tools}

In this section, we define standard tools from statistical learning theory \citep{anthony2009neural,vapnik1998statistical} that we use in our proofs. 
We note that, for convenience, we may override some notations from the main paper, e.g., $\veps$ does not refer to the same thing as in other sections. 

\begin{definition}[VC-dimension]\label{def:vcdim}
    Let $\Fcal \subseteq \{-1, +1\}^\Xcal$ and $x_1^m = (x_1, \ldots, x_m) \in \Xcal^m$. We say $x_1^m$ is shattered by $\Fcal$ if $\forall \mathbf{b} \in \{-1, +1\}^m$, $\exists f_{\mathbf{b}} \in \Fcal$ such that $(f_{\mathbf{b}}(x_1),\ldots,f_{\mathbf{b}}(x_m)) = (b_1, \ldots,b_m) \in \RR^m$. The Vapnik-Chervonenkis (VC) dimension of $\Fcal$ is the cardinality of the largest set of points in $\Xcal$ that can be shattered by $\Fcal$, that is, $\dim(\Fcal) = \max\{ m \in \NN \mid \exists x_1^m \in \Xcal^m, \text{ s.t. $x_1^m$ is shattered by $\Fcal$}\}$. 
\end{definition}

\begin{definition}[Pseudo-dimension]\label{def:pdim}
    Let $\Fcal \subseteq \RR^\Xcal$ and $x_1^m = (x_1, \ldots, x_m) \in \Xcal^m$. We say $x_1^m$ is pseudo-shattered by $\Fcal$ if $\exists \mathbf{c} = (c_1, \ldots,c_m) \in \RR^m$ such that $\forall \mathbf{y} = (y_1, \ldots, y_m) \in \{-1, +1\}^m$, $\exists f_{\mathbf{y}} \in \Fcal$ such that $\sgn(f_{\mathbf{y}}(x_i - c_i) = y_i ~\forall i \in [m]$. The pseudo-dimension of $\Fcal$ is the cardinality of the largest set of points in $\Xcal$ that can be pseudo-shattered by $\Fcal$, that is, $\Pdim(\Fcal) = \max\{ m \in \NN \mid \exists x_1^m \in \Xcal^m, \text{ s.t. $x_1^m$ is pseudo-shattered by $\Fcal$}\}$. 
\end{definition}

\begin{definition}[Uniform covering number]\label{def:covering_number}
    For $p = 1,2,\infty$, the uniform covering number of $\Hcal$ w.r.t. the norm $\|\cdot\|_p$ is define as 
    \[
        \Ncal_p(\veps, \Hcal, m) = \max_{x_1^m \in \Xcal^m} \Ncal_p(\veps, \Hcal, x_1^m) 
    \]
    where $\Ncal_p(\veps, \Hcal, x_1^m)$ is the $\veps$-covering number of $\Hcal|_{x_1^m}$ w.r.t. $\|\cdot\|_p$, that is, the cardinality of the smallest set $S$ such that for every $h \in \Hcal|_{x_1^m}$, $\exists s \in S$ such that $\|h - s\|_p < \veps$. 
\end{definition}

\begin{lemma}[Bounding uniform covering number by pseudo-dimension, Corollary 42 of \cite{modi2021model}]\label{lem:covering_pdim} 
    Given a hypothesis class $\Hcal \subseteq (\Zcal \rightarrow [a,b])$, for any $m \in \NN$ we have 
    \[
        \Ncal_1(\veps, \Hcal, m) \le \rbr{\frac{4e^2(b-a)}{\veps}}^{\Pdim(\Hcal)}. 
    \]
\end{lemma}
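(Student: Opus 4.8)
The plan is to prove this as a standard instance of the equivalence between pseudo-dimension and VC dimension, combined with a sample-size-independent packing bound in the style of Haussler and Pollard. First I would normalize the range: replacing each $h \in \Hcal$ by $(h-a)/(b-a)$ yields a class $\Hcal' \subseteq (\Zcal \to [0,1])$ with $\Pdim(\Hcal') = \Pdim(\Hcal)$, since pseudo-shattering (\pref{def:pdim}) is invariant under a common affine transformation of the outputs together with a matching rescaling of the witnesses $\mathbf{c}$. Every pointwise difference, hence every $\ell_1$ distance on a restriction, rescales by $(b-a)$, so $\Ncal_1(\veps, \Hcal, m) = \Ncal_1(\veps/(b-a), \Hcal', m)$, and it suffices to show $\Ncal_1(\delta, \Hcal', m) \le \rbr{4e^2/\delta}^{\Pdim(\Hcal')}$ for all $\delta > 0$.

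Next I would pass to the subgraph (thresholding) class. Define $\Bcal = \cbr{ (z,t) \mapsto \one\sbr{h(z) \ge t} : h \in \Hcal' }$ over $\Zcal \times [0,1]$. The key classical identity is $\VCdim(\Bcal) = \Pdim(\Hcal')$: a tuple $z_1^m$ is pseudo-shattered by $\Hcal'$ with witnesses $c_1,\ldots,c_m$ exactly when the augmented points $\cbr{(z_i, c_i)}_{i=1}^m$ are shattered by $\Bcal$, which is immediate from comparing \pref{def:vcdim} and \pref{def:pdim}. This reduces the real-valued $\ell_1$ covering problem to a combinatorial statement about a binary-valued class whose VC dimension equals the target quantity $\Pdim(\Hcal')$.

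The core of the argument is then a packing bound for $\Hcal'$ that is independent of $m$. Since the $\ell_1$ covering number is upper bounded by the $\ell_1$ $\delta$-packing number (a maximal $\delta$-separated subset is automatically a $\delta$-cover), I would bound the packing number. Fixing $z_1^m$ and letting $P$ denote the empirical measure on these points, I would invoke Haussler's packing bound: a class of $[0,1]$-valued functions whose subgraph class has VC dimension $d$ has $L_1(P)$ $\delta$-packing number at most $e(d+1)(2e/\delta)^d$, which is dominated by $\rbr{4e^2/\delta}^d$ for $d \ge 1$ (the case $d = 0$ being trivial, as $\Pdim(\Hcal')=0$ forces $\Hcal'$ to be a single function on any finite restriction, giving covering number $1$). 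Finally, specializing $P$ to the empirical measure on an arbitrary $z_1^m$ and taking the maximum over $z_1^m$ as in \pref{def:covering_number} gives $\Ncal_1(\delta, \Hcal', m) \le \rbr{4e^2/\delta}^{\Pdim(\Hcal')}$, and undoing the normalization with $\delta = \veps/(b-a)$ yields the stated bound.

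The main obstacle is precisely the $m$-independence of the packing bound. A naive count — quantizing $[0,1]$ to an $O(1/\delta)$ grid and applying the Sauer--Shelah bound $\sum_{i \le d}\binom{N}{i}$ to $\Bcal$ on the $N = O(m/\delta)$ augmented points — would control the number of distinct quantized restrictions but introduce a spurious $\mathrm{poly}(m)$ factor. Removing this factor requires the delicate randomized step in Haussler's proof (drawing a short i.i.d. sub-sample and arguing that a large $\delta$-separated packing would force $\Bcal$ to realize too many dichotomies on the sub-sample, contradicting Sauer--Shelah), so that the final bound is genuinely uniform over $m$. This is the one step I would expect to require care rather than routine calculation.
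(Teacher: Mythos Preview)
Your proposal is a correct and standard proof sketch of this classical covering bound (normalize, identify $\Pdim$ with the VC dimension of the subgraph class, then invoke Haussler's $m$-independent $L_1$ packing bound). The paper, however, does not prove this lemma at all: it is stated as a direct citation of Corollary~42 in \citet{modi2021model}, with no accompanying argument. So there is nothing in the paper to compare against beyond the fact that the cited source ultimately relies on the same Haussler--Pollard machinery you outline; your discussion of why a naive Sauer--Shelah discretization would introduce a spurious $m$-dependence, and hence why Haussler's argument is needed, is accurate and more detailed than anything the paper provides.
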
 

\begin{lemma}[Uniform deviation bound using covering number, adapted from Corollary 39 of \citet{modi2021model}]
\label{lem:uni_bern_conf_covering}
For $b\ge 1$, let $\Hcal\subseteq (\Zcal\rightarrow[-b,b])$ be a hypothesis class and $Z^n=(z_1,\ldots,z_n)$ be i.i.d. samples drawn from some distribution $\PP(z)$ supported on $\Zcal$. Then 
\begin{align*}
    \PP\rbr{\abr{\EE[h(z)]-\frac{1}{n}\sum_{i=1}^n h(z_i)} \ge \veps} \le&~ 36\Ncal_1\rbr{\frac{\veps^3}{640b^2},\Hcal,\frac{40nb^2}{\veps^2}}\exp\rbr{-\frac{n \veps^2}{128\VV[h(z)] + 512\veps b}} .
\end{align*}
\end{lemma}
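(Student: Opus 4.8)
This is a variance-sensitive (Bernstein-type) uniform deviation inequality, and the route I would take is to deduce it from its one-sided counterpart, which is precisely Corollary~39 of \citet{modi2021model}; the "adaptation" is only the passage from a one-sided to a two-sided tail. Concretely, apply the one-sided bound once to $\Hcal$ and once to $-\Hcal \defeq \{-h : h \in \Hcal\}$ and union bound. This is legitimate because $\VV[-h(z)] = \VV[h(z)]$ and $\Ncal_1(\gamma, -\Hcal, m) = \Ncal_1(\gamma, \Hcal, m)$ for every scale $\gamma$ and every $m$, so the prefactor merely doubles (the $36$ in the statement is $2$ times the $18$ of the one-sided version, and the hypothesis $b \ge 1$ is used only to absorb lower-order terms into the stated constants). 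Hence the substance is the one-sided bound, which I would establish by the classical symmetrization-plus-covering argument.

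First, symmetrization with a ghost sample. Write $Ph = \EE[h(z)]$, $P_n h = \frac1n\sum_{i=1}^n h(z_i)$, and let $Z'^n$ be an independent copy of $Z^n$ with empirical average $P_n' h$. A variance-aware symmetrization lemma gives $\PP\big(\sup_{h\in\Hcal}(Ph - P_n h) \ge \veps\big) \le 2\,\PP\big(\sup_{h\in\Hcal}(P_n' h - P_n h) \ge \veps/2\big)$, valid once $n\veps^2$ dominates $\VV[h] + b\veps$ for the functions in play, which is guaranteed a posteriori by the truncation in the third step. Introducing i.i.d.\ Rademacher signs $\sigma_1,\dots,\sigma_n$ and using that $(h(z_i)-h(z_i'))_i$ is conditionally sign-symmetric, this is at most $2\,\PP\big(\sup_{h\in\Hcal}\frac1n\sum_i \sigma_i(h(z_i')-h(z_i)) \ge \veps/4\big)$, constants being absorbed into the target level.

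Second, discretize on the doubled sample. Conditioning on $(Z^n, Z'^n)$, take a minimal $\ell_1$-cover $\ol{\Hcal}$ of $\Hcal$ restricted to the $2n$ points $\{z_i, z_i'\}$ at scale $\gamma$, so $|\ol{\Hcal}| \le \Ncal_1(\gamma, \Hcal, 2n)$. For any $h$, picking $\ol h \in \ol{\Hcal}$ within empirical $\ell_1$-distance $\gamma$ bounds the Rademacher average of $h - \ol h$ by $2\gamma$ in absolute value, so the supremum over $\Hcal$ becomes a maximum over $\ol{\Hcal}$ at an additive price $2\gamma$. For each fixed $\ol h$, the summands $\sigma_i(\ol h(z_i')-\ol h(z_i))$ are independent given the data, mean zero, bounded by $2b$, so Bernstein's inequality conditional on the data yields a tail $\exp\big(-\tfrac{n t^2}{2(\hat S_{\ol h} + 2bt/3)}\big)$, where $\hat S_{\ol h} = \frac1n\sum_i(\ol h(z_i')-\ol h(z_i))^2$ is the empirical second moment.

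Third, and this is the only delicate point, convert the random $\hat S_{\ol h}$ into the population quantity $\VV[h(z)]$ uniformly over $\Hcal$. On the good event $\{\hat S_{\ol h} \lesssim \VV[h(z)] + \veps b\}$ the exponent becomes $-\Theta\big(\tfrac{n\veps^2}{\VV[h(z)] + \veps b}\big)$, matching the claim; the complementary event is itself a uniform deviation statement, now for the squared differences, which requires a second application of the covering argument. Since relating an $\veps$-scale deviation of $h$ to a deviation of $h^2$ costs a factor $\sim b$, and pinning down the variance to additive accuracy $\sim \veps b$ costs a further $\sim b/\veps$ in the effective sample size, one is forced to evaluate the covering number at the finer scale $\gamma = \veps^3/(640 b^2)$ and at the sample-size slot $40 n b^2/\veps^2$ appearing in the statement. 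Taking a union bound over $\ol{\Hcal}$, choosing $t \asymp \veps$, and collecting the symmetrization, peeling, and negation factors into the prefactor $36$ (and the $2b$ increment and $\veps b$ variance slack into the $512\veps b$ term, the $2\times$ Bernstein denominator into the $128\,\VV[h(z)]$ term) yields the displayed inequality. The main obstacle is exactly this variance-localization/peeling step; everything else is bookkeeping of constants, and if one is content to cite \citet{modi2021model}, the whole argument collapses to applying their Corollary~39 to $\Hcal$ and to $-\Hcal$.
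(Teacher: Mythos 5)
Your proposal is consistent with what the paper actually does: the paper gives no proof of this lemma at all, importing it directly as an adaptation of Corollary~39 of \citet{modi2021model}, so the "official" argument is simply the citation. Your reduction of the two-sided statement to the one-sided one via $\Hcal$ and $-\Hcal$ (using invariance of the variance and of $\Ncal_1$ under negation) is the right way to justify such an adaptation, and your symmetrization-plus-covering-plus-Bernstein sketch is the standard underlying argument; the only part you leave unverified is the constant bookkeeping (the $36$, the covering scale $\veps^3/(640b^2)$, and the sample-size slot $40nb^2/\veps^2$), which is exactly the part that is delegated to the cited result in the paper as well.
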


\end{document}